\theoremstyle{definition} \newtheorem{defn}{Definition}       
\theoremstyle{plain}            
\theoremstyle{plain} \newtheorem{thm}[defn]{Theorem}                
\theoremstyle{plain} \newtheorem{lem}[defn]{Lemma}                  
\theoremstyle{plain}               
\theoremstyle{remark} \newtheorem{rmk}[defn]{Remark}                
\theoremstyle{remark} \newtheorem{ex}[defn]{Example}                
\def\namedlabel#1#2{\begingroup
    #2%
    \def\@currentlabel{#2}%
    \phantomsection\label{#1}\endgroup
}
\begin{document}

\title{\textbf{Efficient learning with robust gradient descent}}
\author{
  Matthew J.~Holland\thanks{Please direct correspondence to \texttt{matthew-h@ids.osaka-u.ac.jp}.}\\
  Osaka University\\
  Yamada-oka 2-8, Suita, Osaka, Japan
  \and
  Kazushi Ikeda\\
  Nara Institute of Science and Technology\\
  Takayama-cho 8916-5, Ikoma, Nara, Japan
}
\date{} 

\maketitle

\begin{abstract}
Minimizing the empirical risk is a popular training strategy, but for learning tasks where the data may be noisy or heavy-tailed, one may require many observations in order to generalize well. To achieve better performance under less stringent requirements, we introduce a procedure which constructs a robust approximation of the risk gradient for use in an iterative learning routine. Using high-probability bounds on the excess risk of this algorithm, we show that our update does not deviate far from the ideal gradient-based update. Empirical tests using both controlled simulations and real-world benchmark data show that in diverse settings, the proposed procedure can learn more efficiently, using less resources (iterations and observations) while generalizing better.
\end{abstract}

\section{Introduction}\label{sec:intro}

Any successful machine learning application depends both on procedures for reliable statistical inference, and a computationally efficient implementation of these procedures. This can be formulated using a risk $R(\ww) \defeq \exx l(\ww;\zz)$, induced by a loss $l$, where $\ww$ is the parameter (vector, function, set, etc.) to be specified, and expectation is with respect to $\zz$, namely the underlying data distribution. Given data $\zz_{1},\ldots,\zz_{n}$, if an algorithm outputs $\wwhat$ such that $R(\wwhat)$ is small with high probability over the random draw of the sample, this is formal evidence for good generalization, up to assumptions on the distribution. Performance-wise, the statistical side is important because $R$ is always unknown, and the method of implementation is important since the only $\wwhat$ we ever have in practice is one we can actually compute.

Empirical risk minimization (ERM), which admits any minimizer of $n^{-1}\sum_{i=1}^{n}l(\cdot;\zz_{i})$, is the canonical strategy for machine learning problems, and there exists a rich body of literature on its generalization ability \citep{kearns1994a,bartlett1996a,alon1997a,bartlett2003a}. In recent years, however, some severe limitations of this technique have come into light. ERM can be implemented by numerous methods, but its performance is sensitive to this implementation \citep{daniely2014a,feldman2016a}, showing sub-optimal guarantees on tasks as simple as multi-class pattern recognition, let alone tasks with unbounded losses. A related issue is highlighted in recent work by \citet{lin2016a}, where we see that ERM implemented using a gradient-based method only has appealing guarantees when the data is distributed sharply around the mean in a sub-Gaussian sense. These results are particularly important due to the ubiquity of gradient descent (GD) and its variants in machine learning. They also carry the implication that ERM under typical implementations is liable to become highly inefficient whenever the data has heavy tails, requiring a potentially infinitely large sample to achieve a small risk. Since tasks with such ``inconvenient'' data are common \citep{finkenstadt2003Extreme}, it is of interest to investigate and develop alternative procedures which can be implemented as readily as the GD-based ERM (henceforth, ERM-GD), but which have desirable performance for a wider class of learning problems. In this paper, we introduce and analyze an iterative routine which takes advantage of robust estimates of the risk gradient.

\paragraph{Review of related work}

Here we review some of the technical literature related to our work. As mentioned above, the analysis of \citet{lin2016a} includes the generalization of ERM-GD for sub-Gaussian observations. ERM-GD provides a key benchmark to be compared against; it is of particular interest to find a technique that is competitive with ERM-GD when it is optimal, but which behaves better under less congenial data distributions. Other researchers have investigated methods for distribution-robust learning. One notable line of work looks at generalizations of the ``median of means'' procedure, in which one constructs candidates on disjoint partitions of the data, and aggregates them such that anomalous candidates are effectively ignored. These methods can be implemented and have theoretical guarantees, ranging from the one-dimensional setting \citep{lerasle2011a,minsker2017a} to multi-dimensional and even functional models \citep{minsker2015a,hsu2016a,lecue2017a}. Their main limitation is practical: when sample size $n$ is small relative to the complexity of the model, very few subsets can be created, and robustness is poor; conversely, when $n$ is large enough to make many candidates, cheaper and less sophisticated methods often suffice.

An alternative approach is to use all the observations to construct robust estimates $\widehat{R}(\ww)$ of the risk $R(\ww)$ for each $\ww$ to be checked, and subsequently minimize $\widehat{R}$ as a surrogate. An elegant strategy using M-estimates of $R$ was introduced by \citet{brownlees2015a}, based on fundamental results due to \citet{catoni2009a,catoni2012a}. While the statistical guarantees are near-optimal under very weak assumptions on the data, the proxy objective $\widehat{R}$ is defined implicitly, introducing many computational roadblocks. In particular, even if $R$ is convex, the estimate $\widehat{R}$ need not be, and the non-linear optimization required by this method can be both unstable and costly in high dimensions.

Finally, conceptually the closest recent work to our research are those also analyzing novel ``robust gradient descent'' algorithms, namely steepest descent procedures which utilize a robust estimate of the gradient vector of the underlying (unknown) objective of interest. The first works in this line are due to \citet{holland2017a} (a preliminary version of our work) and \citet{chen2017a} (later updated as \citet{chen2017b}), which appeared as pre-prints almost simultaneously. While the problem setting of \citet{chen2017b} and the technical approach to robustification are completely different from ours, the underlying motivation of replacing the empirical mean gradient estimate with a more robust alternative is shared. We utilize an M-estimator of the gradient coordinates which can be approximated using fixed-point iterative updates. On the other hand, \citet{chen2017a} utilize the geometric median to robustly aggregate multiple candidates constructed on subsets after partitioning the data. They consider a federated learning setting with many low-cost machines susceptible to arbitrarily bad performance, running in parallel, and provide rigorous learning guarantees within that problem setting. We on the other hand consider a single learning machine, with potentially heavy-tailed data, within a general risk-minimization framework. While the theoretical guarantees are not directly comparable, the dependence on sample size $n$, confidence $\delta$, and dimension $d$ are essentially the same, up to minor differences in log factors. The key advantage to our approach is the ease of computation. While the geometric median used by \citet{chen2017b} can indeed be computed using well-known iterative routines \citep{vardi2000a}, these suffer from substantial overhead in computing pairwise distances over all partitions at each iteration, and as mentioned above in reference to the work of \citet{minsker2015a} and \citet{hsu2016a}, can run into significant bias when the number of partitions cannot be made large enough. A more recent entry into this line of research comes from \citet{prasad2018a}, who follow the exact same strategy as \citet{chen2017b}, but consider a more general learning setting, very close to the general setting of our paper. They also provide new results for several concrete models under heavy-tailed data, although the practical weaknesses of their procedure are exactly the same as those inherent in the procedure of \citet{chen2017b}.

\paragraph{Our contributions}

To deal with these limitations of ERM-GD and its existing robust alternatives, the key idea here is to use robust estimates of the risk gradient, rather than the risk itself, and to feed these estimates into a first-order steepest descent routine. In doing so, at the cost of minor computational overhead, we get formal performance guarantees for a wide class of data distributions, while enjoying the computational ease of a gradient descent update. Our main contributions:
\begin{itemize}
\item A learning algorithm which addresses the vulnerabilities of ERM-GD, is easily implemented, and can be adapted to stochastic sub-sampling for big problems.
\item High-probability bounds on excess risk of this procedure, which hold under mild moment assumptions on the data distribution, and suggest a promising general methodology.
\item Using both tightly controlled simulations and real-world benchmarks, we compare our routine with ERM-GD and other cited methods, obtaining results that reinforce the practical utility and flexibility suggested by the theory.
\end{itemize}

\paragraph{Content overview}
In section \ref{sec:intuitive}, we introduce the key components of the proposed algorithm, and provide an intuitive example meant to highlight the learning principles taken advantage of. Theoretical analysis of algorithm performance is given in section \ref{sec:algo}, including a sketch of the proof technique and discussion of the main results. Empirical analysis follows in section \ref{sec:tests}, in which we elucidate both the strengths and limits of the proposed procedure, through a series of tightly controlled numerical tests. Finally, concluding remarks and a look ahead are given in section \ref{sec:conclusion}. Proofs and extra information regarding computation is given in appendix \ref{sec:appendix}. Additional empirical test results are provided in appendix \ref{sec:more_test_results}.

\section{Robust gradient descent}\label{sec:intuitive}

Before introducing the proposed algorithm in more detail, we motivate the practical need for a procedure which deals with the weaknesses of the traditional sample mean-based gradient descent strategy.

\subsection{Why robustness?}\label{sec:intuitive_2D}

Recall that since ERM admits any minima of $n^{-1}\sum_{i=1}^{n}l(\cdot;\zz_{i})$, the simplest implementation of gradient descent (for $\wwhat_{(t)} \in \RR^{d}$) results in the update
\begin{align}\label{eqn:ERM-GD}
\wwhat_{(t+1)} = \wwhat_{(t)} - \alpha_{(t)} \frac{1}{n}\sum_{i=1}^{n}l^{\prime}(\wwhat_{(t)};\zz_{i})
\end{align}
where $\alpha_{(t)}$ are scaling parameters. Taking the derivative under the integral we have $R^{\prime}(\cdot) = \exx l^{\prime}(\cdot;\zz)$, meaning ERM-GD uses the sample mean as an estimator of each coordinate of $R^{\prime}$, in pursuit of a solution minimizing the unknown $R$. Without rather strong assumptions on the tails and moments of the distribution of $l(\ww;\zz)$ for each $\ww$, it has become well-known that the sample mean fails to provide sharp estimates \citep{catoni2012a,minsker2015a,devroye2015a,lugosi2016a}. Intuitively, the issue is that we expect bad estimates to imply bad approximate minima. Does this formal sub-optimality indeed manifest itself in natural settings? Can principled modifications improve performance at a tolerable cost?

A simple example suggests affirmative answers to both questions. The plot on the left of Figure \ref{fig:2D_motivation} shows contour lines of a strongly convex quadratic risk to be minimized, as well as the trajectory of 10 iterations of ERM-GD, given four independent samples from a common distribution, initiated at a common $\wwhat_{(0)}$. With data $\zz = (\xx,y) \in \RR^{d+1}$, losses are generated as $l(\ww;\zz_{i})=(\langle \ww, \xx_{i} \rangle - y_{i})^{2}/2$. We consider the case where the ``noise'' $\langle \ww, \xx_{i} \rangle - y_{i}$ is heavy-tailed (log-Normal). Half of the samples saw relatively good solutions after ten iterations, and half saw rather stark deviation from the optimal procedure. When the sample contains errant observations, the empirical mean estimate is easily influenced by such points.

\begin{figure}[t]
\centering
\includegraphics[width=0.33\textwidth]{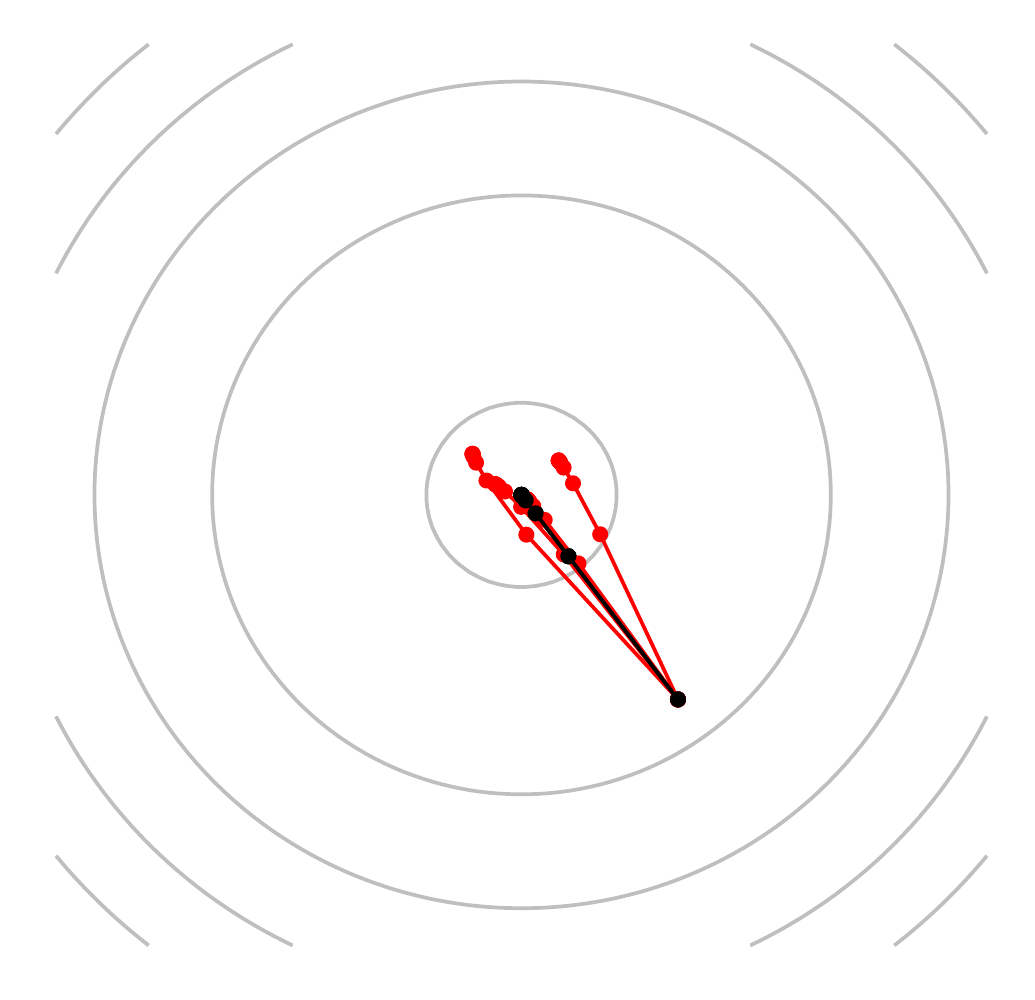}\includegraphics[width=0.33\textwidth]{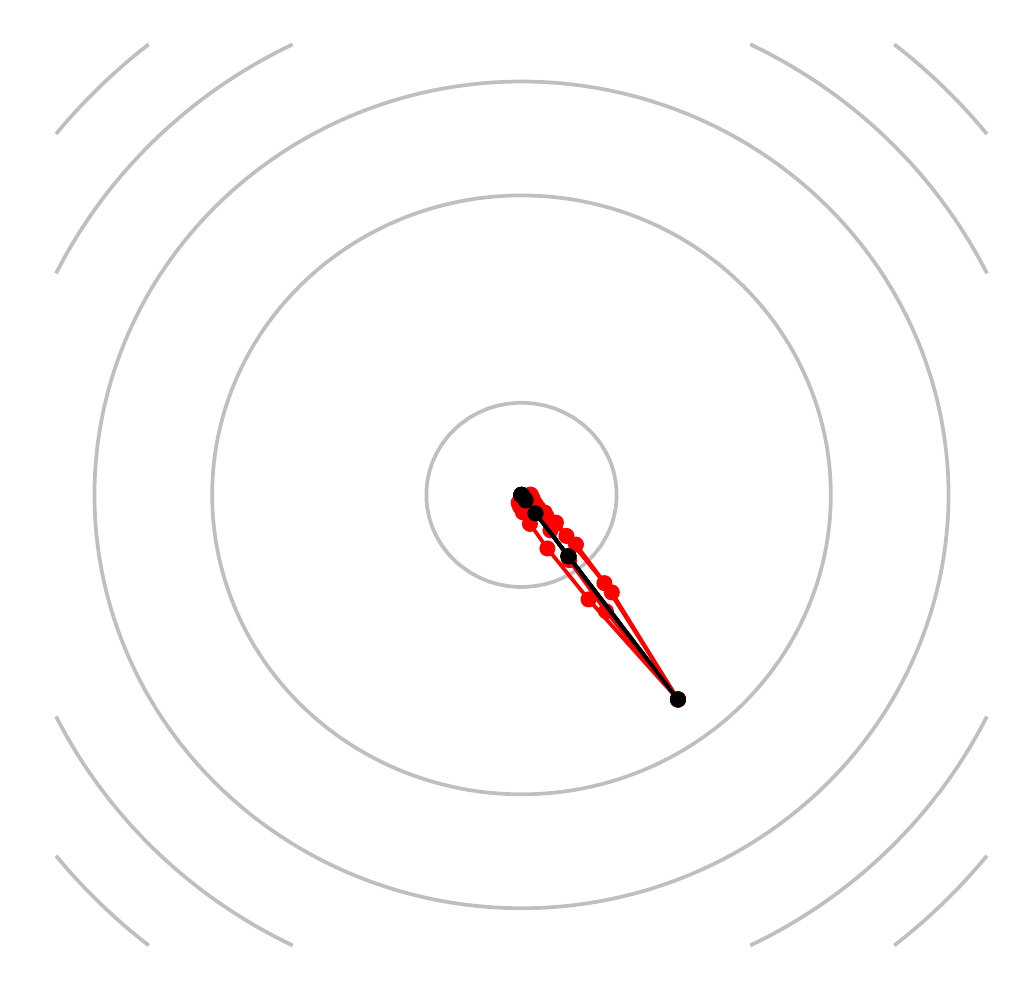}
\caption{A comparison of the minimizing sequence trajectories in a two-dimensional approximate risk minimization task, for the traditional ERM-based gradient descent (left) and a simple re-weighting procedure (right). Trajectories of the oracle update using $R^{\prime}$ (black) is pictured alongside the approximate methods (red). All procedures use $\alpha_{(t)}=0.35$, $t=0,\ldots,9$.}
\label{fig:2D_motivation}
\end{figure}

To deal with this, a classical idea is to re-weight the observations in a principled manner, and then carry out gradient descent as normal. That is, in the gradient estimate of (\ref{eqn:ERM-GD}), we replace the summands $n^{-1}l^{\prime}(\cdot;\zz_{i})$ with $\omega_{i}\,l^{\prime}(\cdot;\zz_{i})$, where $0 \leq \omega_{i} \leq 1$, $i = 1,\ldots,n$ and $\sum_{i=1}^{n}\omega_{i}=1$. For example, we could set
\begin{align*}
\omega_{i} \defeq \frac{\widetilde{\omega}_{i}}{\sum_{k=1}^{n}\widetilde{\omega}_{k}}, \quad \widetilde{\omega}_{i} \defeq \frac{\psi\left(\langle \ww, \xx \rangle - y_{i}\right)}{\left(\langle \ww, \xx \rangle - y_{i}\right)}
\end{align*}
where $\psi$ is an odd function of sigmoid form (see \ref{sec:prelims} and \ref{sec:appendix_computation}). The idea is that for observations $\zz_{i}$ that induce errors which are \textit{inordinately} large, the weight $\omega_{i}$ will be correspondingly small, reducing the impact. In the right-hand plot of Figure \ref{fig:2D_motivation}, we give analogous results for this procedure, run under the exact same settings as ERM-GD above. The modified procedure at least appears to be far more robust to random idiosyncrasies of the sample; indeed, if we run many trials, the average risk is far better than the ERM-GD procedure, and the variance smaller. The fragility observed here was in the elementary setting of $d=2$, $n=500$; it follows \textit{a fortiori} that we can only expect things to get worse for ERM-GD in higher dimensions and under smaller samples. In what follows, we develop a robust gradient-based minimization method based directly on the principles illustrated here.

\subsection{Outline of proposed procedure}\label{sec:intuitive_algo}

Were the risk to be known, we could update using
\begin{align}\label{eqn:GD_update_true}
\wwstar_{(t+1)} \defeq \wwstar_{(t)} - \alpha_{(t)} \gvec(\wwstar_{(t)})
\end{align}
where $\gvec(\ww) \defeq R^{\prime}(\ww)$, an idealized procedure. Any learning algorithm in practice will not have access to $R$ or $\gvec$, and thus must approximate this update with
\begin{align}\label{eqn:GD_update_approx}
\wwhat_{(t+1)} \defeq \wwhat_{(t)} - \alpha_{(t)} \gghat(\wwhat_{(t)}),
\end{align}
where $\gghat$ represents some sample-based estimate of $\gvec$. Setting $\gghat$ to the sample mean reduces to ERM-GD, and conditioned on $\wwhat_{(t)}$, $\exx \gghat(\wwhat_{(t+1)}) = \gvec(\wwhat_{(t+1)})$, a property used throughout the literature \citep{rakhlin2012a,leroux2012a,johnson2013a,shalev2013a,frostig2015a,murata2016a}. While convenient from a technical standpoint, there is no conceptual necessity for $\gghat$ to be unbiased. More realistically, as long as $\gghat$ is sharply distributed around $\gvec$, then an approximate first-order procedure should not deviate too far from the ideal, even if these estimators are biased. An outline of such a routine is given in Algorithm \ref{algo:rgd}.

\begin{algorithm}
\caption{Robust gradient descent outline}
\label{algo:rgd}
\begin{algorithmic}
\State \textbf{inputs:} $\wwhat_{0}$, $T>0$
\For{$t = 0,1,\ldots,T-1$} 
  \State $\displaystyle D_{(t)} \gets \{l^{\prime}(\wwhat_{(t)};\zz_{i})\}_{i=1}^{n}$ \State\Comment{\textit{Update loss gradients.}}
  \State $\displaystyle \widehat{\mv{\sigma}}_{(t)} \gets \text{\textsc{rescale}}(D_{(t)})$  \hfill\Comment{\textit{Eqn.~(\ref{eqn:dispersion_rough}).}}
  \State $\displaystyle \widehat{\mv{\theta}}_{(t)} \gets \text{\textsc{locate}}(D_{(t)},\widehat{\mv{\sigma}}_{(t)})$  \hfill\Comment{\textit{Eqns.~(\ref{eqn:location_rough}), (\ref{eqn:scale_rough}).}}
  \State $\displaystyle \wwhat_{(t+1)} \gets \wwhat_{(t)} - \alpha_{(t)}\widehat{\mv{\theta}}_{(t)}$ \hfill\Comment{\textit{Plug in to update.}}
\EndFor
\State \textbf{return:} $\wwhat_{(T)}$
\end{algorithmic}
\end{algorithm}

Let us flesh out the key sub-routines used in a single iteration, for the $\ww \in \RR^{d}$ case. When the data is prone to outliers, a ``soft'' truncation of errant values is a prudent alternative to discarding valuable data. This can be done systematically using a convenient class of M-estimators of location and scale \citep{vandervaart1998AS,huber2009a}. The \textsc{locate} sub-routine entails taking a convex, even function $\rho$, and for each coordinate, computing $\widehat{\mv{\theta}} = (\that_{1},\ldots,\that_{d})$ as
\begin{align}\label{eqn:location_rough}
\that_{j} \in \argmin_{\theta \in \RR} \sum_{i=1}^{n} \rho\left( \frac{l^{\prime}_{j}(\ww;\zz_{i})-\theta}{s_{j}} \right), \quad j = 1,\ldots,d.
\end{align}
Note that if $\rho(u) = u^{2}$, then $\that_{j}$ reduces to the sample mean of $\{l^{\prime}_{j}(\ww;\zz_{i})\}_{i=1}^{n}$, thus to reduce the impact of extreme observations, it is useful to take $\rho(u)=o(u^{2})$ as $u \to \pm\infty$. Here the $s_{j}>0$ factors are used to ensure that consistent estimates take place irrespective of the order of magnitude of the observations. We set the scaling factors in two steps. First is \textsc{rescale}, in which a rough dispersion estimate of the data is computed for each $j$ using
\begin{align}\label{eqn:dispersion_rough}
\widehat{\sigma}_{j} \in \left\{
\sigma > 0: \sum_{i=1}^{n} \chi\left(\frac{l^{\prime}_{j}(\ww;\zz_{i})-\gamma_{j}}{\sigma}\right) = 0 \right\}.
\end{align}
Here $\chi:\RR \to \RR$ is an even function, satisfying $\chi(0)<0$, and $\chi(u)>0$ as $u \to \pm\infty$ to ensure that the resulting $\widehat{\sigma}_{j}$ is an adequate measure of the dispersion of $l^{\prime}_{j}(\ww;\zz)$ about a pivot point, say $\gamma_{j}=\sum_{i=1}^{n}l^{\prime}_{j}(\ww;\zz_{i})/n$. Second, we adjust this estimate based on the available sample size and desired confidence level, as
\begin{align}\label{eqn:scale_rough}
s_{j} = \widehat{\sigma}_{j}\sqrt{n/\log(2\delta^{-1})}
\end{align}
where $\delta \in (0,1)$ specifies the desired confidence level $(1-\delta)$, and $n$ is the sample size. This last step appears rather artificial, but can be derived from a straightforward theoretical argument, given in section \ref{sec:algo_justification}. This concludes all the steps\footnote{For concreteness, in all empirical tests to follow we use the Gudermannian function \citep{abramowitz1964a}, $\rho(u) = \int_{0}^{u} \psi(x) \, dx$ where $\psi(u) = 2\atan(\exp(u))-\pi/2$, and $\chi(u) = u^{2}/(1+u^{2})-c$, for a constant $c>0$. General conditions on $\rho$, as well as standard methods for computing the M-estimates, namely the $\that_{j}$ and $\widehat{\sigma}_{j}$, are given in appendix \ref{sec:prelims}.} in one full iteration of Algorithm \ref{algo:rgd} on $\RR^{d}$.

In the remainder of this paper, we shall investigate the learning properties of this procedure, through analysis of both a theoretical (section \ref{sec:algo}) and empirical (section \ref{sec:tests}) nature. As an example, in the strongly convex risk case, our formal argument yields excess risk bounds of the form
\begin{align*}
R(\wwhat_{(T)}) - R^{\ast} \leq O\left(\frac{d(\log(d\delta^{-1})+d\log(n))}{n}\right) + O\left((1-\alpha\beta)^{T}\right)
\end{align*}
with probability no less than $1-\delta$, for small enough $\alpha_{(t)} = \alpha$ over $T$ iterations. Here $\beta > 0$ is a constant that depends only on $R$, and analogous results hold without strong convexity. Of the underlying distribution, all that is assumed is a bound on the variance of $l^{\prime}(\cdot;\zz)$, suggesting formally that the procedure should be competitive over a diverse range of data distributions.

\section{Theoretical analysis}\label{sec:algo}

Here we analyze the performance of Algorithm \ref{algo:rgd} on hypothesis class $\WW \subseteq \RR^{d}$, as measured by the risk achieved, which we estimate using upper bounds that depend on key parameters of the learning task. A general sketch is given, followed by some key conditions, representative results, and discussion. All proofs are relegated to appendix \ref{sec:appendix_proofs}.

\paragraph{Notation}
For integer $k$, write $[k] \defeq \{1,\ldots,k\}$ for all the positive integers from $1$ to $k$. Let $\mu$ denote the data distribution, with $\zz_{1},\ldots,\zz_{n}$ independent observations from $\mu$, and $\zz \sim \mu$ an independent copy. Risk is then $R(\ww) \defeq \exx_{\mu}l(\ww;\zz)$, its gradient $\gvec(\ww) \defeq R^{\prime}(\ww)$, and $R^{\ast} \defeq \inf_{\ww \in \WW} R(\ww)$. $\prr$ denotes a generic probability measure, typically the product measure induced by the sample. We write $\|\cdot\|$ for the usual ($\ell_{2}$) norm on $\RR^{d}$. For function $F$ on $\RR^{d}$ with partial derivatives defined, write the gradient as $F^{\prime}(\uu) \defeq (F^{\prime}_{1}(\uu),\ldots,F^{\prime}_{d}(\uu))$ where for short, we write $F^{\prime}_{j}(\uu) \defeq \partial F(\uu)/\partial u_{j}$.

\subsection{Sketch of the general argument}\label{sec:algo_justification}

The analysis here requires only two steps: (i) A good estimate $\gghat \approx \gvec$ implies that approximate update (\ref{eqn:GD_update_approx}) is near the optimal update. (ii) Under variance bounds, coordinate-wise M-estimation yields a good gradient estimate. We are then able to conclude that with enough samples and iterations, the output of Algorithm \ref{algo:rgd} can achieve an arbitrarily small excess risk. Here we spell out the key facts underlying this approach.

For the first step, let $\wwstar \in \RR^{d}$ be a minimizer of $R$. When the risk $R$ is strongly convex, then using well-established convex optimization theory \citep{nesterov2004ConvOpt}, we can easily control $\|\wwstar_{(t+1)}-\wwstar\|$ as a function of $\|\wwstar_{(t)}-\wwstar\|$ for any step $t \geq 0$. Thus to control $\|\wwhat_{(t+1)}-\wwstar\|$, in comparing the approximate case and optimal case, all that matters is the difference between $\gvec(\wwhat_{(t)})$ and $\gghat(\wwhat_{(t)})$ (Lemma \ref{lem:dueling_strong}). For the general case of convex $R$, since we cannot easily control the distance of the optimal update from any potential minimum, one can directly compare the trajectories of $\wwhat_{(t)}$ and $\wwstar_{(t)}$ over $t=0,1,\ldots,T$, which once again amounts to a comparison of $\gvec$ and $\gghat$. This inevitably leads to more error propagation and thus a stronger dependence on $T$, but the essence of the argument is identical to the strongly convex case.

For the second step, since both $\gghat$ and $\wwhat_{(t)}$ are based on a random sample $\{\zz_{1},\ldots,\zz_{n}\}$, we need an estimation technique which admits guarantees for any step, with high probability over the random draw of this sample. A basic requirement is that
\begin{align}\label{eqn:conf_uniform}
\prr\left\{ \max_{t \leq T} \|\gghat(\wwhat_{(t)})-\gvec(\wwhat_{(t)})\| \leq \varepsilon \right\} \geq 1-\delta.
\end{align}
Of course this must be proved (see Lemmas \ref{lem:grad_estimate} and \ref{lem:grad_estimate_varknown}), but if valid, then running Algorithm \ref{algo:rgd} for $T$ steps, we can invoke (\ref{eqn:conf_uniform}) to get a high-probability event on which $\wwhat_{(T)}$ closely approximates the optimal GD output, up to the accuracy specified by $\varepsilon$. Naturally this $\varepsilon$ will depend on confidence level $\delta$, which implies that to get $1-\delta$ confidence intervals, the upper bound in (\ref{eqn:conf_uniform}) will increase as $\delta$ gets smaller.

In the \textsc{locate} sub-routine of Algorithm \ref{algo:rgd}, we construct a more robust estimate of the risk gradient than can be provided by the empirical mean, using an ancillary estimate of the gradient variance. This is conducted using a smooth truncation scheme, as follows. One important property of $\rho$ in (\ref{eqn:location_rough}) is that for any $u \in \RR$, one has
\begin{align}\label{eqn:rho_Catoni_condition}
-\log(1-u+Cu^{2}) \leq \rho^{\prime}(u) \leq \log(1+u+Cu^{2})
\end{align}
for a fixed $C>0$, a simple generalization of the key property utilized by \citet{catoni2012a}. For the Gudermannian function (section \ref{sec:intuitive} footnote), we can take $C \leq 2$, with the added benefit that $\rho^{\prime}$ is bounded and increasing. As to the quality of these estimates, note that they are distributed sharply around the risk gradient, as follows.
\begin{lem}[Concentration of M-estimates]\label{lem:sharp_Mest}
For each coordinate $j \in [d]$, the estimates $\that_{j}$ of (\ref{eqn:location_rough}) satisfy
\begin{align}\label{eqn:sharp_Mest}
\frac{1}{2}|\that_{j}-g_{j}(\ww)| \leq\frac{C\vaa_{\mu}l_{j}^{\prime}(\ww;\zz)}{s_{j}} + \frac{s_{j}\log(2\delta^{-1})}{n}
\end{align}
with probability no less than $1-\delta$, given large enough $n$ and $s_{j}$.
\end{lem}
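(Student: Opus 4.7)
Fix coordinate $j$, write $X_i \defeq l^{\prime}_j(\ww;\zz_i)$, $\mu_j \defeq g_j(\ww) = \exx X$, $V_j \defeq \vaa_\mu X$, and set
\[
\psi_n(\theta) \defeq \frac{1}{n}\sum_{i=1}^{n}\rho^{\prime}\!\left(\frac{X_i-\theta}{s_j}\right).
\]
The first-order condition for (\ref{eqn:location_rough}) is $\psi_n(\that_j)=0$, and since $\rho^{\prime}$ is increasing (as noted in the discussion of the Gudermannian), $\theta\mapsto\psi_n(\theta)$ is monotone non-increasing. My plan is the classical Catoni strategy: rather than arguing uniformly in $\theta$, it suffices to locate two deterministic test points $\theta_-<\theta_+$ at which $\psi_n$ is guaranteed (with high probability) to change sign around $\that_j$, and then read off the deviation $|\that_j-\mu_j|$ from $\theta_\pm-\mu_j$.

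For the high-probability control of $\psi_n$ at a fixed $\theta$, I will apply a Chernoff-type bound using (\ref{eqn:rho_Catoni_condition}). Exponentiating the right-hand inequality and taking expectations yields
\[
\exx\exp\!\left(\rho^{\prime}\!\left(\tfrac{X-\theta}{s_j}\right)\right)\le 1+\frac{\mu_j-\theta}{s_j}+\frac{C\bigl(V_j+(\mu_j-\theta)^2\bigr)}{s_j^{2}}\le \exp\!\left(\frac{\mu_j-\theta}{s_j}+\frac{C\bigl(V_j+(\mu_j-\theta)^2\bigr)}{s_j^{2}}\right).
\]
Markov's inequality then gives, with probability at least $1-\delta/2$,
\[
\psi_n(\theta)\le \frac{\mu_j-\theta}{s_j}+\frac{C\bigl(V_j+(\mu_j-\theta)^2\bigr)}{s_j^{2}}+\frac{\log(2/\delta)}{n}\;\eqdef\; B_+(\theta),
\]
and the analogous lower bound $\psi_n(\theta)\ge B_-(\theta)$ follows symmetrically from the left-hand side of (\ref{eqn:rho_Catoni_condition}).

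Now I choose $\theta_+ >\mu_j$ so that $B_+(\theta_+)=0$ and $\theta_- <\mu_j$ so that $B_-(\theta_-)=0$, then use monotonicity of $\psi_n$: on the good event, $\psi_n(\theta_+)\le 0=\psi_n(\that_j)$ forces $\that_j\le \theta_+$, and symmetrically $\that_j\ge \theta_-$. Setting $u\defeq (\theta_+-\mu_j)/s_j>0$, the equation $B_+(\theta_+)=0$ reads
\[
u \;=\; Cu^{2}+\frac{CV_j}{s_j^{2}}+\frac{\log(2/\delta)}{n}.
\]
Provided $s_j$ and $n$ are large enough that the would-be solution $u$ lies in the regime $Cu\le 1/2$, the quadratic term can be absorbed into the linear one and we obtain $u\le 2\bigl(CV_j/s_j^{2}+\log(2/\delta)/n\bigr)$. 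The same solution works for $\theta_-$ by symmetry, yielding
\[
\tfrac{1}{2}|\that_j-\mu_j|\le \frac{CV_j}{s_j}+\frac{s_j\log(2/\delta)}{n}
\]
on an event of probability at least $1-\delta$ (after a union bound over the two tail directions).

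The main technical nuisance is the self-referential nature of the step "$Cu^{2}$ is negligible": one must verify that the claimed bound on $u$ is consistent with $Cu\le 1/2$, which is exactly the quantitative content of the ``large enough $n$ and $s_j$'' hypothesis. Everything else is routine: the Chernoff step is a direct application of (\ref{eqn:rho_Catoni_condition}), and the reduction from a uniform-in-$\theta$ statement to evaluation at $\theta_\pm$ is supplied for free by the monotonicity of $\rho^{\prime}$.
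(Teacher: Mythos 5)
Your proposal is correct and follows essentially the same route as the paper's proof: Catoni's argument via a Chernoff/Markov bound built from the two-sided inequality (\ref{eqn:rho_Catoni_condition}), followed by locating deterministic sign-change points of the monotone estimating equation and bounding the relevant root of the resulting quadratic. The only cosmetic difference is that the paper solves the quadratic explicitly and bounds its smaller root, whereas you absorb the $Cu^{2}$ term by self-consistency; the two coincide, and your ``$Cu\le 1/2$'' regime is exactly the paper's condition that the discriminant quantity $D$ be at most one, i.e.\ the ``large enough $n$ and $s_{j}$'' hypothesis.
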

\noindent To get the tightest possible confidence interval as a function of $s_{j} > 0$, we must set
\begin{align*}
s_{j}^{2} = \frac{Cn\vaa_{\mu}l_{j}^{\prime}(\ww;\zz)}{\log(2\delta^{-1})},
\end{align*}
from which we derive (\ref{eqn:scale_rough}), with $\widehat{\sigma}_{j}^{2}$ corresponding to a computable estimate of $\vaa_{\mu}l_{j}^{\prime}(\ww;\zz)$. If the variance over all choices of $\ww$ is bounded by some $V < \infty$, then up to the variance estimates, we have $\|\gghat(\ww)-\gvec(\ww)\| \leq O(\sqrt{dV\log(2d\delta^{-1})/n})$, with $\gghat = \widehat{\mv{\theta}}$ from Algorithm \ref{algo:rgd}, yielding a bound for (\ref{eqn:conf_uniform}) free of $\ww$.

\begin{rmk}[Comparison with ERM-GD]
As a reference example, assume we were to run ERM-GD, namely using an empirical mean estimate of the gradient. Using Chebyshev's inequality, with probability $1-\delta$ all we can guarantee is $\varepsilon \leq O(\sqrt{d/(n\delta)})$. On the other hand, using the location estimate of Algorithm \ref{algo:rgd} provides guarantees with $\log(1/\delta)$ dependence on the confidence level, realizing an exponential improvement over the $1/\delta$ dependence of ERM-GD, and an appealing formal motivation for using M-estimates of location as a novel strategy.
\end{rmk}

\subsection{Conditions and results}\label{sec:algo_performance}

On the learning task, we make the following assumptions.
\begin{enumerate}
\item[\namedlabel{asmp:A1}{A1}.] Minimize risk $R(\cdot)$ over a closed, convex $\WW \subset \RR^{d}$ with diameter $\Delta < \infty$.
\item[\namedlabel{asmp:A2}{A2}.] $R(\cdot)$ and $l(\cdot;\zz)$ (for all $\zz$) are $\lambda$-smooth, convex, and continuously differentiable on $\WW$.
\item[\namedlabel{asmp:A3}{A3}.] There exists $\wwstar \in \WW$ at which $\gvec(\wwstar) = 0$.
\item[\namedlabel{asmp:A4}{A4}.] Distribution $\mu$ satisfies $\vaa_{\mu} l_{j}^{\prime}(\ww;\zz) \leq V < \infty$, for all $\ww \in \WW$, $j \in [d]$.
\end{enumerate}

Algorithm \ref{algo:rgd} is run following (\ref{eqn:location_rough}), (\ref{eqn:dispersion_rough}), and (\ref{eqn:scale_rough}) as specified in section \ref{sec:intuitive}. For \textsc{rescale}, the choice of $\chi$ is only important insofar as the scale estimates (the $\widehat{\sigma}_{j}$) should be moderately accurate. To make the dependence on this accuracy precise, take constants $c_{min},c_{max} > 0$ such that
\begin{align}\label{eqn:req_dispersion}
c_{min}^{2} \leq \frac{\widehat{\sigma}_{j}}{\vaa_{\mu}l_{j}^{\prime}(\ww;\zz)} \leq c_{max}^{2}, \quad j \in [d]
\end{align}
for all choices of $\ww \in \WW$, and write $c_{0} \defeq (c_{max}+C/c_{min})$. For $1-\delta$ confidence, we need a large enough sample; more precisely, for each $\ww$, it is sufficient if for each $j$,
\begin{align}\label{eqn:req_n_general}
\frac{1}{4} \geq \frac{C\log(2\delta^{-1})}{n}\left(1 + \frac{C\vaa_{\mu}l_{j}^{\prime}(\ww;\zz)}{\widehat{\sigma}_{j}^{2}}\right).
\end{align}
For simplicity, fix a small enough step size, 
\begin{align}\label{eqn:req_GD_update}
\alpha_{(t)} = \alpha, \forall \, t \in \{0,\ldots,T-1\}, \quad \alpha \in (0,2/\lambda).
\end{align}
Dependence on initialization is captured by two related factors $R_{0} \defeq R(\wwstar_{(0)})-R^{\ast}$, and $D_{0} \defeq \|\wwstar_{(0)}-\wwstar\|$. Under this setup, we can control the estimation error.
\begin{lem}[Uniform accuracy of gradient estimates]\label{lem:grad_estimate}
For all steps $t=0,\ldots,T-1$ of Algorithm \ref{algo:rgd}, we have
\begin{align*}
\|\widehat{\mv{\theta}}_{(t)}-\gvec(\wwhat_{(t)})\| \leq \frac{\widetilde{\varepsilon}}{\sqrt{n}} \defeq \frac{\lambda(\sqrt{d}+1)}{\sqrt{n}} + 2c_{0} \sqrt{\frac{dV(\log(2d\delta^{-1}) + d\log(3\Delta\sqrt{n}/2))}{n}}
\end{align*}
with probability no less than $1-\delta$.
\end{lem}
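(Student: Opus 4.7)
\textbf{Proof plan for Lemma \ref{lem:grad_estimate}.} The statement requires a bound that holds simultaneously at all $\wwhat_{(t)}$, $t=0,\ldots,T-1$. Since each $\wwhat_{(t)}$ is a complicated function of the sample, I cannot apply Lemma \ref{lem:sharp_Mest} at each step independently; instead, the natural strategy is to prove a \emph{uniform} guarantee over all of $\WW$, which then applies at every random iterate. The approach has three steps: a pointwise M-estimator concentration bound, a covering argument over $\WW$, and a Lipschitz extension from the net to the full domain using $\lambda$-smoothness.

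First, I would fix $\ww \in \WW$ and invoke Lemma \ref{lem:sharp_Mest} coordinate-wise. The scale (\ref{eqn:scale_rough}) was derived by minimizing the right-hand side of (\ref{eqn:sharp_Mest}) in $s_{j}$; substituting in the estimate $\widehat{\sigma}_{j}^{2}$ for the unknown variance and using (\ref{eqn:req_dispersion}) to pass between $\widehat{\sigma}_{j}^{2}$ and $\vaa_{\mu}l_{j}^{\prime}(\ww;\zz)$, together with the sample size requirement (\ref{eqn:req_n_general}), produces a coordinate-wise bound of the form
\begin{equation*}
|\that_{j}(\ww) - g_{j}(\ww)| \leq 2 c_{0} \sqrt{\frac{V\log(2/\delta')}{n}}
\end{equation*}
with probability at least $1-\delta'$. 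A union bound over the $d$ coordinates then gives $\|\widehat{\mv{\theta}}(\ww)-\gvec(\ww)\| \leq 2c_{0}\sqrt{dV\log(2d/\delta')/n}$ at any fixed $\ww$.

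Second, to upgrade pointwise to uniform, I would construct an $\eta$-net $N_{\eta}\subset\WW$ of cardinality $|N_{\eta}|\leq(3\Delta/(2\eta))^{d}$, which is standard for a set of diameter $\Delta$ in $\RR^{d}$. Applying the pointwise bound with $\delta' = \delta/|N_{\eta}|$ and union-bounding over $N_{\eta}$ and the $d$ coordinates yields, with probability at least $1-\delta$,
\begin{equation*}
\max_{\ww\in N_{\eta}} \|\widehat{\mv{\theta}}(\ww)-\gvec(\ww)\| \leq 2c_{0}\sqrt{\frac{dV\bigl(\log(2d\delta^{-1}) + d\log(3\Delta/(2\eta))\bigr)}{n}}.
\end{equation*}
The choice $\eta = 1/\sqrt{n}$ produces exactly the $\log(3\Delta\sqrt{n}/2)$ factor appearing in $\widetilde{\varepsilon}$.

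Third, for an arbitrary $\ww\in\WW$ (in particular $\ww=\wwhat_{(t)}$), pick the nearest $\ww'\in N_{\eta}$ so that $\|\ww-\ww'\|\leq\eta$ and decompose via the triangle inequality into three pieces: $\|\widehat{\mv{\theta}}(\ww)-\widehat{\mv{\theta}}(\ww')\|$, $\|\widehat{\mv{\theta}}(\ww')-\gvec(\ww')\|$, and $\|\gvec(\ww')-\gvec(\ww)\|$. The middle term is controlled on the good event from step two. The third term is at most $\lambda\|\ww-\ww'\|\leq\lambda/\sqrt{n}$ by $\lambda$-smoothness of $R$. The first term is the delicate one: I would argue that each coordinate $\that_{j}$, defined implicitly by $\sum_{i}\rho'\bigl((l'_{j}(\ww;\zz_{i})-\that_{j})/s_{j}\bigr)=0$, is $\lambda$-Lipschitz in $\ww$, using the implicit function theorem together with the facts that $l'_{j}(\cdot;\zz_{i})$ is $\lambda$-Lipschitz (from smoothness of $l$) and that $\rho'$ is nondecreasing and bounded (as for the Gudermannian). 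Aggregating the coordinate-wise Lipschitz bounds gives $\|\widehat{\mv{\theta}}(\ww)-\widehat{\mv{\theta}}(\ww')\|\leq\lambda\sqrt{d}\,\|\ww-\ww'\|\leq\lambda\sqrt{d}/\sqrt{n}$, and combined with the $\lambda/\sqrt{n}$ from the $\gvec$ term this yields the $\lambda(\sqrt{d}+1)/\sqrt{n}$ deterministic slack in $\widetilde{\varepsilon}$.

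The main obstacle is the third step: establishing the coordinate-wise Lipschitz behavior of the M-estimator in $\ww$ with a constant matching $\lambda$. The implicit function theorem reduces this to controlling $\rho''$ on the relevant arguments $(l'_{j}(\ww;\zz_{i})-\that_{j})/s_{j}$; because the scale $s_{j}$ grows like $\sqrt{n/\log(2\delta^{-1})}$, these arguments are typically of moderate size and the required lower bound on the denominator $\sum_{i}\rho''(\cdot)/s_{j}$ can be extracted without additional probabilistic overhead. Once this Lipschitz property is in hand, the three pieces combine cleanly into the stated $\widetilde{\varepsilon}/\sqrt{n}$ bound.
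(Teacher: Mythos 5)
Your proposal follows essentially the same route as the paper's proof: a coordinate-wise application of Lemma \ref{lem:sharp_Mest} with the scale choice (\ref{eqn:scale_rough}) and conditions (\ref{eqn:req_dispersion})--(\ref{eqn:req_n_general}), a union bound over coordinates, an $\epsilon$-net of cardinality $(3\Delta/2\epsilon)^{d}$ with $\epsilon=1/\sqrt{n}$, and the same three-term triangle-inequality decomposition yielding the $\lambda(\sqrt{d}+1)/\sqrt{n}$ slack. The only divergence is in justifying the Lipschitz property of $\that_{j}(\cdot)$: the paper argues directly from monotonicity of $\psi$ (shifting every observation by at most $\lambda a$ can move the root by at most $\lambda a$), whereas you invoke the implicit function theorem; the obstacle you flag there actually dissolves, since the derivative of the root is a $\psi'$-weighted convex combination of the per-sample gradients of $l^{\prime}_{j}(\cdot;\zz_{i})$, so positivity of $\psi'$ alone (no lower bound on $\rho''$) gives the constant $\lambda$, matching the paper's conclusion.
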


\paragraph{Under strongly convex risk}

In addition to assumptions \ref{asmp:A1}--\ref{asmp:A4}, assume that $R$ is $\kappa$-strongly convex. In this case, $\wwstar$ in \ref{asmp:A3} is the unique minimum. First, we control the estimation error by showing that the approximate update (\ref{eqn:GD_update_approx}) does not differ much from the optimal update (\ref{eqn:GD_update_true}).

\begin{lem}[Minimizer control]\label{lem:dueling_strong}
Consider the general approximate GD update (\ref{eqn:GD_update_approx}), with $\alpha_{(t)}=\alpha$ such that $0 < \alpha < 2/(\kappa+\lambda)$. Assume that (\ref{eqn:conf_uniform}) holds with bound $\varepsilon$. Write $\beta \defeq 2\kappa\lambda/(\kappa+\lambda)$. Then, with probability no less than $1-\delta$, we have
\begin{align*}
\|\wwhat_{(T)}-\wwstar\| \leq (1-\alpha\beta)^{T/2}D_{0} + \frac{2\varepsilon}{\beta}.
\end{align*}
\end{lem}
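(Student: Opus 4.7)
The plan is to work on the high-probability event $E$ on which $\max_{t \leq T}\|\gghat(\wwhat_{(t)}) - \gvec(\wwhat_{(t)})\| \leq \varepsilon$ holds, derive a deterministic one-step contraction for $\|\wwhat_{(t)} - \wwstar\|$ on $E$, and then unroll the recursion over $T$ steps. The starting point is the identity
\[
\wwhat_{(t+1)} - \wwstar \;=\; \bigl(\wwhat_{(t)} - \wwstar - \alpha\gvec(\wwhat_{(t)})\bigr) \;-\; \alpha\bigl(\gghat(\wwhat_{(t)}) - \gvec(\wwhat_{(t)})\bigr),
\]
which by the triangle inequality gives, on $E$, $\|\wwhat_{(t+1)} - \wwstar\| \leq \|\wwhat_{(t)} - \wwstar - \alpha\gvec(\wwhat_{(t)})\| + \alpha\varepsilon$. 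The whole problem then reduces to bounding the ``exact-gradient residual'' by a contraction factor times $\|\wwhat_{(t)}-\wwstar\|$.

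For that step I would invoke the textbook co-coercivity estimate for $\lambda$-smooth, $\kappa$-strongly convex functions (Nesterov 2004, Thm.~2.1.12): for all $\ww \in \WW$,
\[
\langle \gvec(\ww) - \gvec(\wwstar),\,\ww - \wwstar\rangle \;\geq\; \frac{\kappa\lambda}{\kappa+\lambda}\|\ww-\wwstar\|^{2} + \frac{1}{\kappa+\lambda}\|\gvec(\ww)-\gvec(\wwstar)\|^{2}.
\]
Since $\gvec(\wwstar) = 0$ by \ref{asmp:A3}, expanding the square $\|\ww - \wwstar - \alpha\gvec(\ww)\|^{2}$ and substituting the above yields
\[
\|\ww - \wwstar - \alpha\gvec(\ww)\|^{2} \;\leq\; (1-\alpha\beta)\|\ww-\wwstar\|^{2} + \alpha\bigl(\alpha - \tfrac{2}{\kappa+\lambda}\bigr)\|\gvec(\ww)\|^{2},
\]
with $\beta = 2\kappa\lambda/(\kappa+\lambda)$. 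The stipulation $0 < \alpha < 2/(\kappa+\lambda)$ makes the second term non-positive, leaving the contraction $\|\ww - \wwstar - \alpha\gvec(\ww)\| \leq \sqrt{1-\alpha\beta}\,\|\ww - \wwstar\|$, and hence the recursion
\[
\|\wwhat_{(t+1)} - \wwstar\| \;\leq\; \sqrt{1-\alpha\beta}\,\|\wwhat_{(t)} - \wwstar\| + \alpha\varepsilon
\]
valid on $E$ for every $t = 0,\ldots,T-1$.

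Iterating from $t = 0$ with $\|\wwhat_{(0)} - \wwstar\| = D_{0}$ and summing the resulting geometric series gives
\[
\|\wwhat_{(T)} - \wwstar\| \;\leq\; (1-\alpha\beta)^{T/2} D_{0} + \alpha\varepsilon\sum_{k=0}^{T-1}(1-\alpha\beta)^{k/2} \;\leq\; (1-\alpha\beta)^{T/2} D_{0} + \frac{\alpha\varepsilon}{1-\sqrt{1-\alpha\beta}}.
\]
To close the bound in the stated form, I would invoke the elementary inequality $\sqrt{1-x} \leq 1 - x/2$ for $x \in [0,1]$, which gives $1 - \sqrt{1-\alpha\beta} \geq \alpha\beta/2$ and therefore collapses the last term to $2\varepsilon/\beta$; the legality of this (namely $\alpha\beta \leq 1$) follows from $\alpha < 2/(\kappa+\lambda)$ together with the AM--GM consequence $\beta \leq (\kappa+\lambda)/2$. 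Folding in $\prr(E) \geq 1-\delta$ finishes the argument. I do not anticipate any serious obstacle: the contraction is standard Nesterov-style bookkeeping, and the only mild subtlety is obtaining the clean constant $2/\beta$ on the error term, which the concavity bound on $\sqrt{\cdot}$ delivers in one line.
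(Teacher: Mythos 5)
Your proof is correct and follows essentially the same route as the paper's: the same triangle-inequality decomposition into an exact-gradient contraction plus an $\alpha\varepsilon$ estimation term, the same Nesterov-style contraction $\|\ww-\wwstar-\alpha\gvec(\ww)\|^{2}\leq(1-\alpha\beta)\|\ww-\wwstar\|^{2}$ (which you derive from co-coercivity where the paper simply cites Thm.~2.1.15), and the same geometric-series unrolling. The only cosmetic difference is in the last algebraic step: you bound $1-\sqrt{1-\alpha\beta}\geq\alpha\beta/2$ via concavity, while the paper multiplies through by $1+\sqrt{1-\alpha\beta}$ and uses $1+\sqrt{1-\alpha\beta}\leq 2$; these are equivalent.
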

\noindent Since Algorithm \ref{algo:rgd} indeed satisfies (\ref{eqn:conf_uniform}), as proved in Lemma \ref{lem:grad_estimate}, we can use the control over the parameter deviation provided by Lemma \ref{lem:dueling_strong} and the smoothness of $R$ to prove a finite-sample excess risk bound.
\begin{thm}[Excess risk bounds]\label{thm:main_Rbound_strong}
Write $\wwhat_{(T)}$ for the output of Algorithm \ref{algo:rgd} after $T$ iterations, run such that (\ref{eqn:req_n_general})--(\ref{eqn:req_GD_update}) hold, with step size $\alpha_{(t)}=\alpha$ for all $0<t<T$, as in Lemma \ref{lem:dueling_strong}. It follows that
\begin{align*}
R(\wwhat_{(T)})-R^{\ast} \leq \lambda(1-\alpha\beta)^{T}D_{0}^{2} + \frac{4\lambda \widetilde{\varepsilon}}{\beta^{2} n}
\end{align*}
with probability no less than $1-\delta$, where $\widetilde{\varepsilon}$ is as given in Lemma \ref{lem:grad_estimate}.
\end{thm}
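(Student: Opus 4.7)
The plan is to chain together the two principal lemmas already in hand and finish off with the quadratic upper bound coming from $\lambda$-smoothness of $R$ at its minimum. First I would invoke Lemma \ref{lem:grad_estimate}: with probability at least $1-\delta$ over the draw of the sample, the event
\[
\mathcal{E} \defeq \left\{ \max_{0 \leq t \leq T-1} \|\widehat{\mv{\theta}}_{(t)} - \gvec(\wwhat_{(t)})\| \leq \widetilde{\varepsilon}/\sqrt{n} \right\}
\]
holds. On $\mathcal{E}$, the hypothesis (\ref{eqn:conf_uniform}) feeding Lemma \ref{lem:dueling_strong} is satisfied with $\varepsilon = \widetilde{\varepsilon}/\sqrt{n}$, so (reading ``as in Lemma \ref{lem:dueling_strong}'' as the upgraded step-size window $\alpha \in (0,2/(\kappa+\lambda))$, which is narrower than (\ref{eqn:req_GD_update}) but still nonempty) I immediately get
\[
\|\wwhat_{(T)} - \wwstar\| \leq (1-\alpha\beta)^{T/2} D_{0} + \frac{2\widetilde{\varepsilon}}{\beta\sqrt{n}}.
\]

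Next I would convert this parameter-space deviation into a risk bound. By \ref{asmp:A3} one has $\gvec(\wwstar) = 0$, and by \ref{asmp:A2} the risk $R$ is $\lambda$-smooth, so the standard quadratic upper bound at $\wwstar$ yields
\[
R(\wwhat_{(T)}) - R^{\ast} \leq \frac{\lambda}{2}\|\wwhat_{(T)} - \wwstar\|^{2}.
\]
Substituting the iterate bound, squaring, and applying $(a+b)^{2} \leq 2a^{2} + 2b^{2}$ with $a = (1-\alpha\beta)^{T/2}D_{0}$ and $b = 2\widetilde{\varepsilon}/(\beta\sqrt{n})$ produces
\[
R(\wwhat_{(T)}) - R^{\ast} \leq \lambda(1-\alpha\beta)^{T} D_{0}^{2} + \frac{4\lambda\widetilde{\varepsilon}^{2}}{\beta^{2}\,n},
\]
which is the advertised bound (up to the interpretation of the $\widetilde{\varepsilon}$ factor in the second term). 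Because every inequality above is pointwise on $\mathcal{E}$, the overall confidence remains $1-\delta$ with no union-bound penalty.

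The analytical content is light; the real work is \emph{bookkeeping}. I would need to verify that the sample-size condition (\ref{eqn:req_n_general}), the scaling rule (\ref{eqn:scale_rough}), and the step-size condition (\ref{eqn:req_GD_update}) (sharpened as noted above) are together sufficient to simultaneously invoke Lemmas \ref{lem:grad_estimate} and \ref{lem:dueling_strong}, and that strong convexity of $R$ is used only to obtain the contraction factor $\beta = 2\kappa\lambda/(\kappa+\lambda)$ inside Lemma \ref{lem:dueling_strong} (no further probabilistic content is introduced). Once this is confirmed, the theorem falls out of the two-line computation above, and the main obstacle is ensuring cleanness of the constants rather than any new technical step.
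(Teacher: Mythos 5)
Your proposal is correct and follows essentially the same route as the paper's own proof: Lemma \ref{lem:grad_estimate} supplies the uniform bound $\varepsilon = \widetilde{\varepsilon}/\sqrt{n}$ needed for (\ref{eqn:conf_uniform}), Lemma \ref{lem:dueling_strong} controls $\|\wwhat_{(T)}-\wwstar\|$, and $\lambda$-smoothness at $\wwstar$ plus $(a+b)^{2} \leq 2(a^{2}+b^{2})$ finishes the argument. You also correctly flag that the computation yields $4\lambda\widetilde{\varepsilon}^{2}/(\beta^{2}n)$ rather than $4\lambda\widetilde{\varepsilon}/(\beta^{2}n)$; the paper's own proof produces the same squared quantity, so the exponent in the theorem statement appears to be a typo rather than a gap in your reasoning.
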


\begin{rmk}[Interpretation of bounds]
There are two terms in the upper bound of Theorem \ref{thm:main_Rbound_strong}, an optimization term decreasing in $T$, and an estimation term decreasing in $n$. The optimization error decreases at the usual gradient descent rate, and due to the uniformity of the bounds obtained, the statistical error is not hurt by taking $T$ arbitrarily large, thus with enough samples we can guarantee arbitrarily small excess risk. Finally, the most important assumption on the distribution is weak: finite second-order moments. If we assume finite kurtosis, the argument of \citet{catoni2012a} can be used to create analogous guarantees for an explicit scale estimation procedure, yielding guarantees whether the data is sub-Gaussian or heavy-tailed an appealing robustness to the data distribution.
\end{rmk}

\begin{rmk}[Doing projected descent]
The above analysis proceeds on the premise that $\wwhat_{(t)} \in \WW$ holds after all the updates, $t \in [T]$. To enforce this, a standard variant of Algorithm \ref{algo:rgd} is to update as
\begin{align*}
\wwhat_{(t+1)} \gets \pi_{\WW}\left( \wwhat_{(t)} - \alpha_{(t)}\widehat{\mv{\theta}}_{(t)} \right), \quad t \in \{0,\ldots,T-1\}
\end{align*}
where $\pi_{\WW}(\uu) \defeq \argmin_{\vv \in \WW}\|\uu-\vv\|$. By \ref{asmp:A1}, this projection is well-defined \citep[Sec.~3.12, Thm.~3.12]{luenberger1969Book}. Using this fact, it follows that $\|\pi_{\WW}(\uu)-\pi_{\WW}(\vv)\| \leq \|\uu-\vv\|$ for all $\uu,\vv \in \WW$, by which we can immediately show that Lemma \ref{lem:dueling_strong} holds for the \textit{projected robust gradient descent} version of Algorithm \ref{algo:rgd}.
\end{rmk}

\paragraph{With prior information}

An interesting concept in machine learning is that of the relationship between learning efficiency, and the task-related prior information available to the learner. In the previous results, the learner is assumed to have virtually no information beyond the data available, and the ability to set a small enough step-size. What if, for example, just the gradient variance was known? A classic example from decision theory is the dominance of the estimator of James and Stein over the maximum likelihood estimator, in multivariate Normal mean estimation using prior variance information. In our more modern and non-parametric setting, the impact of rough, data-driven scale estimates was made explicit by the factor $c_{0}$. Here we give complementary results that show how partial prior information on the distribution $\mu$ can improve learning.

\begin{lem}[Accuracy with variance information]\label{lem:grad_estimate_varknown}
Conditioning on $\wwhat_{(t)}$ and running one scale-location sequence of Algorithm \ref{algo:rgd}, with $\mv{\widehat{\sigma}}_{(t)}=(\widehat{\sigma}_{1},\ldots,\widehat{\sigma}_{d})$ modified to satisfy $\widehat{\sigma}_{j}^{2} = C\vaa_{\mu}l^{\prime}_{j}(\wwhat_{(t)};\zz)$, $j \in [d]$. It follows that
\begin{align*}
\|\mv{\that}_{(t)}-\gvec(\wwhat_{(t)})\| \leq 4\left(\frac{C\trace(\Sigma_{(t)})\log(2d\delta^{-1})}{n}\right)^{1/2}
\end{align*}
with probability no less than $1-\delta$, where $\Sigma_{(t)}$ is the covariance matrix of $l^{\prime}(\wwhat_{(t)};\zz)$.
\end{lem}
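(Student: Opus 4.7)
The plan is to apply Lemma \ref{lem:sharp_Mest} coordinate-by-coordinate with the specified choice of scale, optimize the resulting bound, and then aggregate via a union bound over the $d$ coordinates.

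First I would fix $j \in [d]$ and condition on $\wwhat_{(t)}$. Writing $V_{j} \defeq \vaa_{\mu}l^{\prime}_{j}(\wwhat_{(t)};\zz)$, the scale setting (\ref{eqn:scale_rough}) with the prescribed $\widehat{\sigma}_{j}^{2}=CV_{j}$ gives $s_{j}^{2}=CV_{j}n/\log(2\delta^{-1})$. Plugging this into the right-hand side of (\ref{eqn:sharp_Mest}) in Lemma \ref{lem:sharp_Mest}, both summands become exactly $\sqrt{CV_{j}\log(2\delta^{-1})/n}$; indeed, this is precisely the balancing choice that minimizes the sum in $s_{j}$. Multiplying through by $2$, one obtains, for each coordinate and with probability at least $1-\delta$,
\begin{align*}
|\that_{j}-g_{j}(\wwhat_{(t)})| \leq 4\sqrt{\frac{CV_{j}\log(2\delta^{-1})}{n}}.
\end{align*}

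Next I would perform a union bound over the $d$ coordinates by invoking the above with $\delta$ replaced by $\delta/d$, so that $\log(2\delta^{-1})$ is replaced by $\log(2d\delta^{-1})$, and all $d$ per-coordinate events hold simultaneously with probability at least $1-\delta$. On this good event, squaring and summing gives
\begin{align*}
\|\mv{\that}_{(t)}-\gvec(\wwhat_{(t)})\|^{2} = \sum_{j=1}^{d} |\that_{j}-g_{j}(\wwhat_{(t)})|^{2} \leq \frac{16\, C\log(2d\delta^{-1})}{n}\sum_{j=1}^{d}V_{j},
\end{align*}
and since $\sum_{j=1}^{d}V_{j}=\trace(\Sigma_{(t)})$, taking square roots delivers the claimed bound.

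The only genuine content is Lemma \ref{lem:sharp_Mest}, which is already established; everything else is plugging in the optimal $s_{j}$ and performing the union bound. The mildly delicate point to double-check is that the prerequisites for Lemma \ref{lem:sharp_Mest} (the ``large enough $n$ and $s_{j}$'' condition from (\ref{eqn:req_n_general})) remain satisfied under the modified scale choice $\widehat{\sigma}_{j}^{2}=CV_{j}$; substituting yields a clean condition $n \geq 4C(1+1)\log(2d\delta^{-1})$ after the coordinate-wise union bound, which can be absorbed into the standing regularity assumptions. With that in place, the argument above is essentially mechanical and the stated high-probability bound follows.
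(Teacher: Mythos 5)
Your proposal is correct and follows essentially the same route as the paper's proof: apply Lemma \ref{lem:sharp_Mest} with the variance-balancing scale $s_{j}^{2}=C\vaa_{\mu}l^{\prime}_{j}(\wwhat_{(t)};\zz)\,n/\log(2\delta^{-1})$ to get the per-coordinate bound $4\sqrt{C V_{j}\log(2\delta^{-1})/n}$, then aggregate over the $d$ coordinates by a union bound (the paper phrases this as bounding the failure probability of the vector event by $d\delta$ and then substituting $\delta/d$, which is the same computation). Your closing check that (\ref{eqn:req_n_general}) reduces to $n\geq 8C\log(2d\delta^{-1})$ under the modified scale is a detail the paper leaves implicit, but it is accurate.
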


\noindent One would expect that with sharp gradient estimates, the variance of the updates should be small with a large enough sample. Here we show that the procedure stabilizes quickly as the estimates get closer to an optimum.
\begin{thm}[Control of update variance]\label{thm:variance_control}
Run Algorithm \ref{algo:rgd} as in Lemma \ref{lem:grad_estimate_varknown}, with arbitrary step-size $\alpha_{(t)}$. Then, for any $t<T$, taking expectation with respect to the sample $\{\zz_{i}\}_{i=1}^{n}$, conditioned on $\wwhat_{(t)}$, we have
\begin{align*}
\exx\|\wwhat_{(t+1)}-\wwhat_{(t)}\|^{2} \leq 2\alpha_{(t)}^{2}\left(\frac{32Cd\trace(\Sigma_{(t)})}{n} + \|\gvec(\wwhat_{(t)})\|^{2}\right).
\end{align*}
\end{thm}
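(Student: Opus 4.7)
The plan is to reduce the bound on $\exx\|\wwhat_{(t+1)}-\wwhat_{(t)}\|^{2}$ to a bound on the second moment of $\widehat{\mv{\theta}}_{(t)}$, split that into bias and stochastic parts, and then upgrade the high-probability estimate of Lemma \ref{lem:grad_estimate_varknown} to an expectation bound via a standard tail-integration argument.

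First, since the update reads $\wwhat_{(t+1)}-\wwhat_{(t)} = -\alpha_{(t)}\widehat{\mv{\theta}}_{(t)}$, we immediately get $\|\wwhat_{(t+1)}-\wwhat_{(t)}\|^{2} = \alpha_{(t)}^{2}\|\widehat{\mv{\theta}}_{(t)}\|^{2}$. Conditioning on $\wwhat_{(t)}$, the true gradient $\gvec(\wwhat_{(t)})$ is deterministic, so applying the inequality $\|a+b\|^{2} \le 2\|a\|^{2}+2\|b\|^{2}$ gives
\begin{align*}
\exx\|\widehat{\mv{\theta}}_{(t)}\|^{2} \leq 2\,\exx\|\widehat{\mv{\theta}}_{(t)}-\gvec(\wwhat_{(t)})\|^{2} + 2\|\gvec(\wwhat_{(t)})\|^{2}.
\end{align*}
So the problem reduces to controlling $\exx Y$, where $Y := \|\widehat{\mv{\theta}}_{(t)}-\gvec(\wwhat_{(t)})\|^{2}$.

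Next, I invoke Lemma \ref{lem:grad_estimate_varknown}: for any $\delta \in (0,1)$, with probability at least $1-\delta$,
\begin{align*}
Y \leq \frac{16\,C\,\trace(\Sigma_{(t)})\log(2d\delta^{-1})}{n}.
\end{align*}
Writing $\kappa := 16\,C\,\trace(\Sigma_{(t)})/n$, this is equivalent to the tail bound $\prr\{Y > \tau\} \leq 2d\exp(-\tau/\kappa)$, which clips to $1$ once $\tau < \kappa\log(2d)$. Integrating the tail,
\begin{align*}
\exx Y = \int_{0}^{\infty}\prr\{Y>\tau\}\,d\tau \leq \kappa\log(2d) + \int_{\kappa\log(2d)}^{\infty} 2d\,e^{-\tau/\kappa}\,d\tau = \kappa\bigl(\log(2d)+1\bigr).
\end{align*}
Using the elementary inequality $\log(2d)+1 \leq 2d$ for all integers $d \geq 1$, this yields $\exx Y \leq 32\,C\,d\,\trace(\Sigma_{(t)})/n$.

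Substituting back into the decomposition above and multiplying by $\alpha_{(t)}^{2}$ gives
\begin{align*}
\exx\|\wwhat_{(t+1)}-\wwhat_{(t)}\|^{2} = \alpha_{(t)}^{2}\exx\|\widehat{\mv{\theta}}_{(t)}\|^{2} \leq 2\alpha_{(t)}^{2}\left(\frac{32\,C\,d\,\trace(\Sigma_{(t)})}{n} + \|\gvec(\wwhat_{(t)})\|^{2}\right),
\end{align*}
which is the desired bound. The main technical step is the high-probability-to-expectation conversion; this is the only part that requires care, since one has to carry along the dimension-dependent prefactor in the tail and cleanly absorb it to match the form of the theorem. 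Everything else is a routine application of the triangle-type inequality and the prior lemma.
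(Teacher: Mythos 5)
Your proof is correct and follows essentially the same route as the paper's: decompose $\|\widehat{\mv{\theta}}_{(t)}\|^{2}$ into deviation and true-gradient parts via $\|a+b\|^{2}\leq 2\|a\|^{2}+2\|b\|^{2}$, convert the high-probability bound of Lemma \ref{lem:grad_estimate_varknown} into a tail bound, and integrate the tail to obtain an expectation bound. The only cosmetic difference is that you integrate the exponential tail of the squared deviation with clipping at $1$ (which yields the sharper intermediate factor $\log(2d)+1$ before you relax it to $2d$), whereas the paper integrates the sub-Gaussian tail of the norm itself via the Gamma function with $p=2$; both land on the same constant $32Cd\trace(\Sigma_{(t)})/n$.
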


\noindent In addition to these results, one can prove an improved version of Theorem \ref{thm:main_Rbound_strong} in a perfectly analogous fashion, using Lemma \ref{lem:grad_estimate_varknown}.

\section{Empirical analysis}\label{sec:tests}

The chief goal of our experiments is to elucidate the relationship between factors of the learning task (e.g., sample size, model dimension, initial value, underlying data distribution) and the behavior of the robust gradient procedure proposed in Algorithm \ref{algo:rgd}. We are interested in how these factors influence performance, both in an absolute sense and relative to the key competitors cited in section \ref{sec:intro}.

We have carried out three classes of experiments. The first considers a concrete risk minimization task given noisy function observations, and takes an in-depth look at how each experimental factor influences algorithm behavior, in particular the trajectory of performance over time (as we iterate). Second is an application of the proposed algorithm to the corresponding regression task under a large variety of data distributions, meant to rigorously evaluate the practical utility and robustness in an agnostic learning setting. Finally, we consider applications to classification tasks using real-world data sets.

\subsection{Controlled tests}\label{sec:tests_noisyopt}

\paragraph{Experimental setup}

Our first set of controlled numerical experiments uses a ``noisy convex minimization'' model, designed as follows. We construct a risk function taking a canonical quadratic form, setting $R(\ww) = \langle \Sigma\ww, \ww \rangle/2 + \langle \ww, \uu \rangle + c$, for pre-fixed constants $\Sigma \in \RR^{d \times d}$, $\uu \in \RR^{d}$, and $c \in \RR$. The task is to minimize $R(\cdot)$ without knowledge of $R$ itself, but rather only access to $n$ random function observations $r_{1},\ldots,r_{n}$. These $r:\RR^{d} \to \RR$ are generated independently from a common distribution, satisfying the property $\exx r(\ww) = R(\ww)$ for all $\ww \in \RR^{d}$. In particular, here we generate observations $r_{i}(\ww)=(\langle \wwstar-\ww, \xx_{i}\rangle + \epsilon_{i})^{2}/2$, $i \in [n]$, with $\xx$ and $\epsilon$ independent of each other. Here $\wwstar$ denotes the minimum, and we have that $\Sigma = \exx \xx\xx^{T}$. The inputs $\xx$ shall follow an isotropic $d$-dimensional Gaussian distribution throughout all the following experiments, meaning $\Sigma$ is positive definite, and $R$ is strongly convex.

We consider three main performance metrics in this section: the average excess empirical risk (based on the losses $r_{1},\ldots,r_{n}$), the average excess risk (based on true risk $R$), and the variance of the risk. Averages and variances are computed over trials, with each trial corresponding to a new independent random sample. For all tests, the number of trials is 250.

For these first tests, we run three procedures. First is ideal gradient descent, denoted \texttt{oracle}, which has access to the true objective function $R$. This corresponds to (\ref{eqn:GD_update_true}). Second, as a standard approximate procedure (\ref{eqn:GD_update_approx}) when $R$ is unknown, we use ERM-GD, denoted \texttt{erm} and discussed at the start of section \ref{sec:intuitive}, which approximates the optimal procedure using the empirical risk. Against these two benchmarks, we compare our Algorithm \ref{algo:rgd}, denoted \texttt{rgd}, as a robust alternative for (\ref{eqn:GD_update_approx}).

\paragraph{Impact of heavy-tailed noise}

Let us examine the results. We begin with a simple question: are there natural learning settings in which \texttt{rgd} outperforms ERM-GD? How does the same algorithm fare in situations where ERM is optimal? Under Gaussian noise, ERM-GD is effectively optimal \citep[Appendix C]{lin2016a}. We thus consider the case of Gaussian noise (mean $0$, standard deviation $20$) as a baseline, and use centered log-Normal noise (log-location $0$, log-scale $1.75$) as an archetype of asymmetric heavy-tailed data. Risk results for the two routines are given alongside training error in Figure \ref{fig:POC}.

\begin{figure}[t]
\centering
\includegraphics[width=1.0\textwidth]{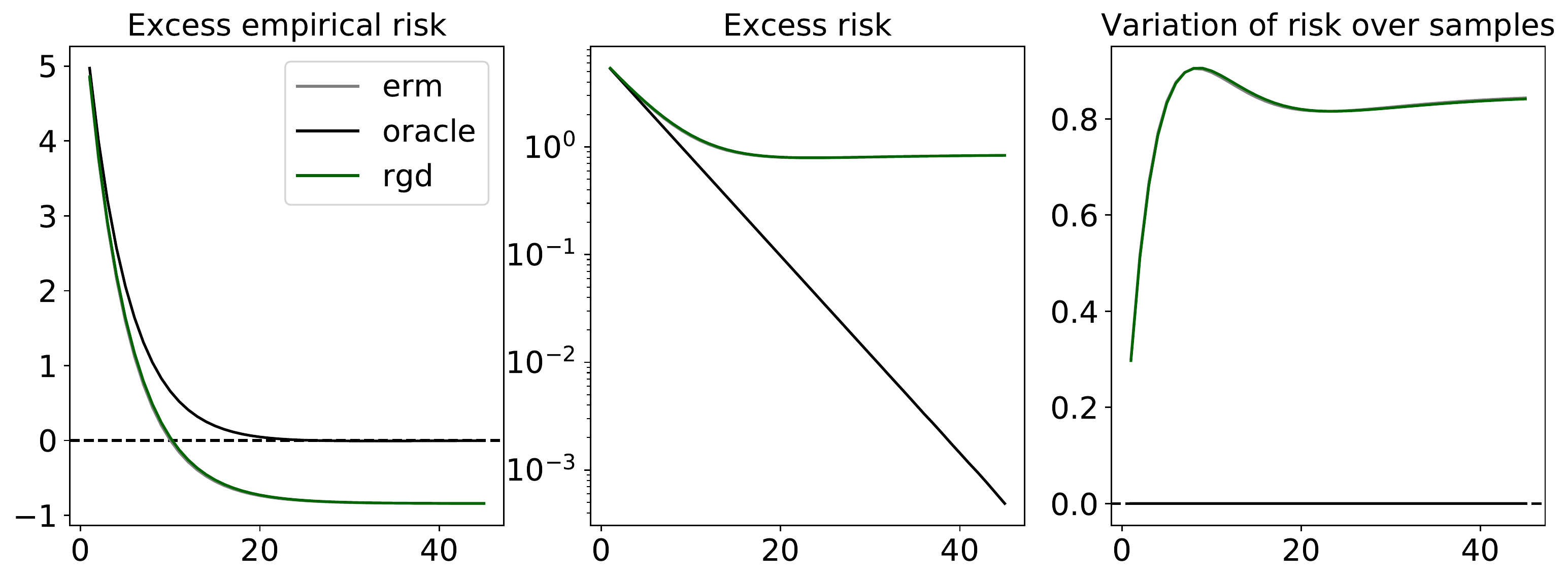}\\
\includegraphics[width=1.0\textwidth]{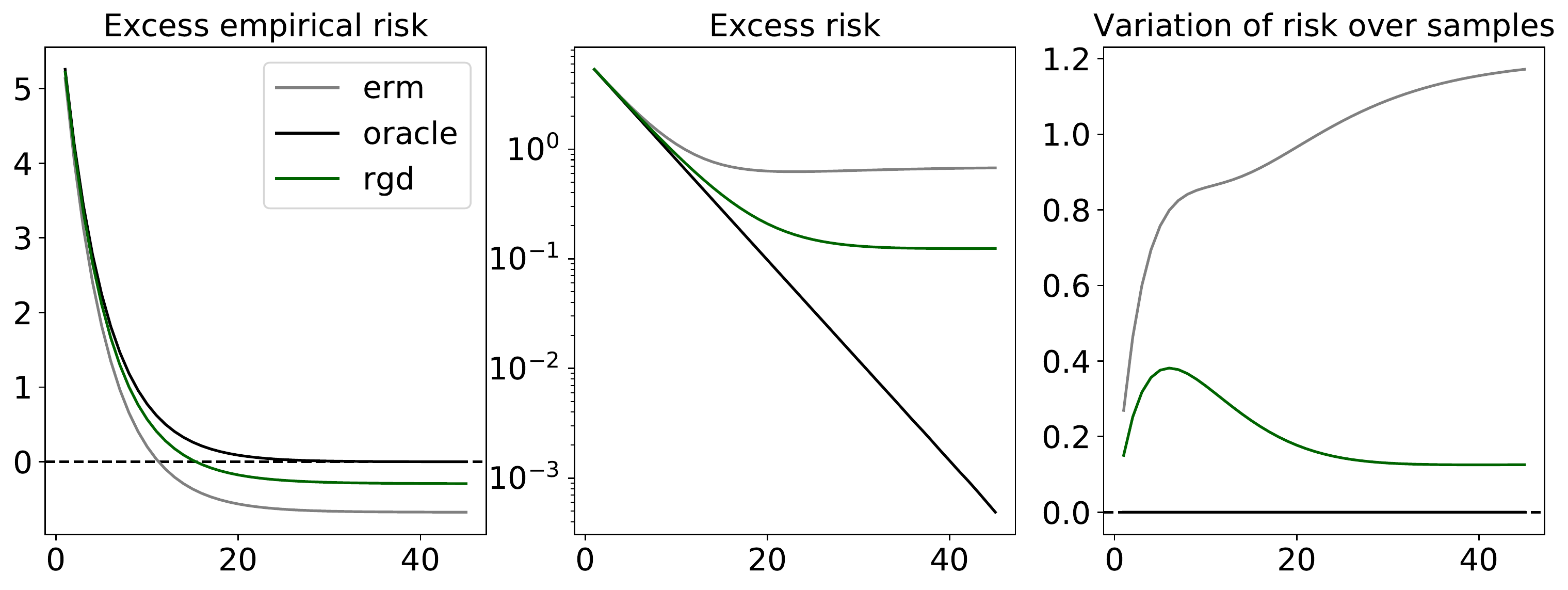}
\caption{Performance metrics as a function of iterative updates. Top row: Normal noise. Bottom row: log-Normal noise. Settings: $n = 500, d=2, \alpha_{(t)}=0.1$ for all $t$.}
\label{fig:POC}
\end{figure}

In the situation favorable to \texttt{erm}, differences in performance are basically negligible. On the other hand, in the heavy-tailed setting, the performance of \texttt{rgd} is superior in terms of quality of the solution found and the variance of the estimates. Furthermore, we see that at least in the situation of small $d$ and large $n$, taking $T$ beyond numerical convergence has minimal negative effect on \texttt{rgd} performance; on the other hand \texttt{erm} is more sensitive. Comparing true risk with sample error, we see that while there is some unavoidable overfitting, in the heavy-tailed setting \texttt{rgd} departs from the ideal routine at a slower rate, a desirable trait.

At this point, we still have little more than a proof of concept, with rather arbitrary choices of $n$, $d$, noise distribution, and initialization method. We proceed to investigate how each of these experimental parameters independently impacts performance.

\paragraph{Impact of initialization}

Given a fixed data distribution and sample size, how does the quality of the initial guess impact learning performance? We consider three initializations of the form $\wwstar + \text{Unif}[-\mv{\Delta},\mv{\Delta}]$, with $\mv{\Delta} = (\Delta_{1},\ldots,\Delta_{d})$, values ranging over $\Delta_{j} \in \{2.5, 5.0, 10.0\}$, $j \in [d]$, where larger $\Delta_{j}$ naturally correspond to potentially worse initialization. Relevant results are displayed in Figure \ref{fig:INIT}.

\begin{figure}[t]
\centering
\includegraphics[width=1.0\textwidth]{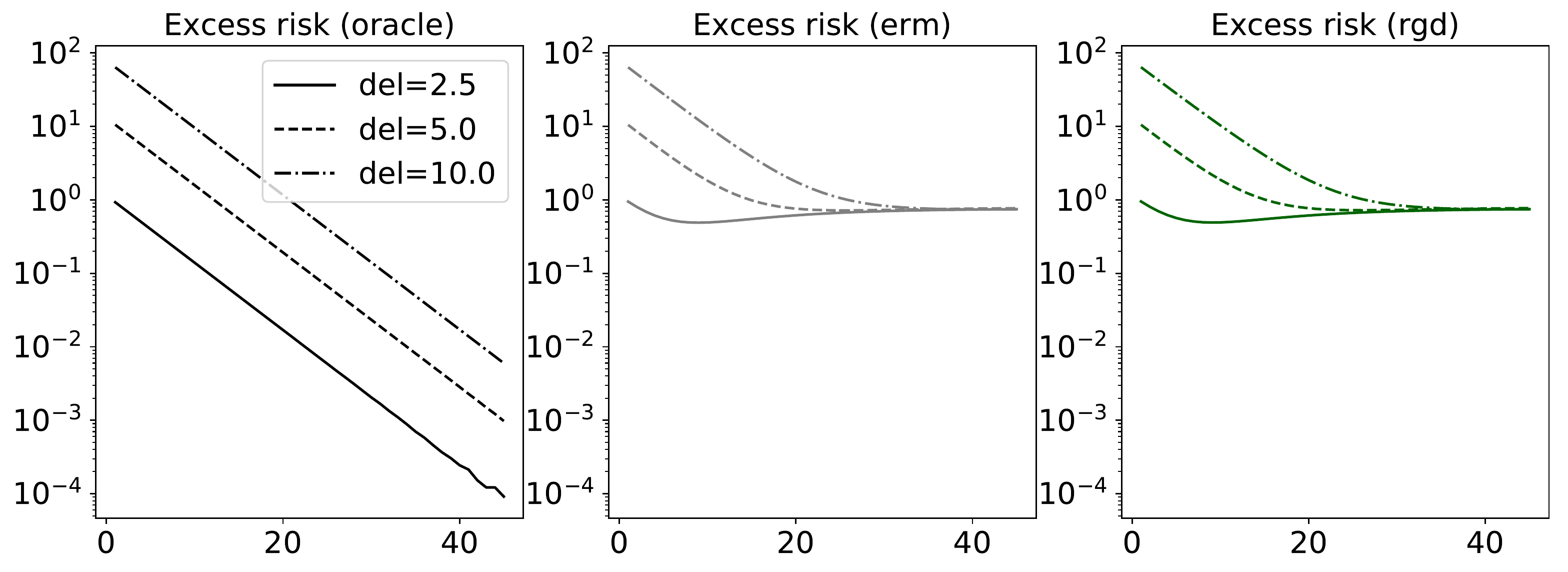}\\
\includegraphics[width=1.0\textwidth]{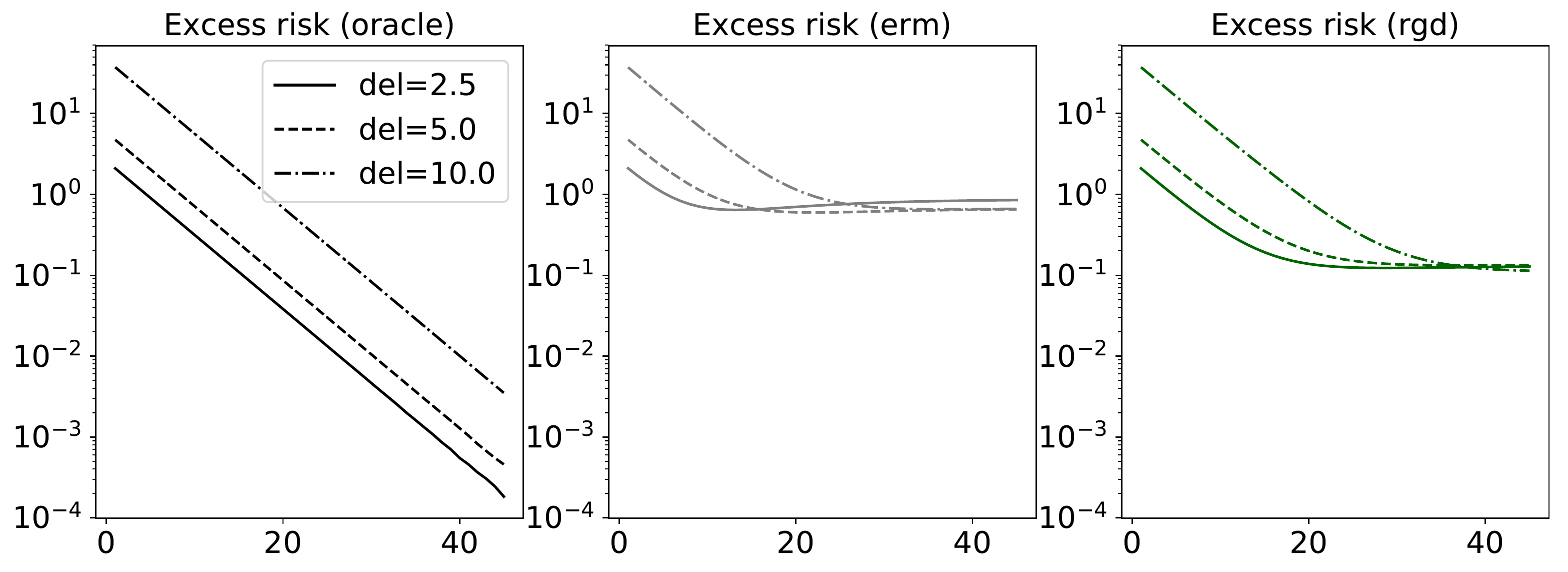}
\caption{Performance over iterations, under strong/poor initialization. Here \texttt{del} refers to $\Delta_{j}$. Top row: Normal noise. Bottom row: log-Normal noise. Settings: $n = 500, d=2, \alpha_{(t)}=0.1$ for all $t$.}
\label{fig:INIT}
\end{figure}

Some interesting observations can be made. That \texttt{rgd} matches \texttt{erm} when the latter is optimal is clear, but more importantly, we see that under heavy-tailed noise, \texttt{rgd} is far more robust to poor initial value settings. Indeed, while a bad initialization leads to a much worse solution in the limit for \texttt{erm}, we see that \texttt{rgd} is able to achieve the same performance as if it were initialized at a better value.

\paragraph{Impact of distribution}

It is possible for very distinct distributions to have exactly the same risk functions. Learning efficiency naturally depends heavily on the process generating the sample; the underlying optimization problem is the same, but the statistical inference task changes. Here we run the two algorithms of interest from common initial values as in the first experimental setting, and measure performance changes as the noise distribution is modified. We consider six situations, three for Normal noise, three for log-Normal noise. The location and scale parameters for the former are respectively $(0,0,0), (1,20,34)$; the log-location and log-scale parameters for the latter are respectively $(0,0,0), (1.25,1.75,1.9)$. Results are given in Figure \ref{fig:DIST}.

\begin{figure}[t]
\centering
\includegraphics[width=1.0\textwidth]{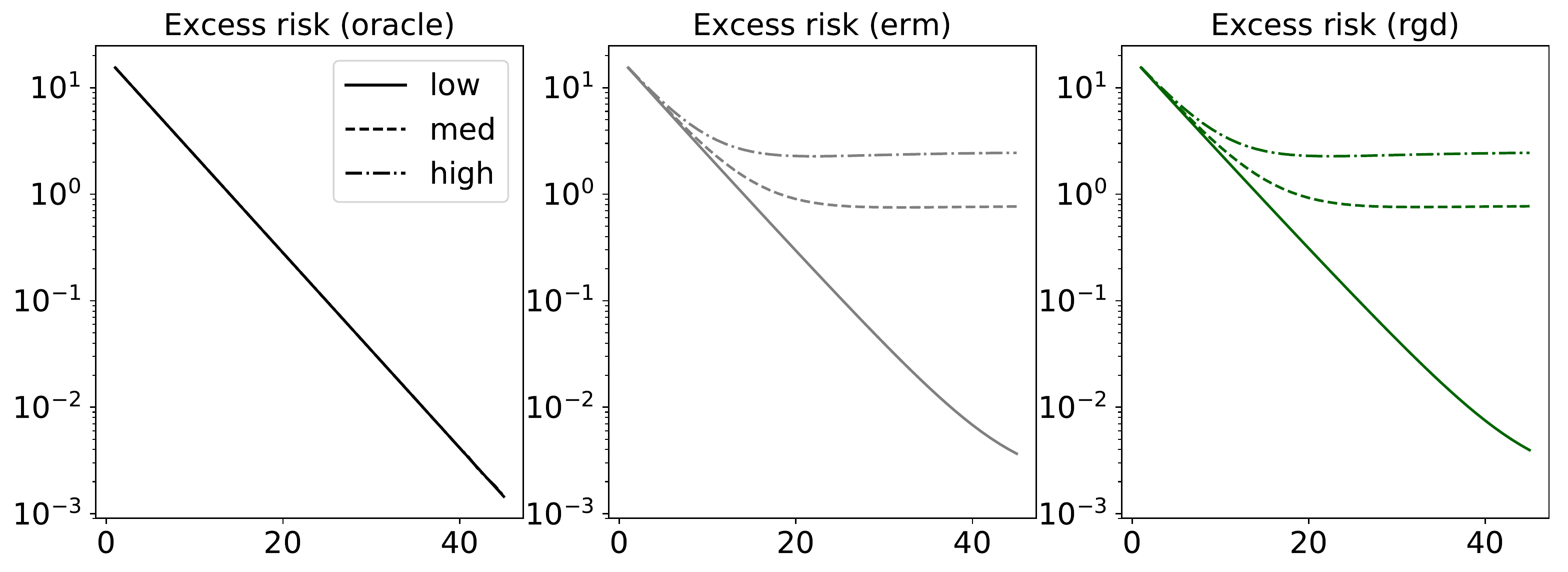}\\
\includegraphics[width=1.0\textwidth]{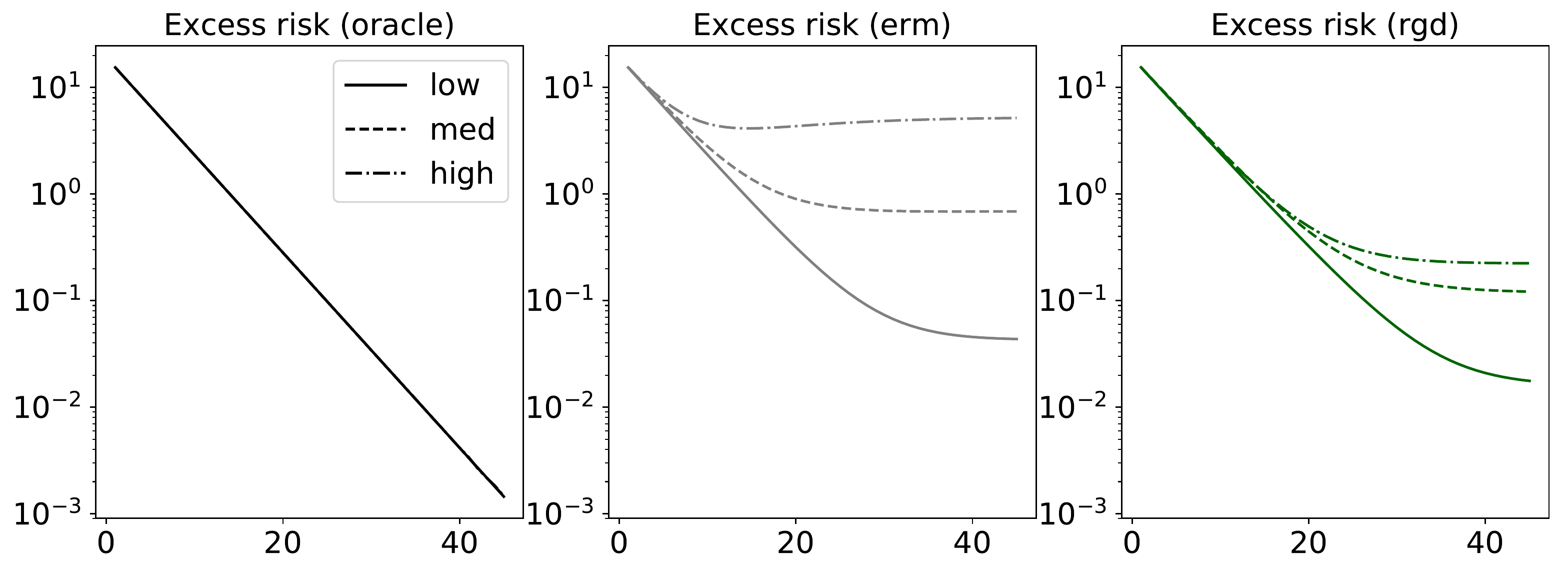}
\caption{Performance over iterations, under varying noise intensities. Here \texttt{low}, \texttt{med}, and \texttt{high} refer to the three noise distribution settings described in the main text. Settings: $n = 500, d = 2, \alpha_{(t)}=0.1$ for all $t$.}
\label{fig:DIST}
\end{figure}

Looking first at the Normal case, where we expect ERM-based methods to perform well, we see that \texttt{rgd} is able to match \texttt{erm} in all settings. In the log-Normal case, as our previous example suggested, the performance of \texttt{erm} degrades rather dramatically, and a clear gap in performance appears, which grows wider as the variance increases. This flexibility of \texttt{rgd} in dealing with both symmetric and asymmetric noise, both exponential and heavy tails, is indicative of the robustness suggested by the weak conditions of section \ref{sec:algo_performance}. In addition, it suggests that our simple dispersion-based technique ($\widehat{\sigma}_{j}$ settings in \ref{sec:intuitive_algo}) provides tolerable accuracy, implying a small enough $c_{0}$ factor, and reinforcing the insights from the proof of concept case seen in Figure \ref{fig:POC}.

\paragraph{Impact of sample size}

Since the true risk is unknown, the size and quality of the sample $\{\zz_{i}\}_{i=1}^{n}$ is critical to the output of all learners. To evaluate learning efficiency, we examine how performance depends on the available sample size, with dimension and all algorithm parameters fixed. Figure \ref{fig:NVAL} gives the accuracy of \texttt{erm} and \texttt{rgd} in tests analogous to those above, using common initial values across methods, and $n \in \{10, 40, 160, 640\}$.

\begin{figure}[t]
\centering
\includegraphics[width=1.0\textwidth]{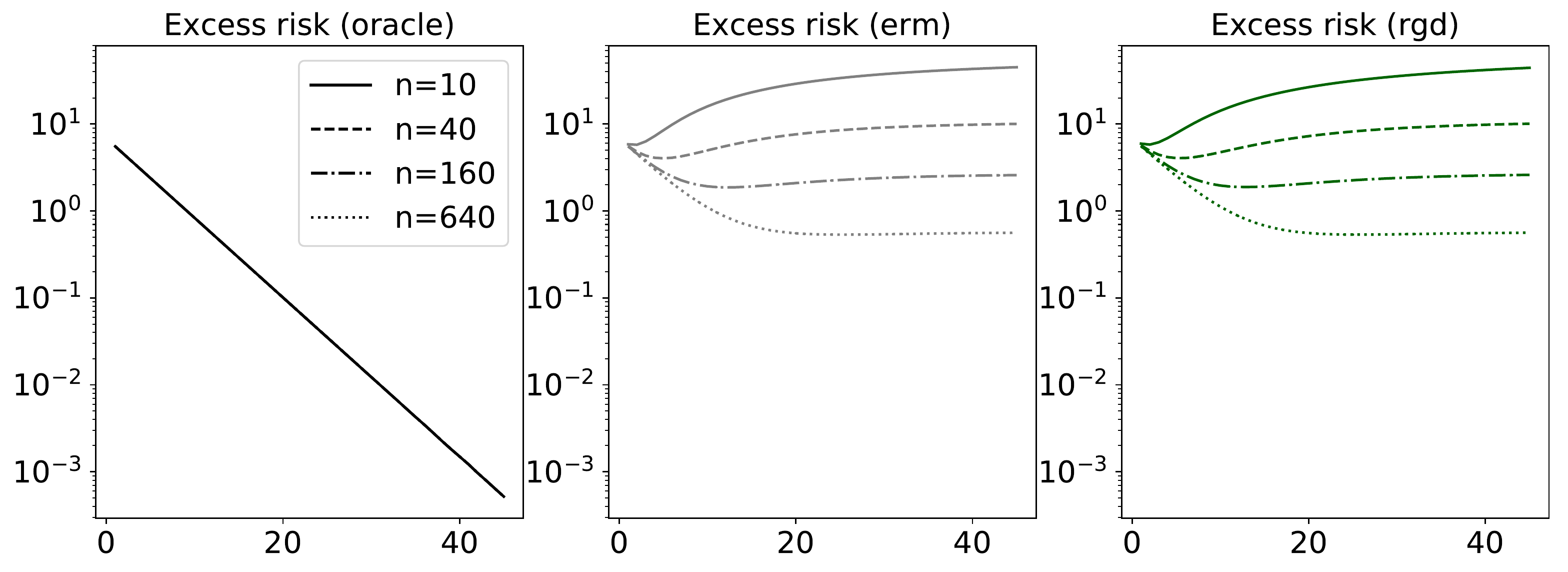}\\
\includegraphics[width=1.0\textwidth]{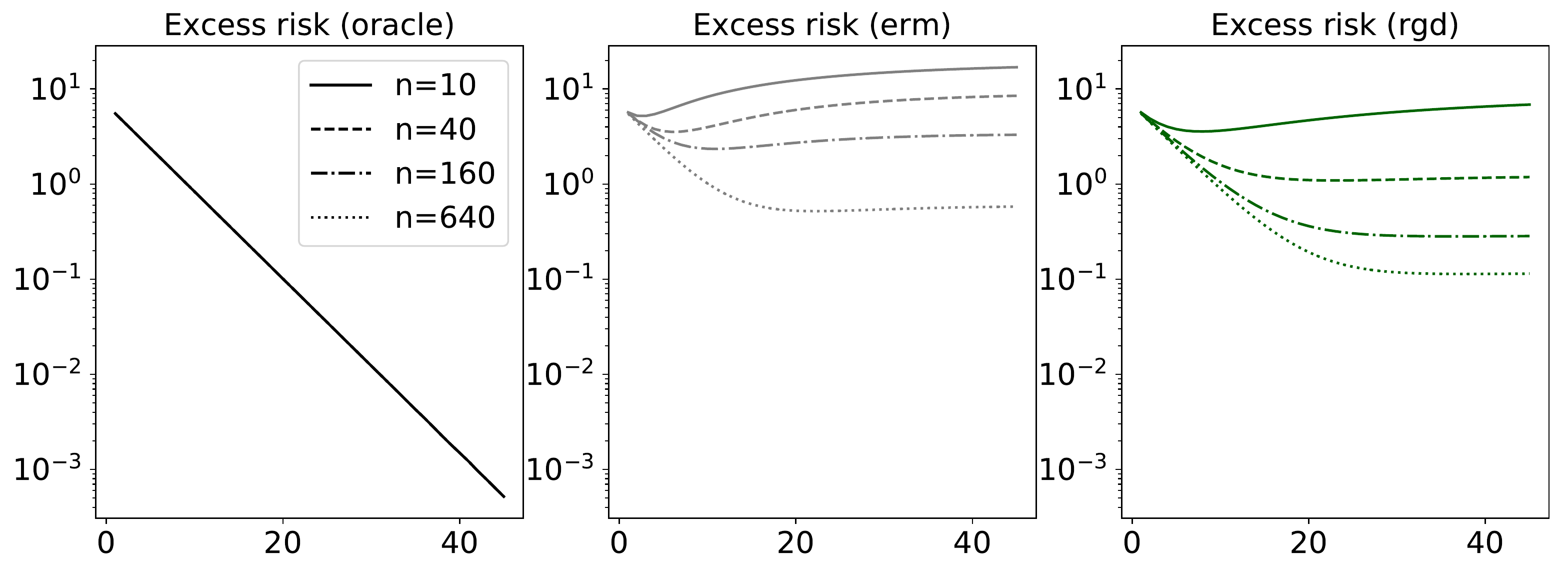}
\caption{Performance over iterations, under different sample sizes. Settings: $d=2, \alpha_{(t)}=0.1$ for all $t$.}
\label{fig:NVAL}
\end{figure}

Both algorithms naturally show monotonic performance improvements as the sample size grows, but the most salient feature of these figures is the performance of \texttt{rgd} under heavy-tailed noise, especially when sample sizes are small. When our data may be heavy-tailed, this provides clear evidence that the proposed RGD methods can achieve better generalization than ERM-GD with less data, in less iterations.

\paragraph{Impact of dimension}

Given a fixed number of observations, the role of dimension $d$, namely the number of parameters to be determined, plays an important from both practical and theoretical standpoints, as seen in the error bounds of section \ref{sec:algo_performance}. Fixing the sample size and all algorithm parameters as above, we investigate the relative difficulty each algorithm has as the dimension increases. Figure \ref{fig:DVAL} shows the risk of \texttt{erm} and \texttt{rgd} in tests just as above, with $d \in \{2, 8, 32, 128\}$.

\begin{figure}[t]
\centering
\includegraphics[width=1.0\textwidth]{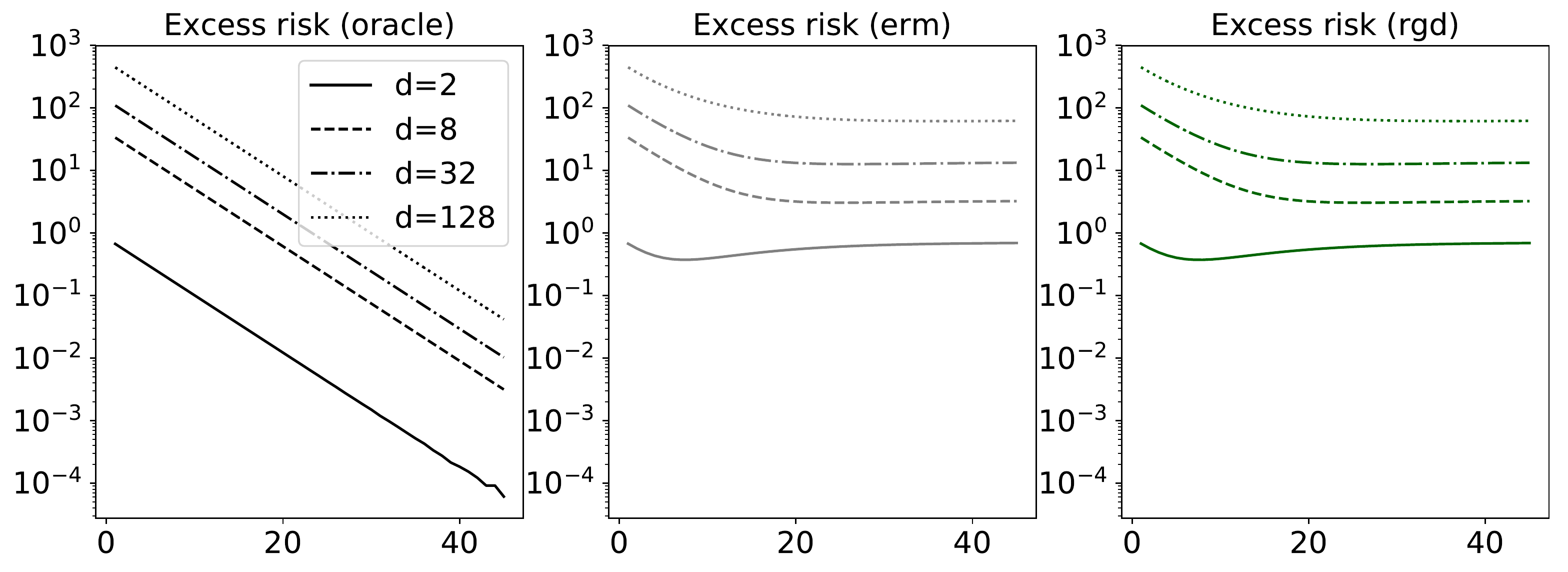}\\
\includegraphics[width=1.0\textwidth]{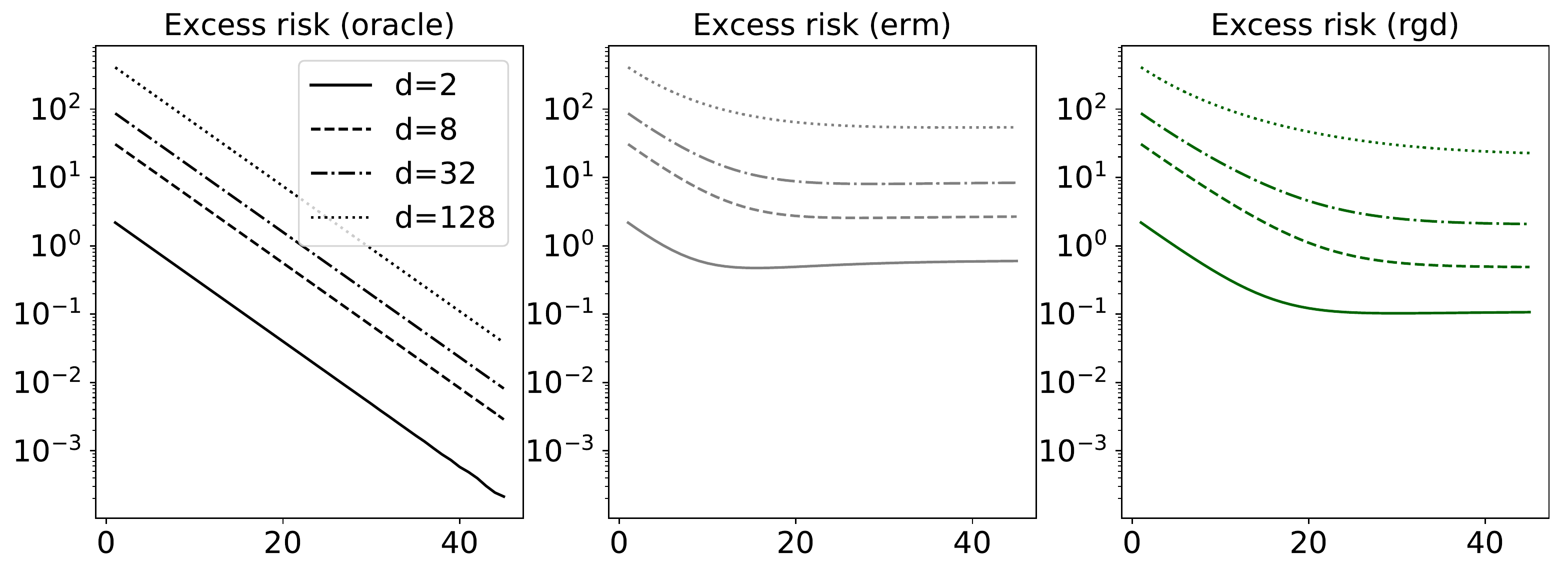}
\caption{Performance over iterations, under increasing dimension. Settings: $n = 500, \alpha_{(t)}=0.1$ for all $t$.}
\label{fig:DVAL}
\end{figure}

As the dimension increases, since the sample size is fixed, both non-oracle algorithms tend to require more iterations to converge. The key difference is that under heavy tails, the excess risk achieved by our proposed method is clearly superior to ERM-GD over all $d$ settings, while matching it in the case of Gaussian noise. In particular, \texttt{erm} hits bottom very quickly in higher dimensions, while \texttt{rgd} continues to improve for more iterations, presumably due to updates which are closer to that of the optimal (\ref{eqn:GD_update_true}).

\paragraph{Comparison with robust loss minimizer}

Another interesting question: instead of paying the overhead to robustify gradient estimates ($d$ dimensions to handle), why not just make robust estimates of the risk itself, and use those estimates to fuel an iterative optimizer? Just such a procedure is analyzed by \citet{brownlees2015a} (denoted \texttt{bjl} henceforth). To compare our gradient-centric approach with their loss-centric approach, we implement \texttt{bjl} using the non-linear conjugate gradient method of Polak and Ribi\`{e}re \citep{nocedal1999a}, which is provided by \texttt{fmin\_cg} in the \texttt{optimize} module of the SciPy scientific computation library (default maximum number of iterations is $200d$). This gives us a standard first-order general-purpose optimizer for minimizing the \texttt{bjl} objective. To see how well our procedure can compete with a pre-fixed max iteration number, we set $T=25$ for all settings. Computation time is computed using the Python \texttt{time} module. To give a simple comparison between \texttt{bjl} and \texttt{rgd}, we run multiple independent trials of the same task, starting both routines at the same (random) initial value each time, generating a new sample, and repeating the whole process for different settings of $d = 2, 4, 8, 16, 32, 64$. Median times taken over all trials (for each $d$ setting) are recorded, and presented in Figure \ref{fig:versusBJL} alongside performance results.

\begin{figure}[t]
\centering
\includegraphics[width=1.0\textwidth]{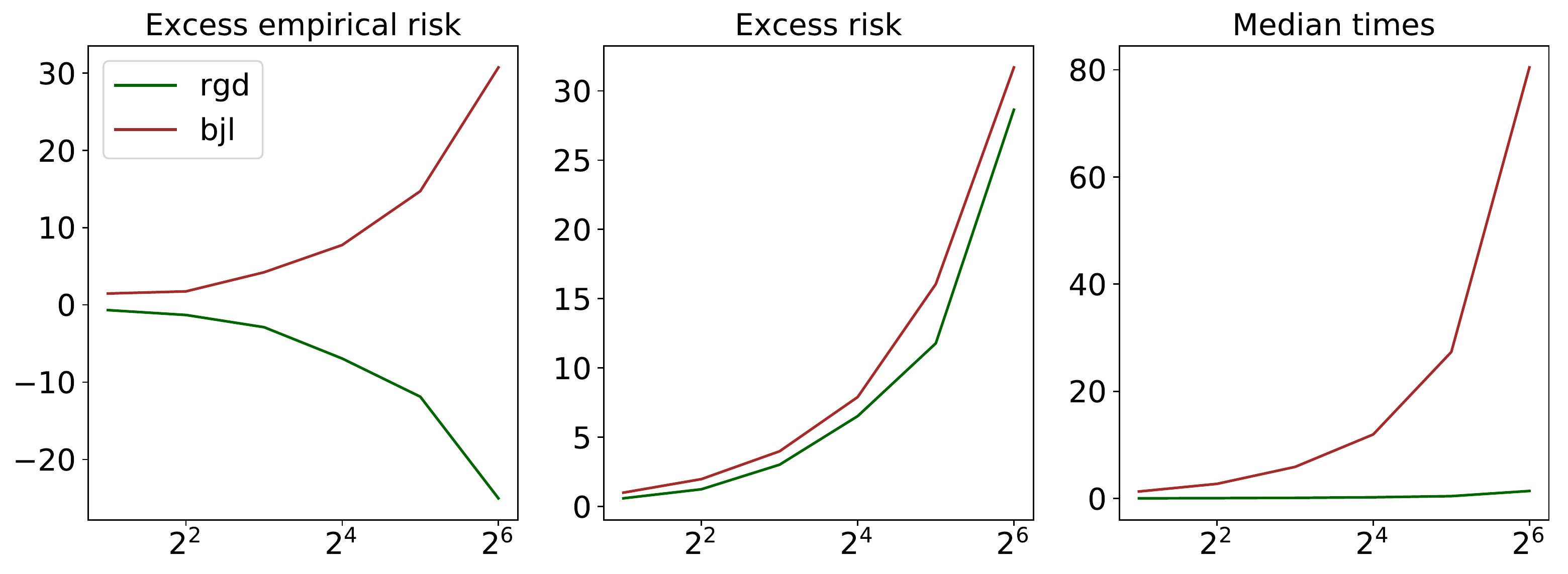}\\
\includegraphics[width=1.0\textwidth]{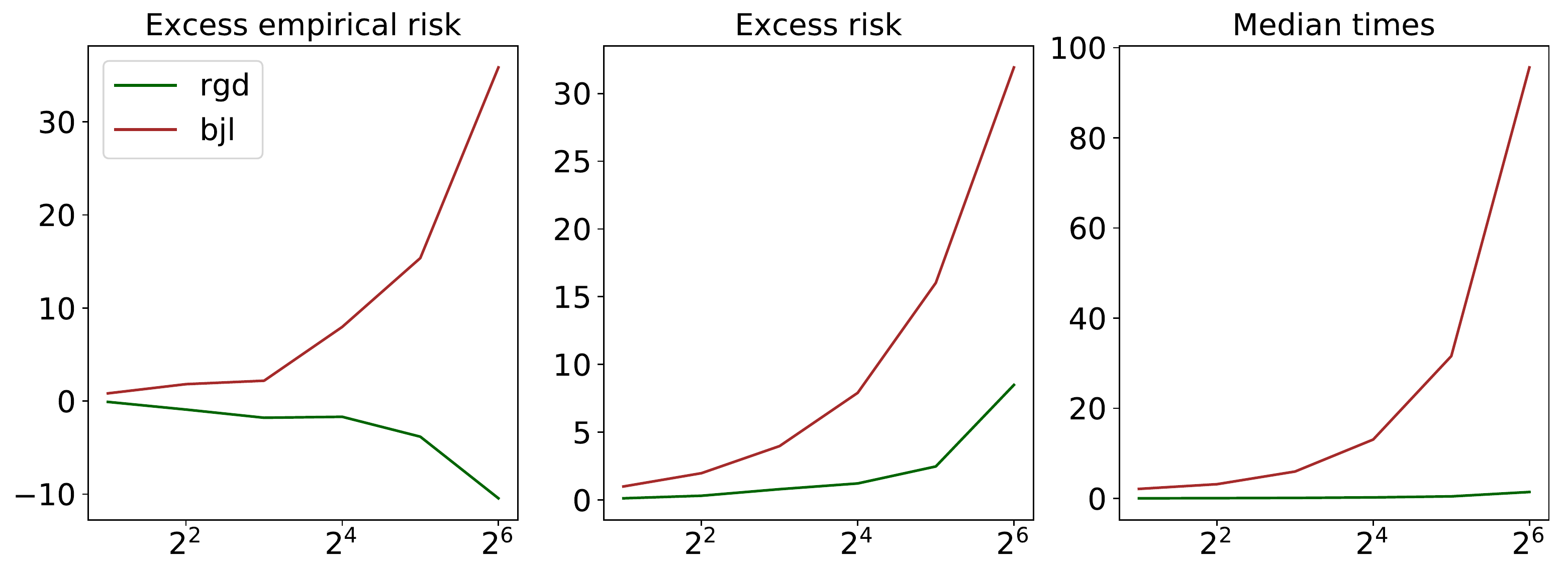}
\caption{Comparison of our robust gradient-based approach with the robust objective-based approach. Top: Normal noise. Bottom: log-Normal noise. Performance is given as a function of the number of $d$, the number of parameters to optimize, given in $\log_{2}$ scale. Settings: $n = 500, \alpha_{(t)}=0.1$ for all $t$.}
\label{fig:versusBJL}
\end{figure}

From the results, we can see that while the performance of both methods is similar in low dimensions and under Gaussian noise, in higher dimensions and under heavy-tailed noise, our proposed \texttt{rgd} realizes much better performance in much less time. Regarding excess empirical risk, random deviations in the sample cause the minimum of the empirical risk function to deviate away from $\wwstar$, causing the \texttt{rgd} solution to be closer to the ERM solution in higher dimensions. On the other hand, \texttt{bjl} is minimizing a different objective function. It should be noted that there are assuredly other ways of approaching the \texttt{bjl} optimization task, but all of which require minimizing an implicitly defined objective which need not be convex. We believe that \texttt{rgd} provides a simple and easily implemented alternative, while still utilizing the same statistical principles.

\paragraph{Regression application}

In this experiment, we apply our algorithm to a general regression task, under a wide variety of data distributions, and compare its performance against standard regression algorithms, both classical and modern. For each experimental condition, and for each trial, we generate $n$ training observations of the form $y_{i} = \xx_{i}^{T}\wwstar + \epsilon_{i}, i \in [n]$. Distinct experimental conditions are delimited by the setting of $(n,d)$ and $\mu$. Inputs $\xx$ are assumed to follow a $d$-dimensional isotropic Gaussian distribution, and thus our setting of $\mu$ will be determined by the distribution of noise $\epsilon$. In particular, we look at several families of distributions, and within each family look at 15 distinct noise levels, namely parameter settings designed such that $\sd_{\mu}(\epsilon)$ monotonically increases over the range 0.3--20.0, approximately linearly over the levels.

To capture a range of signal/noise ratios, for each trial, $\wwstar \in \RR^{d}$ is randomly generated as follows. Defining the sequence $w_{k} \defeq \pi/4 + (-1)^{k-1}(k-1)\pi/8, k=1,2,\ldots$ and uniformly sampling $i_{1},\ldots,i_{d} \in [d_{0}]$ with $d_{0}=500$, we set $\wwstar = (w_{i_{1}},\ldots,w_{i_{d}})$. Computing $\text{SN}_{\mu} = \|\wwstar\|_{2}^{2}/\vaa_{\mu}(\epsilon)$, we have $0.2 \leq SN_{\mu} \leq 1460.6$. Noise families: log-logistic (denoted \texttt{llog} in figures), log-Normal (\texttt{lnorm}), Normal (\texttt{norm}), and symmetric triangular (\texttt{tri\_s}). Even with just these four, we capture both bounded and unbounded sub-Gaussian noise, and heavy-tailed data both with and without finite higher-order moments. Results for many more noise distributions are given in appendix \ref{sec:more_test_results}.

Our key performance metric of interest is off-sample prediction error, here computed as excess RMSE on an independent large testing set, averaged over trials. For each condition and each trial, an independent test set of $m$ observations is generated identically to the corresponding $n$-sized training set. All competing methods use common sample sets for training and are evaluated on the same test data, for all conditions/trials. For each method, in the $k$th trial, some estimate $\wwhat(k)$ is determined. To approximate the $\ell_{2}$-risk, compute root mean squared test error $e_{k}(\wwhat) \defeq (m^{-1}\sum_{i=1}^{m}(\wwhat^{T}\xx_{k,i}-y_{k,i})^{2})^{1/2}$, and output prediction error as the average of normalized errors $e_{k}(\wwhat(k)) - e_{k}(\wwstar(k))$ taken over all $K$ trials. While $n$ values vary, in all experiments we fix $K=250$ and test size $m=1000$.

Regarding the competing methods, classical choices are ordinary least squares ($\ell_{2}$-ERM, denoted \texttt{OLD}) and least absolute deviations ($\ell_{1}$-ERM, \texttt{LAD}). We also look at two recent methods of practical and theoretical importance described in section \ref{sec:intro}, namely the robust regression routines of \citet{hsu2016a} (\texttt{HS}) and \citet{minsker2015a} (\texttt{Minsker}). For the former, we used the source published online by the authors. For the latter, on each subset the \texttt{ols} solution is computed, and solutions are aggregated using the geometric median (in $\ell_{2}$ norm), computed using the well-known algorithm of \citet[Eqn.~2.6]{vardi2000a}, and the number of partitions is set to $\max( 2, \lfloor n/(2d) \rfloor )$. For comparison to this, we also initialize \texttt{RGD} to the \texttt{OLS} solution, with confidence $\delta=0.005$, and $\alpha_{(t)} = 0.1$ for all iterations. Maximum number of iterations is $T \leq 100$; the routine finishes after hitting this maximum or when the absolute value of the gradient falls below $0.001$ for all conditions. Illustrative results are given in Figure \ref{fig:multinoise_linreg}.

\begin{figure}[t]
\centering
\includegraphics[width=0.25\textwidth]{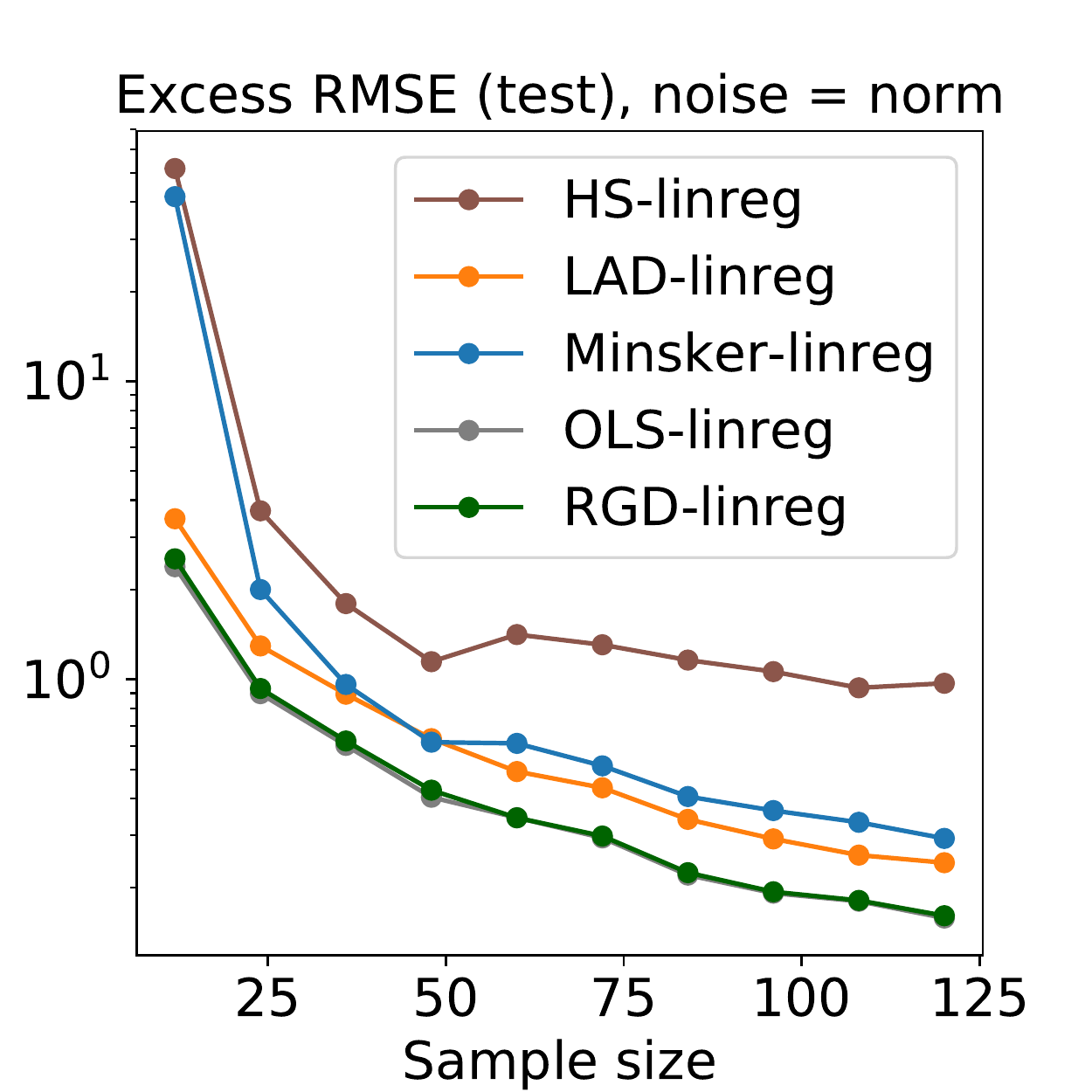}\,\includegraphics[width=0.25\textwidth]{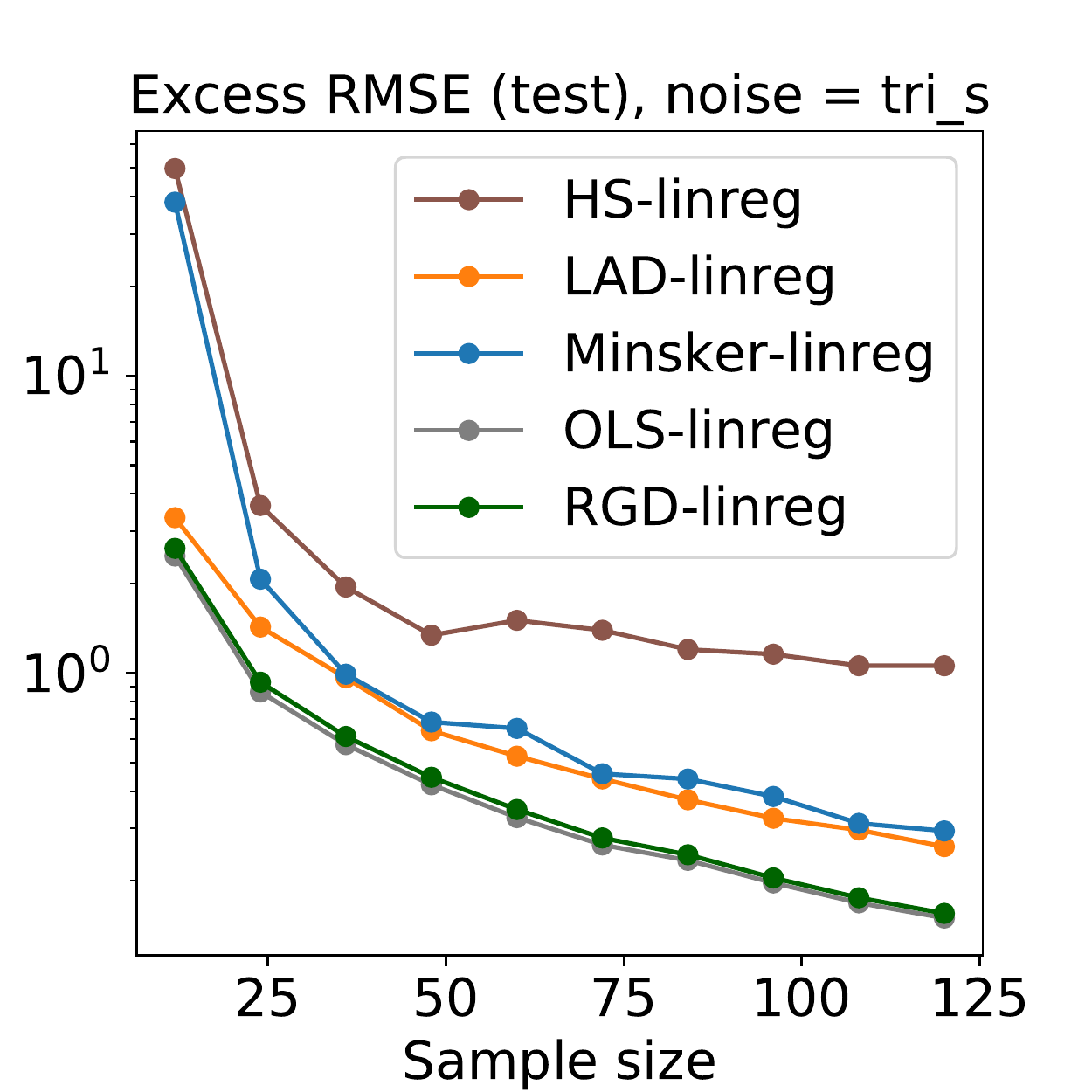}\,\includegraphics[width=0.25\textwidth]{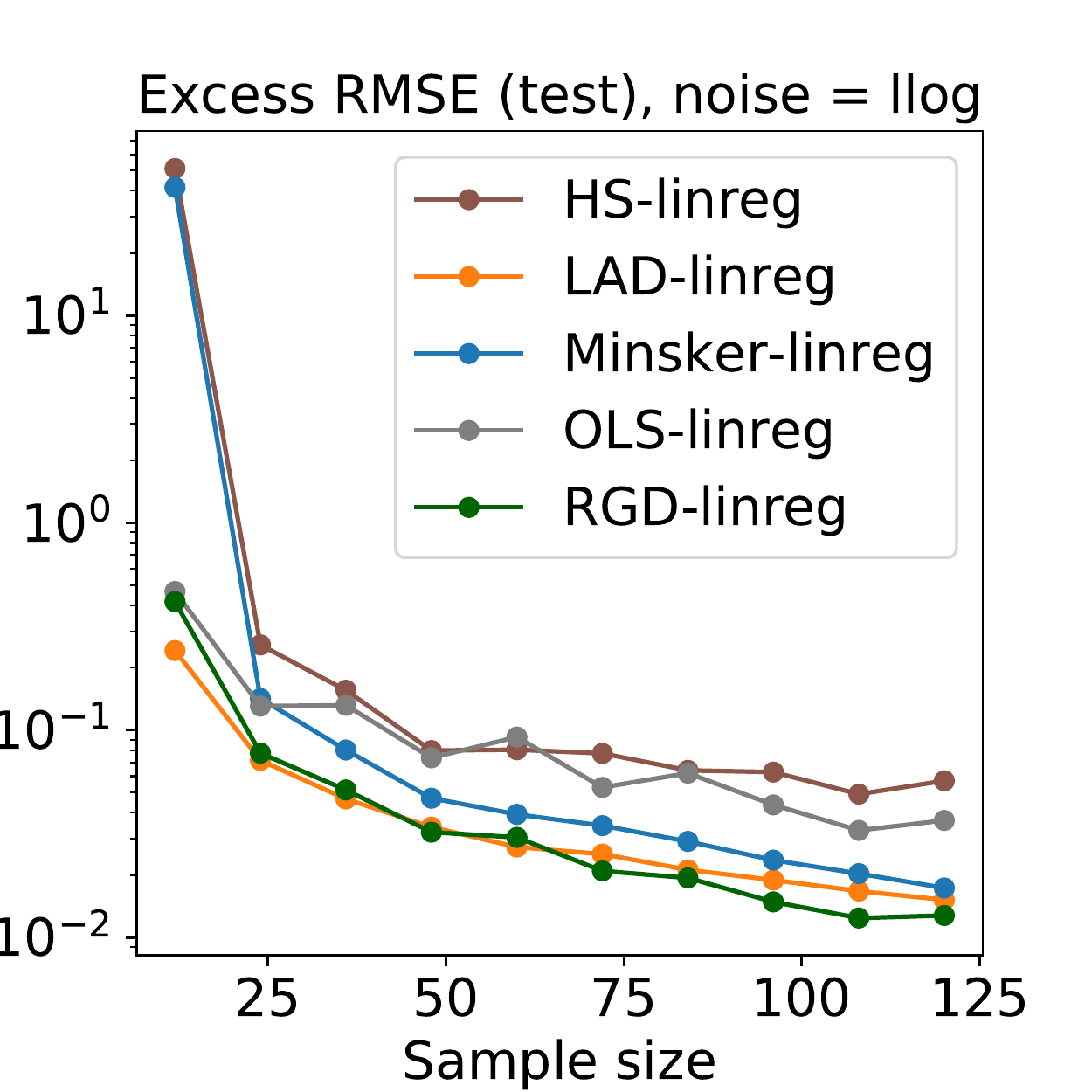}\,\includegraphics[width=0.25\textwidth]{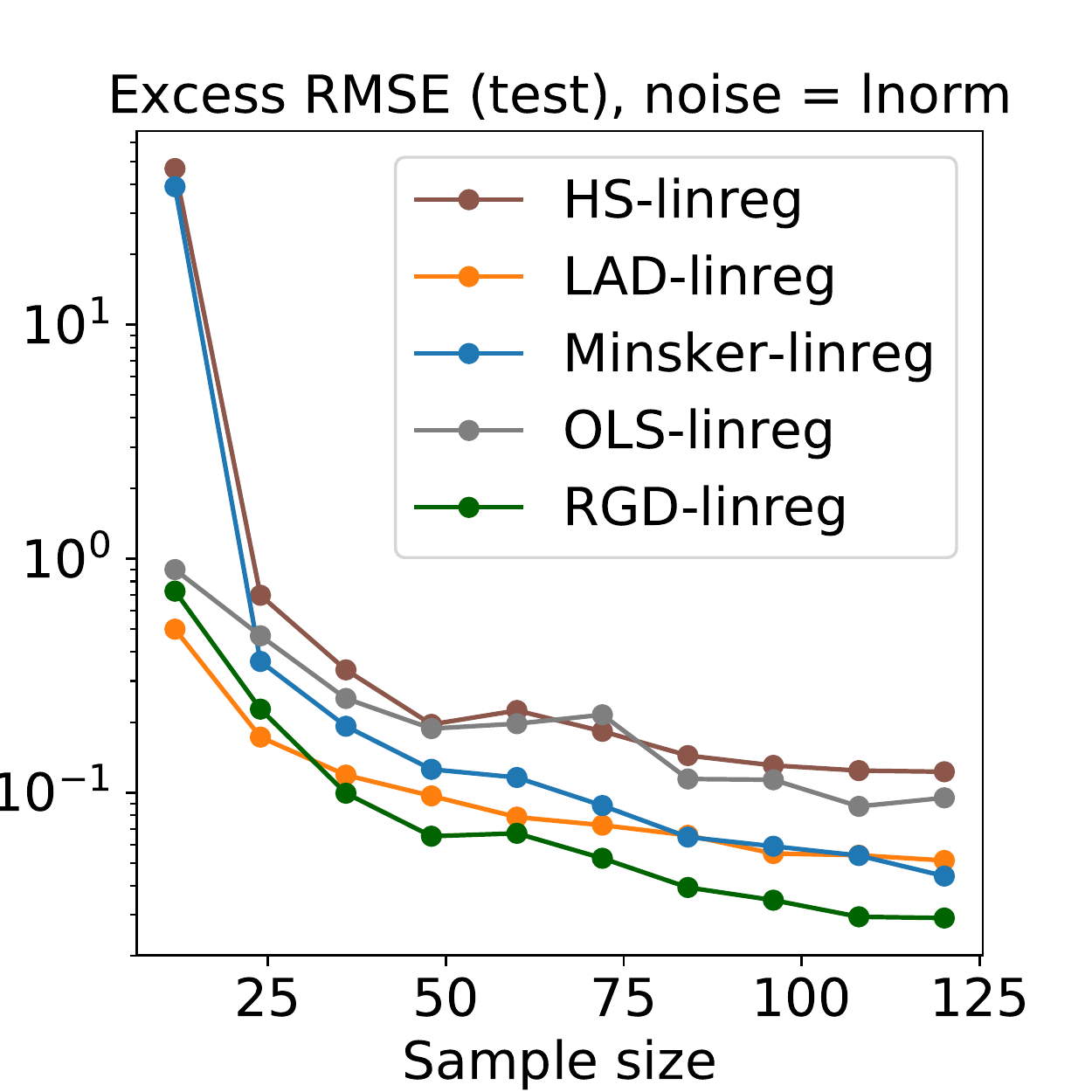}\\
\includegraphics[width=0.25\textwidth]{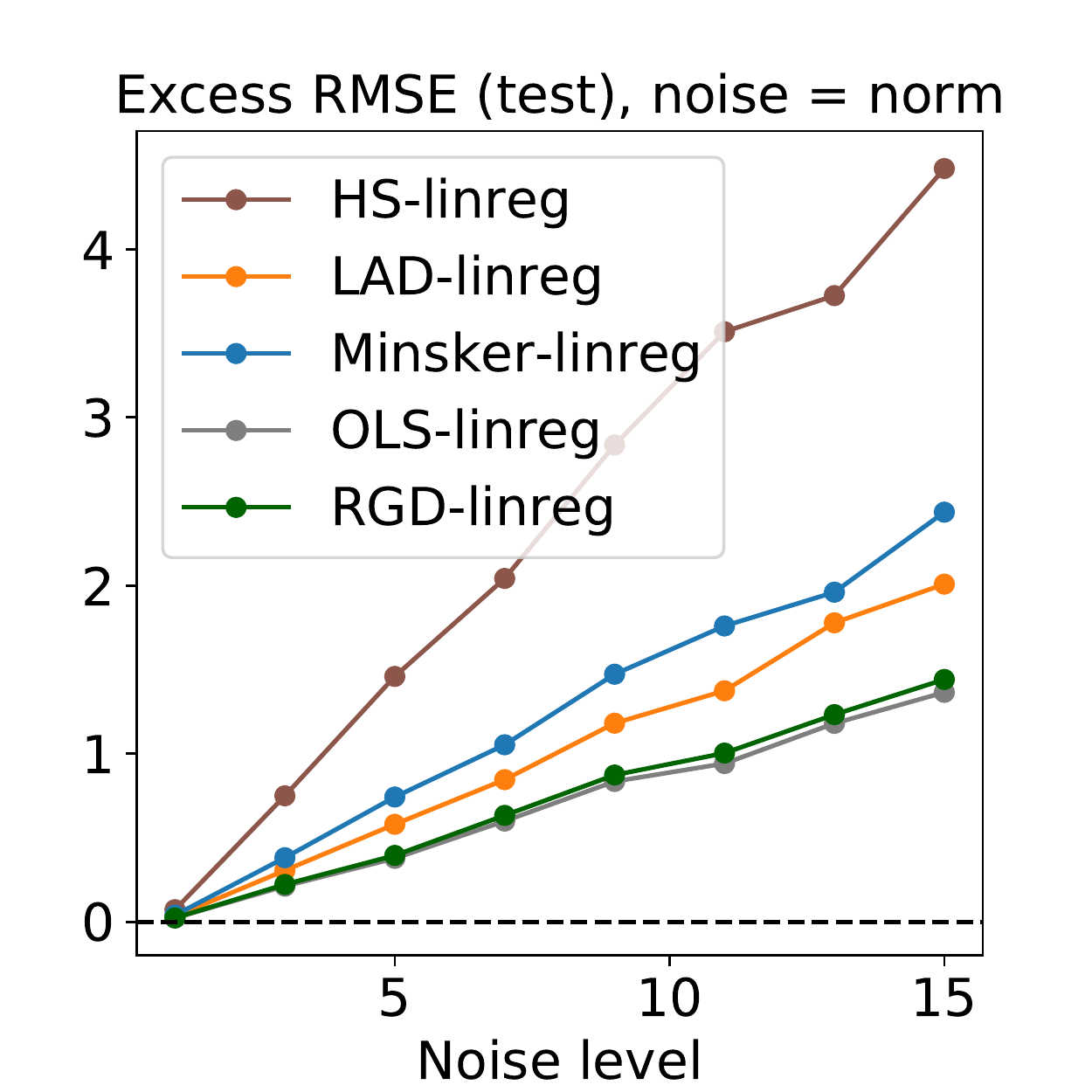}\,\includegraphics[width=0.25\textwidth]{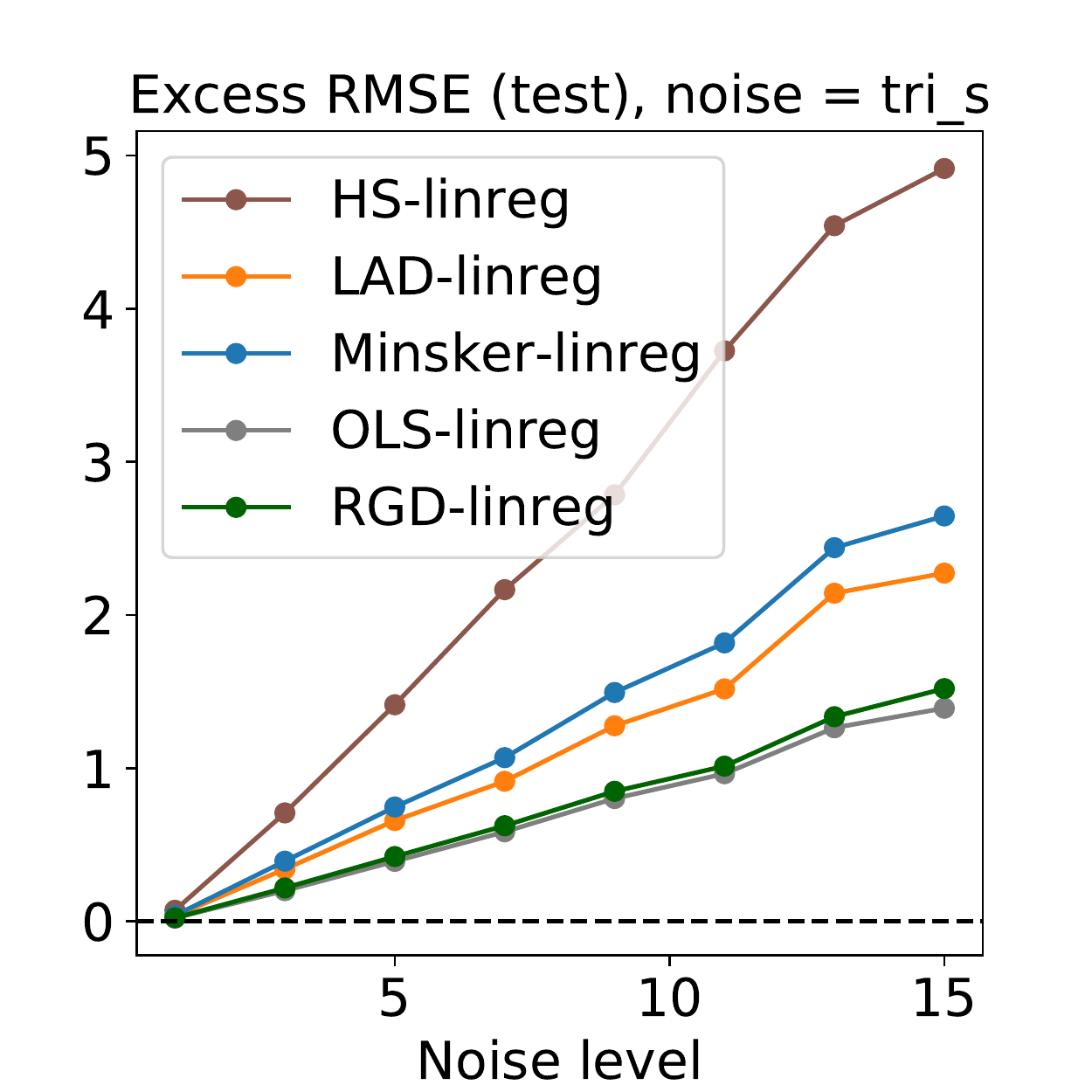}\,\includegraphics[width=0.25\textwidth]{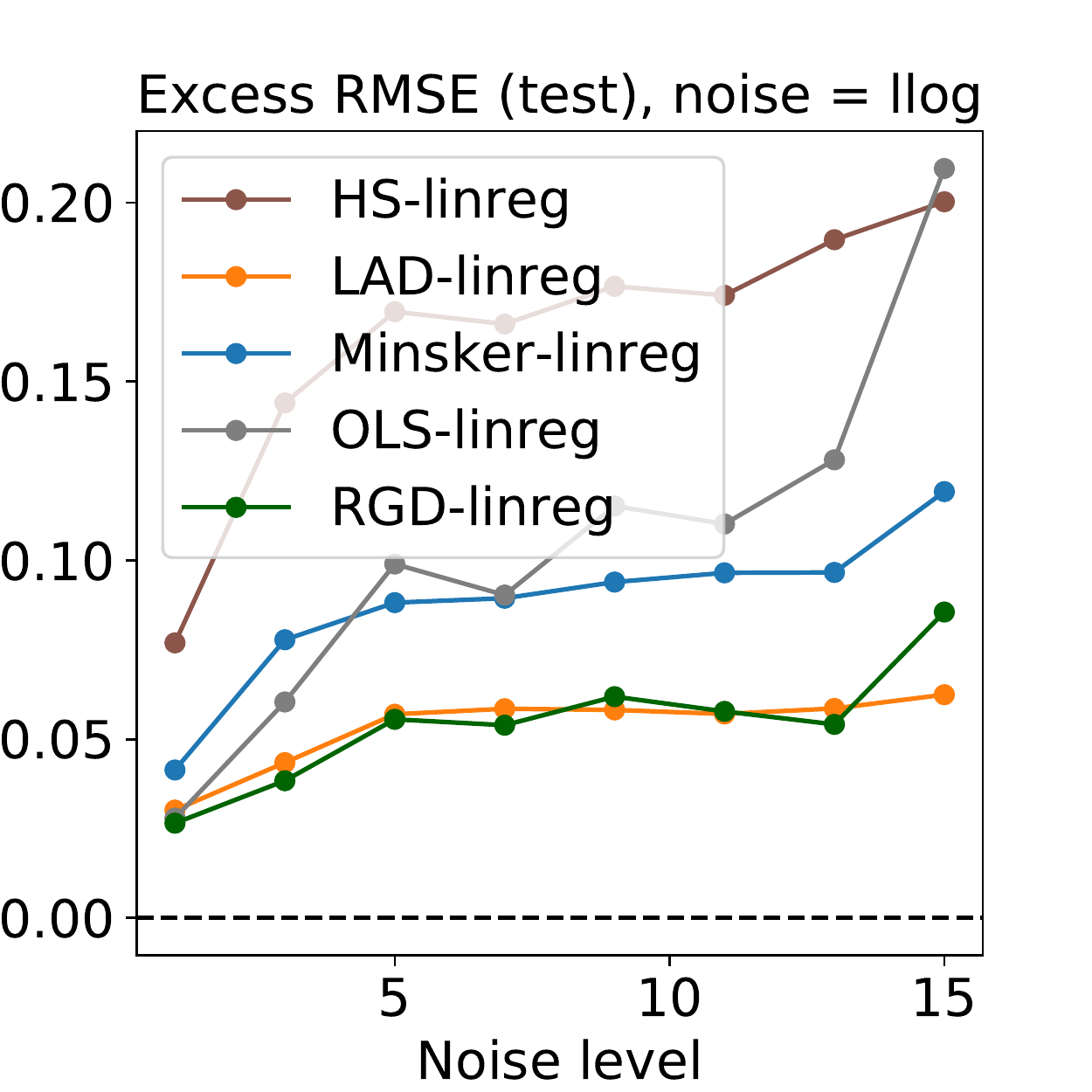}\,\includegraphics[width=0.25\textwidth]{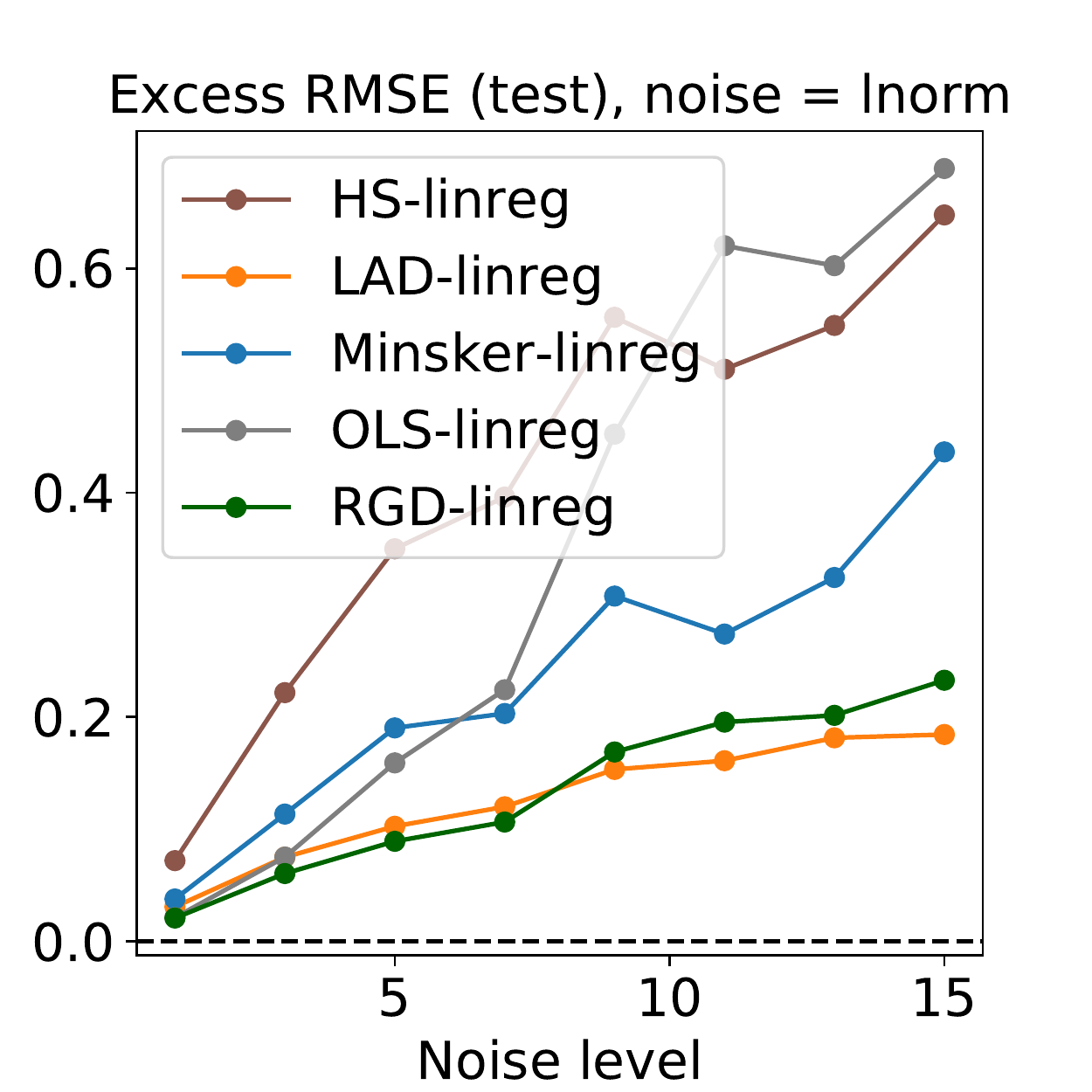}\\
\includegraphics[width=0.25\textwidth]{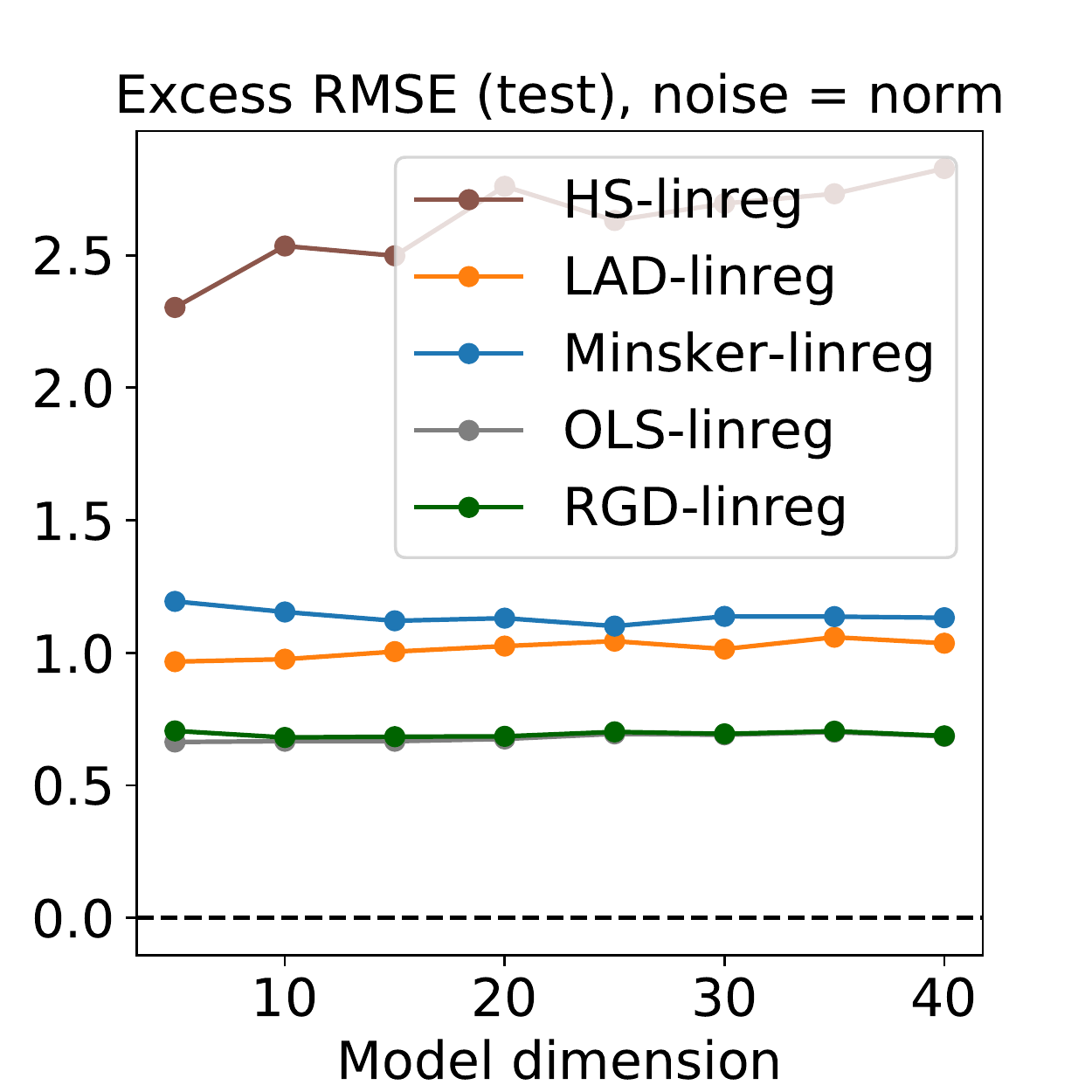}\,\includegraphics[width=0.25\textwidth]{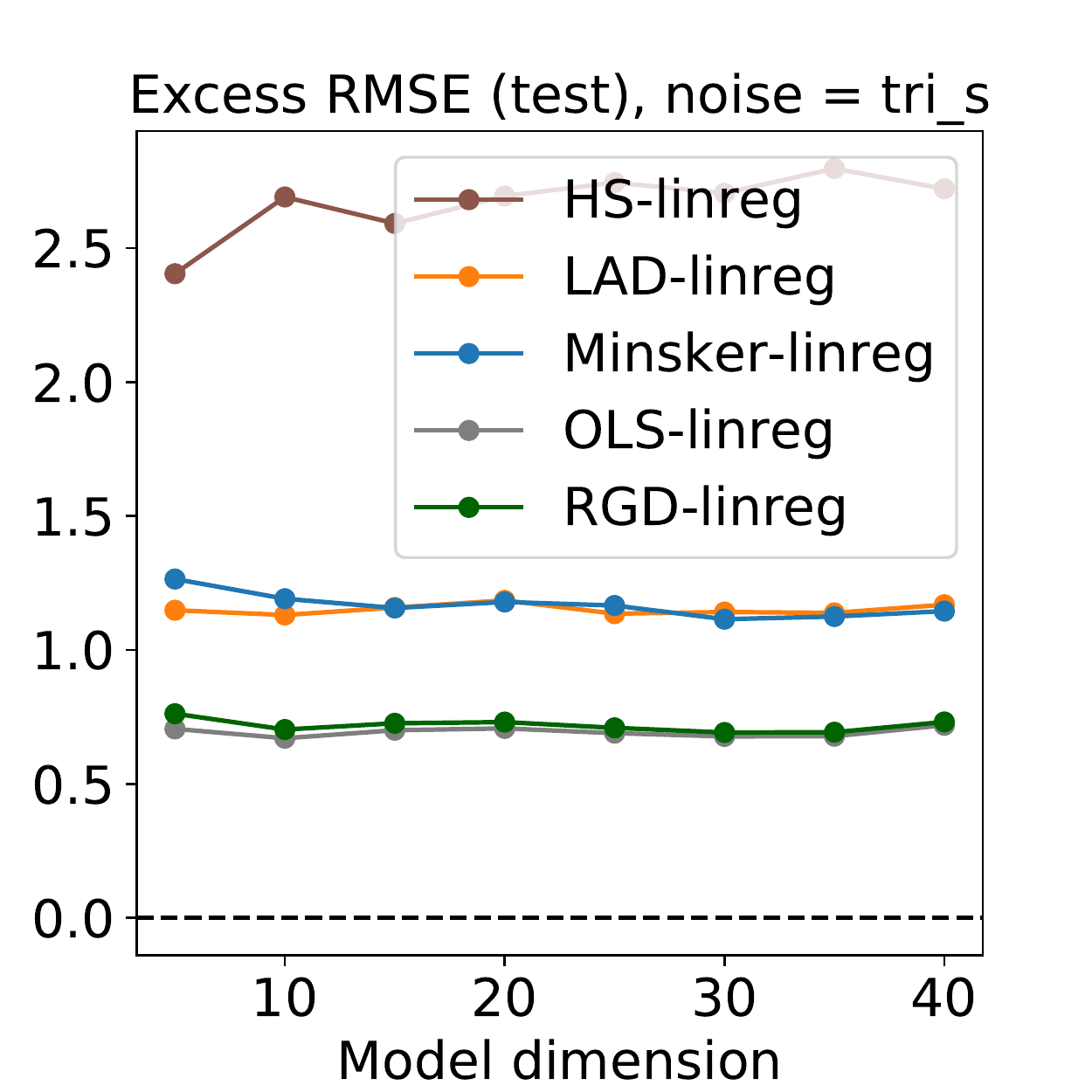}\,\includegraphics[width=0.25\textwidth]{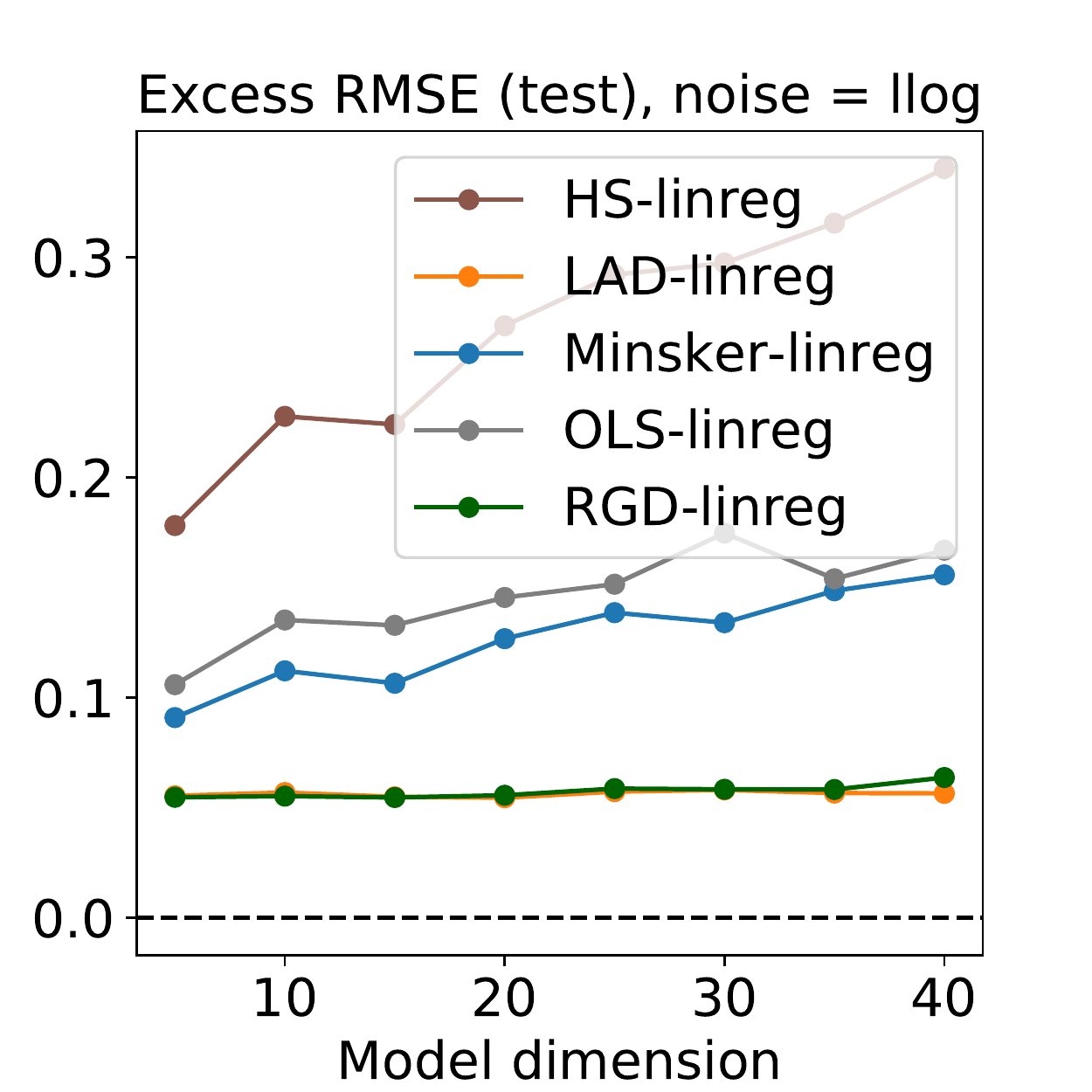}\,\includegraphics[width=0.25\textwidth]{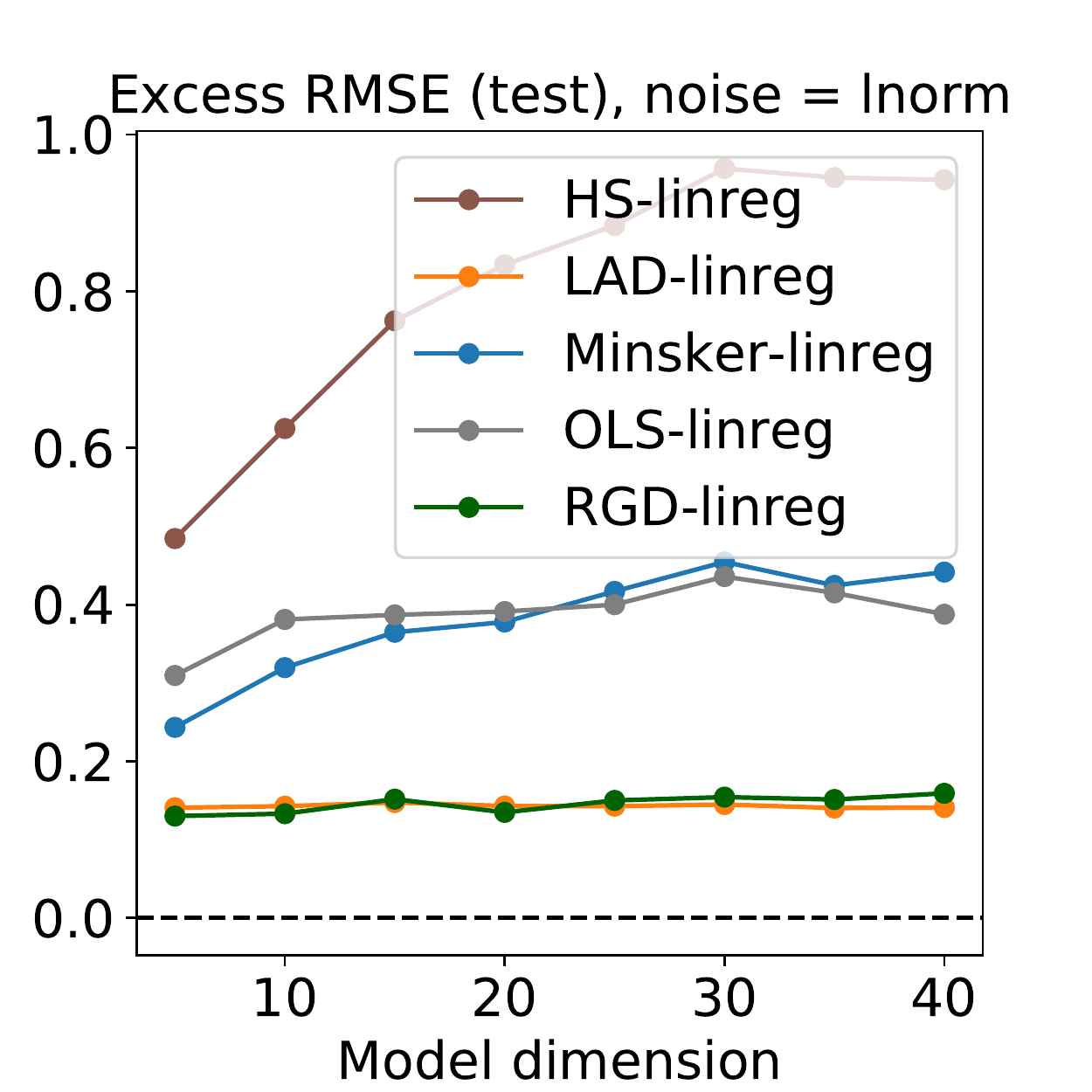}
\caption{Top row: Prediction error over sample size $12 \leq n \leq 122$, fixed $d=5$, noise level = $8$. Center row: Prediction error over noise levels, for $n=30, d=5$. Bottom row: Prediction error over dimensions $5 \leq d \leq 40$, with ratio $n/d = 6$ fixed, and noise level = $8$. Each column corresponds to a distinct noise family.}
\label{fig:multinoise_linreg}
\end{figure}

First we fix the model dimension $d$, and evaluate performance as sample size $n$ ranges from very small to quite large (top row of Figure \ref{fig:multinoise_linreg}). We see that regardless of distribution, \texttt{rgd} effectively matches the optimal convergence of OLS in the \texttt{norm} and \texttt{tri\_s} cases, and is resilient to the remaining two scenarios where \texttt{ols} breaks down. There are clear issues with the median of means based methods at very small sample sizes, though the geometric median based method does eventually at least surpass OLS in the \texttt{llog} and \texttt{lnorm} cases. Essentially the same trends can be observed at all noise levels.

Next, we look at performance over noise settings, from negligible noise to significant noise with potentially infinite higher-order moments (middle row of Figure \ref{fig:multinoise_linreg}). We see that \texttt{rgd} generalizes well, in a manner which is effectively uniform across the distinct noise families. We note that even in such diverse settings with pre-fixed step-size and iteration numbers, very robust performance is shown. It appears that under small sample size, \texttt{rgd} reduces the variance due to errant observations, while incurring a smaller bias than the other robust methods. When \texttt{ols} (effectively ERM-GD) is optimal, note that \texttt{rgd} follows it closely, with virtually negligible bias. When the former breaks down, \texttt{rgd} remains stable.

Finally, we fix the ratio $n/d$ and look at the role played by increasingly large dimension (bottom row of Figure \ref{fig:multinoise_linreg}). We see that for all distributions, the performance of \texttt{rgd} is essentially constant. This coincides with the theory of section \ref{sec:algo_performance}, and our intuition since Algorithm \ref{algo:rgd} is run in a by-coordinate fashion. On the other hand, competing methods show sensitivity to the number of free parameters, especially in the case of asymmetric data with heavy tails.

\subsection{Application to real-world benchmarks}\label{sec:tests_real}

To close out this section, and to gain some additional perspective on algorithm performance, we shift our focus to some nascent applications to real-world benchmark data sets.

Having already paid close attention to regression models in the previous section, here we consider applications of robust gradient descent to classification tasks, under both binary and multi-class settings. The model assumed is standard multi-class logistic regression: if the number of classes is $C$, and the number of input features is $F$, then the total number of parameters to be determined is $d = (C-1)F$. The loss function is convex in the parameters, and its partial derivatives all exist, so the model aligns well with our problem setting of interest. In addition, a squared $\ell_{2}$-norm regularization term $a \|\ww\|^{2}$ is added to the loss, with $a$ varying over datasets (see below). All learning algorithms are given a fixed budget of gradient computations, set here to $20n$, where $n$ is the size of the training set made available to the learner.

We use three well-known data sets for benchmarking: the CIFAR-10 data set of tiny images,\footnote{\url{http://www.cs.toronto.edu/~kriz/cifar.html}} the MNIST data set of handwritten digits,\footnote{\url{http://yann.lecun.com/exdb/mnist/}} and the protein homology dataset made popular by its inclusion in the KDD Cup.\footnote{\url{http://www.kdd.org/kdd-cup/view/kdd-cup-2004/Tasks}} For all data sets, we carry out 10 independent trials, with training and testing tests randomly sampled as will be described shortly. For all datasets, we normalize the input features to the unit interval $[0,1]$ in a per-dimension fashion. For CIFAR-10, we average the RGD color channels to obtain a single greyscale channel. As a result, $F=1024$. There are ten classes, so $C=10$, meaning $d=(C-1)F=9216$. We take a sample size of $n=4d=36864$ for training, with the rest for testing, and set $a=0.001$. For MNIST, we have $F=784$ and once again $C=10$, so $d=7056$. As with the previous dataset, we set $n=4d=28224$, and $a=0.0001$. Note that both of these datasets have all classes in equal proportions, so with uniform random sampling, class frequencies are approximately equal in each trial. On the other hand, the protein homology dataset (binary classification) has highly unbalanced labels, with only 1296 positive labels out of over 145,000 observations. We thus take random samples such that the training and test sets are balanced. For each trial, we randomly select 296 positively labeled examples, and the same amount of negatively labeled examples, yielding a test set of 592 examples. As for the training set size, we use the rest of the positive labels (1000 examples) plus a random selection of 1000 negatively labeled examples, so $n=2000$, and with $C=2$ and $F=74$, we have $d=74$. Regularization parameter $a$ is $0.001$. For all datasets, the parameter weights are initialized uniformly over the interval $[-0.05,0.05]$.

Regarding the competing methods used, we test out a random mini-batch version of robust gradient descent given in Algorithm \ref{algo:rgd}, with mini-batch sizes ranging over $\{5,10,15,20\}$, roughly on the order of $n^{-1/4}$ for the largest datasets. We also consider a mini-batch in the sense of randomly selecting coordinates to robustify: select $\min(100,d)$ indices randomly at each iteration, and run the RGD sub-routine on just these coordinates, using the sample mean for all the rest. Furthermore, we considered several minor alterations to the original routine, including using $\log\cosh(\cdot)$ instead of the Gudermannian function for $\rho$, updating the scale much less frequently (compared to every iteration), and different choices of $\chi$ for re-scaling. We compare our proposed algorithm with stochastic gradient descent (SGD), and stochastic variance-reduced gradient descent (SVRG) proposed by \citet{johnson2013a}. For each method, pre-fixed step sizes ranging over $\{0.0001, 0.001, 0.01, 0.05, 0.10, 0.15, 0.20\}$ are tested. SGD uses mini-batches of size 1, as does the inner loop of SVRG. The outer loop of SVRG continues until the budget is spent, with the inner loop repeating $n/2$ times.

\begin{figure}[t]
\centering
\includegraphics[width=0.33\textwidth]{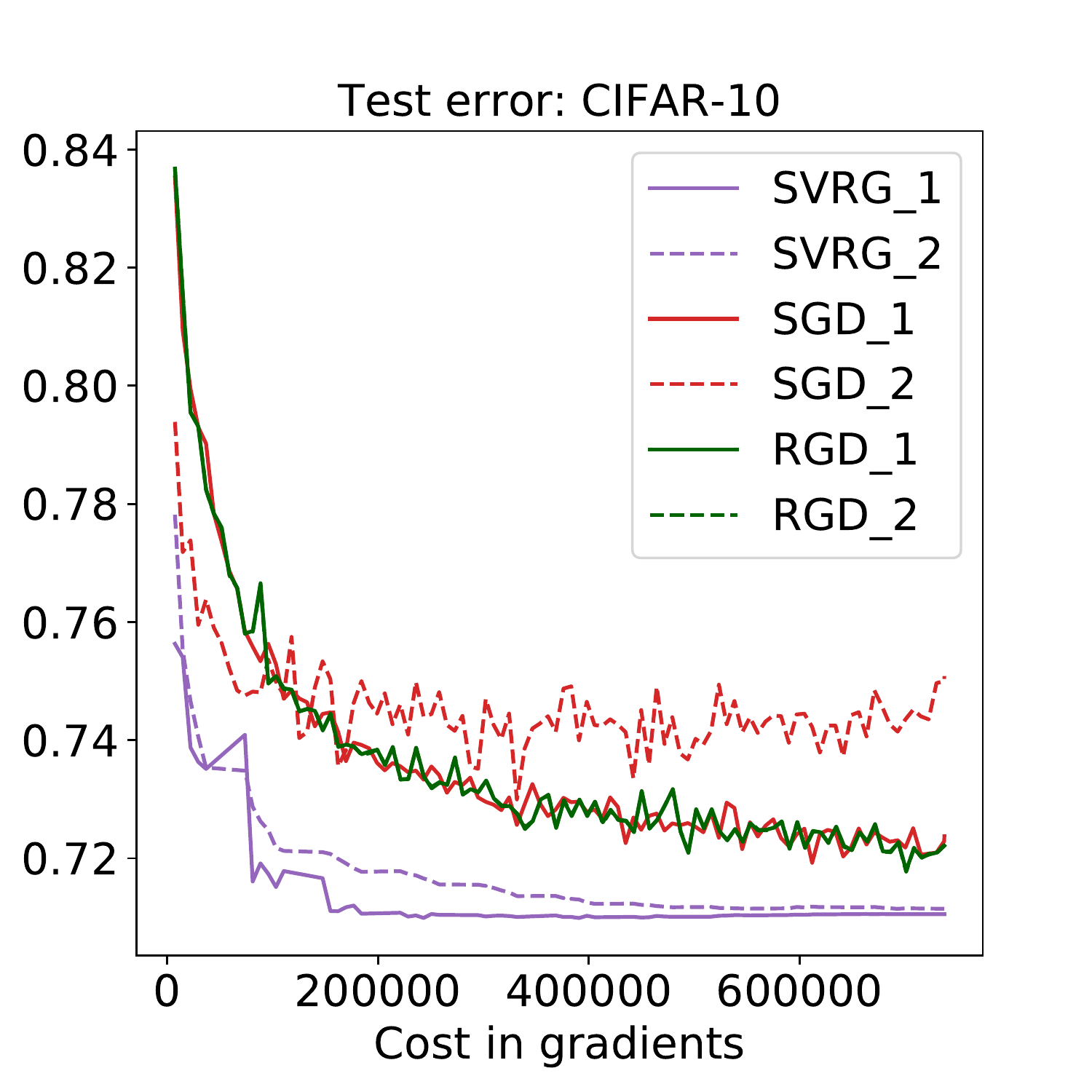}\,\includegraphics[width=0.33\textwidth]{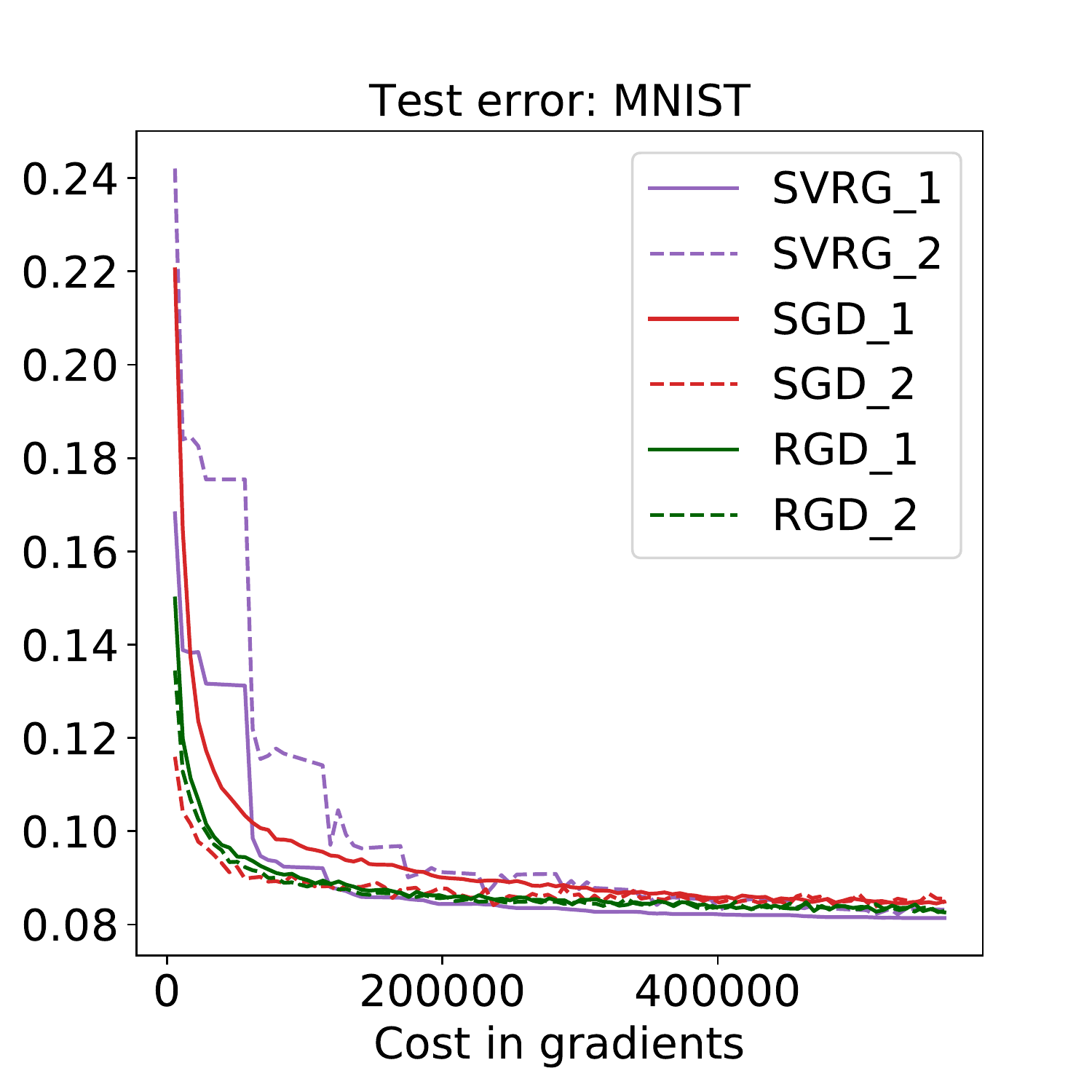}\,\includegraphics[width=0.33\textwidth]{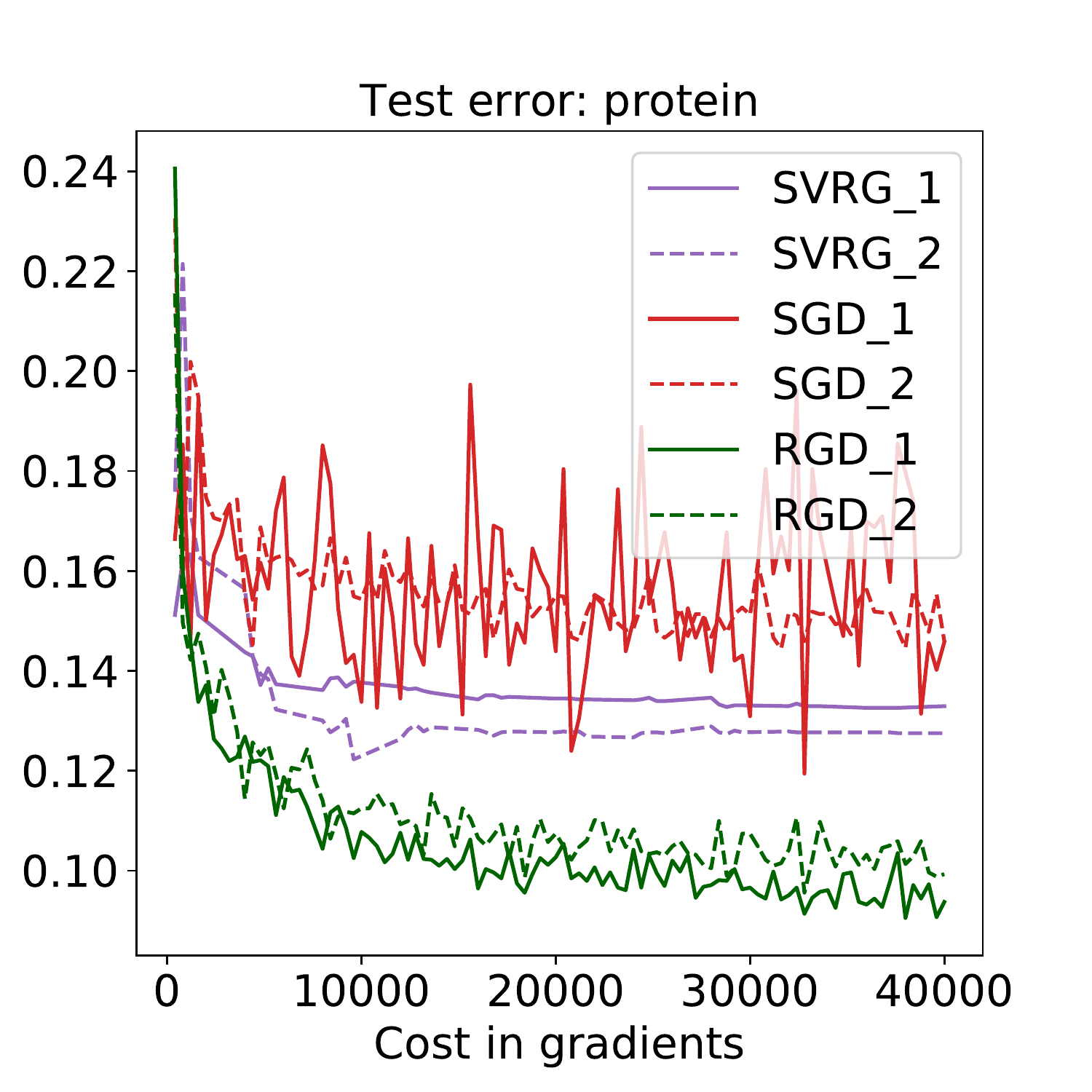}
\caption{Test error (misclassification rate) over budget spent, as measured by gradient computations, for the top two performers within each method class. Each plot corresponds to a distinct dataset.}
\label{fig:tests_real}
\end{figure}

Representative results are given in Figure \ref{fig:tests_real}. For each of the three methods of interest, and each dataset, we chose the top two performance settings, displayed as \texttt{*\_1} and \texttt{*\_2} respectively. Here ``top performance'' is measured by the median value of the last five iterations. We see that in general, robust gradient descent is competitive with the best settings of these well-known routines, has minimal divergence between the performance of its first- and second-best settings, and in the case of smaller data sets (protein homology), indeed significantly outperforms the competitors. While these are simply nascent applications of RGD, the strong initial performance suggests that further investigation of efficient strategies under high-dimensional data is a promising direction.

\section{Concluding remarks}\label{sec:conclusion}

In this work, we introduced and analyzed a learning algorithm which takes advantage of robust estimates of the unknown risk gradient, integrating statistical estimation and practical implementation into a single routine. Doing so allows us to deal with the statistical vulnerabilities of ERM-GD and partition-based methods, while circumventing computational issues posed by minimizers of robust surrogate objectives. The price to be paid is new computational overhead and potentially biased estimates. Is this price worth paying? Bounds on the excess risk are available under very weak assumptions on the data distribution, and we find empirically that the proposed algorithm has desirable learning efficiency, in that it can competitively generalize, with less samples, over more distributions than its competitors.

Moving forward, a more careful analysis of the role that prior knowledge can play on learning efficiency, starting with the first-order optimizer setting, is of significant interest. Characterizing the learning efficiency enabled by sharper estimates could lead to useful insights in the context of larger-scale problems, where a small overhead might save countless iterations and dramatically reduce budget requirements, while simultaneously leading to more consistent performance across samples. Another natural line of work is to look at alternative strategies which operate on the data vector as a whole (rather than coordinate-wise), integrating information across coordinates, in order to infer more efficiently.

{\small
\bibliographystyle{apalike}
\bibliography{refs_rgd.bib}
}

\appendix

\section{Technical appendix}\label{sec:appendix}

\subsection{Preliminaries}\label{sec:prelims}

Our generic data shall be denoted by $\zz \in \ZZ$. Let $\mu$ denote a probability measure on $\ZZ$, equipped with an appropriate $\sigma$-field. Data samples shall be assumed independent and identically distributed (iid), written $\zz_{1},\ldots,\zz_{n}$. We shall work with loss function $l:\RR^{d} \times \ZZ \to \RR_{+}$ throughout, with $l(\cdot;\zz)$ assumed differentiable for each $\zz \in \ZZ$. Write $\prr$ for a generic probability measure, most commonly the product measure induced by the sample. Let $f:\ZZ \to \RR$ be an measurable function. Expectation is written $\exx_{\mu}f(\zz) \defeq \int f \, d\mu$, with variance $\vaa_{\mu}f(\zz)$ defined analogously. For $d$-dimensional Euclidean space $\RR^{d}$, the usual ($\ell_{2}$) norm shall be denoted $\|\cdot\|$ unless otherwise specified. For function $F$ on $\RR^{d}$ with partial derivatives defined, write the gradient as $F^{\prime}(\uu) \defeq (F^{\prime}_{1}(\uu),\ldots,F^{\prime}_{d}(\uu))$ where for short, we write $F^{\prime}_{j}(\uu) \defeq \partial F(\uu)/\partial u_{j}$. For integer $k$, write $[k] \defeq \{1,\ldots,k\}$ for all the positive integers from $1$ to $k$. Risk shall be denoted $R(\ww) \defeq \exx_{\mu}l(\ww;\zz)$, and its gradient $\gvec(\ww) \defeq R^{\prime}(\ww)$. We make a running assumption that we can differentiate under the integral sign in each coordinate \citep{ash2000a,talvila2001a}, namely that
\begin{align}\label{eqn:diff_under_integral}
\gvec(\ww) = \left(\exx_{\mu}\frac{\partial l(\ww;\zz)}{\partial w_{1}}, \ldots, \exx_{\mu}\frac{\partial l(\ww;\zz)}{\partial w_{d}}\right).
\end{align}

Smoothness and convexity of functions shall also be utilized. For convex function $F$ on convex set $\WW$, say that $F$ is $\lambda$\textit{-Lipschitz} if, for all $\ww_{1},\ww_{2} \in \WW$ we have $|F(\ww_{1})-F(\ww_{2})| \leq \lambda \|\ww_{1}-\ww_{2}\|$. We say that $F$ is $\lambda$\textit{-smooth} if $F^{\prime}$ is $\lambda$-Lipschitz. Finally, $F$ is \textit{strongly convex} with parameter $\kappa > 0$ if for all $\ww_{1},\ww_{2} \in \WW$,
\begin{align*}
F(\ww_{1})-F(\ww_{2}) \geq \langle F^{\prime}(\ww_{2}), \ww_{1}-\ww_{2} \rangle + \frac{\kappa}{2}\|\ww_{1}-\ww_{2}\|^{2}
\end{align*}
for any norm $\|\cdot\|$ on $\WW$, though we shall be assuming $\WW \subseteq \RR^{d}$. If there exists $\wwstar \in \WW$ such that $F^{\prime}(\wwstar)=0$, then it follows that $\wwstar$ is the unique minimum of $F$ on $\WW$. Let $f:\RR^{d} \to \RR$ be a continuously differentiable, convex, $\lambda$-smooth function. The following basic facts will be useful: for any choice of $\uu,\vv \in \RR^{d}$, we have
\begin{align}
\label{eqn:facts_CO_1}
f(\uu)-f(\vv) & \leq \frac{\lambda}{2}\|\uu-\vv\|^{2} + \langle f^{\prime}(\vv), \uu-\vv \rangle\\
\label{eqn:facts_CO_2}
\frac{1}{2\lambda}\|f^{\prime}(\uu)-f^{\prime}(\vv)\|^{2} & \leq f(\uu)-f(\vv) - \langle f^{\prime}(\vv), \uu-\vv \rangle.
\end{align}
Proofs of these results can be found in any standard text on convex optimization, e.g.~\citep{nesterov2004ConvOpt}.

We shall leverage a special type of M-estimator here, built using the following convenient class of functions.
\begin{defn}[Function class for location estimates]\label{defn:rho}
Let $\rho:\RR \to [0,\infty)$ be an even function ($\rho(u)=\rho(-u)$) with $\rho(0)=0$ and the following properties. Denote $\psi(u) \defeq \rho^{\prime}(u)$.
\begin{enumerate}
\item $\rho(u) = O(u)$ as $u \to \pm\infty$.
\item $\rho(u)/(u^{2}/2) \to 1$ as $u \to 0$.
\item $\psi^{\prime} > 0$, and for some $C>0$, and all $u \in \RR$,
\begin{align*}
-\log(1-u+Cu^{2}) \leq \psi(u) \leq \log(1+u+Cu^{2}).
\end{align*}
\end{enumerate}
\end{defn}
\noindent Of particular importance in the proceeding analysis is the fact that $\psi=\rho^{\prime}$ is bounded, monotonically increasing and Lipschitz on $\RR$, plus the upper/lower bounds which let us generalize the technique of \citet{catoni2012a}.
\begin{ex}[Valid $\rho$ choices]
In addition to the Gudermannian function (section \ref{sec:intuitive} footnote), functions such as $2(\sqrt{1+u^{2}/2}-1)$ and $\log\cosh(u)$ are well-known examples that satisfy the desired criteria. Note that the wide/narrow functions of Catoni do not meet all these criteria, nor does the classic Huber function.
\end{ex}

\subsection{Proofs}\label{sec:appendix_proofs}

\begin{proof}[Proof of Lemma \ref{lem:sharp_Mest}]
For cleaner notation, write $x_{1},\ldots,x_{n} \in \RR$ for our iid observations. Here $\rho$ is assumed to satisfy the conditions of Definition~\ref{defn:rho}. A high-probability concentration inequality follows by direct application of the specified properties of $\rho$ and $\psi \defeq \rho^{\prime}$, following the general technique laid out by \citet{catoni2009a,catoni2012a}. For $u \in \RR$ and $s>0$, writing $\psi_{s}(u) \defeq \psi(u/s)$, and taking expectation over the random draw of the sample,
\begin{align*}
\exx\exp\left( \sum_{i=1}^{n}\psi_{s}(x_{i}-u)\right) & \leq \left(1 + \frac{1}{s}(\exx x-u) + \frac{C}{s^{2}}\exx (x^{2}+u^{2}-2xu) \right)^{n}\\
& \leq \exp\left( \frac{n}{s}(\exx x-u) + \frac{Cn}{s^{2}}(\vaa x + (\exx x - u)^{2}) \right).
\end{align*}
The inequalities above are due to an application of the upper bound on $\psi$, and and the inequality $(1+u) \leq \exp(u)$. Now, letting
\begin{align*}
A & \defeq \frac{1}{n}\sum_{i=1}^{n}\psi_{s}(x_{i}-u)\\
B & \defeq \frac{1}{s}(\exx x-u) + \frac{C}{s^{2}}(\vaa x + (\exx x - u)^{2})
\end{align*}
we have a bound on $\exx\exp(nA) \leq \exp(nB)$. By Chebyshev's inequality, we then have
\begin{align*}
\prr\{A > B + \varepsilon\} & = \prr\{\exp(nA) > \exp(nB+n\varepsilon)\}\\
& \leq \frac{\exx\exp(nA)}{\exp(nB+n\varepsilon)}\\
& \leq \exp(-n\varepsilon).
\end{align*}
Setting $\varepsilon=\log(\delta^{-1})/n$ for confidence level $\delta \in (0,1)$, and for convenience writing
\begin{align*}
b(u) \defeq \exx x - u + \frac{C}{s}(\vaa x + (\exx x-u)^{2}),
\end{align*}
we have with probability no less than $1-\delta$ that
\begin{align}\label{eqn:sharp_Mest_1}
\frac{s}{n}\sum_{i=1}^{n} \psi_{s}(x_{i}-u) \leq b(u) + \frac{s\log(\delta^{-1})}{n}.
\end{align}
The right hand side of this inequality, as a function of $u$, is a polynomial of order 2, and if
\begin{align*}
1 \geq D \defeq 4\left(\frac{C^2\vaa x}{s^2} + \frac{C\log(\delta^{-1})}{n}\right),
\end{align*}
then this polynomial has two real solutions. In the hypothesis, we stated that the result holds ``for large enough $n$ and $s_{j}$.'' By this we mean that we require $n$ and $s$ to satisfy the preceding inequality (for each $j \in [d]$ in the multi-dimensional case). The notation $D$ is for notational simplicity. The solutions take the form
\begin{align*}
u = \frac{1}{2}\left(2\exx x + \frac{s}{C} \pm \frac{s}{C}\left(1-D\right)^{1/2}\right).
\end{align*}
Looking at the smallest of the solutions, noting $D \in [0,1]$ this can be simplified as
\begin{align}\label{eqn:sharp_Mest_2}
u_{+} & \defeq \exx x + \frac{s}{2C}\frac{(1-\sqrt{1-D})(1+\sqrt{1-D})}{1+\sqrt{1-D}}\nonumber\\
& = \exx x + \frac{s}{2C}\frac{D}{1+\sqrt{1-D}}\nonumber\\
& \leq \exx x + sD/2C,
\end{align}
where the last inequality is via taking the $\sqrt{1-D}$ term in the previous denominator as small as possible. Now, writing $\xhat$ as the M-estimate using $s$ and $\rho$ as in (\ref{eqn:location_rough}), note that $\xhat$ equivalently satisfies $\sum_{i=1}^{n}\psi_{s}(\xhat-x_{i})=0$. Using (\ref{eqn:sharp_Mest_1}), we have
\begin{align*}
\frac{s}{n}\sum_{i=1}^{n}\psi_{s}(x_{i}-u_{+}) \leq b(u_{+}) + \frac{s\log(\delta^{-1})}{n} = 0,
\end{align*}
and since the left-hand side of (\ref{eqn:sharp_Mest_1}) is a monotonically decreasing function of $u$, we have immediately that $\xhat \leq u_{+}$ on the event that (\ref{eqn:sharp_Mest_1}) holds, which has probability at least $1-\delta$. Then leveraging (\ref{eqn:sharp_Mest_2}), it follows that on the same event,
\begin{align*}
\xhat-\exx x \leq sD/2C.
\end{align*}
An analogous argument provides a $1-\delta$ event on which $\xhat-\exx x \geq -sD/2C$, and thus using a union bound, one has that
\begin{align}\label{eqn:sharp_Mest_3}
|\xhat-\exx x| \leq 2\left( \frac{C \vaa x}{s} + \frac{s\log(\delta^{-1})}{n} \right)
\end{align}
holds with probability no less than $1-2\delta$. Setting the $x_{i}$ to $l_{j}^{\prime}(\ww;\zz_{i})$ for $j \in [d]$ and some $\ww \in \RR^{d}$, $i \in [n]$, and $\xhat$ to $\that_{j}$ corresponds to the special case considered in this Lemma. Dividing $\delta$ by two yields the $(1-\delta)$ result.
\end{proof}

\begin{proof}[Proof of Lemma \ref{lem:grad_estimate}]
For any fixed $\ww$ and $j \in [d]$, note that
\begin{align}\label{eqn:grad_estimate_1}
\nonumber
|\that_{j} - g_{j}(\ww)| & \leq \varepsilon_{j}\\
&  \defeq 2\left( \frac{C\vaa_{\mu}l_{j}^{\prime}(\ww;\zz)}{s_{j}} + s_{j}\log(2\delta^{-1}) \right)\\
\nonumber
& = 2 \sqrt{\frac{\log(2\delta^{-1})}{n}} \left( \frac{C\vaa_{\mu}l_{j}^{\prime}(\ww;\zz)}{\widehat{\sigma}_{j}} + \widehat{\sigma}_{j} \right)\\
& \leq \varepsilon^{\ast} \defeq 2 \sqrt{\frac{V\log(2\delta^{-1})}{n}} c_{0} 
\end{align}
holds with probability no less than $1-\delta$. The first inequality holds via direct application of Lemma \ref{lem:sharp_Mest}, which holds under (\ref{eqn:req_n_general}) and using $\rho$ which satisfies (\ref{eqn:rho_Catoni_condition}). The equality follows immediately from (\ref{eqn:scale_rough}). The final inequality follows from \ref{asmp:A4} and (\ref{eqn:req_dispersion}), along with the definition of $c_{0}$.

Making the dependence on $\ww$ explicit with $\that_{j} = \that_{j}(\ww)$, an important question to ask is how sensitive this estimator is to a change in $\ww$. Say we perturb $\ww$ to $\wwtil$, so that $\|\ww - \wwtil\| = a > 0$. By \ref{asmp:A2}, for any sample we have
\begin{align*}
\|l^{\prime}(\ww;\zz_{i}) - l^{\prime}(\wwtil;\zz_{i})\| \leq \lambda \|\ww - \wwtil\| = \lambda a, \quad i \in [n]
\end{align*}
which immediately implies $|l^{\prime}_{j}(\ww;\zz_{i}) - l^{\prime}_{j}(\wwtil;\zz_{i})| \leq \lambda a$ for all $j \in [d]$ as well. Given a sample of $n \geq 1$ points, the most extreme shift in $\that_{j}(\cdot)$ that is feasible would be if, given the $a$-sized shift from $\ww$ to $\wwtil$, \textit{all} data points moved the maximum amount (namely $\lambda a$) in the same direction. Since $\that_{j}(\wwtil)$ is defined by balancing the distance between points to its left and right, the most it could conceivably shift is thus equal to $\lambda a$. That is, smoothness of the loss function immediately implies a Lipschitz property of the estimator,
\begin{align*}
|\that_{j}(\ww)-\that_{j}(\wwtil)| \leq \lambda \|\ww - \wwtil\|.
\end{align*}
Considering the vector of estimates $\widehat{\mv{\theta}}(\ww) \defeq (\that_{1}(\ww),\ldots,\that_{d}(\ww))$, we then have
\begin{align}\label{eqn:estimator_smoothness}
\|\widehat{\mv{\theta}}(\ww)-\widehat{\mv{\theta}}(\wwtil)\| \leq \sqrt{d}\lambda\|\ww - \wwtil\|.
\end{align}
This will be useful for proving uniform bounds on the estimation error shortly.

First, let's use these one-dimensional results for statements about the vector estimator of interest. In $d$ dimensions, using $\widehat{\mv{\theta}}(\ww)$ just defined for any pre-fixed $\ww$, then for any $\varepsilon > 0$ we have
\begin{align*}
\prr\left\{ \|\widehat{\mv{\theta}}(\ww)-\gvec(\ww)\| > \varepsilon \right\} & = \prr\left\{ \|\widehat{\mv{\theta}}(\ww)-\gvec(\ww)\|^{2} > \varepsilon^{2} \right\}\\
& \leq \sum_{j=1}^{d} \prr\left\{ |\that_{j}(\ww) - \gvec_{j}(\ww)| > \frac{\varepsilon}{\sqrt{d}} \right\}.
\end{align*}
Using the notation of $\varepsilon_{j}$ and $\varepsilon^{\ast}$ from (\ref{eqn:grad_estimate_1}), filling in $\varepsilon = \sqrt{d}\varepsilon^{\ast}$, we thus have
\begin{align*}
\prr\left\{ \|\widehat{\mv{\theta}}(\ww)-\gvec(\ww)\| > \sqrt{d}\varepsilon^{\ast} \right\} & \leq \sum_{j=1}^{d}\prr\left\{ |\that_{j}(\ww) - g_{j}(\ww)| > \varepsilon^{\ast} \right\}\\
& \leq \sum_{j=1}^{d}\prr\left\{ |\that_{j}(\ww) - g_{j}(\ww)| > \varepsilon_{j} \right\}\\
& \leq d\delta.
\end{align*}
The second inequality is because $\varepsilon_{j} \leq \varepsilon^{\ast}$ for all $j \in [d]$. It follows that the event
\begin{align*}
\EE(\ww) \defeq \left\{ \|\widehat{\mv{\theta}}(\ww)-\gvec(\ww)\| > 2 \sqrt{\frac{dV\log(2d\delta^{-1})}{n}} c_{0} \right\}
\end{align*}
has probability $\prr \EE(\ww) \leq \delta$. In practice, however, $\wwhat_{(t)}$ for all $t > 0$ will be random, and depend on the sample. We seek uniform bounds using a covering number argument. By \ref{asmp:A1}, $\WW$ is closed and bounded, and thus compact, and it requires no more than $N_{\epsilon} \leq (3\Delta/2\epsilon)^{d}$ balls of $\epsilon$ radius to cover $\WW$, where $\Delta$ is the diameter of $\WW$.\footnote{This is a basic property of covering numbers for compact subsets of Euclidean space \citep{kolmogorov1993SelectWorks3}.} Write the centers of these $\epsilon$ balls by $\{\wwtil_{1},\ldots,\wwtil_{N_{\epsilon}}\}$. Given $\ww \in \WW$, denote by $\wwtil = \wwtil(\ww)$ the center closest to $\ww$, which satisfies $\|\ww - \wwtil\| \leq \epsilon$. Estimation error is controllable using the following new error terms:
\begin{align}\label{eqn:ineq_3errors}
\|\widehat{\mv{\theta}}(\ww) - \gvec(\ww)\| \leq \|\widehat{\mv{\theta}}(\ww)-\widehat{\mv{\theta}}(\wwtil)\| + \|\gvec(\ww) - \gvec(\wwtil)\| + \|\widehat{\mv{\theta}}(\wwtil) - \gvec(\wwtil)\|.
\end{align}
The goal is to be able to take the supremum over $\ww \in \WW$. We bound one term at a time. The first term can be bounded, for any $\ww \in \WW$, by (\ref{eqn:estimator_smoothness}) just proven. The second term can be bounded by
\begin{align}\label{eqn:bound_error2}
\|\gvec(\ww) - \gvec(\wwtil)\| \leq \lambda\|\ww - \wwtil\|
\end{align}
which follows immediately from \ref{asmp:A2}. Finally, for the third term, fixing any $\ww \in \WW$, $\wwtil=\wwtil(\ww) \in \{\wwtil_{1},\ldots,\wwtil_{N_{\epsilon}}\}$ is also fixed, and can be bounded on the $\delta$ event $\EE(\wwtil)$ just defined. The important fact is that
\begin{align*}
\sup_{\ww \in \WW} \left\|\widehat{\mv{\theta}}(\wwtil(\ww)) - \gvec(\wwtil(\ww)) \right\| = \max_{k \in [N_{\epsilon}]} \left\|\widehat{\mv{\theta}}(\wwtil_{k}) - \gvec(\wwtil_{k}) \right\|.
\end{align*}
We construct a ``good event'' naturally as the event in which the bad event $\EE(\cdot)$ holds for no center on our $\epsilon$-net, namely 
\begin{align*}
\EE_{+} = \left(\bigcap_{k \in [N_{\epsilon}]} \EE(\wwtil_{k}) \right)^{c}.
\end{align*}
Taking a union bound, we can say that with probability no less than $1-\delta$, for all $\ww \in \WW$, we have
\begin{align}\label{eqn:bound_error3}
\|\widehat{\mv{\theta}}(\wwtil(\ww)) - \gvec(\wwtil(\ww))\| \leq 2 \sqrt{\frac{dV\log(2d N_{\epsilon} \delta^{-1})}{n}} c_{0}.
\end{align}
Taking the three new bounds together, we have with probability no less than $1-\delta$ that
\begin{align*}
\sup_{\ww \in \WW} \|\widehat{\mv{\theta}}(\ww) - \gvec(\ww)\| \leq \lambda\epsilon(\sqrt{d}+1) + 2 \sqrt{\frac{dV\log(2d N_{\epsilon} \delta^{-1})}{n}} c_{0}.
\end{align*}
Setting $\epsilon = 1/\sqrt{n}$ we have
\begin{align*}
\sup_{\ww \in \WW} \|\widehat{\mv{\theta}}(\ww) - \gvec(\ww)\| \leq \frac{\lambda(\sqrt{d}+1)}{\sqrt{n}} + 2c_{0} \sqrt{\frac{dV(\log(2d\delta^{-1}) + d\log(3\Delta\sqrt{n}/2))}{n}}.
\end{align*}
Since every step of Algorithm \ref{algo:rgd} (with orthogonal projection if required) has $\wwhat_{(t)} \in \WW$, the desired result follows from this uniform confidence interval.
\end{proof}

\begin{proof}[Proof of Lemma \ref{lem:dueling_strong}]
Given $\wwhat_{(t)}$, running the approximate update (\ref{eqn:GD_update_approx}), we have
\begin{align*}
\|\wwhat_{(t+1)}-\wwstar\| & = \|\wwhat_{(t)}-\alpha\gghat(\wwhat_{(t)})-\wwstar\|\\
& \hspace{-2cm} \leq \|\wwhat_{(t)}-\alpha\gvec(\wwhat_{(t)})-\wwstar\| + \alpha\|\gghat(\wwhat_{(t)})-\gvec(\wwhat_{(t)})\|.
\end{align*}
The first term looks at the distance from the target given an optimal update, using $\gvec$. Using the $\kappa$-strong convexity of $R$, via \citet[Thm.~2.1.15]{nesterov2004ConvOpt} it follows that
\begin{align*}
\|\wwhat_{(t)}-\alpha\gvec(\wwhat_{(t)})-\wwstar\|^{2} \leq \left(1-\frac{2\alpha\kappa\lambda}{\kappa+\lambda}\right)\|\wwhat_{(t)}-\wwstar\|^{2}.
\end{align*}
Writing $\beta \defeq 2\kappa\lambda/(\kappa+\lambda)$, the coefficient becomes $(1-\alpha\beta)$.

To control the second term simply requires unfolding the recursion. By hypothesis, we can leverage (\ref{eqn:conf_uniform}) to bound the statistical estimation error by $\varepsilon$ for every step, all on the same $1-\delta$ ``good event.'' For notational ease, write $a \defeq \sqrt{1-\alpha\beta}$. On the good event, we have
\begin{align*}
\|\wwhat_{(t+1)}-\wwstar\| & \leq a^{t+1}\|\wwhat_{(0)}-\wwstar\| + \alpha\varepsilon\left(1+a+a^{2}+\cdots+a^{t}\right)\\
& = a^{t+1}\|\wwhat_{(0)}-\wwstar\| + \alpha\varepsilon\frac{(1-a^{t+1})}{1-a}.
\end{align*}
To clean up the second summand,
\begin{align*}
\alpha\varepsilon\frac{(1-a^{t+1})}{1-a} & \leq \frac{\alpha\varepsilon(1+a)}{(1-a)(1+a)}\\
& = \frac{\alpha\varepsilon(1+\sqrt{1-\alpha\beta})}{\alpha\beta}\\
& \leq \frac{2\varepsilon}{\beta}.
\end{align*}
Taking this to the original inequality yields the desired result.
\end{proof}

\begin{proof}[Proof of Theorem \ref{thm:main_Rbound_strong}]
Using strong convexity and (\ref{eqn:facts_CO_1}), we have that
\begin{align*}
R(\wwhat_{(T)}) - R^{\ast} & \leq \frac{\lambda}{2}\|\wwhat_{(T)} - \wwstar\|^{2}\\
& \leq \lambda(1-\alpha\beta)^{T}D_{0}^{2} + \frac{4\lambda\varepsilon^{2}}{\beta^{2}}.
\end{align*}
The latter inequality holds by direct application of Lemma \ref{lem:dueling_strong}, followed by the elementary fact $(a+b)^{2} \leq 2(a^{2}+b^{2})$. The particular value of $\varepsilon$ under which Lemma \ref{lem:dueling_strong} is valid (i.e., under which (\ref{eqn:conf_uniform}) holds) is given by Lemma \ref{lem:grad_estimate}. Filling in $\varepsilon$ with this concrete setting yields the desired result.
\end{proof}

\begin{proof}[Proof of Lemma \ref{lem:grad_estimate_varknown}]
As in the result statement, we write
\begin{align*}
\Sigma_{(t)} \defeq \exx_{\mu}\left(l^{\prime}(\wwhat_{(t)};\zz) - \gvec(\wwhat_{(t)})\right)\left(l^{\prime}(\wwhat_{(t)};\zz) - \gvec(\wwhat_{(t)})\right)^{T}, \quad \ww \in \WW.
\end{align*}
Running this modified version of Algorithm \ref{algo:rgd}, we are minimizing the bound in Lemma \ref{lem:sharp_Mest} as a function of scale $s_{j}$, $j \in [d]$, which immediately implies that the estimates $\mv{\that}_{(t)}=(\that_{1},\ldots,\that_{d})$ at each step $t$ satisfy
\begin{align}\label{eqn:grad_estimate_varknown_1}
|\that_{j} - g_{j}(\wwhat)| > 4 \left(\frac{C\vaa_{\mu}l^{\prime}_{j}(\wwhat_{(t)};\zz)\log(2\delta^{-1})}{n}\right)^{1/2}
\end{align}
with probability no greater than $\delta$. For clean notation, let us also denote
\begin{align*}
A \defeq 4 \left(\frac{C\log(2\delta^{-1})}{n} \right)^{1/2}, \quad \varepsilon^{\ast} \defeq A\sqrt{\trace(\Sigma_{(t)})}.
\end{align*}
For the vector estimates then, we have
\begin{align*}
\prr\left\{ \|\mv{\that}_{(t)}-\gvec(\wwhat_{(t)})\| > \varepsilon^{\ast} \right\} &\\
& \hspace{-3.5cm} = \prr\left\{ \sum_{j=1}^{d}\frac{(\that_{j} - g_{j}(\wwhat_{(t)}))^{2}}{A^{2}} > \trace(\Sigma_{(t)}) \right\}\\
& \hspace{-3.5cm} = \prr\left\{ \sum_{j=1}^{d}\left(\frac{(\that_{j} - g_{j}(\wwhat_{(t)}))^{2}}{A^{2}}-\vaa_{\mu}l_{j}^{\prime}(\wwhat_{(t)};\zz)\right) > 0 \right\}\\
& \hspace{-3.5cm} \leq \prr \bigcup_{j=1}^{d} \left\{ \frac{(\that_{j} - g_{j}(\wwhat_{(t)}))^{2}}{A^{2}} > \vaa_{\mu}l_{j}^{\prime}(\wwhat_{(t)};\zz) \right\}\\
& \hspace{-3.5cm} \leq d\delta.
\end{align*}
The first inequality uses a union bound, and the second inequality follows from (\ref{eqn:grad_estimate_varknown_1}). Plugging in $A$ and taking confidence $\delta/d$ implies the desired result.
\end{proof}

\begin{proof}[Proof of Theorem \ref{thm:variance_control}]
From Lemma \ref{lem:grad_estimate_varknown}, the estimation error has exponential tails, as follows. Writing
\begin{align*}
A_{1} \defeq 2d, \quad A_{2} \defeq 4\left(\frac{C\trace(\Sigma_{(t)})}{n}\right)^{1/2},
\end{align*}
for each iteration $t$ we have
\begin{align*}
\prr\{\|\mv{\that}_{(t)}-\gvec(\wwhat_{(t)})\| > \varepsilon\} \leq A_{1} \exp\left(-\left(\frac{\varepsilon}{A_{2}}\right)^{2}\right).
\end{align*}
Controlling moments using exponential tails can be done using a fairly standard argument. For random variable $X \in \LL_{p}$ for $p \geq 1$, we have the classic inequality
\begin{align*}
\exx|X|^{p} = \int_{0}^{\infty} \prr\{|X|^{p}>t\}\,dt
\end{align*}
as a starting point. Setting $X = \|\mv{\that}_{(t)}-\gvec(\wwhat_{(t)})\| \geq 0$, and using substitution of variables twice, we have
\begin{align*}
\exx|X|^{p} & = \int_{0}^{\infty} \prr\{X>t^{1/p}\} \, dt\\
& = \int_{0}^{\infty} \prr\{X > t\}pt^{p-1} \, dt\\
& \leq A_{1}p \int_{0}^{\infty} \exp\left(-\left(t/A_{2}\right)^{2}\right) t^{p-1} \, dt\\
& = \frac{A_{1}A_{2}^{p}p}{2} \int_{0}^{\infty}\exp(-t)t^{p/2-1} \, dt.
\end{align*}
The last integral on the right-hand side, written $\Gamma(p/2)$, is the usual Gamma function of Euler evaluated at $p/2$. Setting $p=2$, we have $\Gamma(1)=0!=1$, and plugging in the values of $A_{1}$ and $A_{2}$ yields the desired result.
\end{proof}

\subsection{Computational methods}\label{sec:appendix_computation}

Here we discuss precisely how to compute the implicitly-defined M-estimates of (\ref{eqn:location_rough}) and (\ref{eqn:scale_rough}). Assuming $s>0$ and real-valued observations $x_{1},\ldots,x_{n}$, we first look at the program
\begin{align*}
\min_{\theta} \frac{1}{n} \sum_{i=1}^{n}\rho_{s}\left(x_{i}-\theta\right)
\end{align*}
assuming $\rho$ is as specified in Definition~\ref{defn:rho}, with $\psi = \rho^{\prime}$. Write $\that$ for this unique minimum, and note that it satisfies
\begin{align*}
\frac{s}{n} \sum_{i=1}^{n}\psi_{s}\left(x_{i}-\that\right) = 0.
\end{align*}
Indeed, by monotonicity of $\psi$, this $\that$ can be found via $\rho$ minimization or root-finding. The latter yields standard fixed-point iterative updates, such as
\begin{align*}
\that_{(k+1)} = \that_{(k)} + \frac{s}{n}\sum_{i=1}^{n}\psi_{s}\left(x_{i}-\that_{(k)}\right).
\end{align*}
Note the right-hand side has a fixed point at the desired value. In our routines, we use the Gudermannian function
\begin{align*}
\rho(u) \defeq \int_{0}^{u}\psi(x)\,dx, \quad \psi(u) \defeq 2\atan(\exp(u))-\pi/2
\end{align*}
which can be readily confirmed to satisfy all requirements of Definition~\ref{defn:rho}.

For the dispersion estimate to be used in re-scaling, we introduce function $\chi$, which is even, non-decreasing on $\RR_{+}$, and satisfies
\begin{align*}
0 < \left|\lim\limits_{u \to \pm \infty} \chi(u)\right| < \infty, \quad \chi(0) < 0.
\end{align*}
In practice, we take dispersion estimate $\widehat{\sigma}>0$ as any value satisfying
\begin{align*}
\frac{1}{n} \sum_{i=1}^{n} \chi\left(\frac{x_{i}-\gamma}{\widehat{\sigma}}\right) = 0
\end{align*}
where $\gamma = n^{-1}\sum_{i=1}^{n}x_{i}$, computed by the iterative procedure
\begin{align*}
\widehat{\sigma}_{(k+1)} = \widehat{\sigma}_{(k)}\left(1-\frac{1}{\chi(0)n}\sum_{i=1}^{n}\chi\left(\frac{x_{i}-\gamma}{\widehat{\sigma}_{(k)}}\right)\right)^{1/2}
\end{align*}
which has the desired fixed point, as in the location case. Our routines use the quadratic Geman-type $\chi$, defined
\begin{align*}
\chi(u) \defeq \frac{u^{2}}{1+u^{2}}-c
\end{align*}
with parameter $c > 0$, noting $\chi(0)=-c$. Writing the first term as $\chi_{0}$ so $\chi(u)=\chi_{0}(u)-c$, we set $c = \exx\chi_{0}(x)$ under $x \sim N(0,1)$. Computed via numerical integration, this is $c \approx 0.34$.

\section{Additional test results}\label{sec:more_test_results}

In this section, we provide some additional experimental results obtained via the tests of section \ref{sec:tests}. In particular, we consider the regression application at the end of section \ref{sec:tests_noisyopt}, where due to space limitations, we only showed results for four distinct families of noise distributions. Here, we consider all of the following distribution families: Arcsine (\texttt{asin}), Beta Prime (\texttt{bpri}), Chi-squared (\texttt{chisq}), Exponential (\texttt{exp}), Exponential-Logarithmic (\texttt{explog}), Fisher's F (\texttt{f}), Fr\'{e}chet (\texttt{frec}), Gamma (\texttt{gamma}), Gompertz (\texttt{gomp}), Gumbel (\texttt{gum}), Hyperbolic Secant (\texttt{hsec}), Laplace (\texttt{lap}), Log-Logistic (\texttt{llog}), Log-Normal (\texttt{lnorm}), Logistic (\texttt{lgst}), Maxwell (\texttt{maxw}), Pareto (\texttt{pareto}), Rayleigh (\texttt{rayl}), Semi-circle (\texttt{scir}), Student's t (\texttt{t}), Triangle (asymmetric \texttt{tri\_a}, symmetric \texttt{tri\_s}), U-Power (\texttt{upwr}), Wald (\texttt{wald}), Weibull (\texttt{weibull}).

The content of this section is as follows:
\begin{itemize}
\item Figures \ref{fig:overN_all_distros_1}--\ref{fig:overN_all_distros_2}: performance as a function of sample size $n$. 

\item Figures \ref{fig:overLvl_all_distros_1}--\ref{fig:overLvl_all_distros_2}: performance over noise levels, with fixed $n$ and $d$.

\item Figures \ref{fig:overDim_all_distros_1}--\ref{fig:overDim_all_distros_2}: performance as a function of $d$, with fixed $n/d$ ratio and noise level.
\end{itemize}

\clearpage

\begin{figure}[t]
\centering
\includegraphics[width=0.25\textwidth]{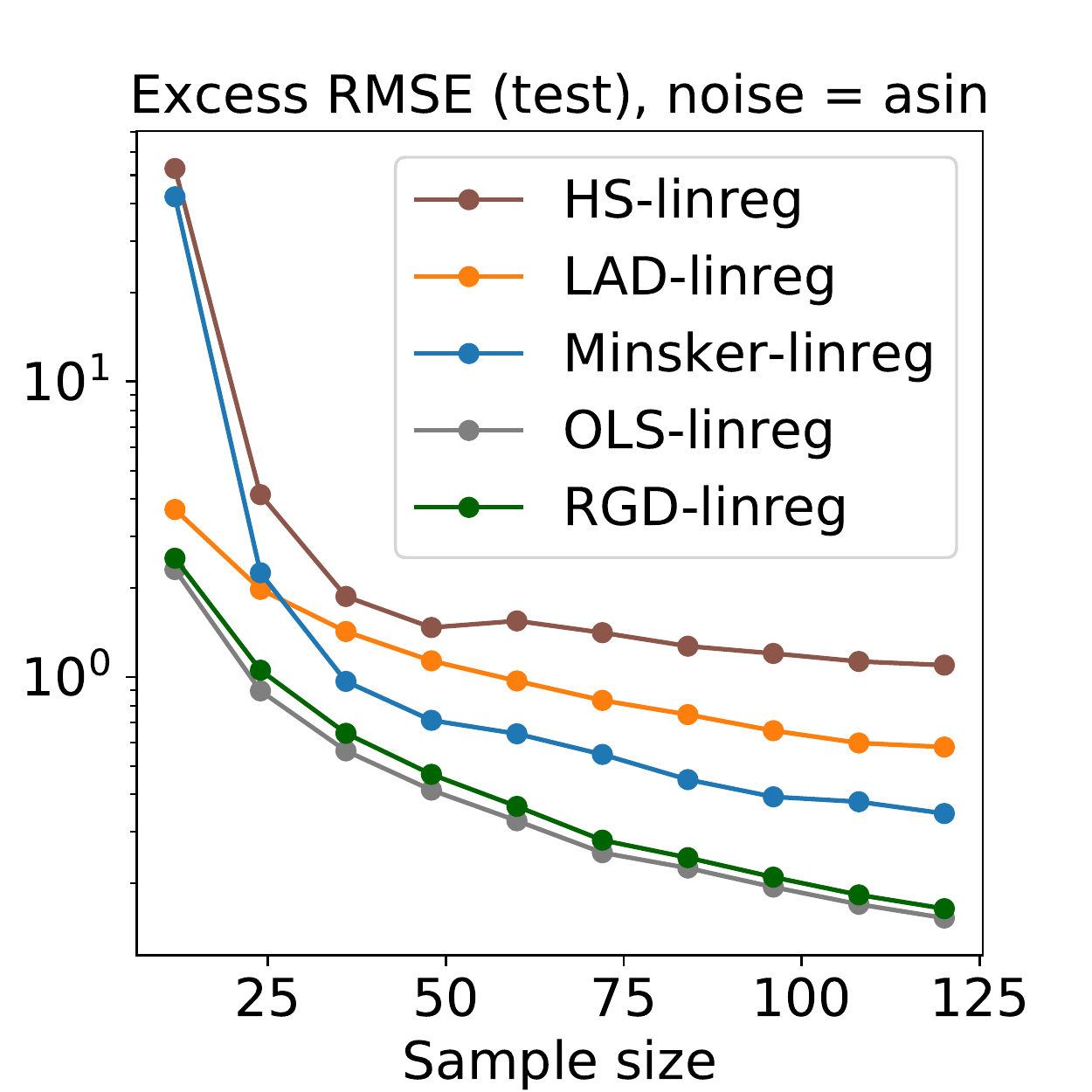}\,\includegraphics[width=0.25\textwidth]{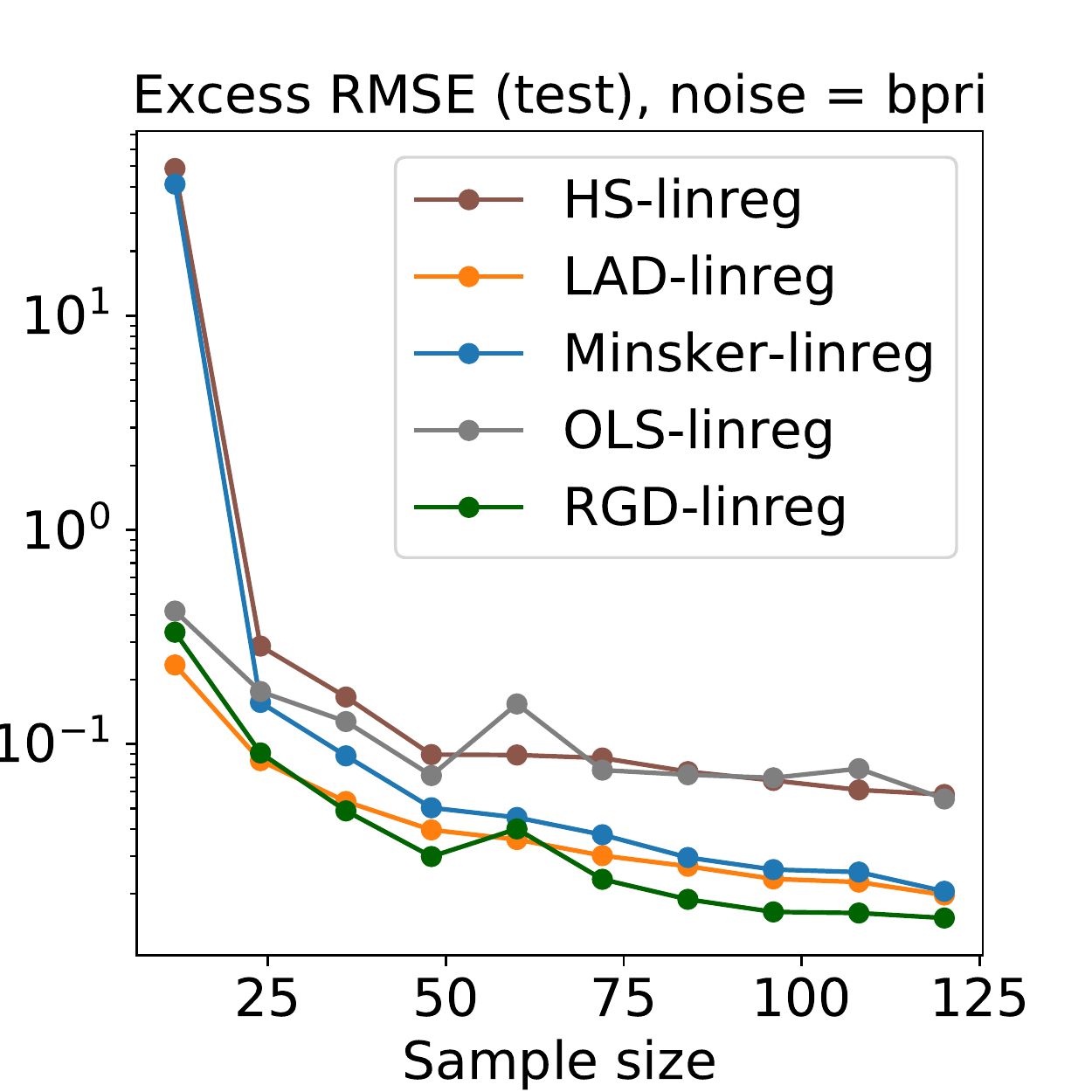}\,\includegraphics[width=0.25\textwidth]{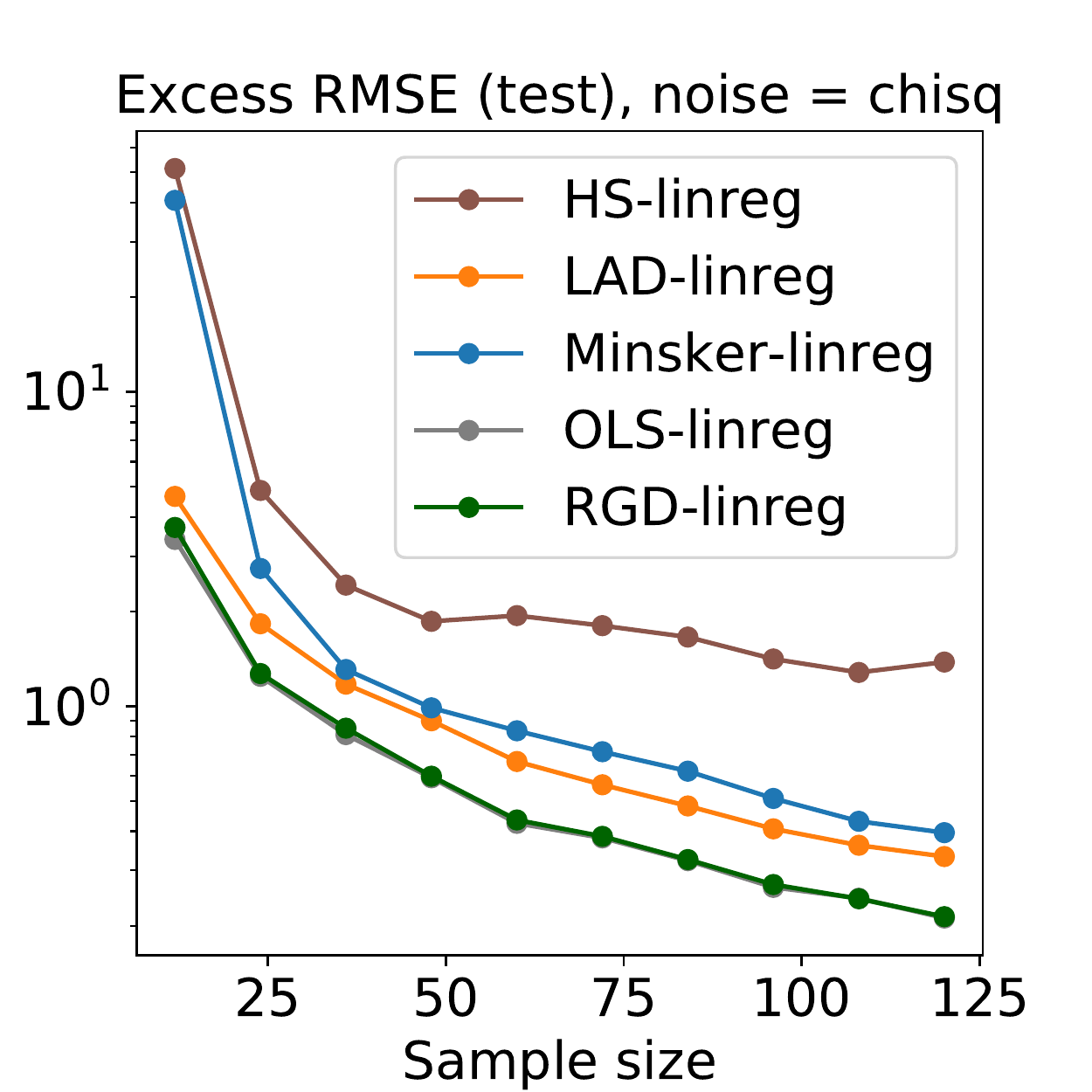}\,\includegraphics[width=0.25\textwidth]{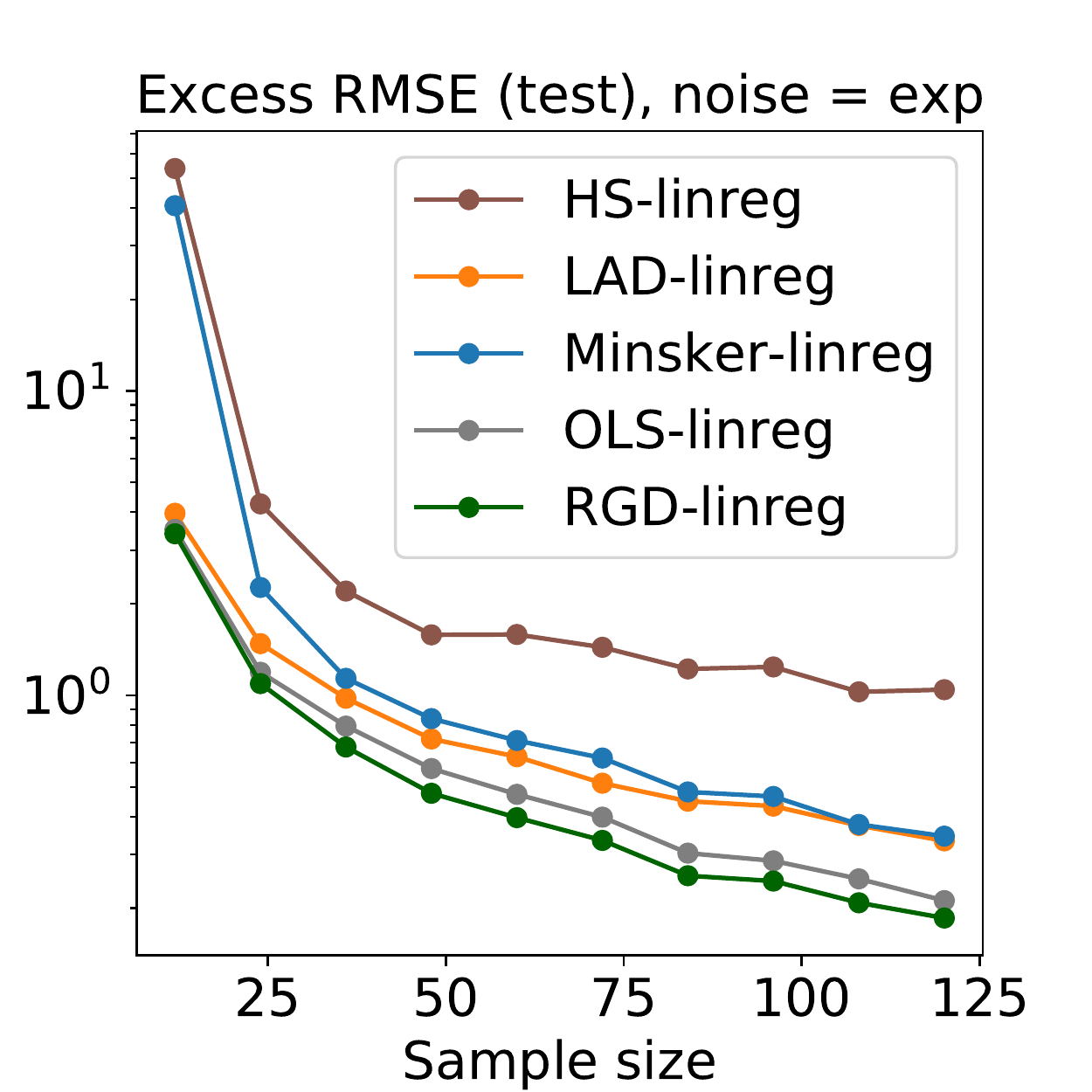}\\
\includegraphics[width=0.25\textwidth]{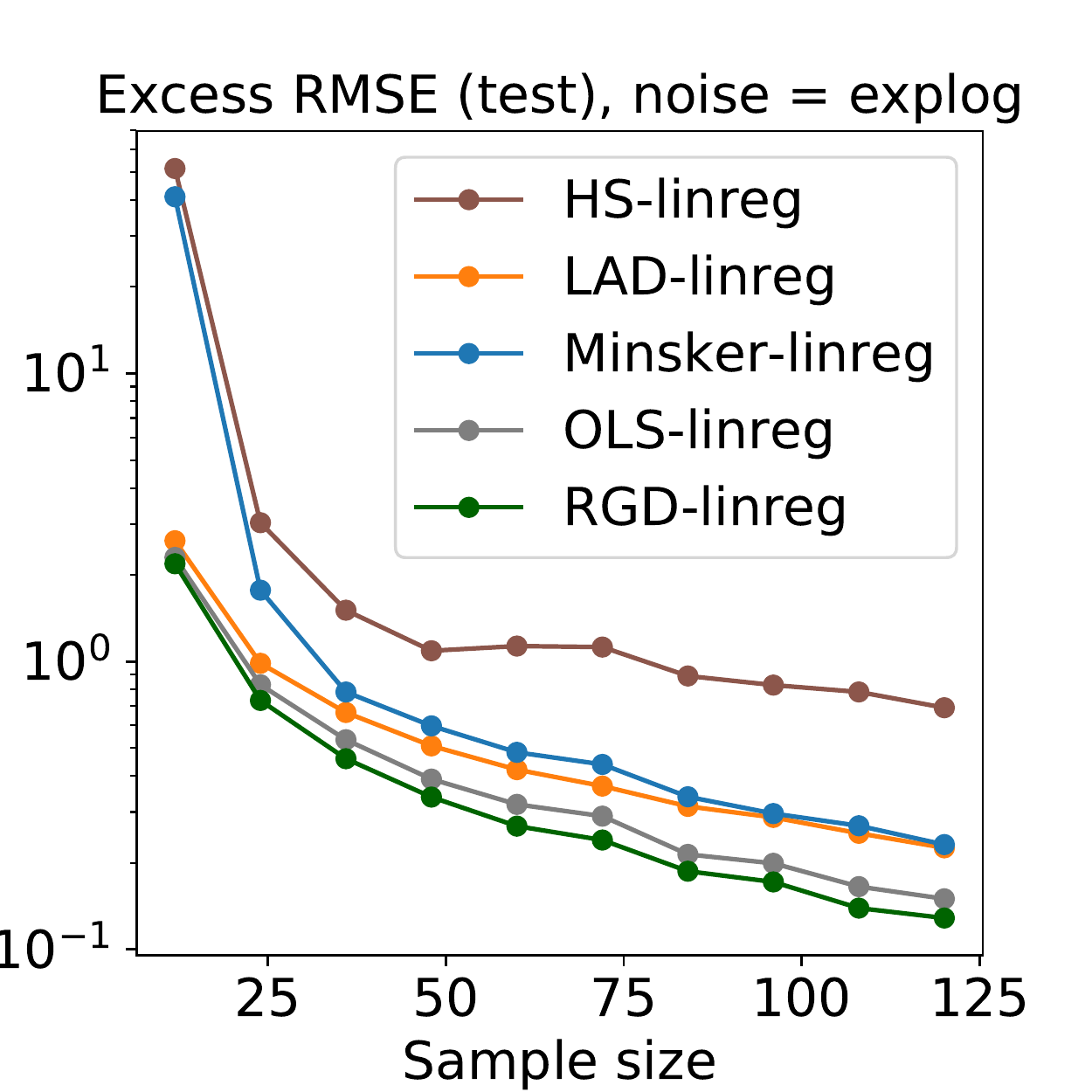}\,\includegraphics[width=0.25\textwidth]{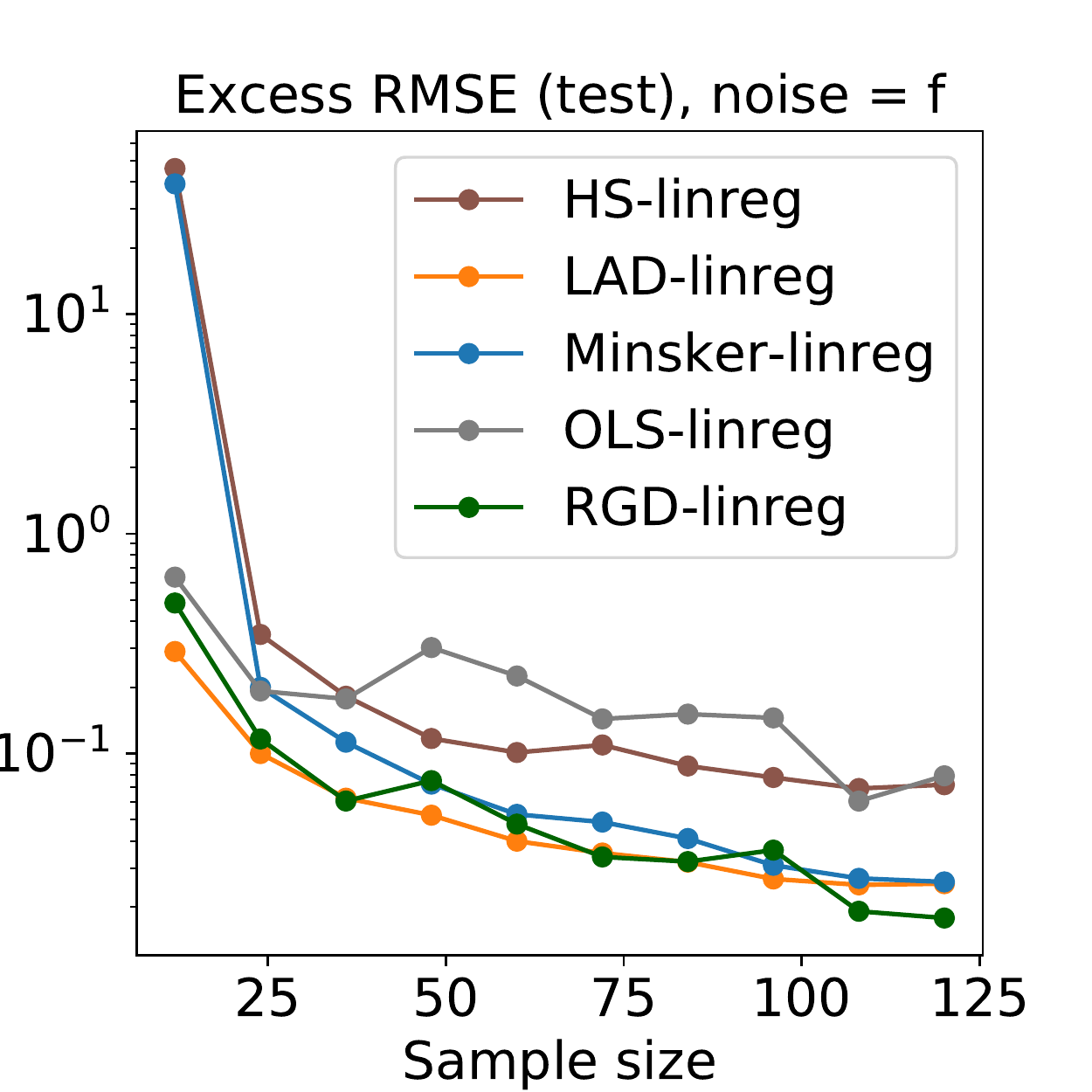}\,\includegraphics[width=0.25\textwidth]{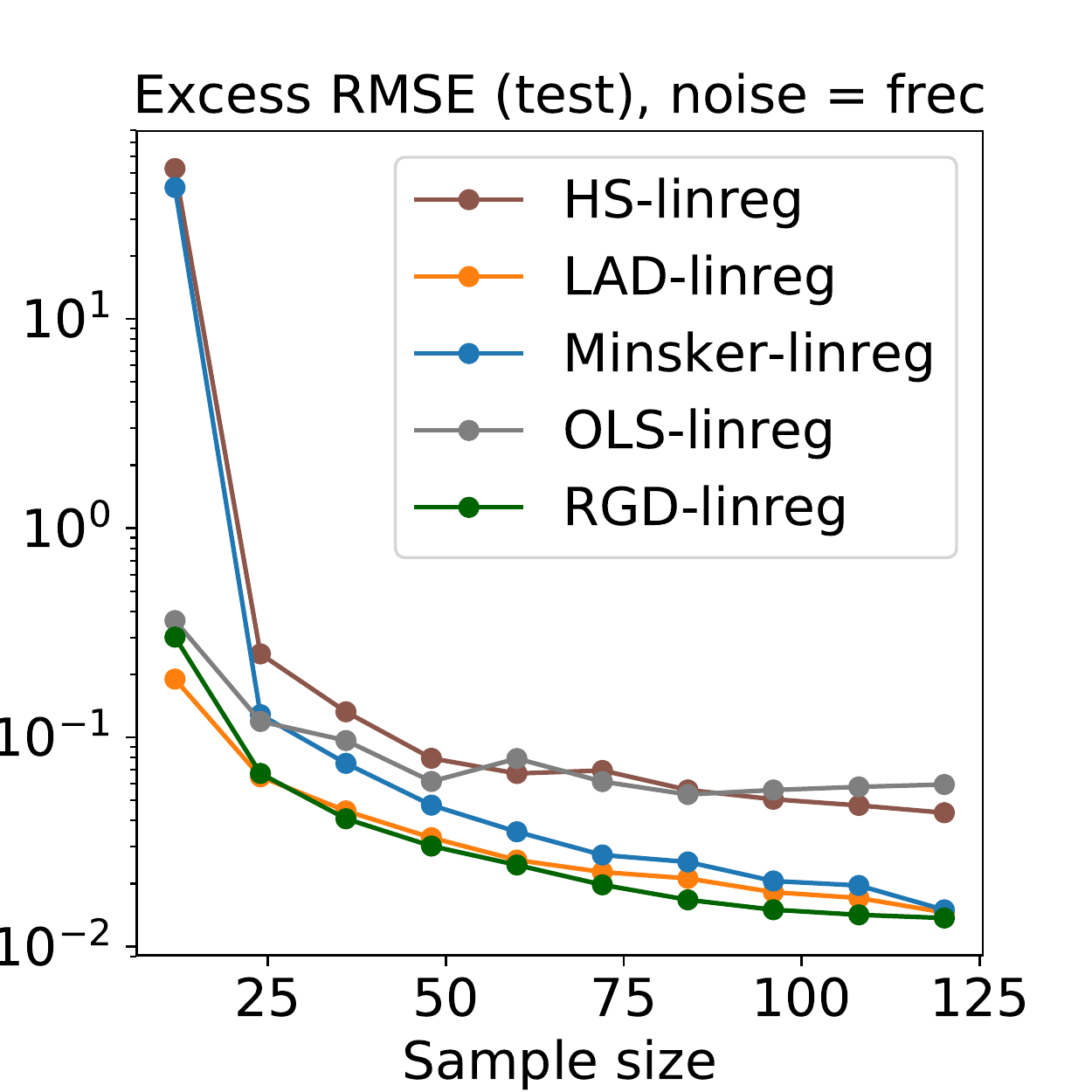}\,\includegraphics[width=0.25\textwidth]{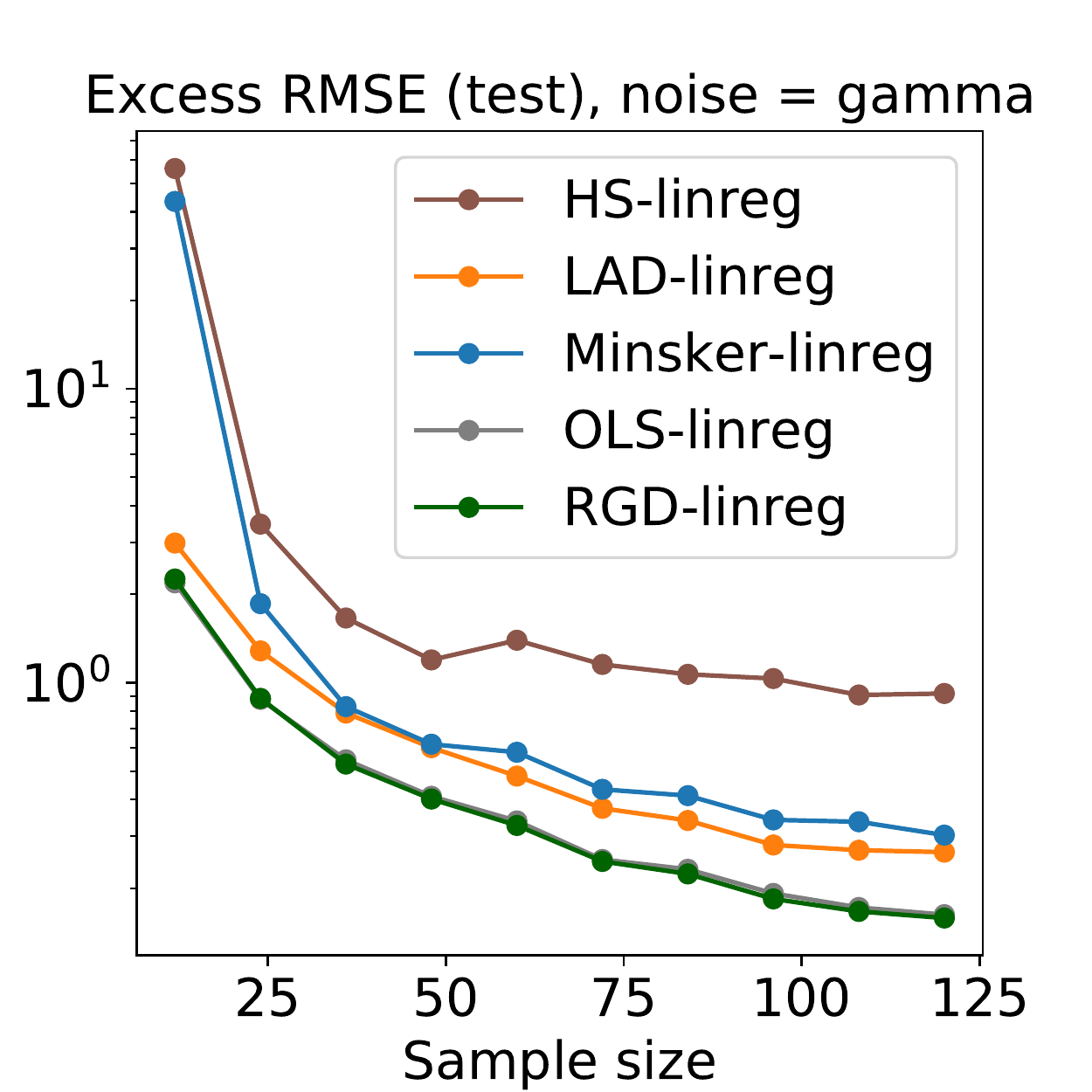}\\
\includegraphics[width=0.25\textwidth]{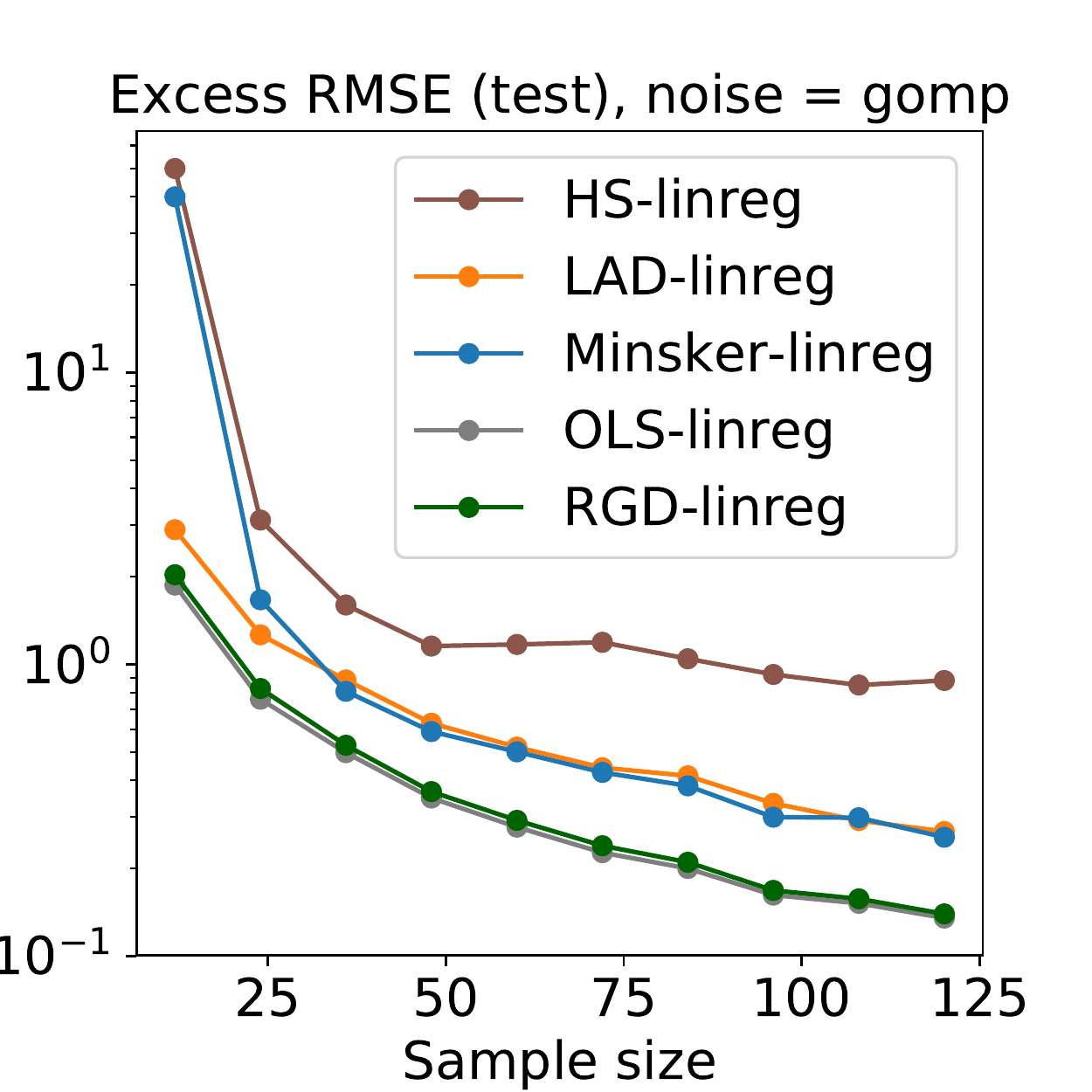}\,\includegraphics[width=0.25\textwidth]{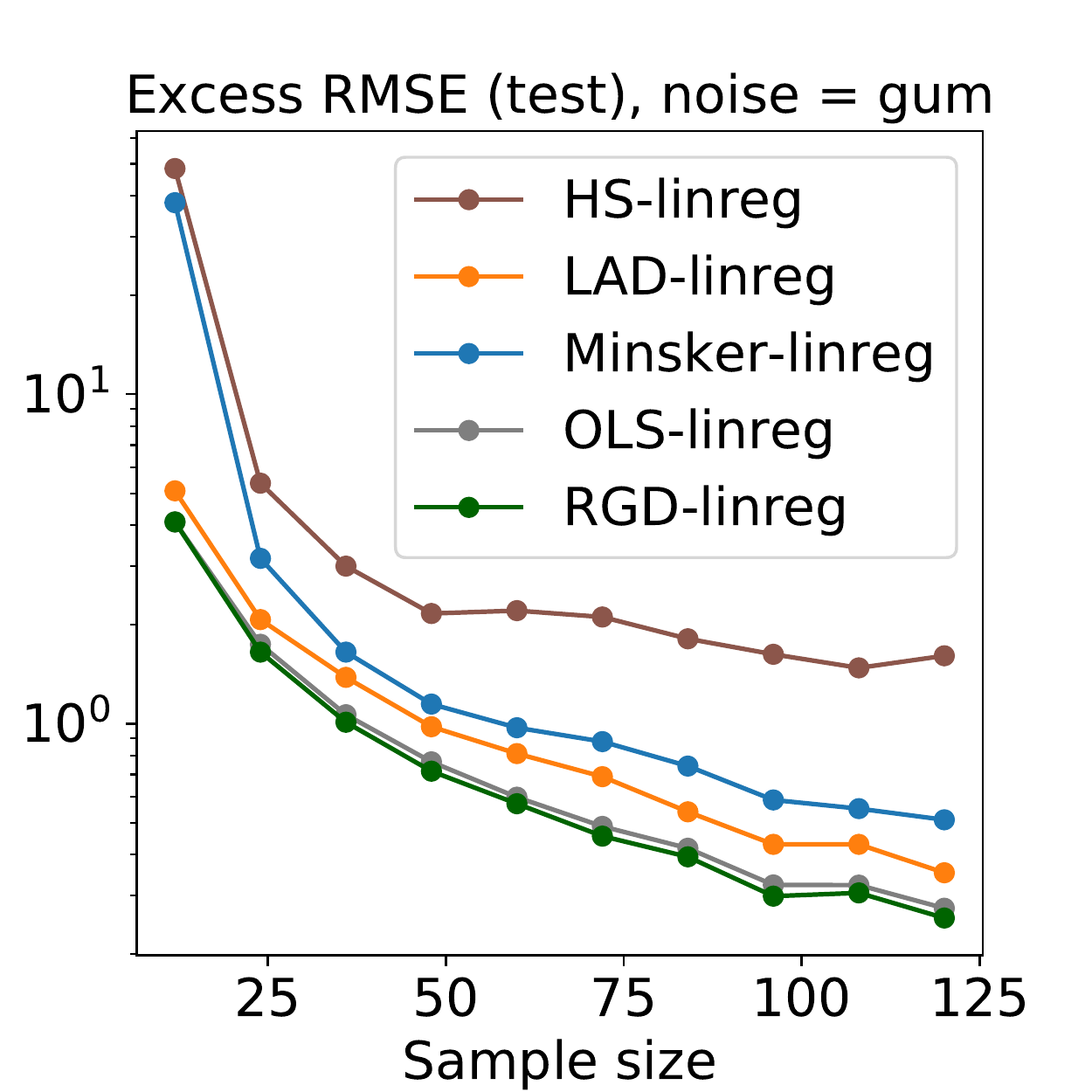}\,\includegraphics[width=0.25\textwidth]{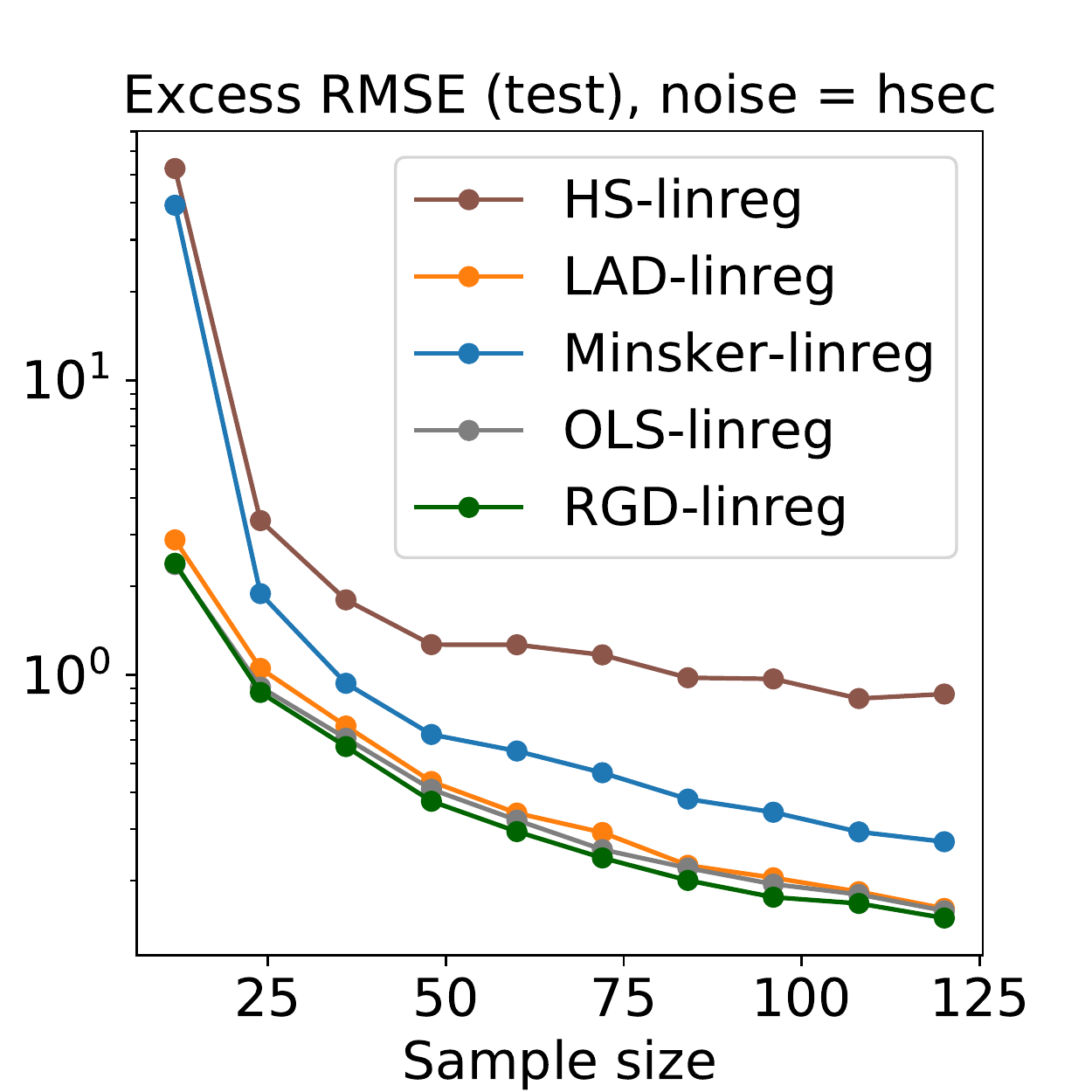}\,\includegraphics[width=0.25\textwidth]{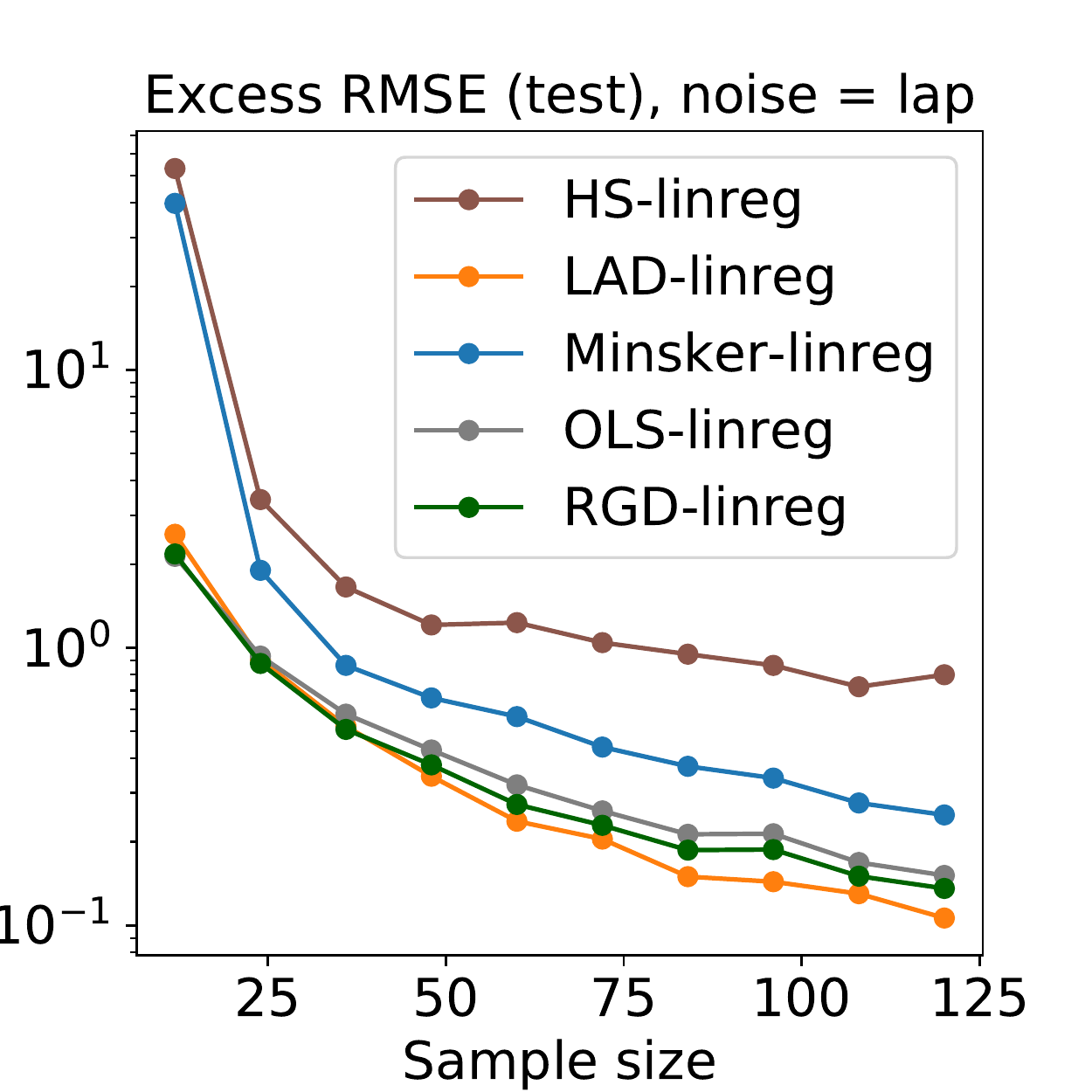}\\
\includegraphics[width=0.25\textwidth]{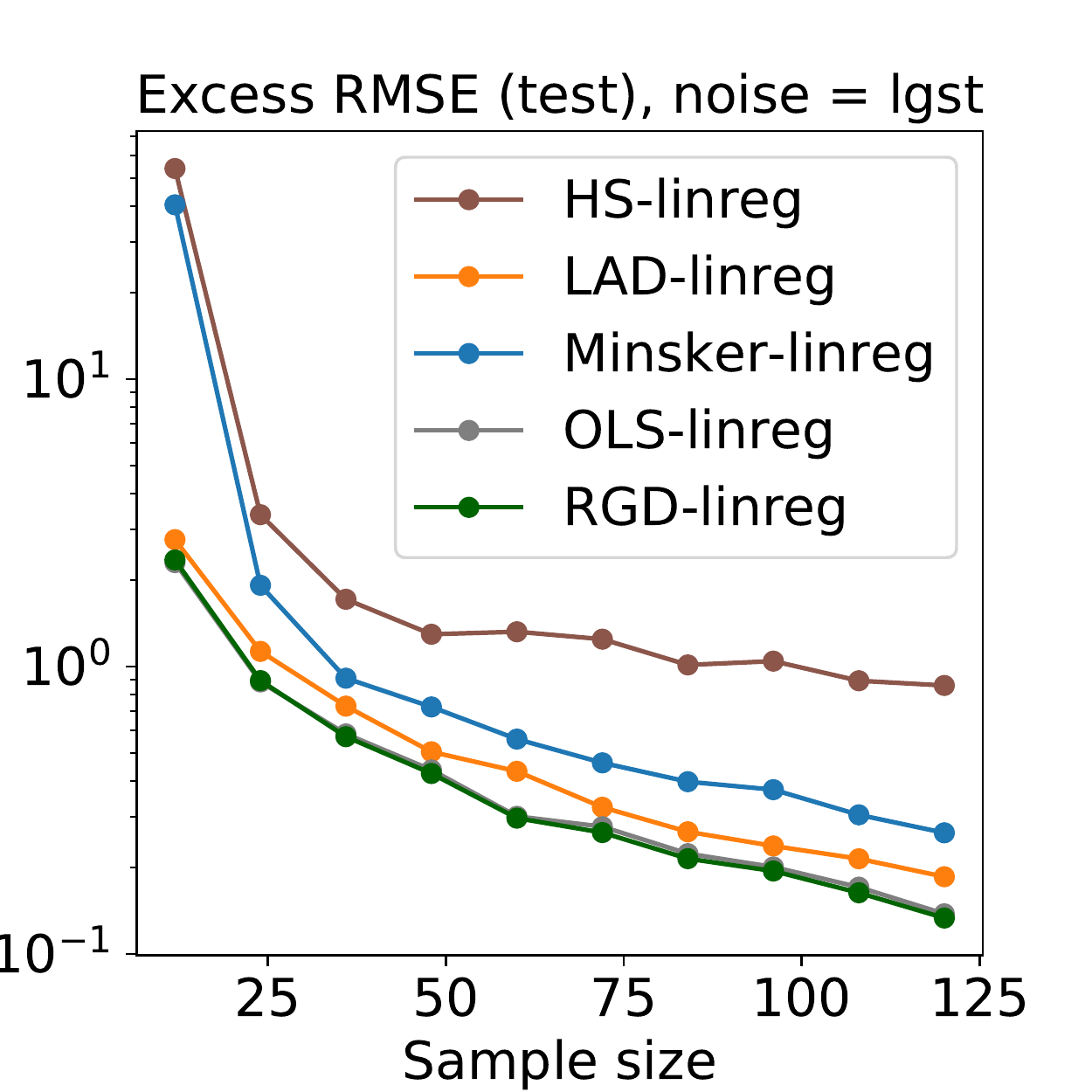}\,\includegraphics[width=0.25\textwidth]{linreg_overN_risk_llog}\,\includegraphics[width=0.25\textwidth]{linreg_overN_risk_lnorm}\,\includegraphics[width=0.25\textwidth]{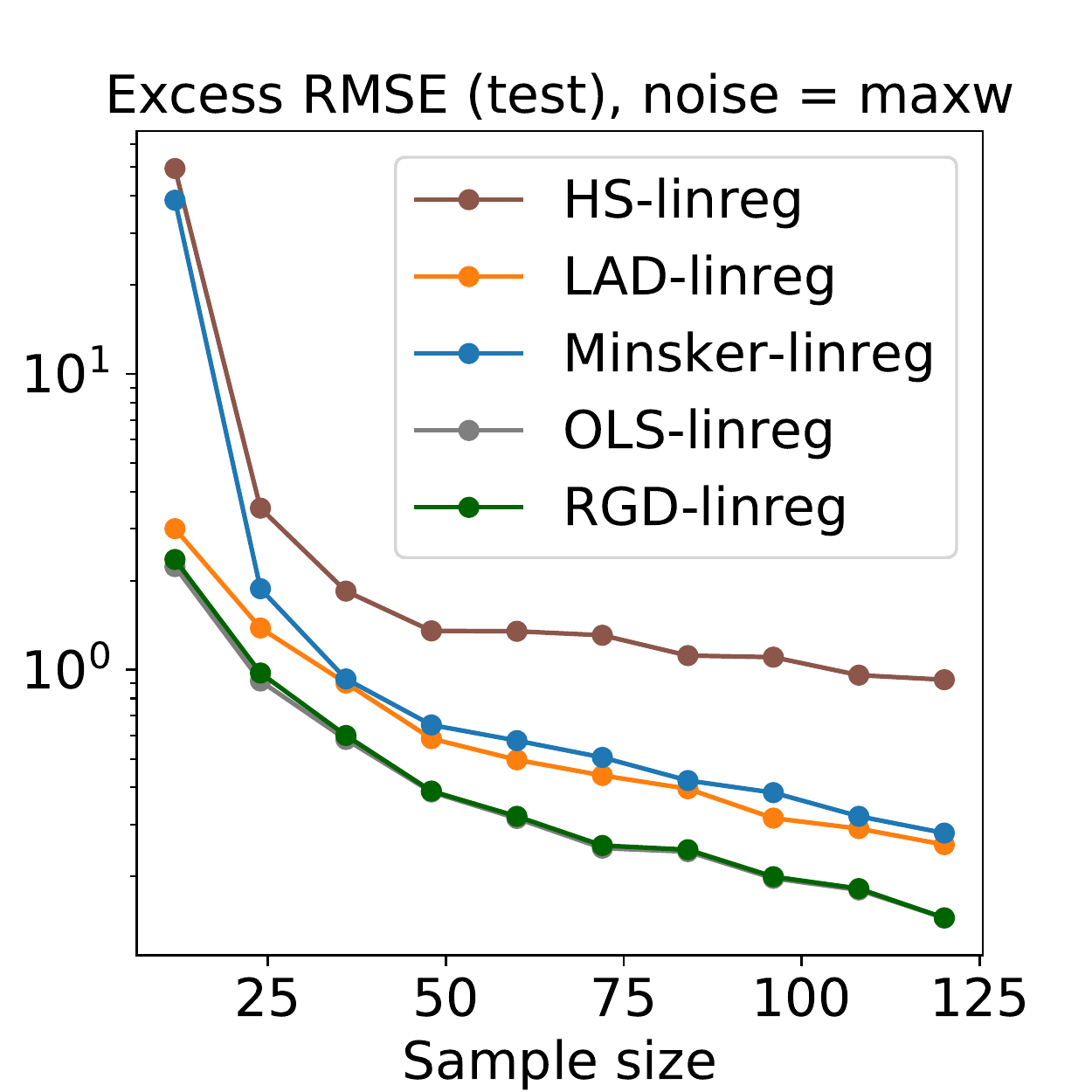}\\
\includegraphics[width=0.25\textwidth]{linreg_overN_risk_norm}\,\includegraphics[width=0.25\textwidth]{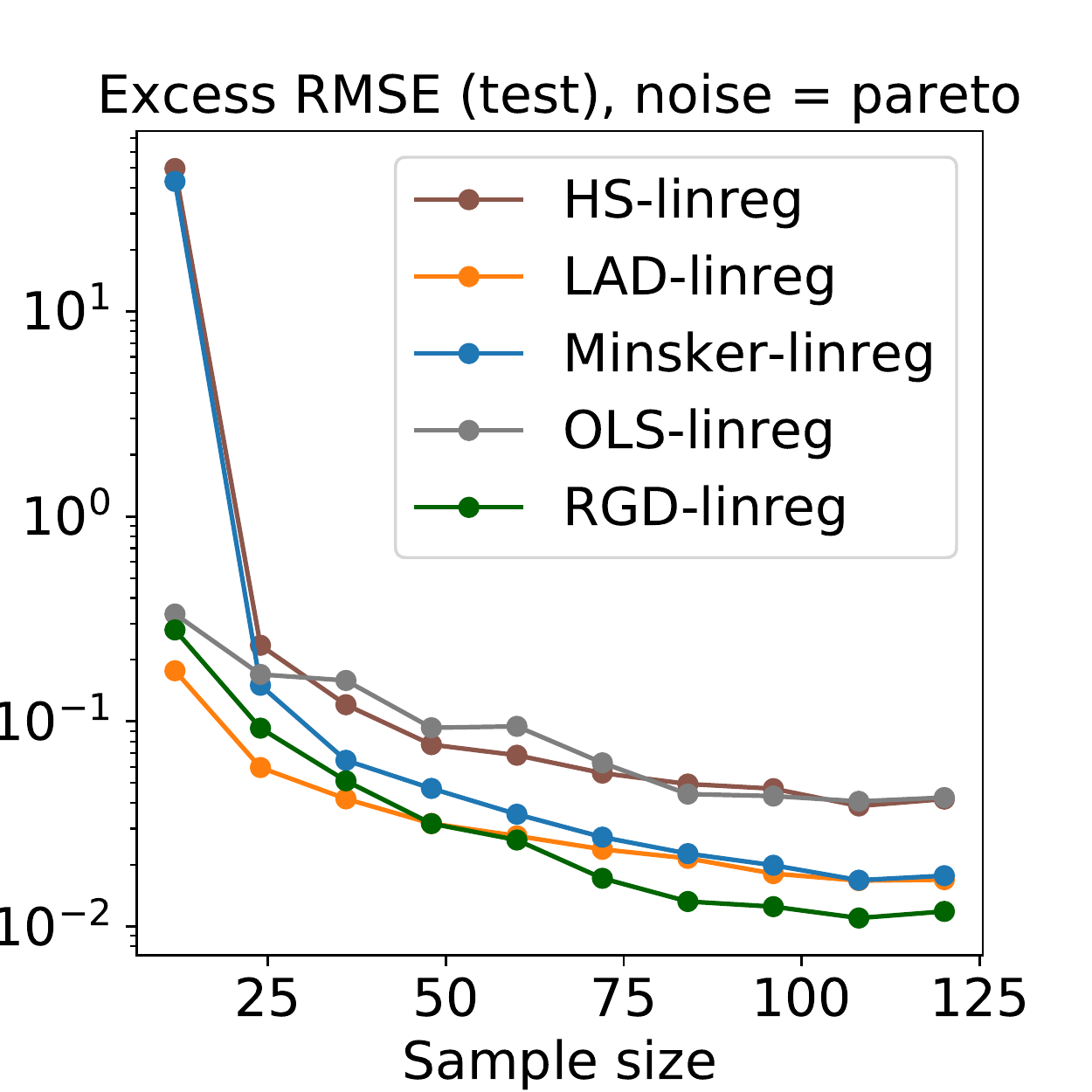}\,\includegraphics[width=0.25\textwidth]{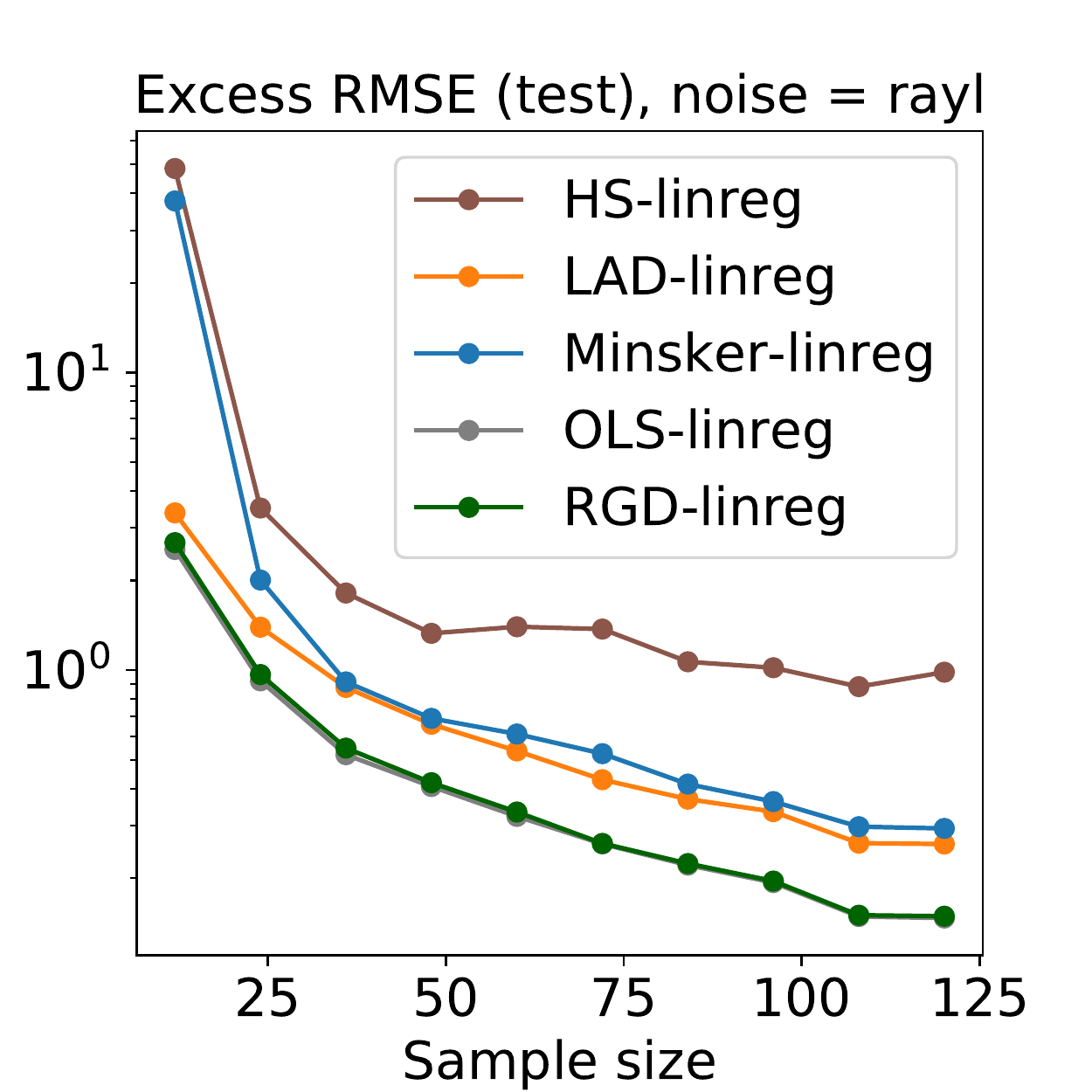}\,\includegraphics[width=0.25\textwidth]{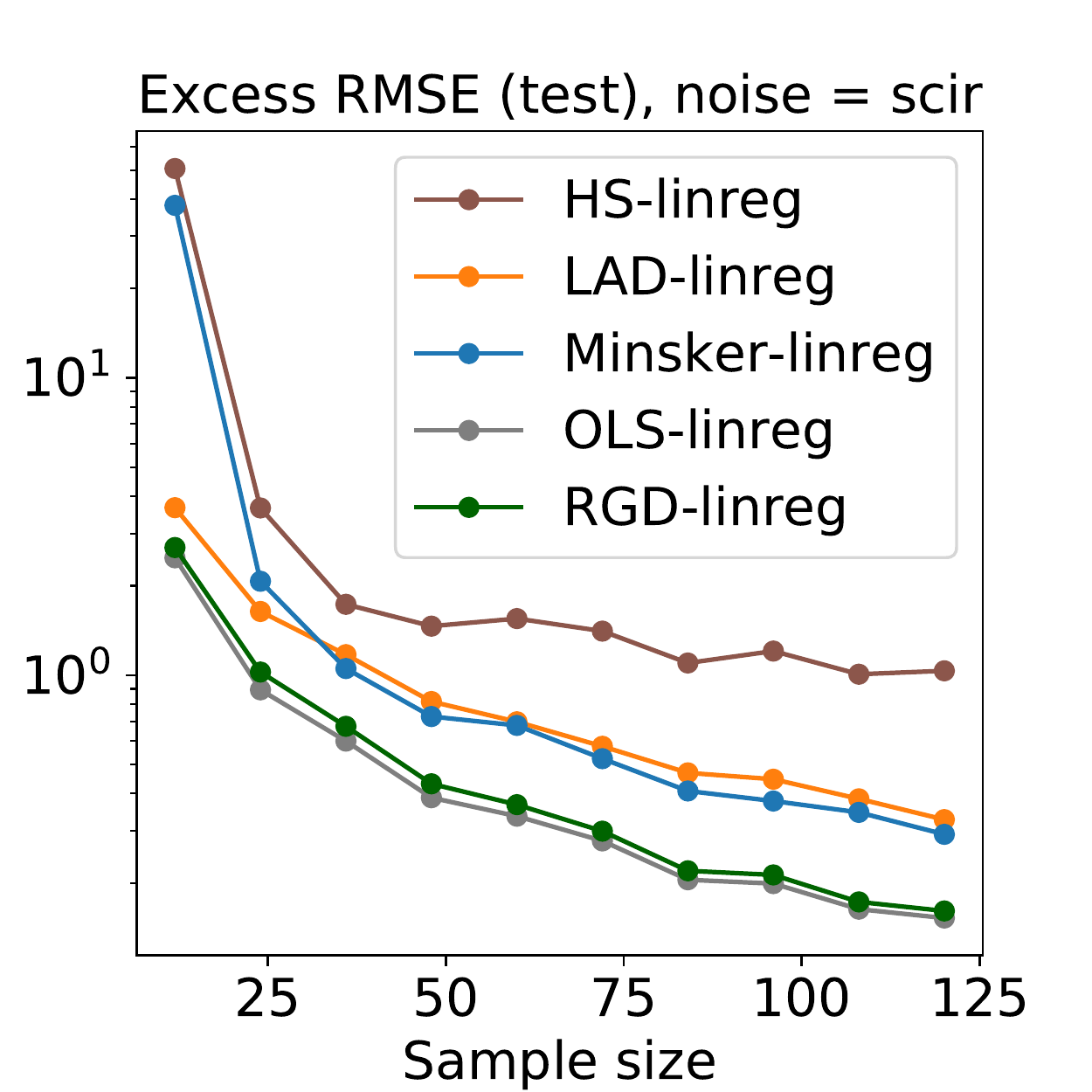}
\caption{Prediction error over sample size $12 \leq n \leq 122$, fixed $d=5$, noise level = $8$. Each plot corresponds to a distinct noise distribution.}
\label{fig:overN_all_distros_1}
\end{figure}

\clearpage

\begin{figure}[t]
\centering
\includegraphics[width=0.25\textwidth]{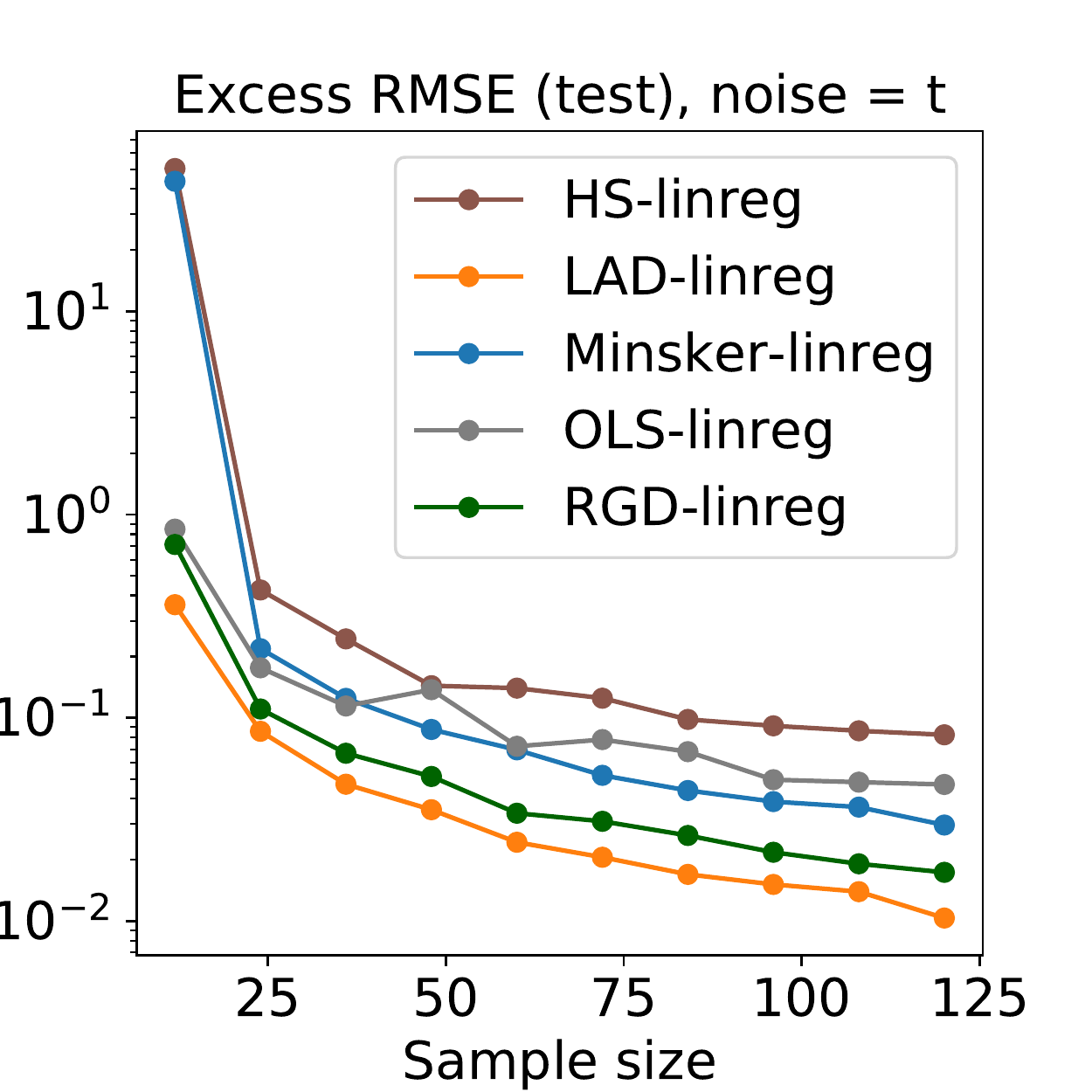}\,\includegraphics[width=0.25\textwidth]{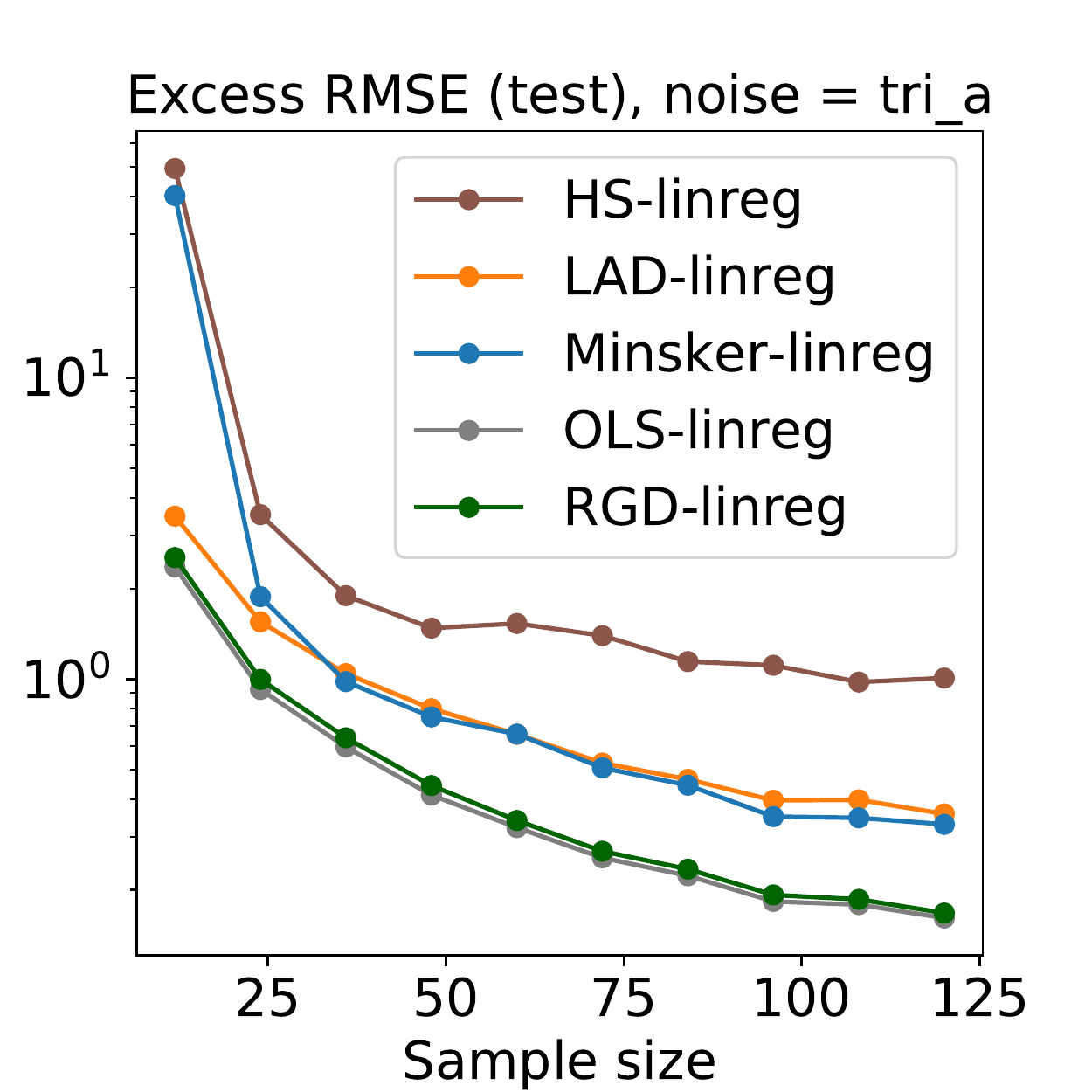}\,\includegraphics[width=0.25\textwidth]{linreg_overN_risk_tri_s}\,\includegraphics[width=0.25\textwidth]{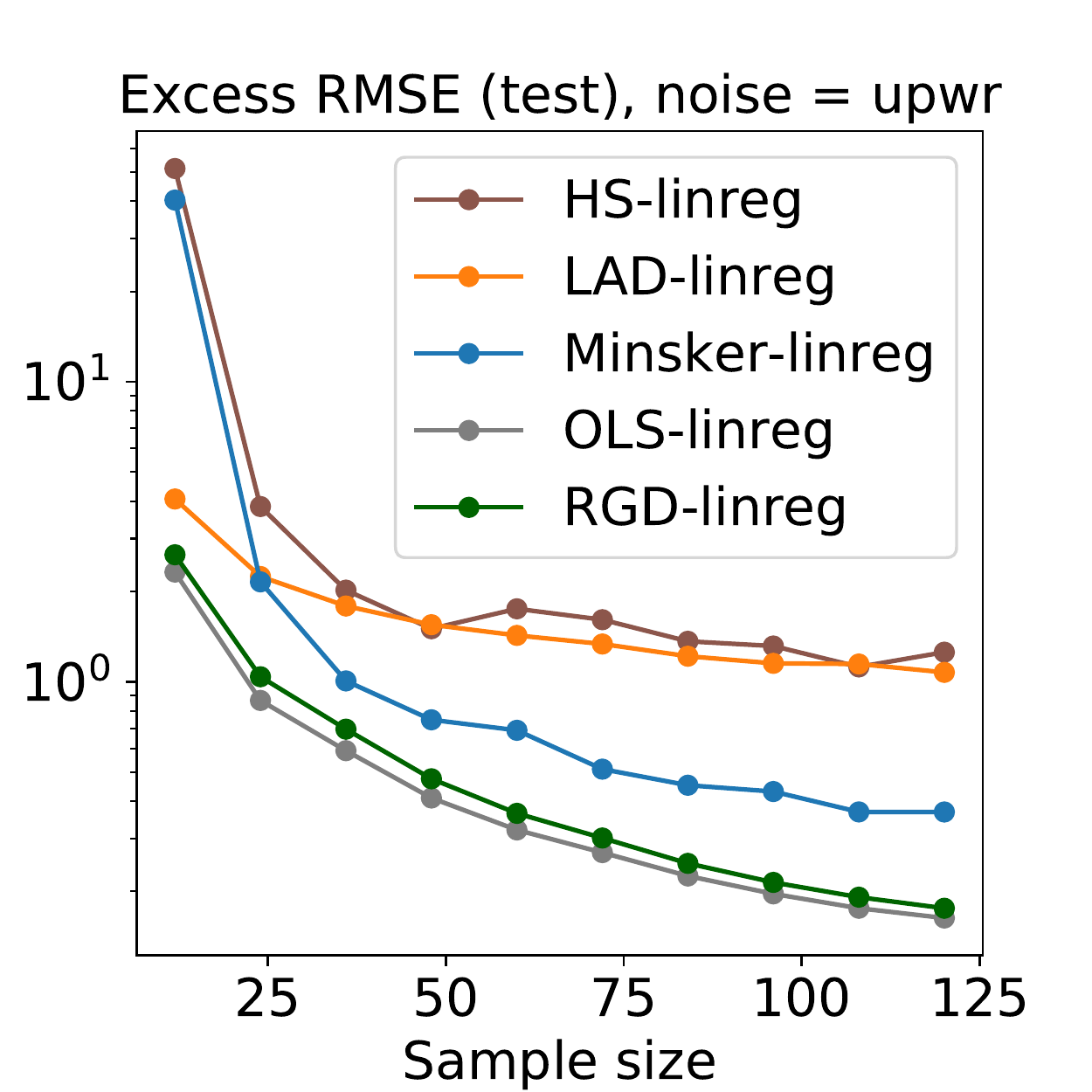}\\
\includegraphics[width=0.25\textwidth]{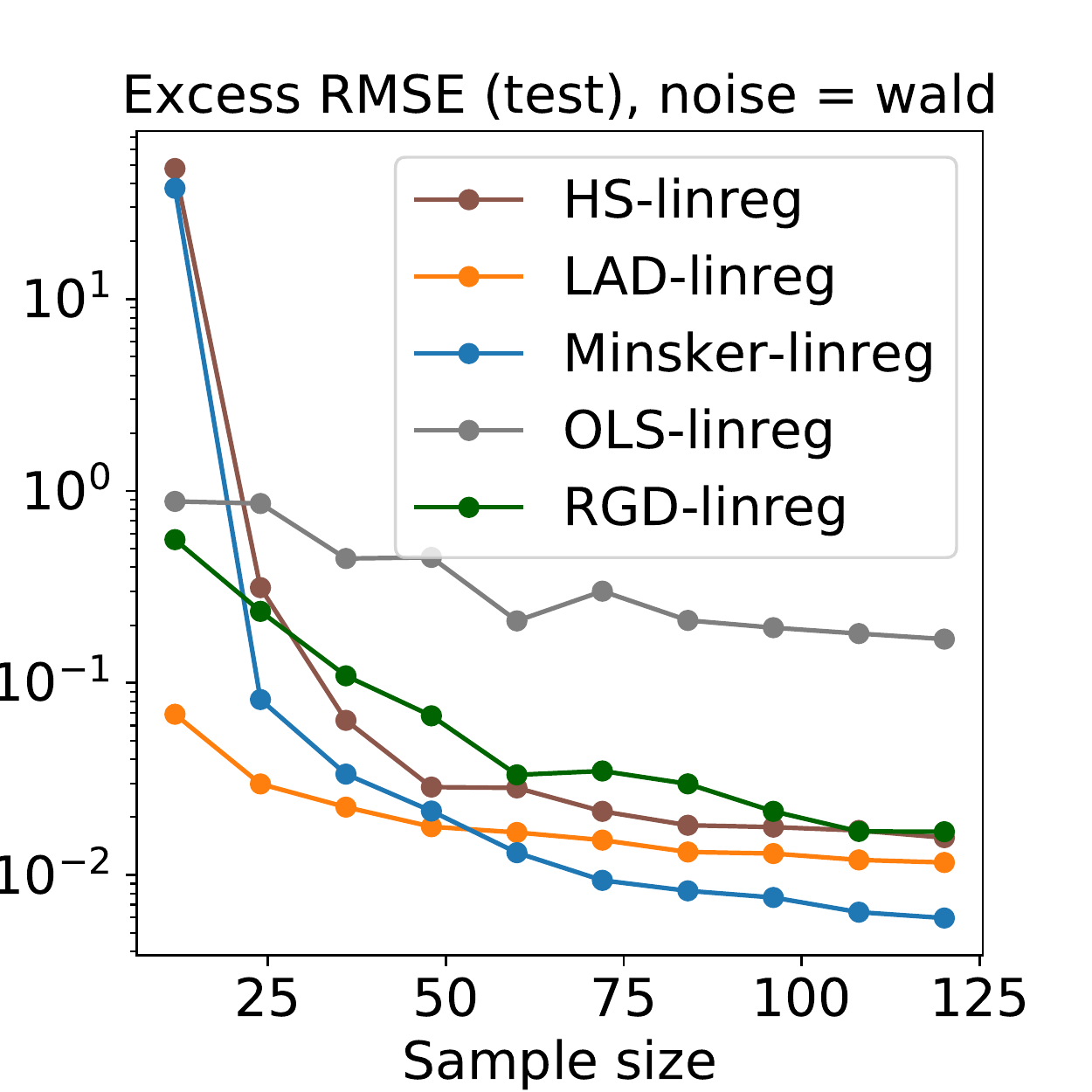}\,\includegraphics[width=0.25\textwidth]{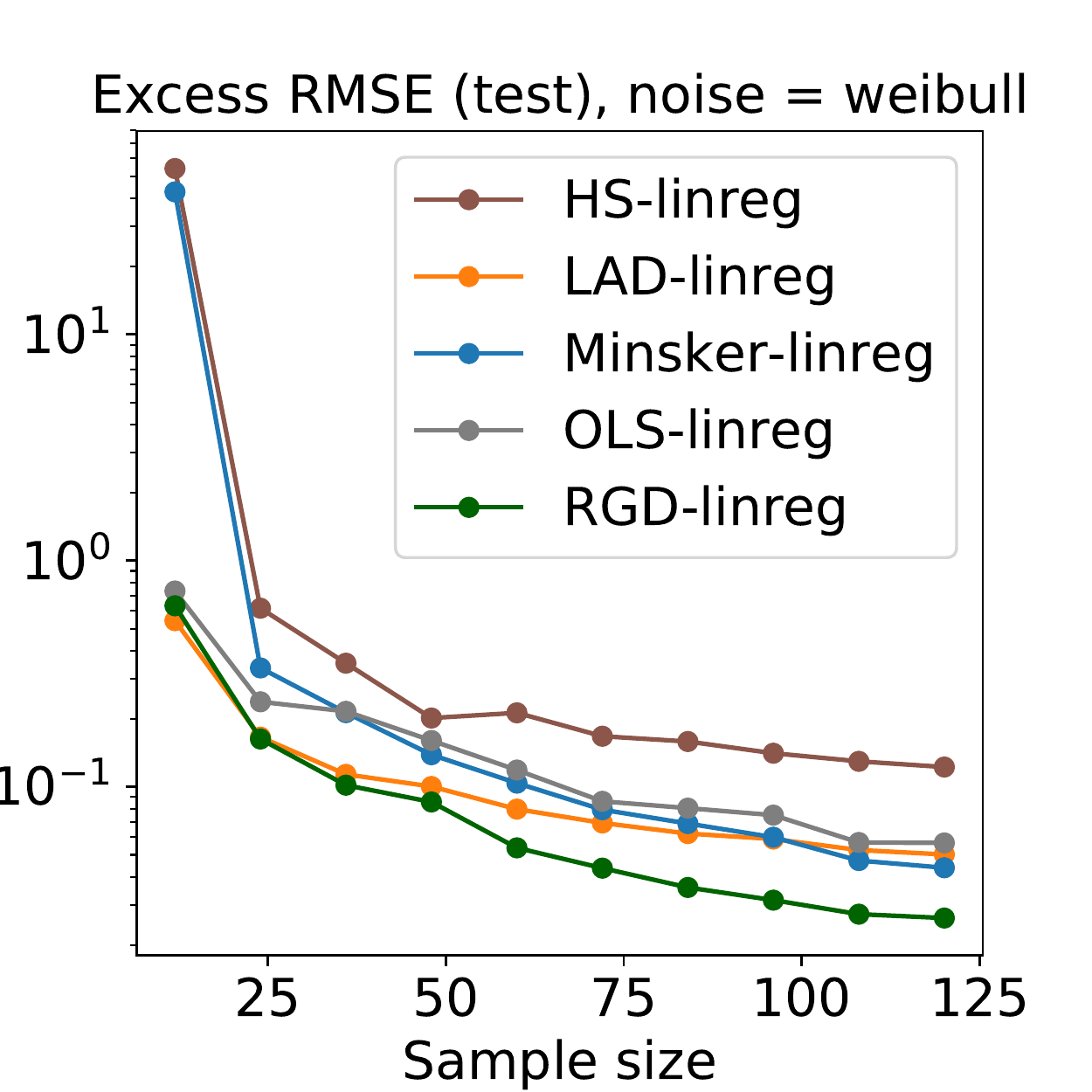}
\caption{Prediction error over sample size $12 \leq n \leq 122$, fixed $d=5$, noise level = $8$. Each plot corresponds to a distinct noise distribution.}
\label{fig:overN_all_distros_2}
\end{figure}

\clearpage

\begin{figure}[t]
\centering
\includegraphics[width=0.25\textwidth]{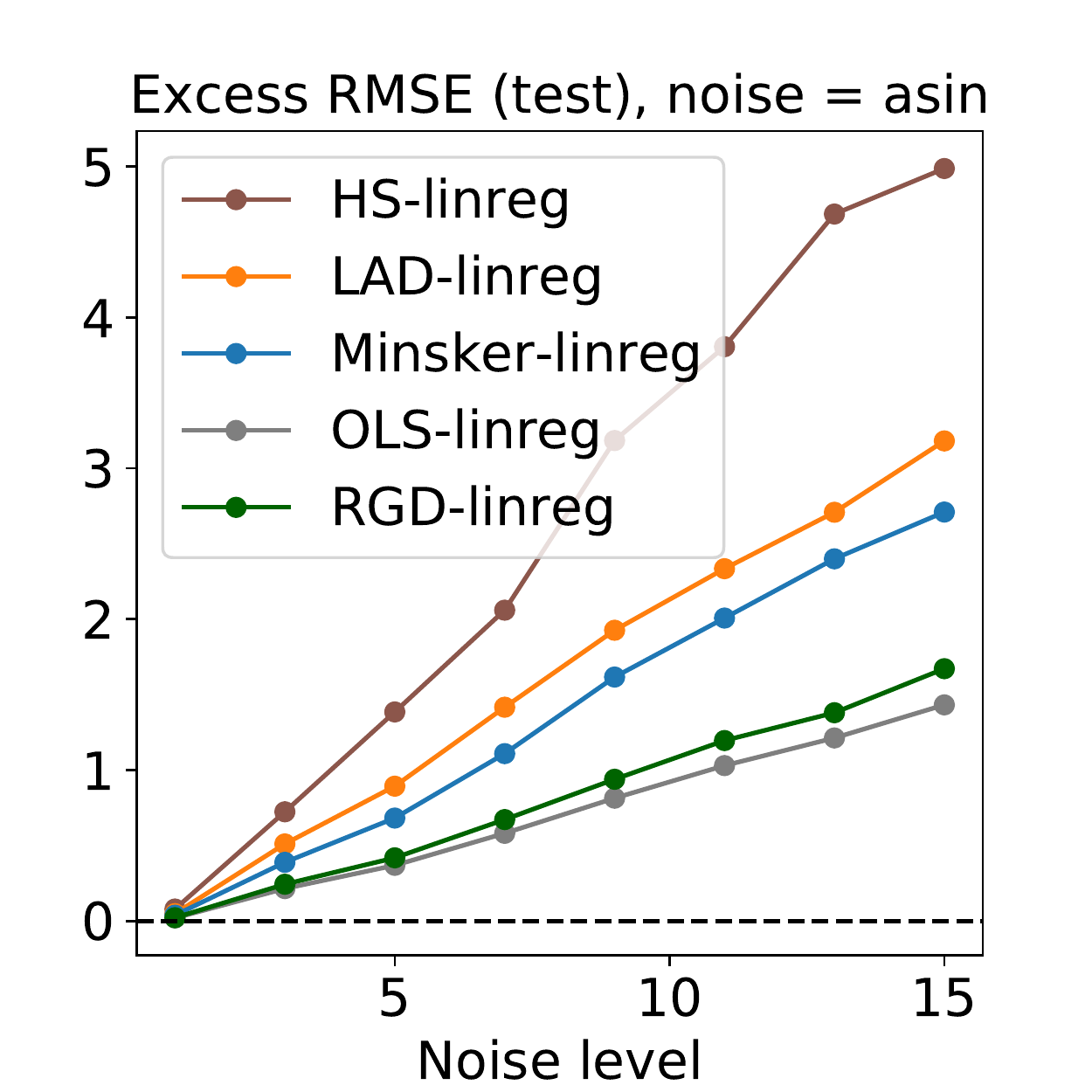}\,\includegraphics[width=0.25\textwidth]{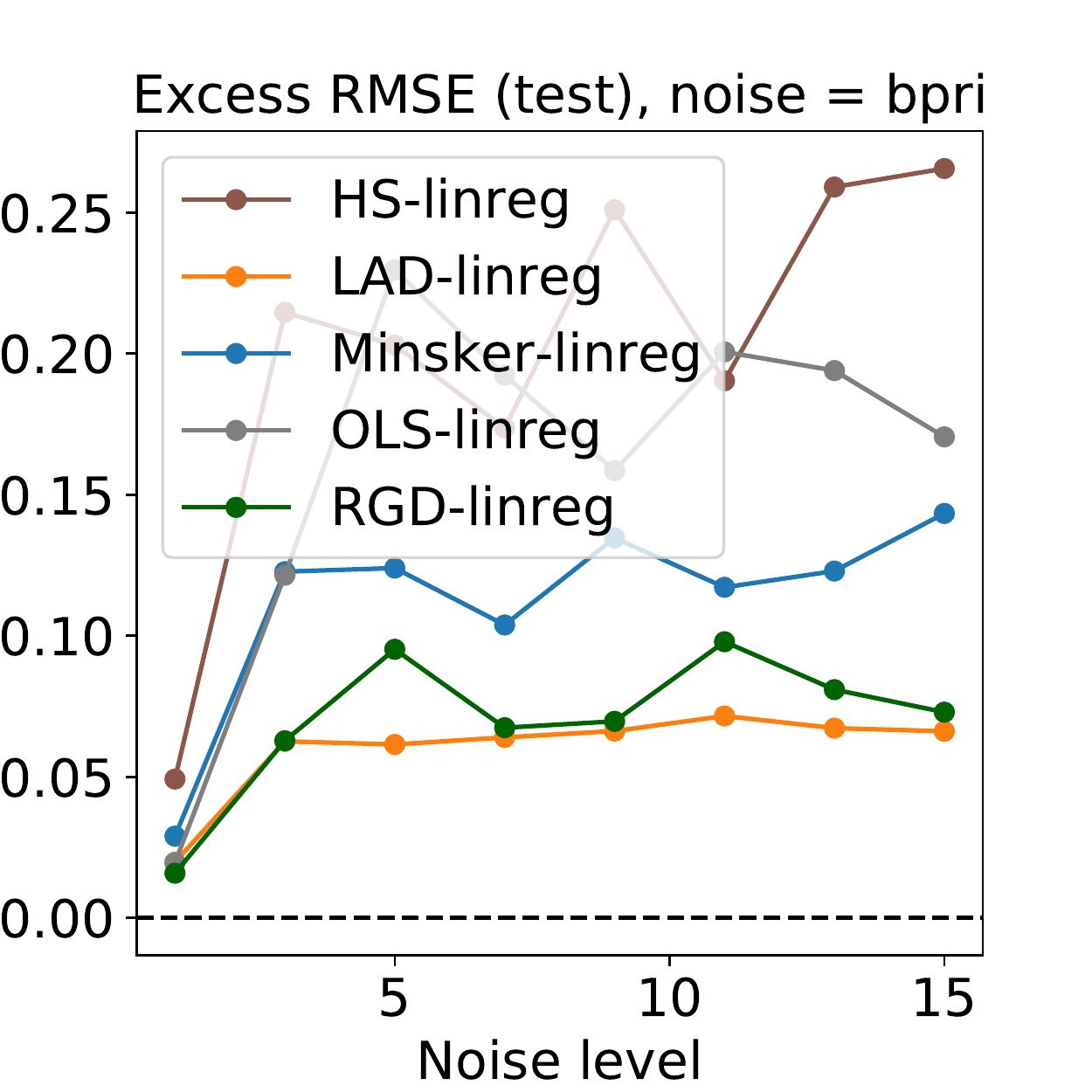}\,\includegraphics[width=0.25\textwidth]{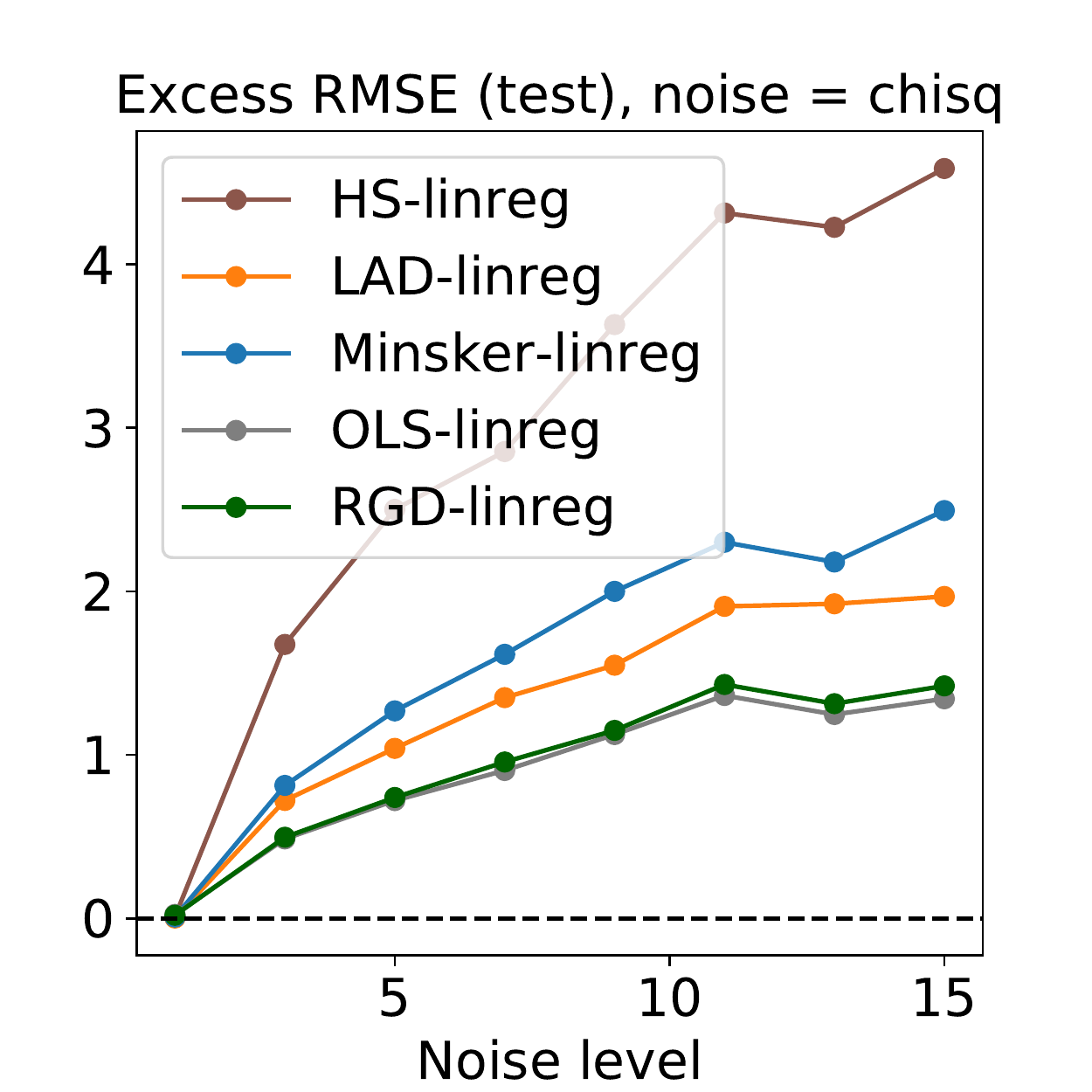}\,\includegraphics[width=0.25\textwidth]{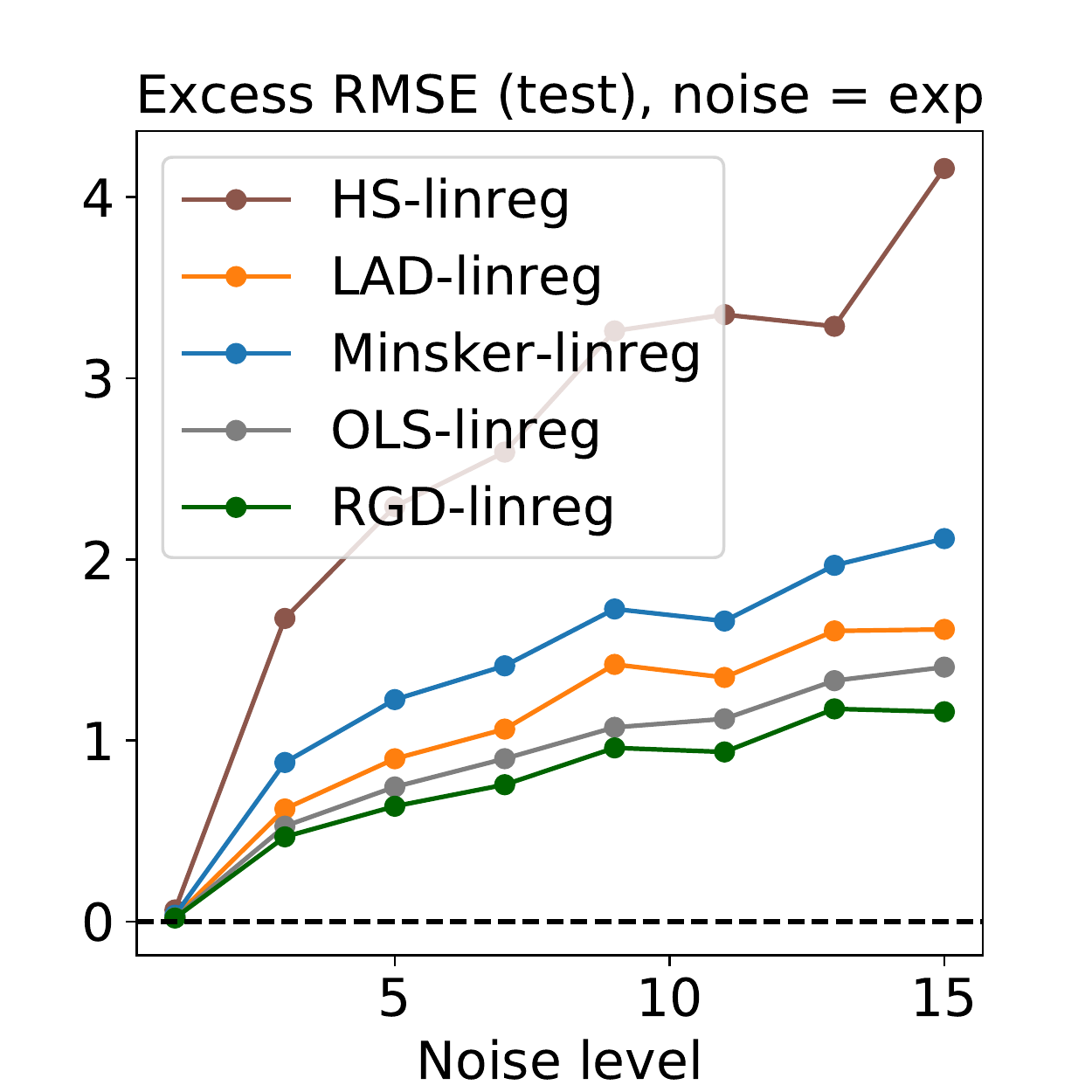}\\
\includegraphics[width=0.25\textwidth]{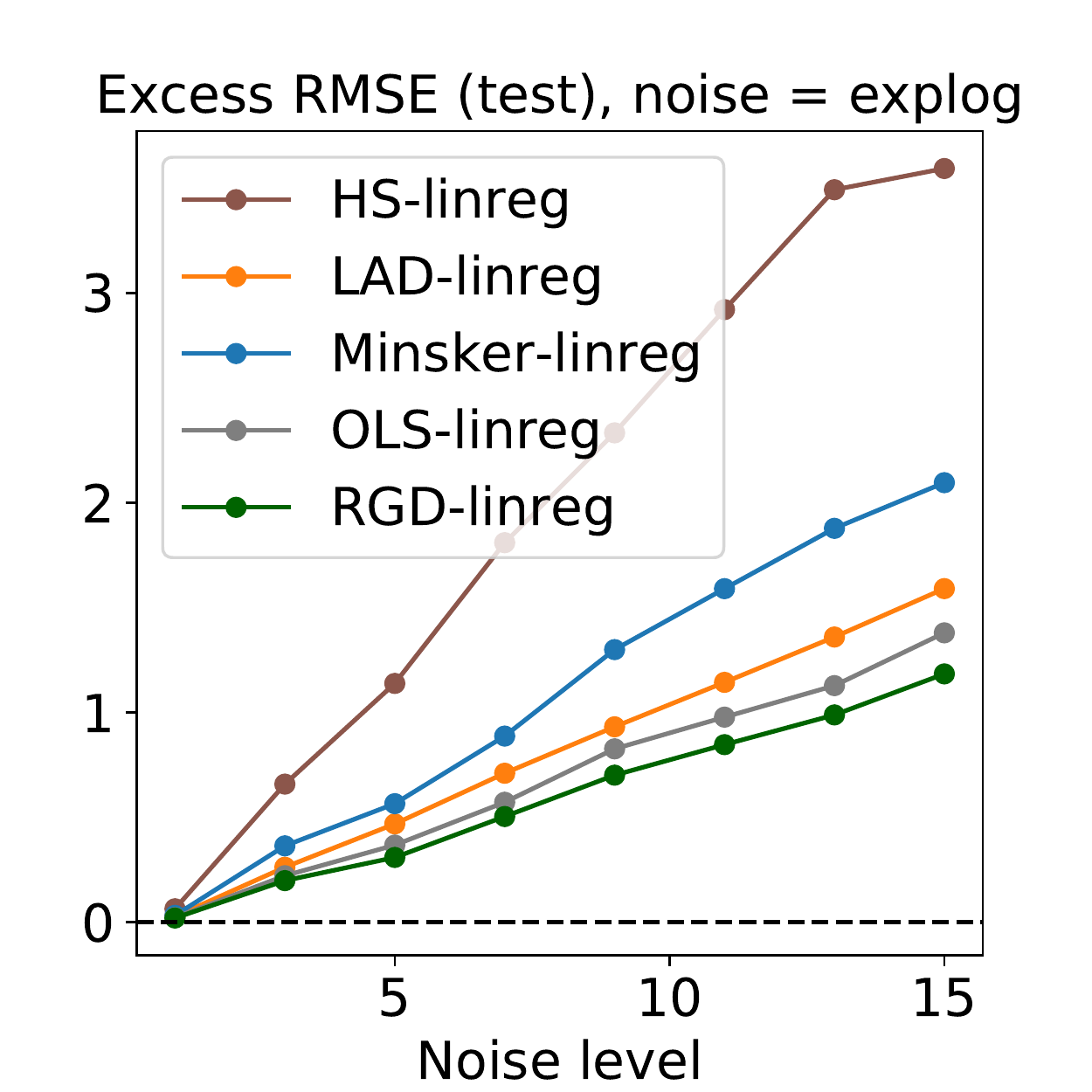}\,\includegraphics[width=0.25\textwidth]{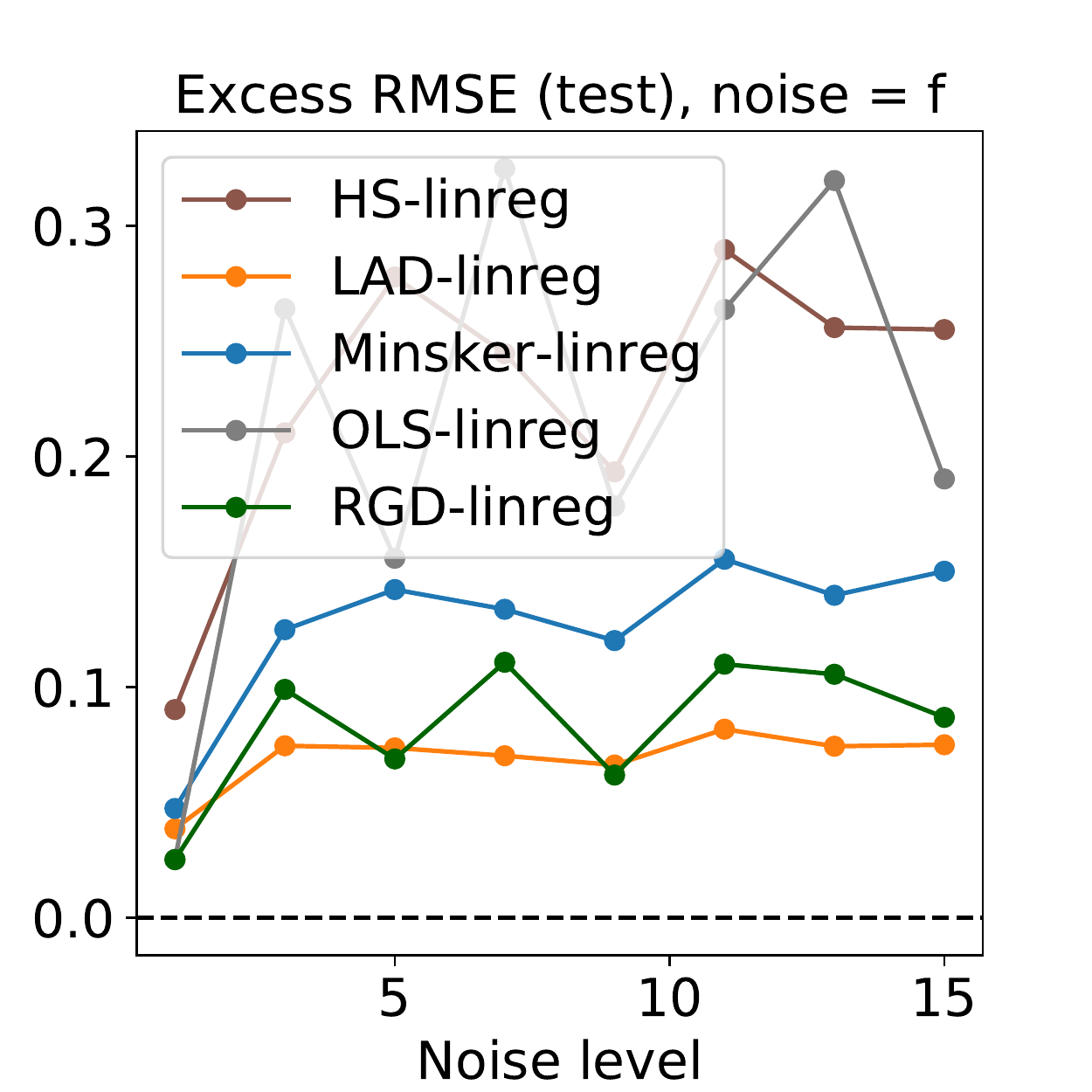}\,\includegraphics[width=0.25\textwidth]{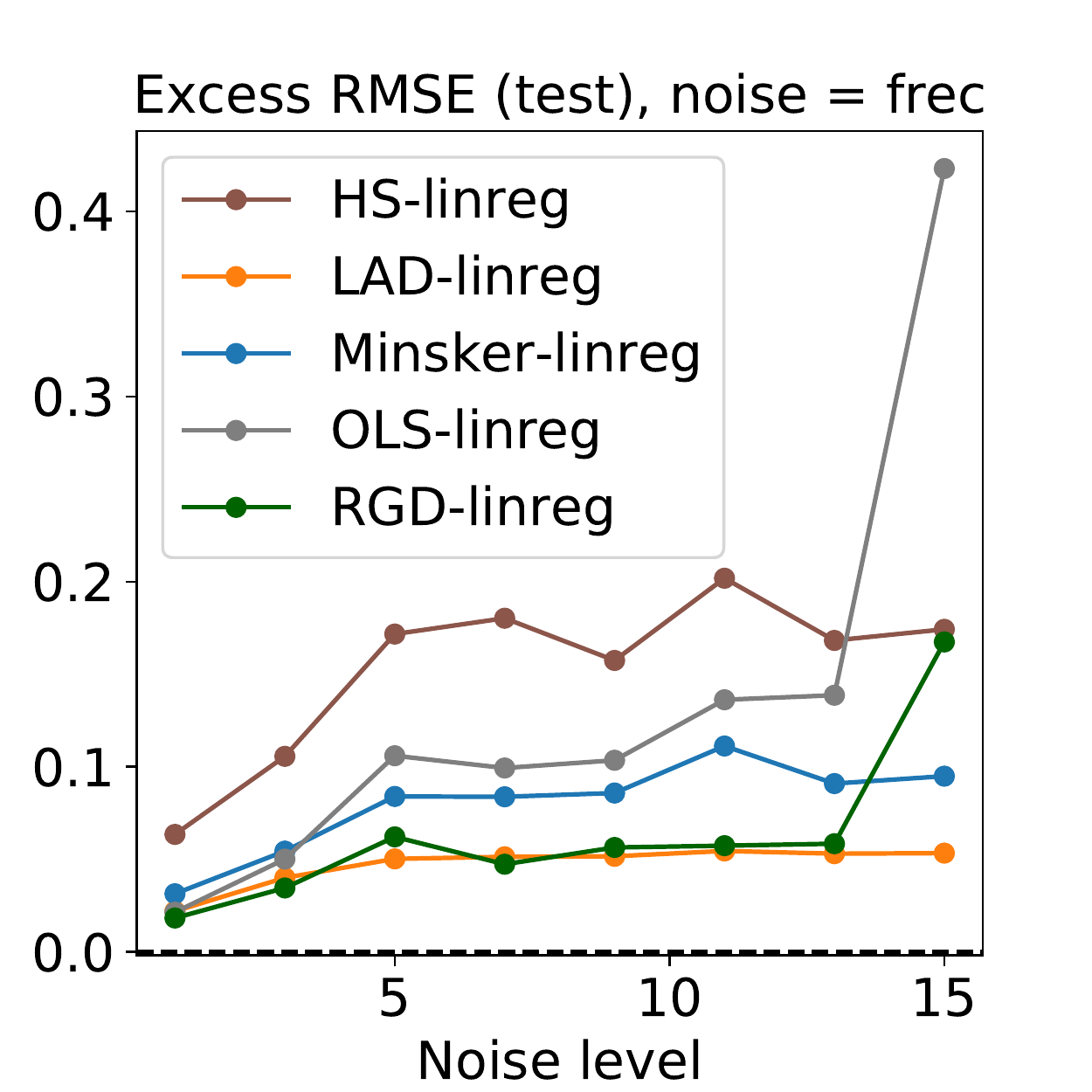}\,\includegraphics[width=0.25\textwidth]{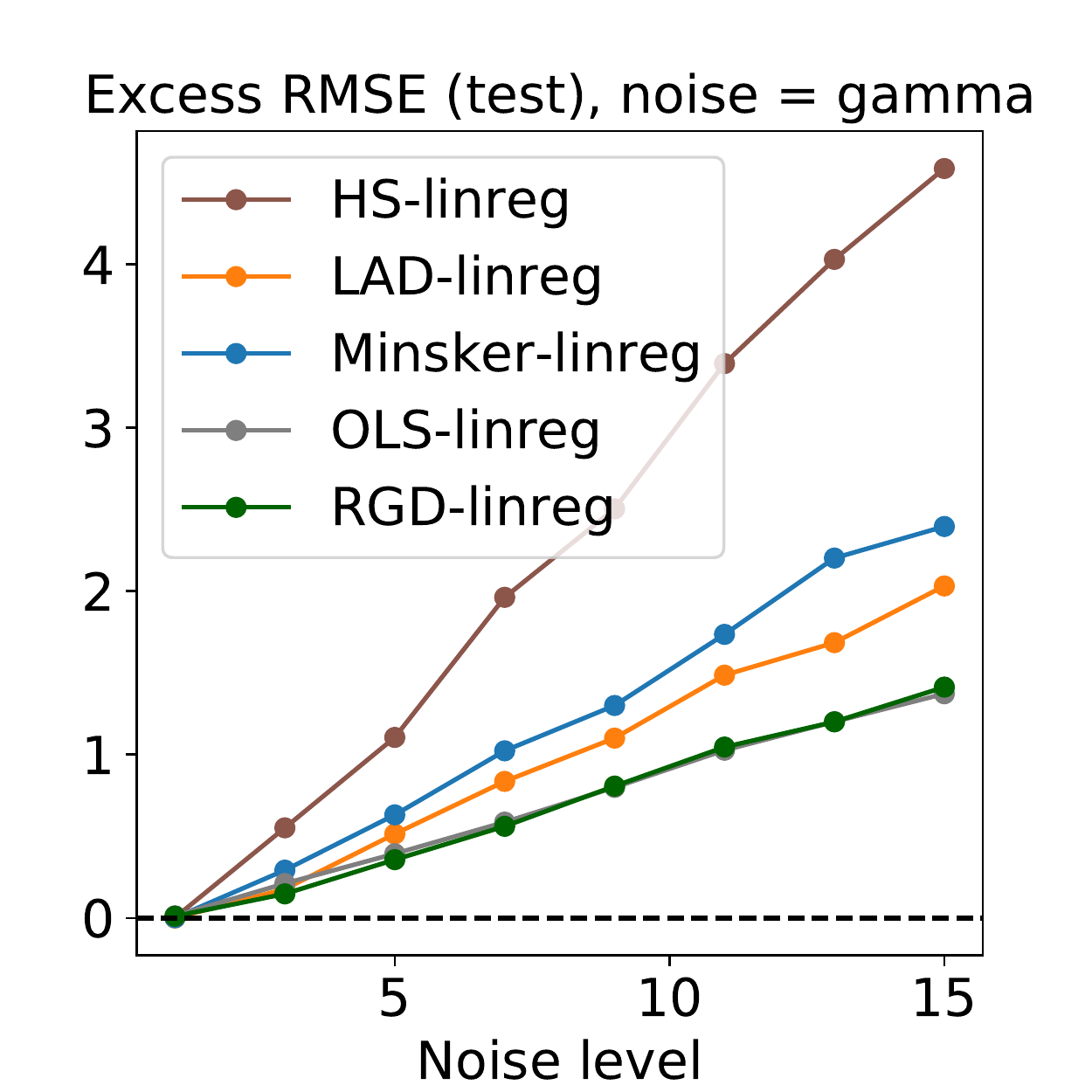}\\
\includegraphics[width=0.25\textwidth]{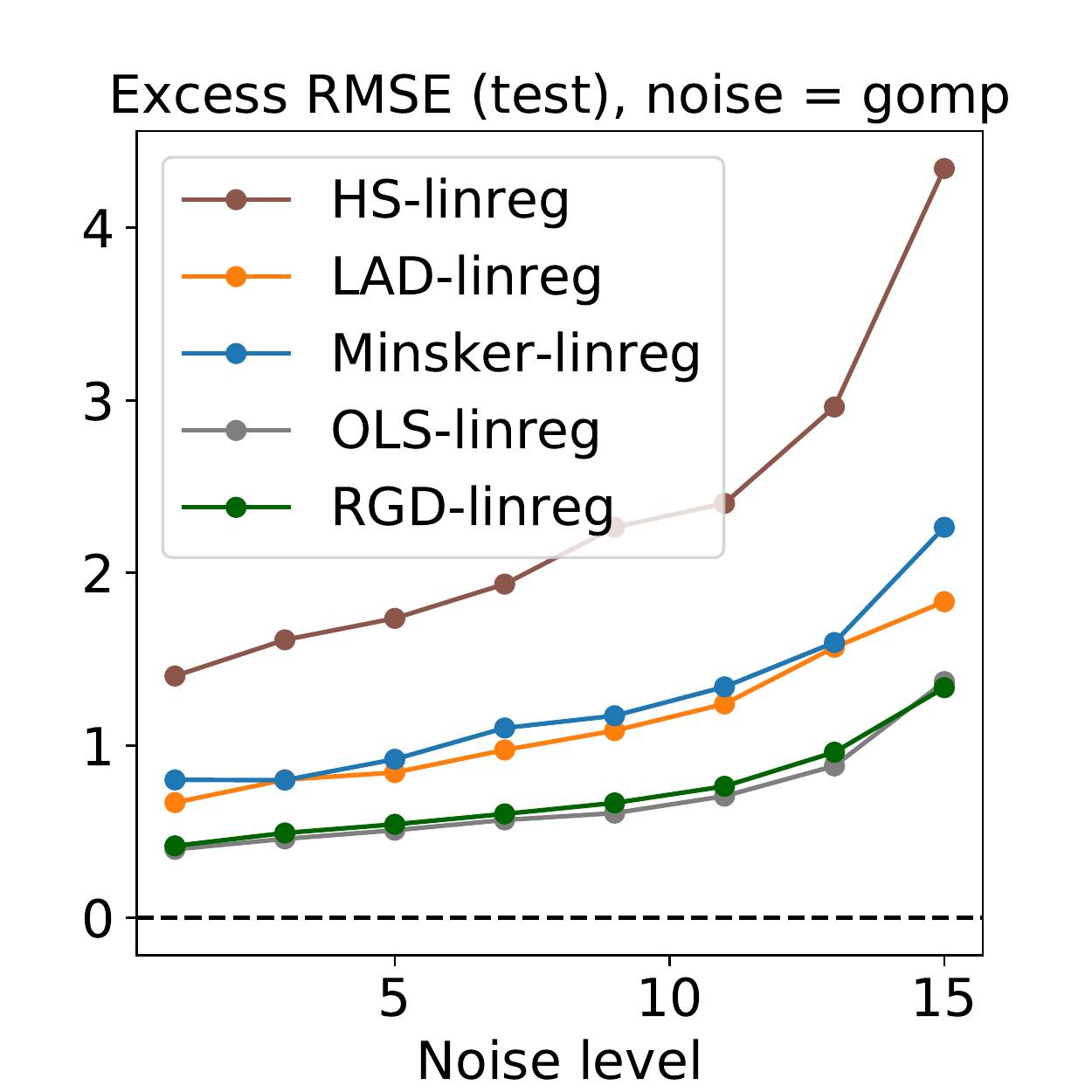}\,\includegraphics[width=0.25\textwidth]{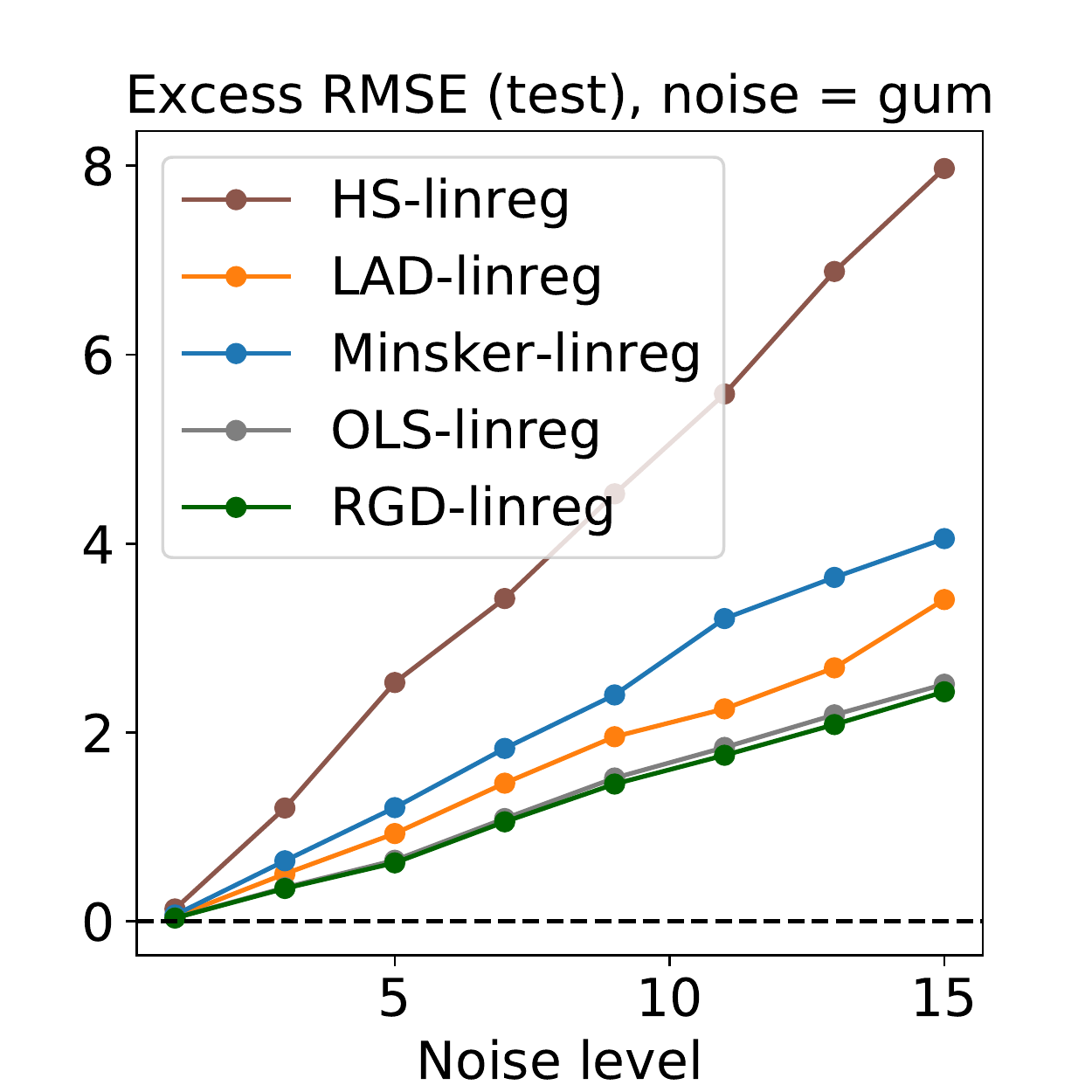}\,\includegraphics[width=0.25\textwidth]{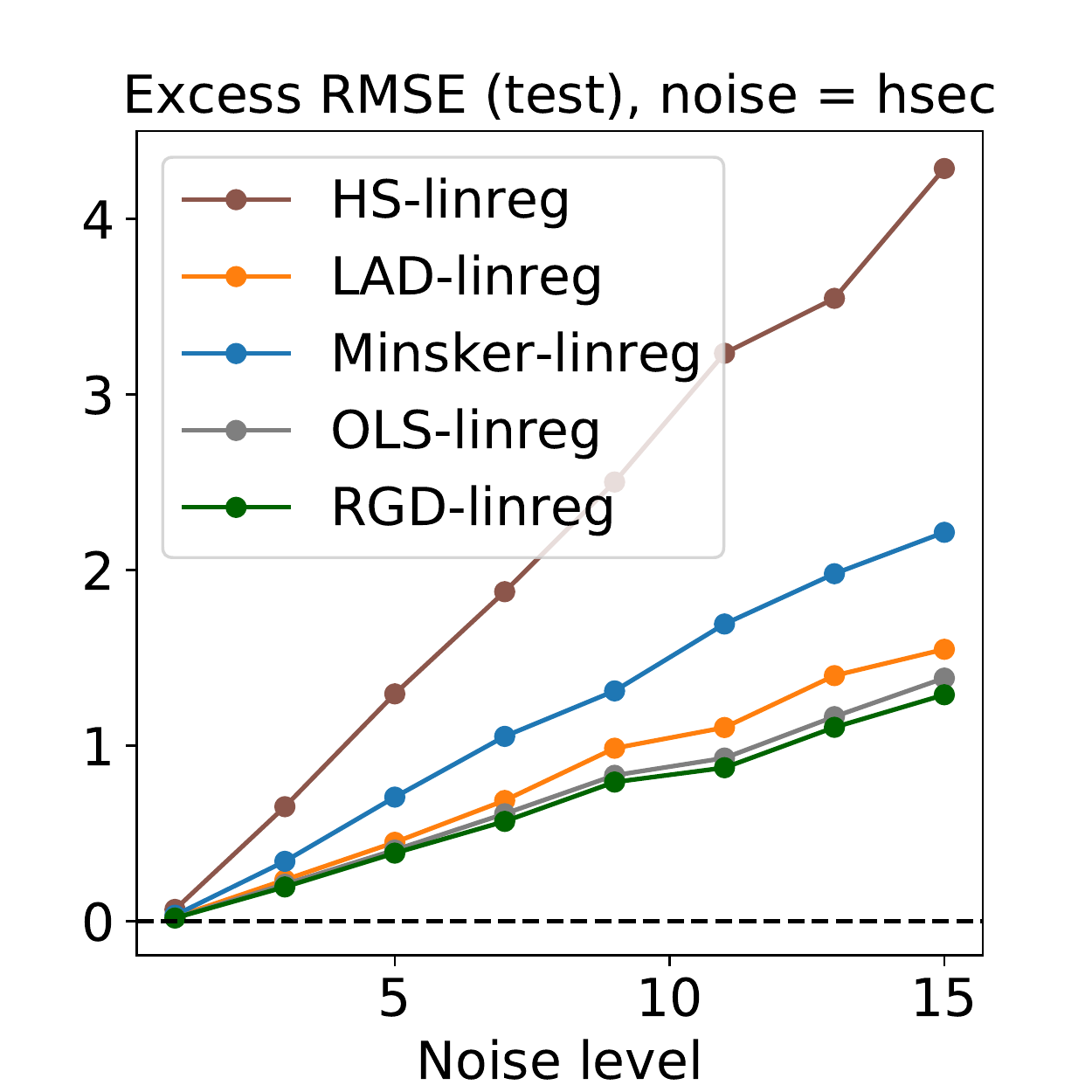}\,\includegraphics[width=0.25\textwidth]{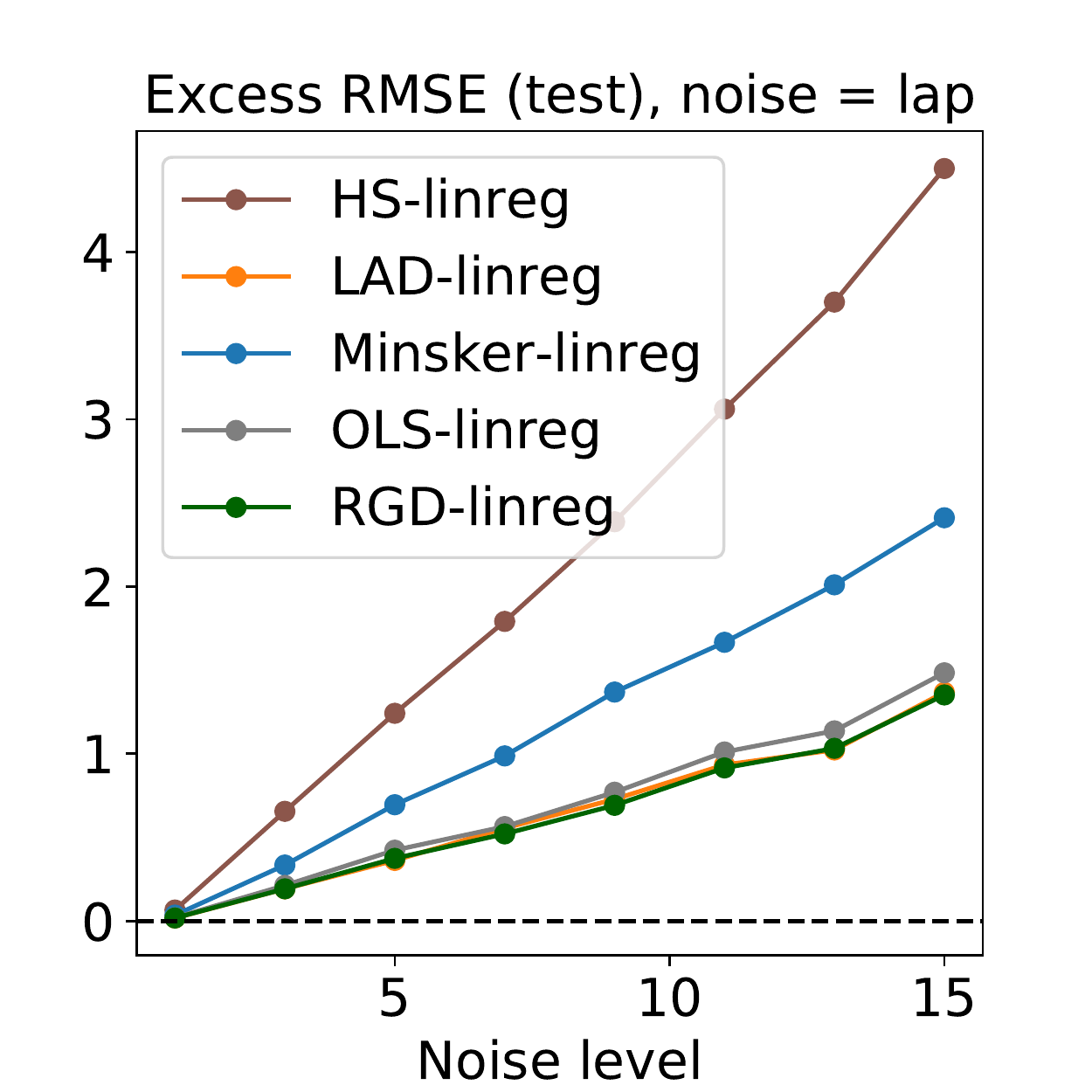}\\
\includegraphics[width=0.25\textwidth]{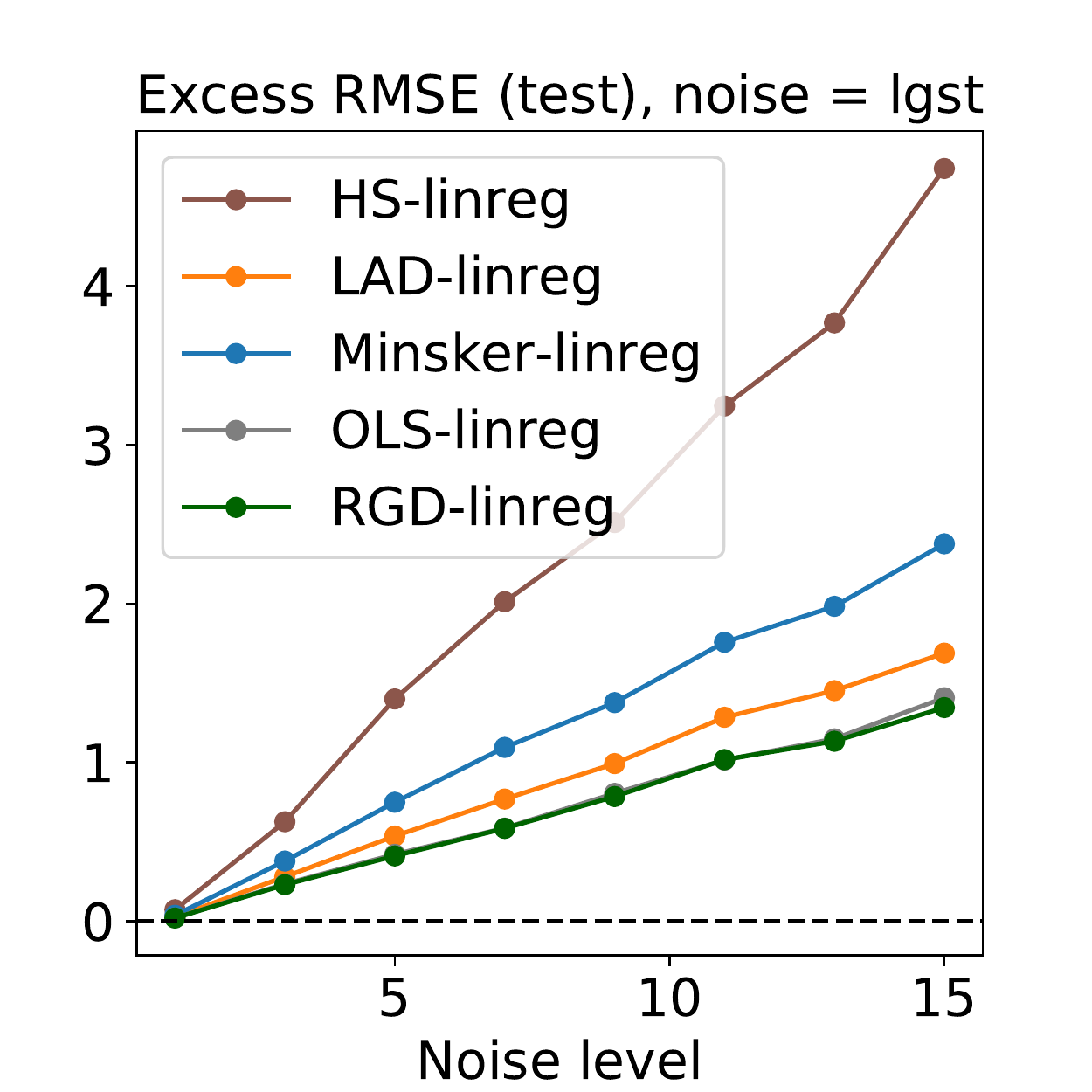}\,\includegraphics[width=0.25\textwidth]{linreg_overLvl_risk_llog}\,\includegraphics[width=0.25\textwidth]{linreg_overLvl_risk_lnorm}\,\includegraphics[width=0.25\textwidth]{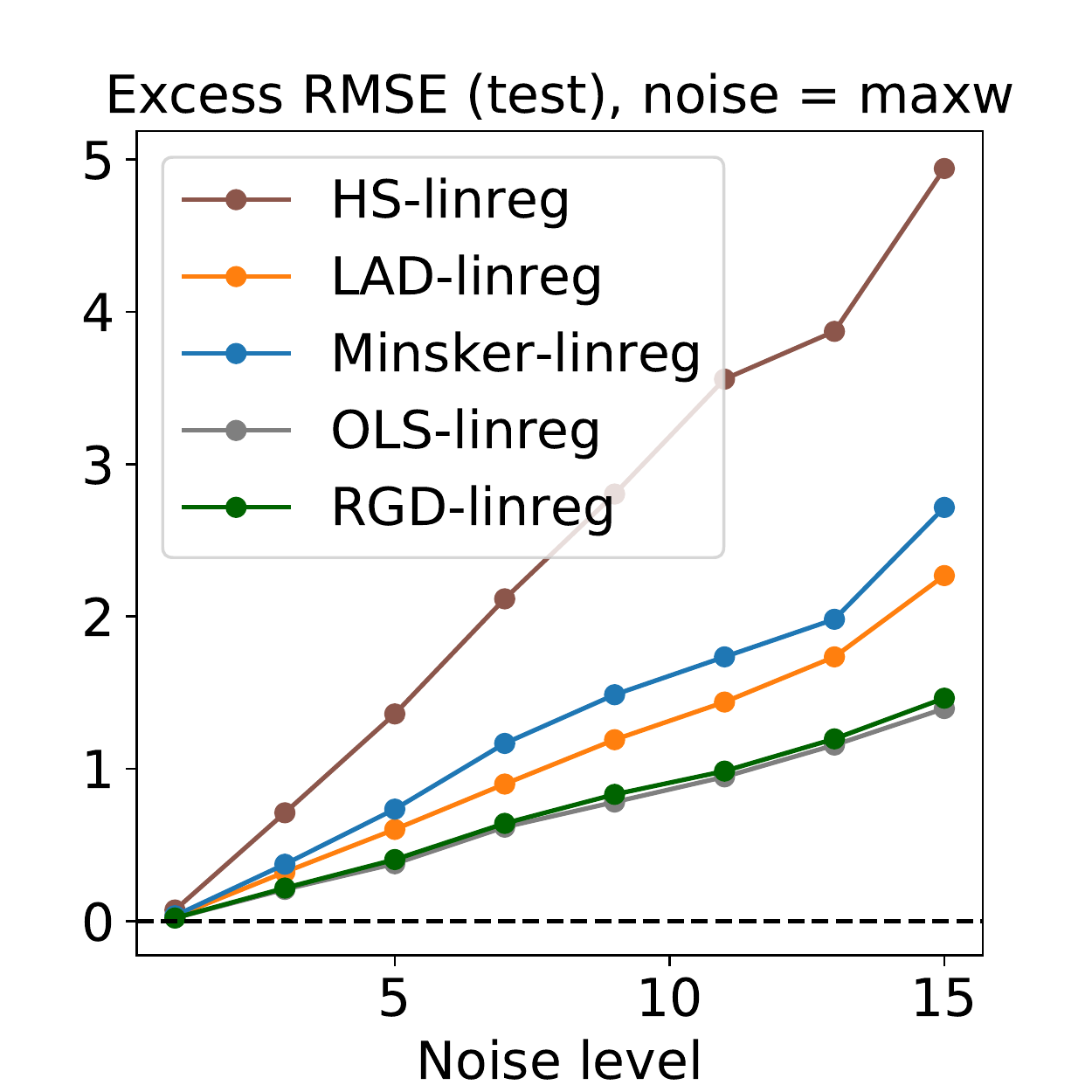}\\
\includegraphics[width=0.25\textwidth]{linreg_overLvl_risk_norm}\,\includegraphics[width=0.25\textwidth]{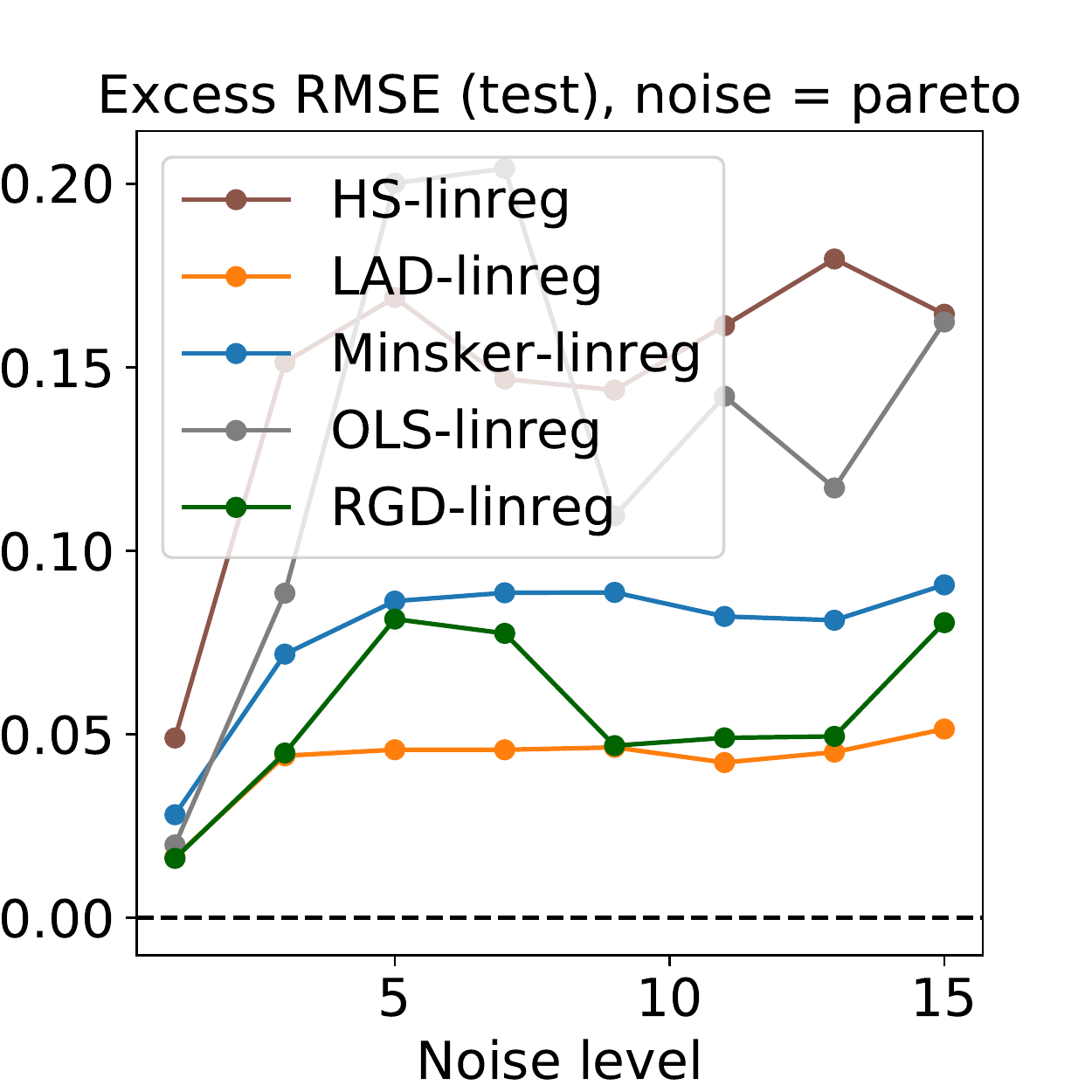}\,\includegraphics[width=0.25\textwidth]{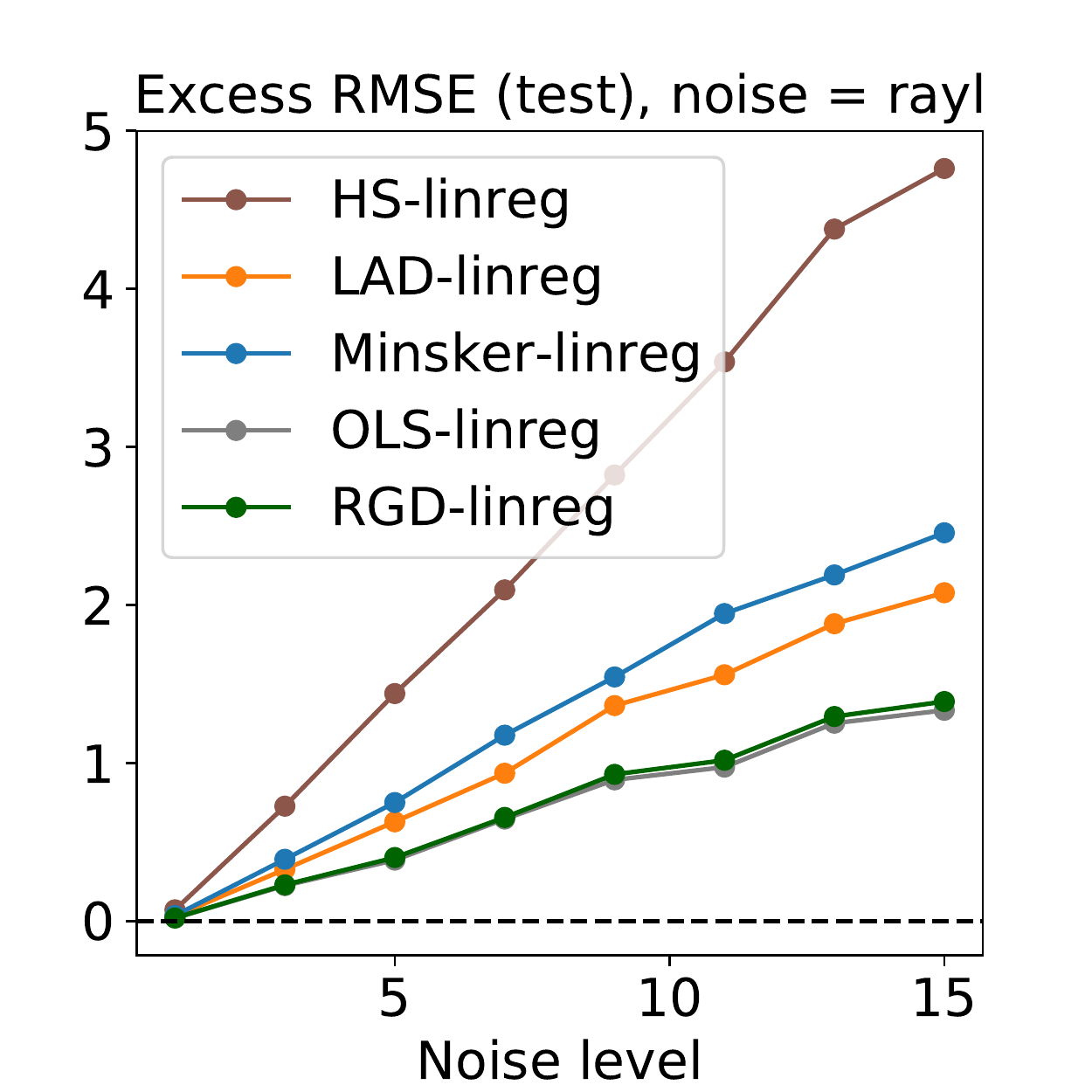}\,\includegraphics[width=0.25\textwidth]{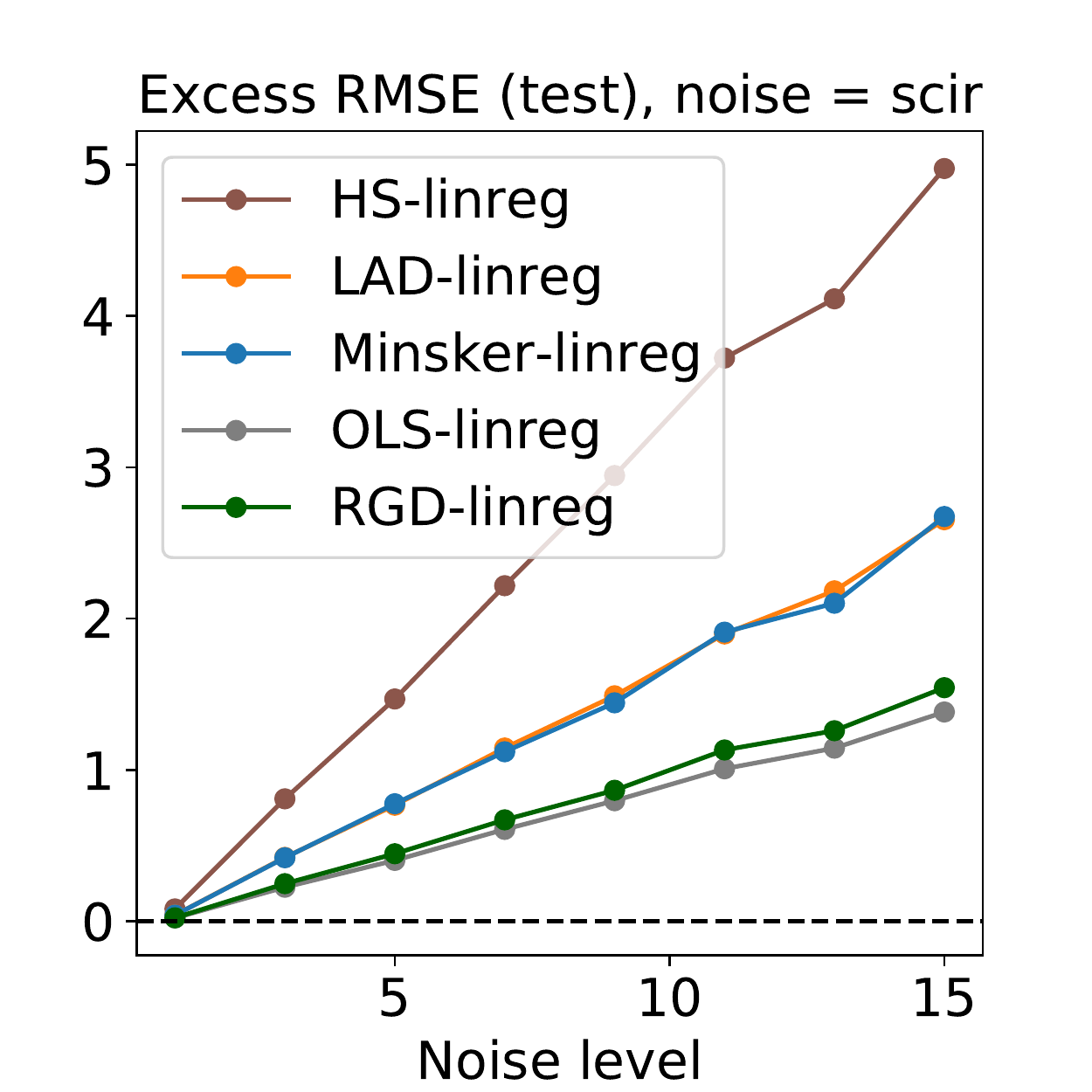}
\caption{Prediction error over noise levels, for $n=30, d=5$. Each plot corresponds to a distinct noise distribution.}
\label{fig:overLvl_all_distros_1}
\end{figure}

\clearpage

\begin{figure}[t]
\centering
\includegraphics[width=0.25\textwidth]{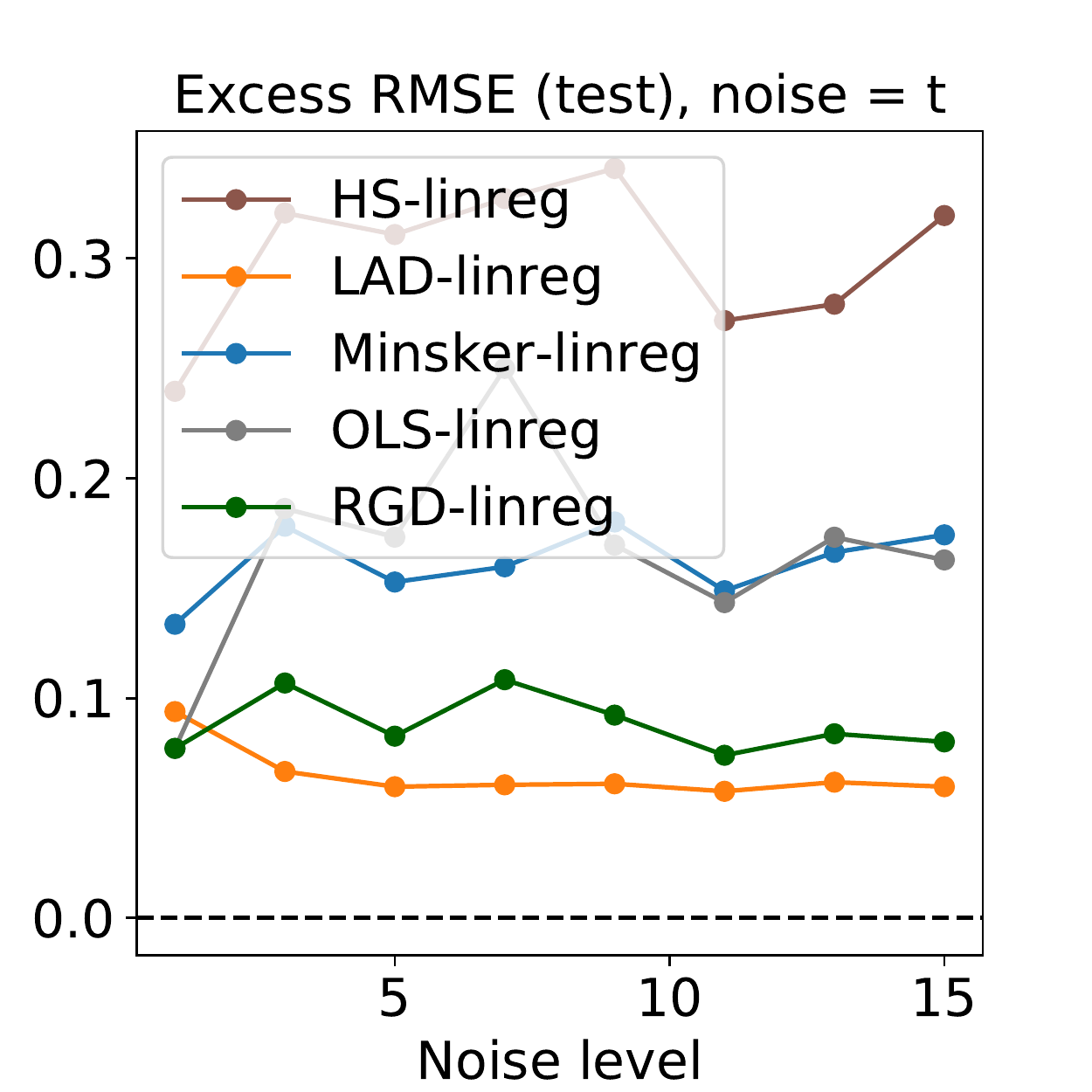}\,\includegraphics[width=0.25\textwidth]{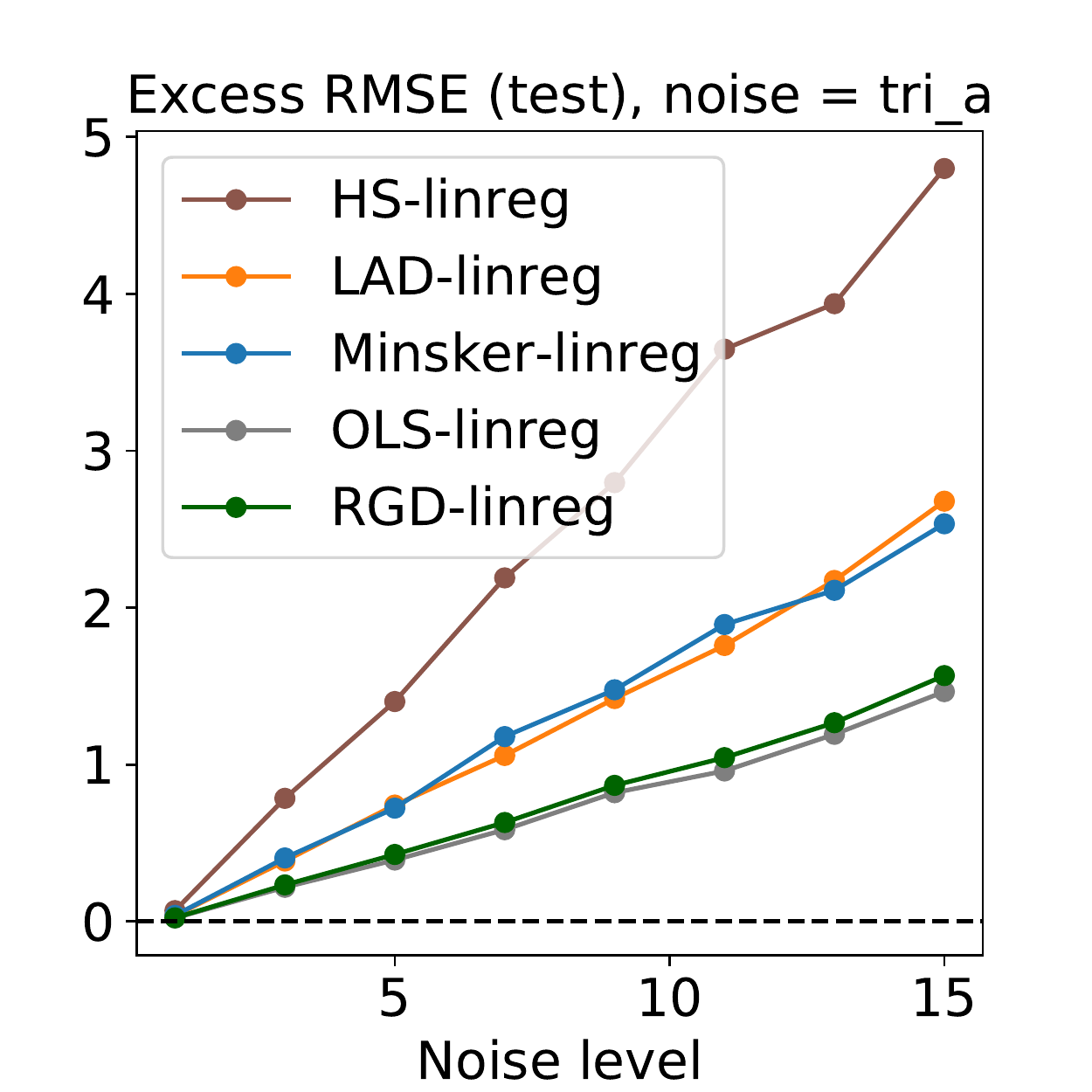}\,\includegraphics[width=0.25\textwidth]{linreg_overLvl_risk_tri_s}\,\includegraphics[width=0.25\textwidth]{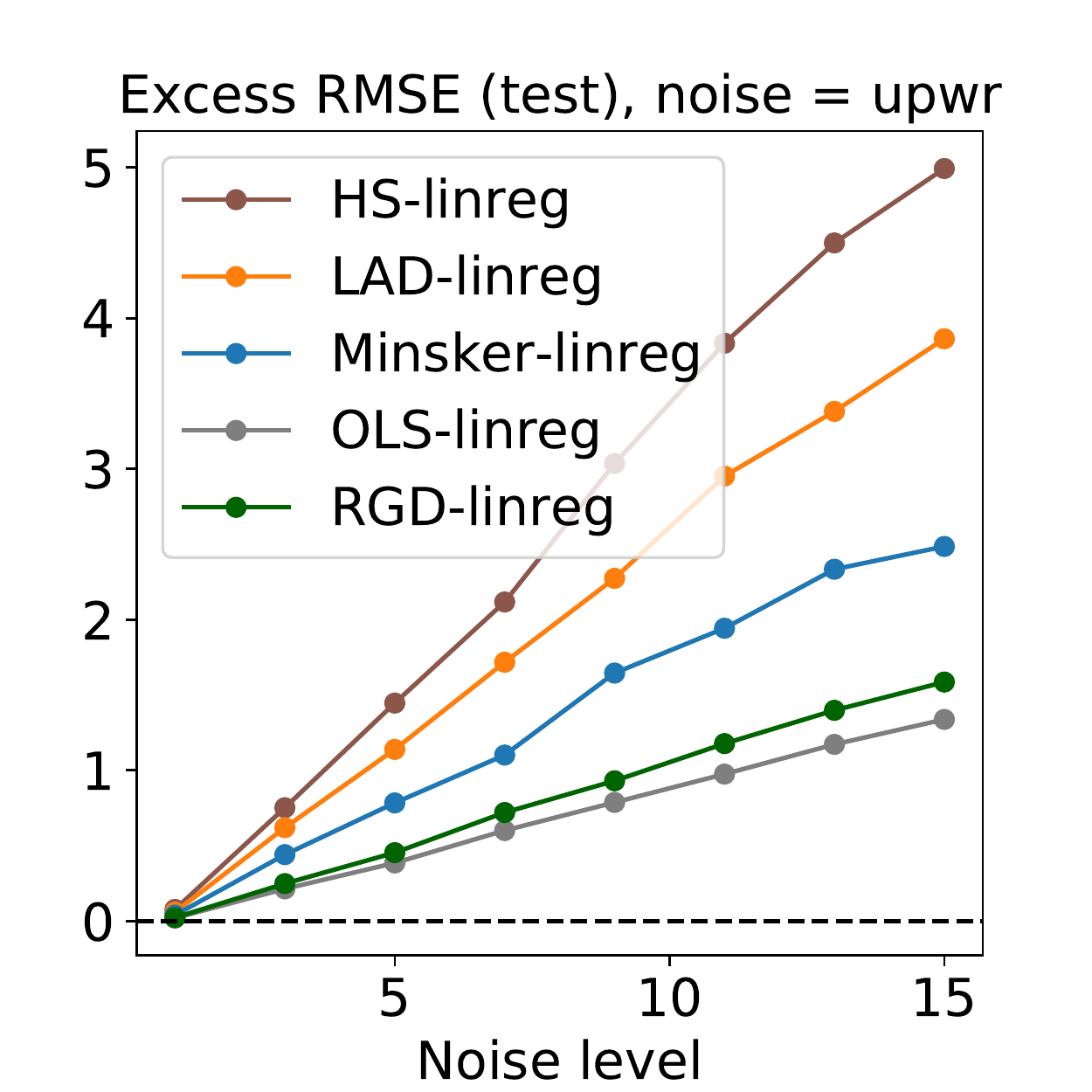}\\
\includegraphics[width=0.25\textwidth]{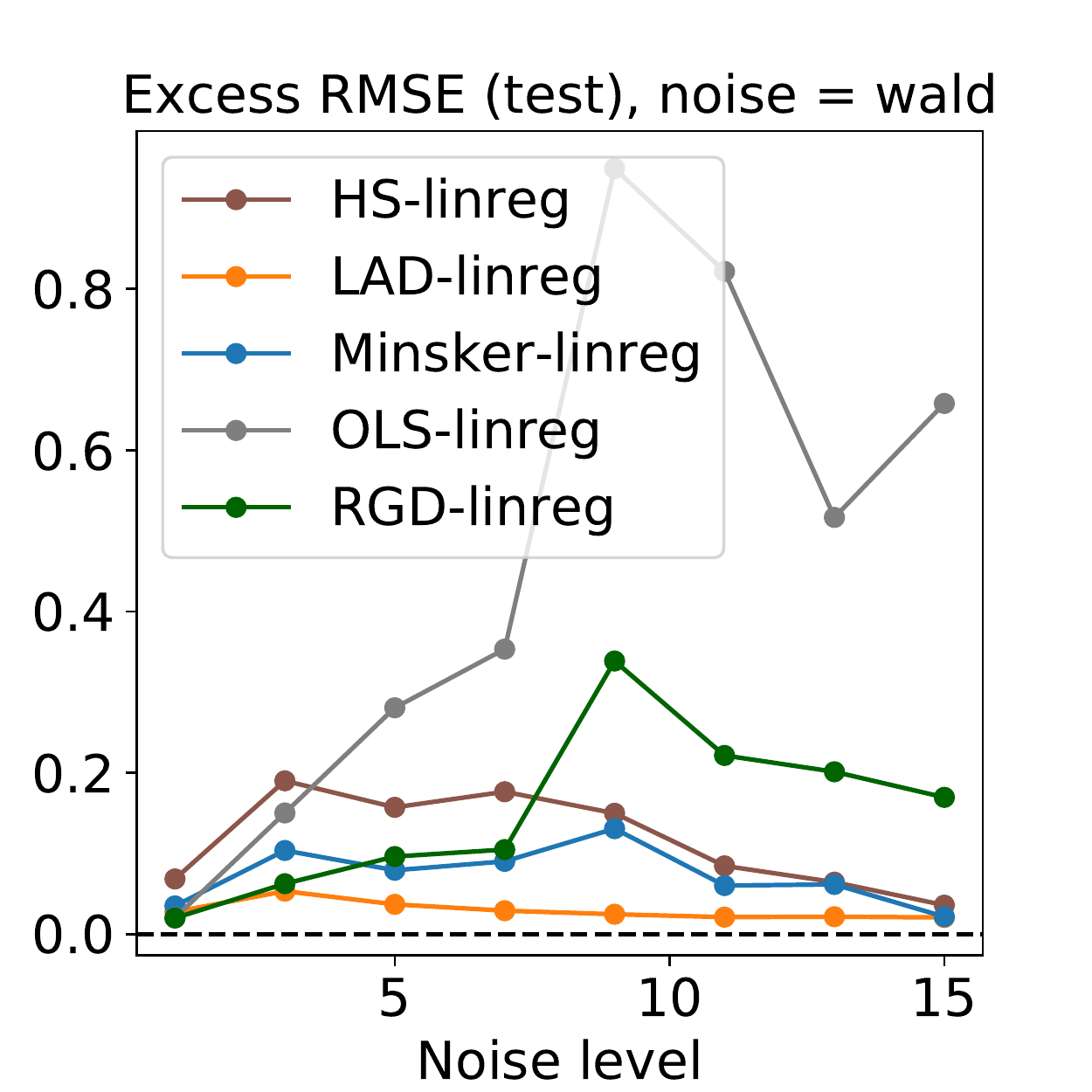}\,\includegraphics[width=0.25\textwidth]{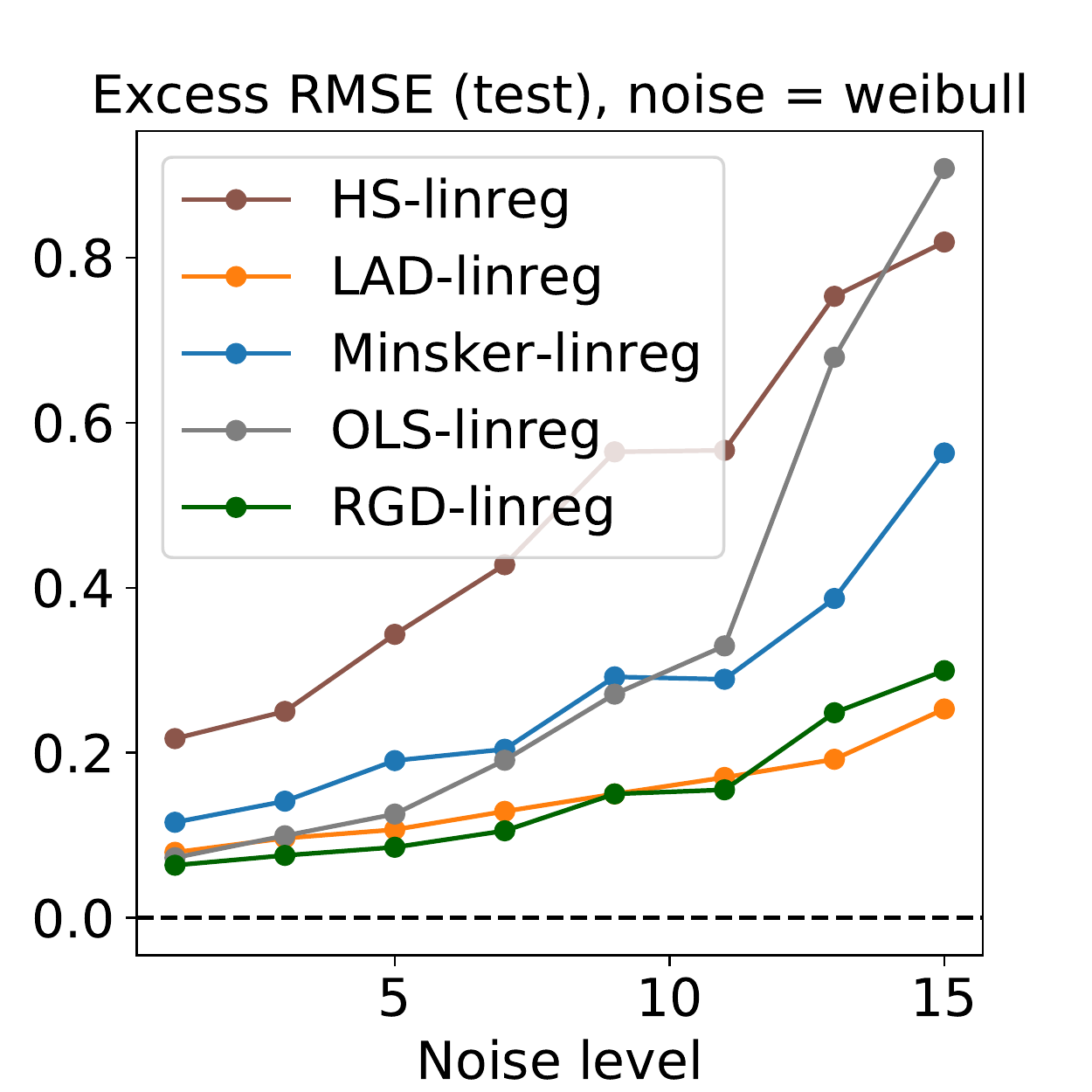}
\caption{Prediction error over noise levels, for $n=30, d=5$. Each plot corresponds to a distinct noise distribution.}
\label{fig:overLvl_all_distros_2}
\end{figure}

\clearpage

\begin{figure}[t]
\centering
\includegraphics[width=0.25\textwidth]{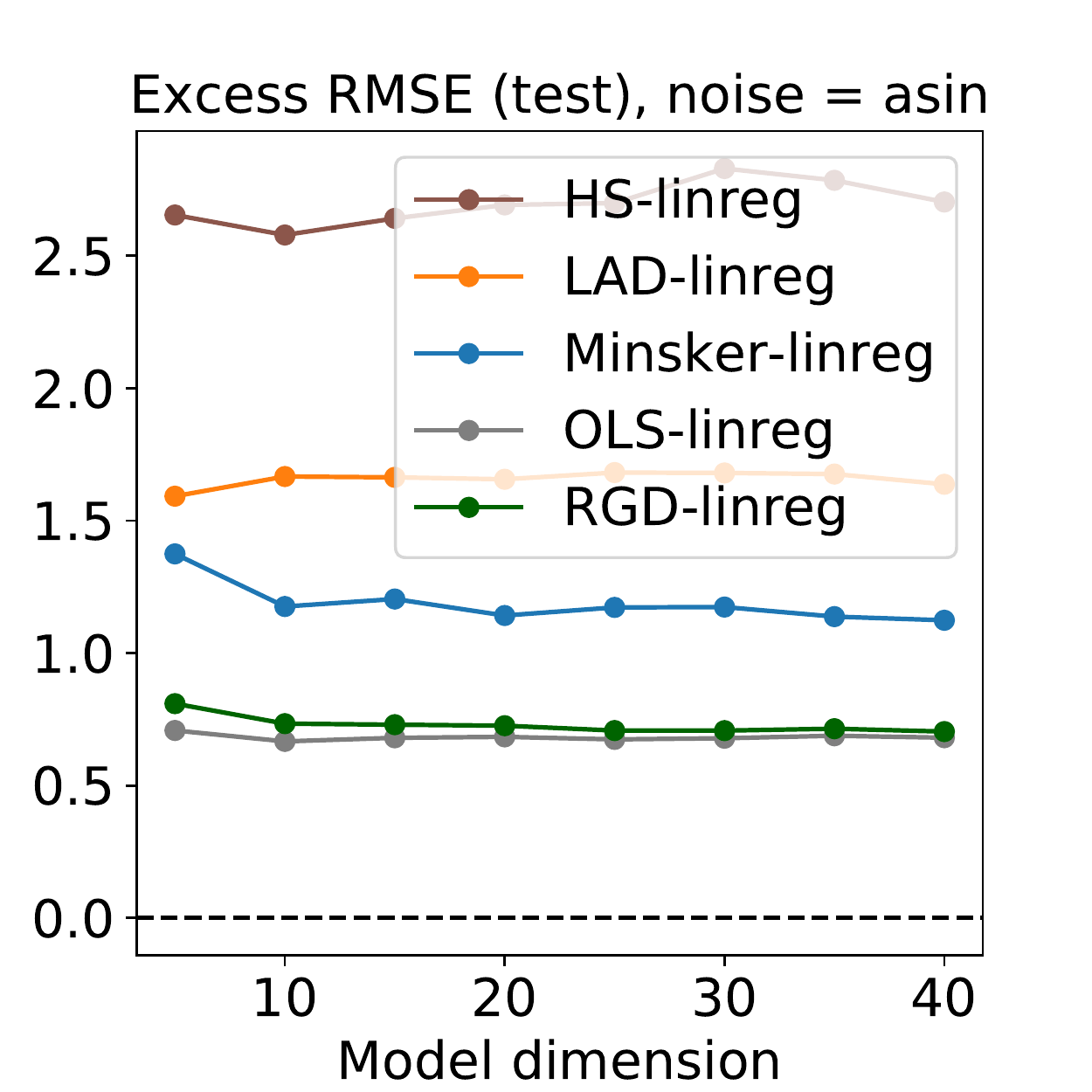}\,\includegraphics[width=0.25\textwidth]{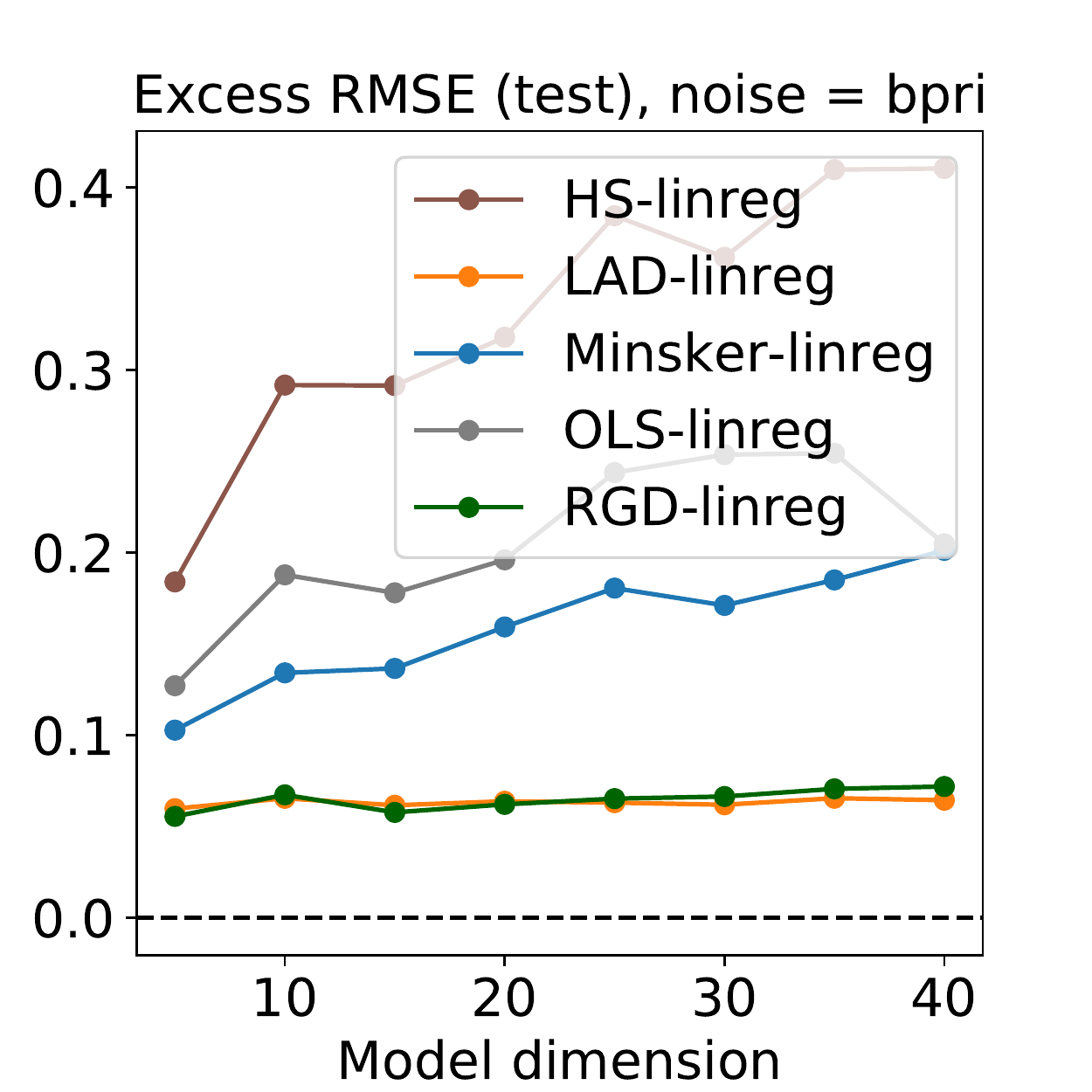}\,\includegraphics[width=0.25\textwidth]{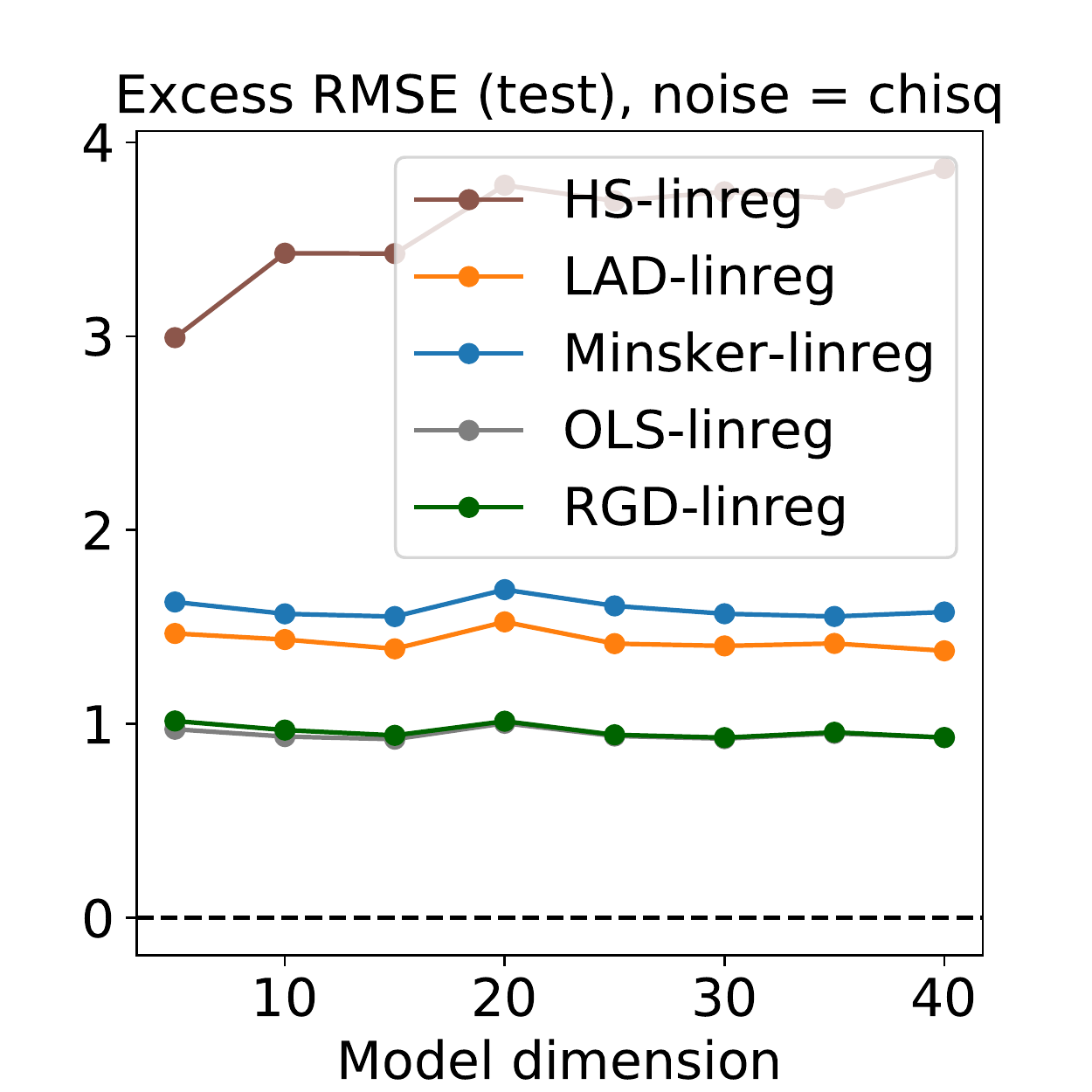}\,\includegraphics[width=0.25\textwidth]{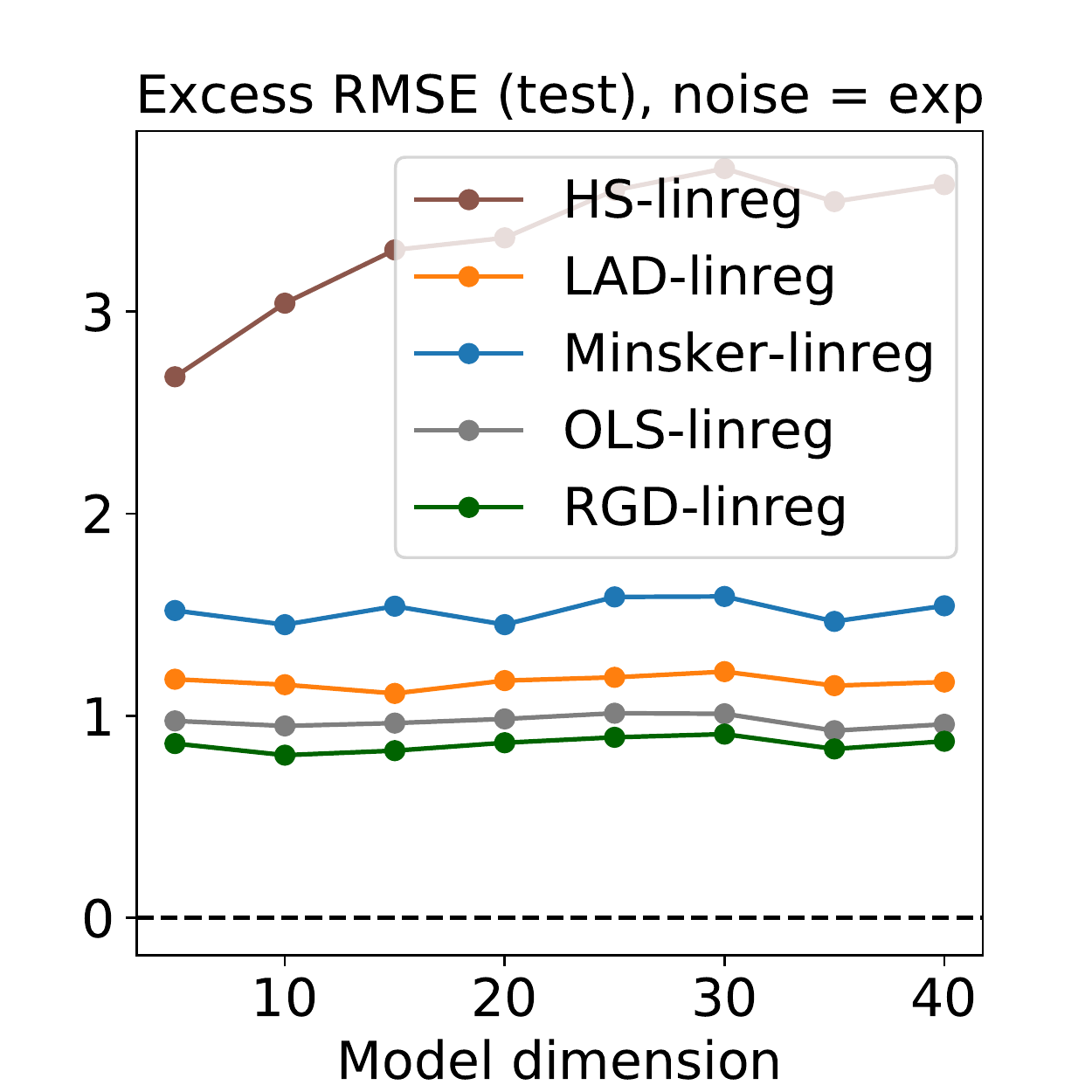}\\
\includegraphics[width=0.25\textwidth]{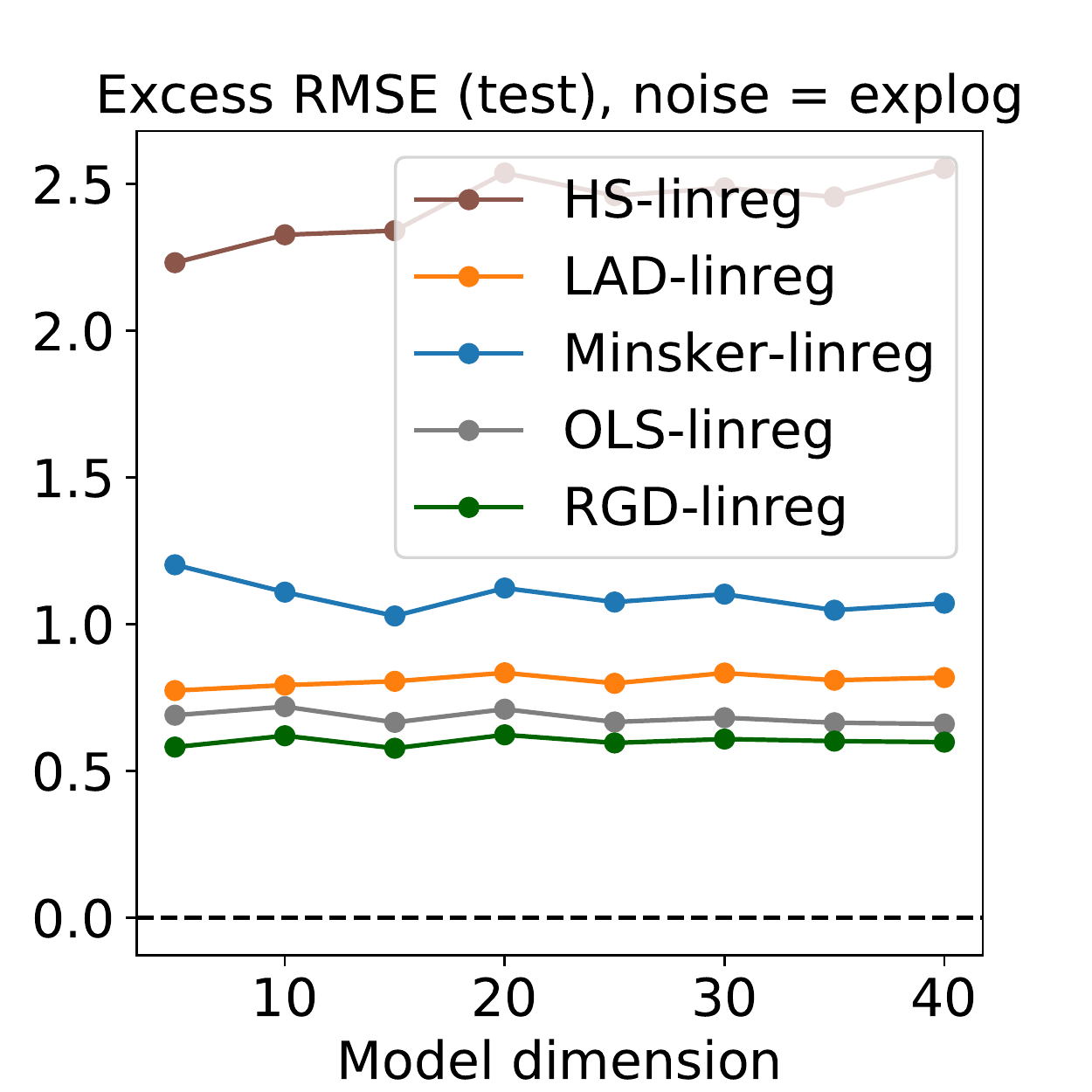}\,\includegraphics[width=0.25\textwidth]{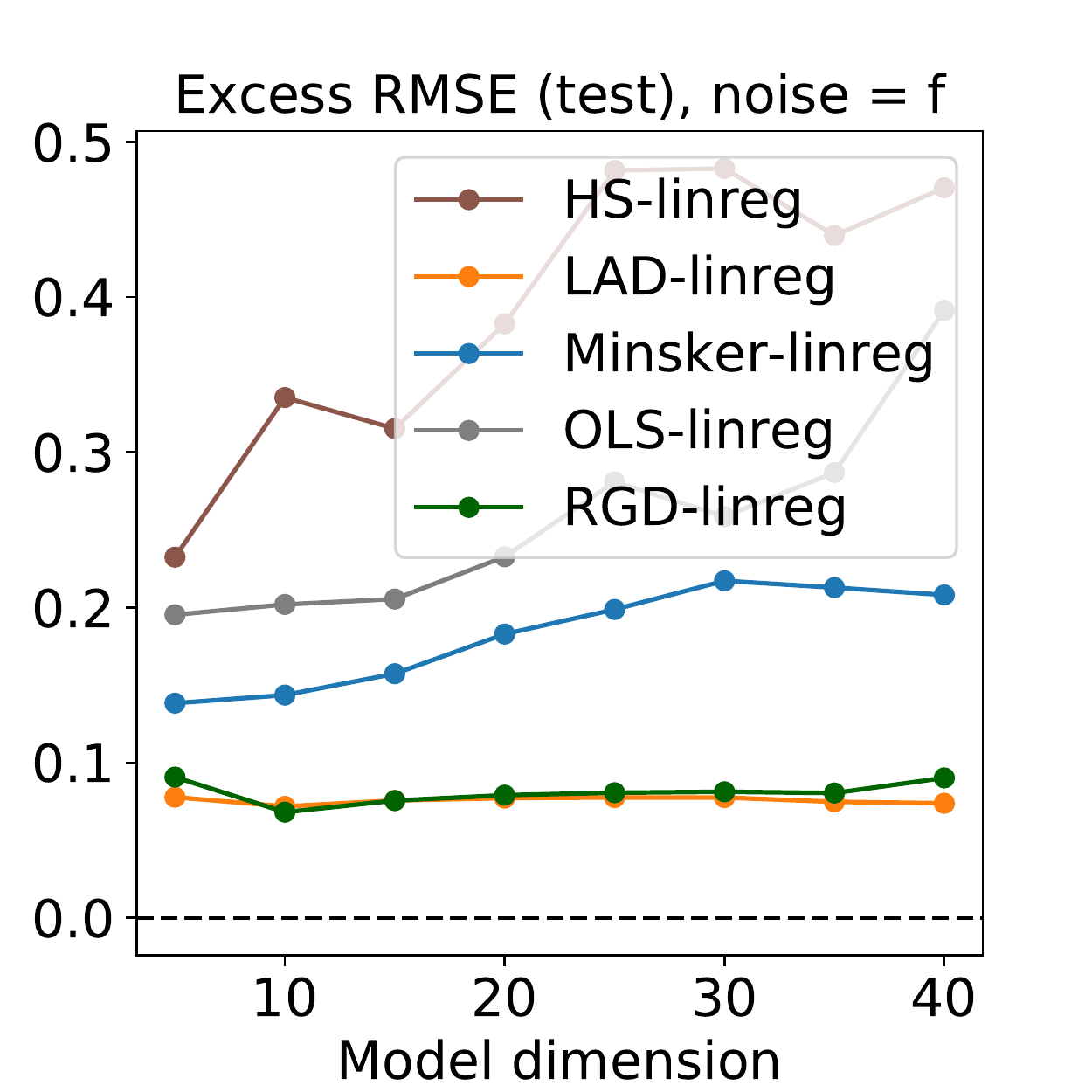}\,\includegraphics[width=0.25\textwidth]{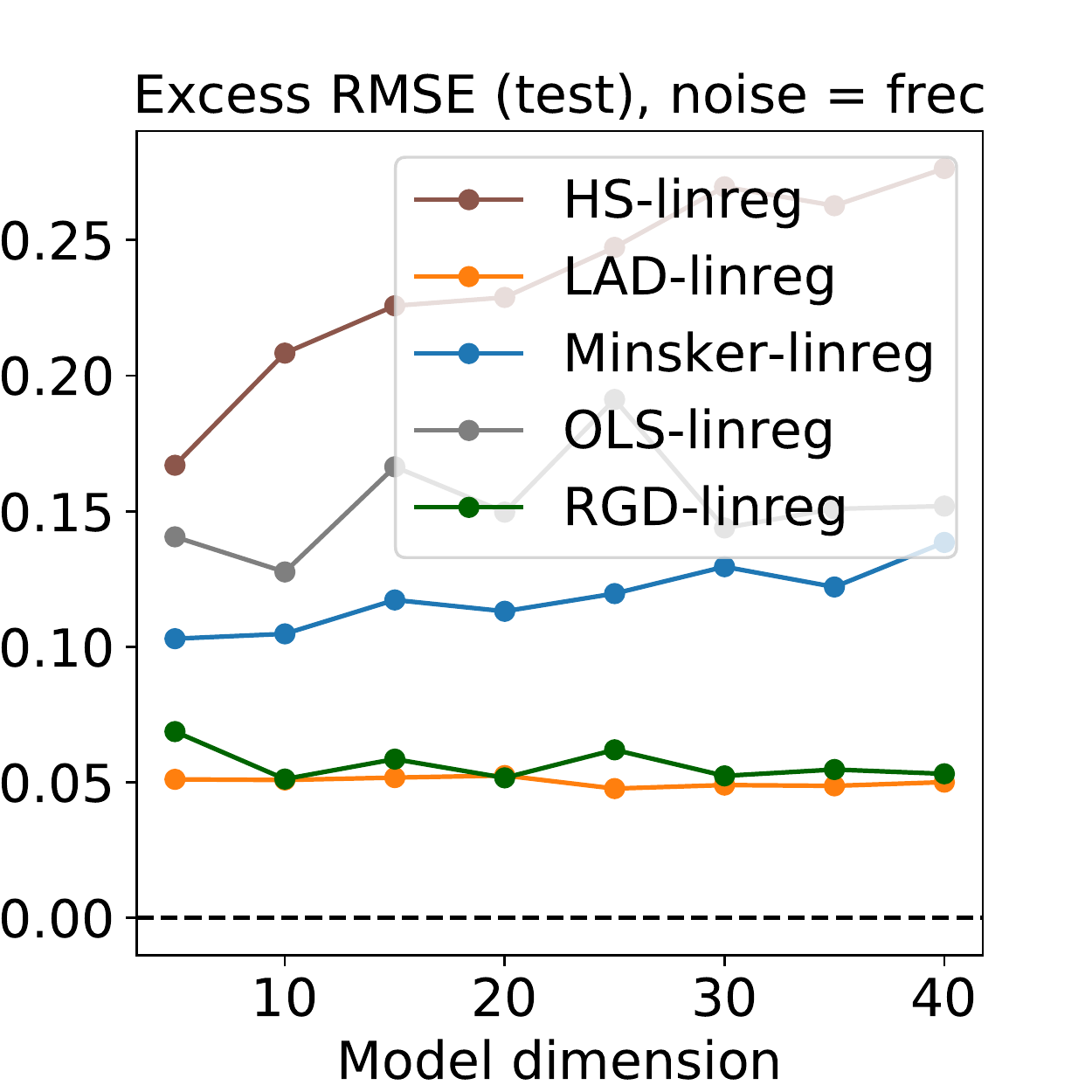}\,\includegraphics[width=0.25\textwidth]{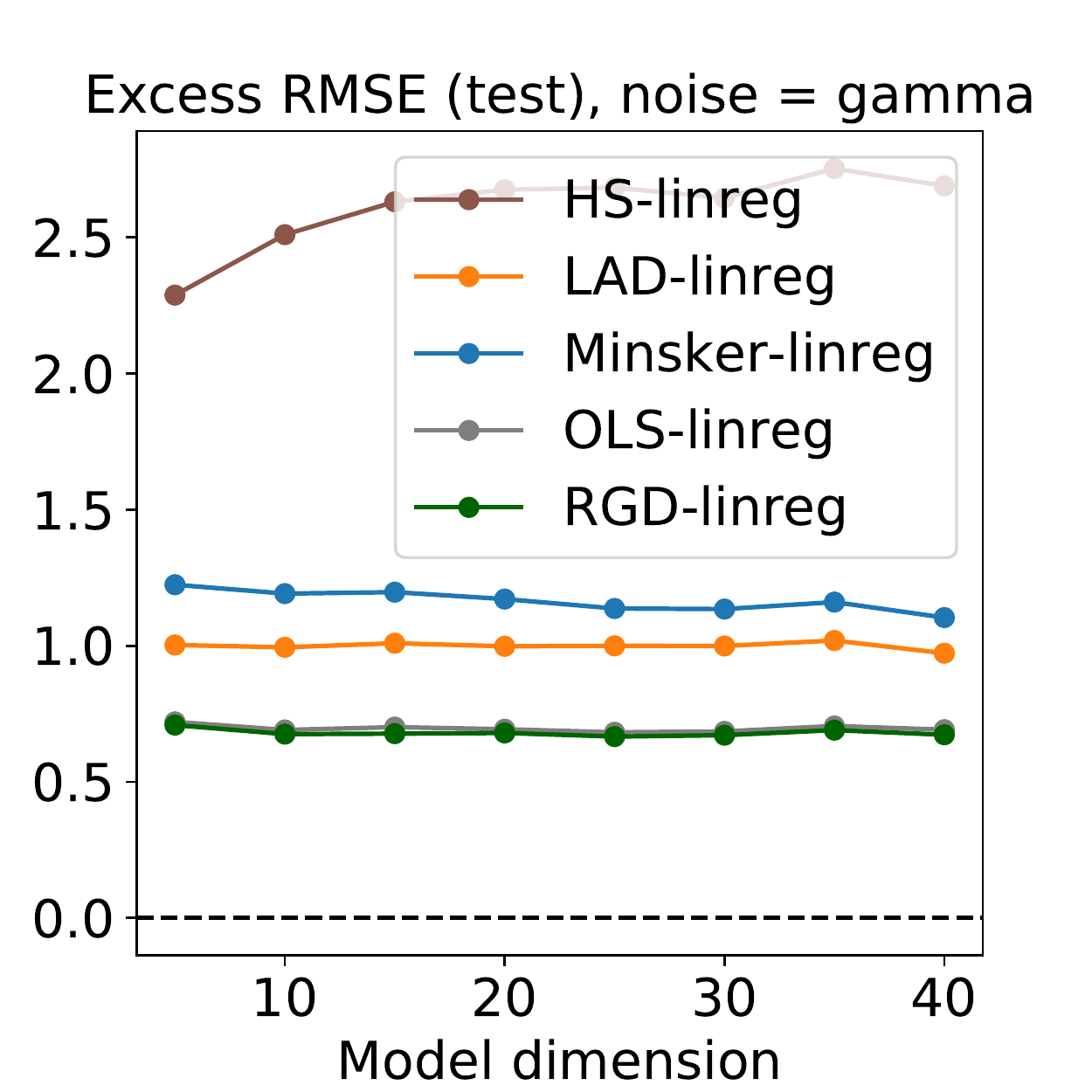}\\
\includegraphics[width=0.25\textwidth]{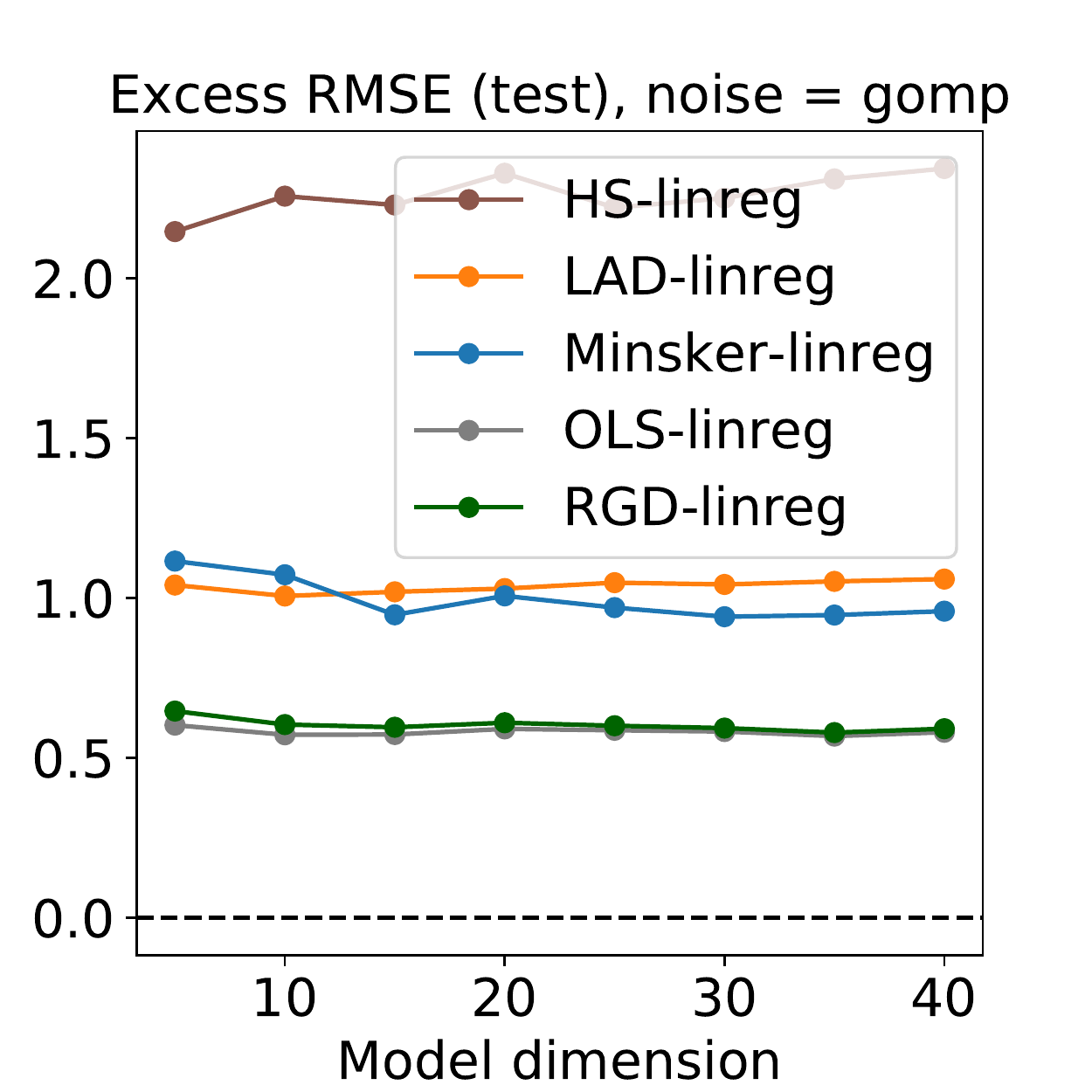}\,\includegraphics[width=0.25\textwidth]{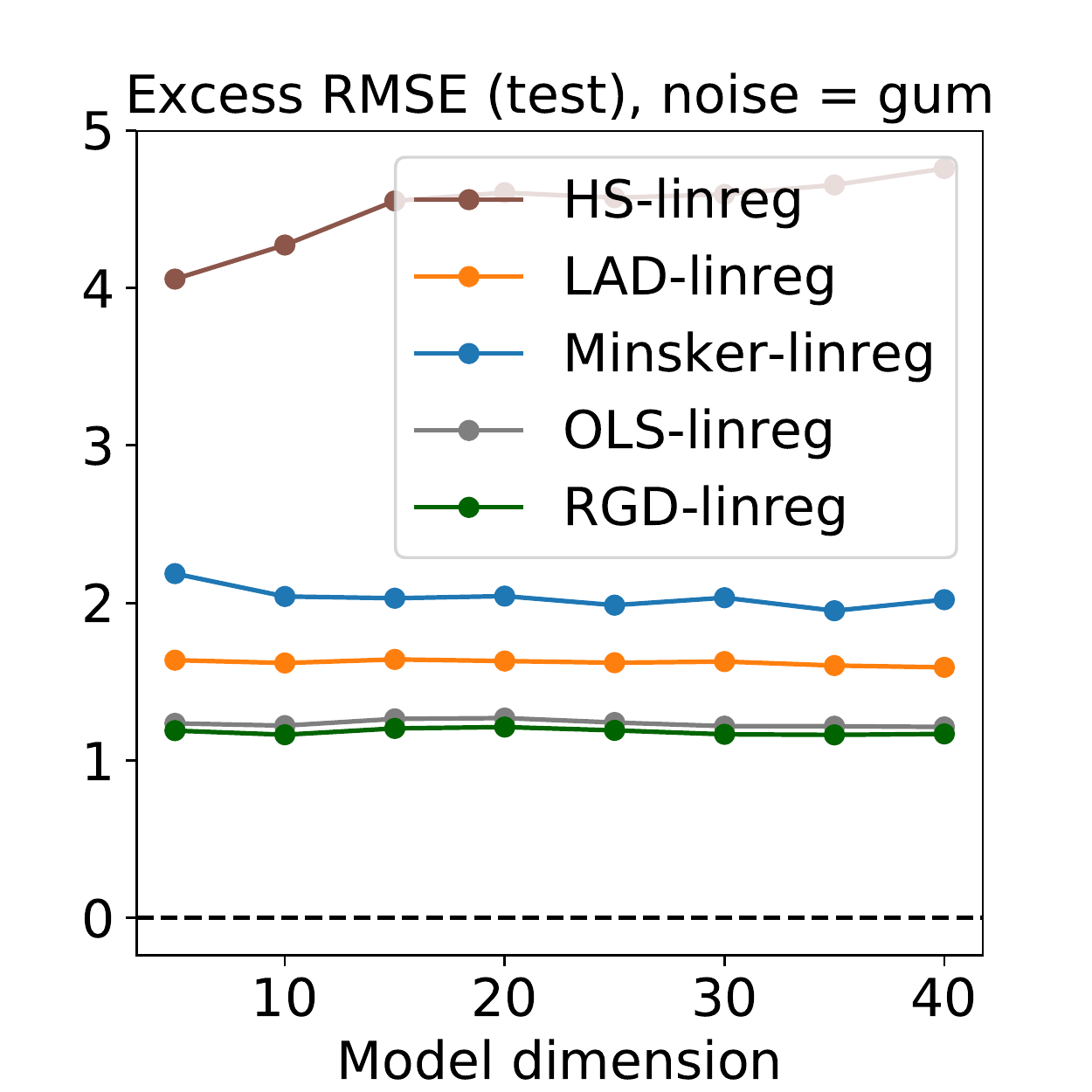}\,\includegraphics[width=0.25\textwidth]{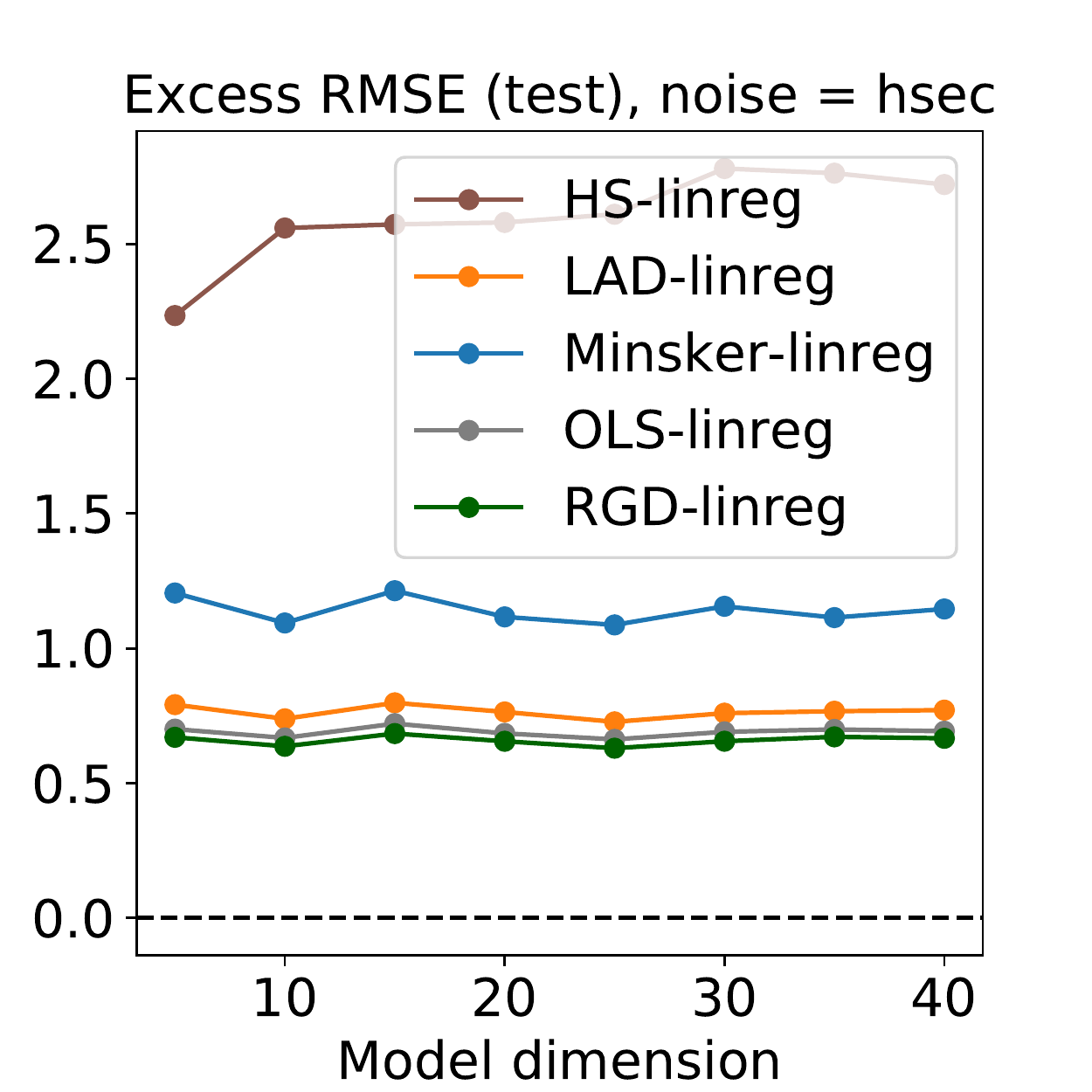}\,\includegraphics[width=0.25\textwidth]{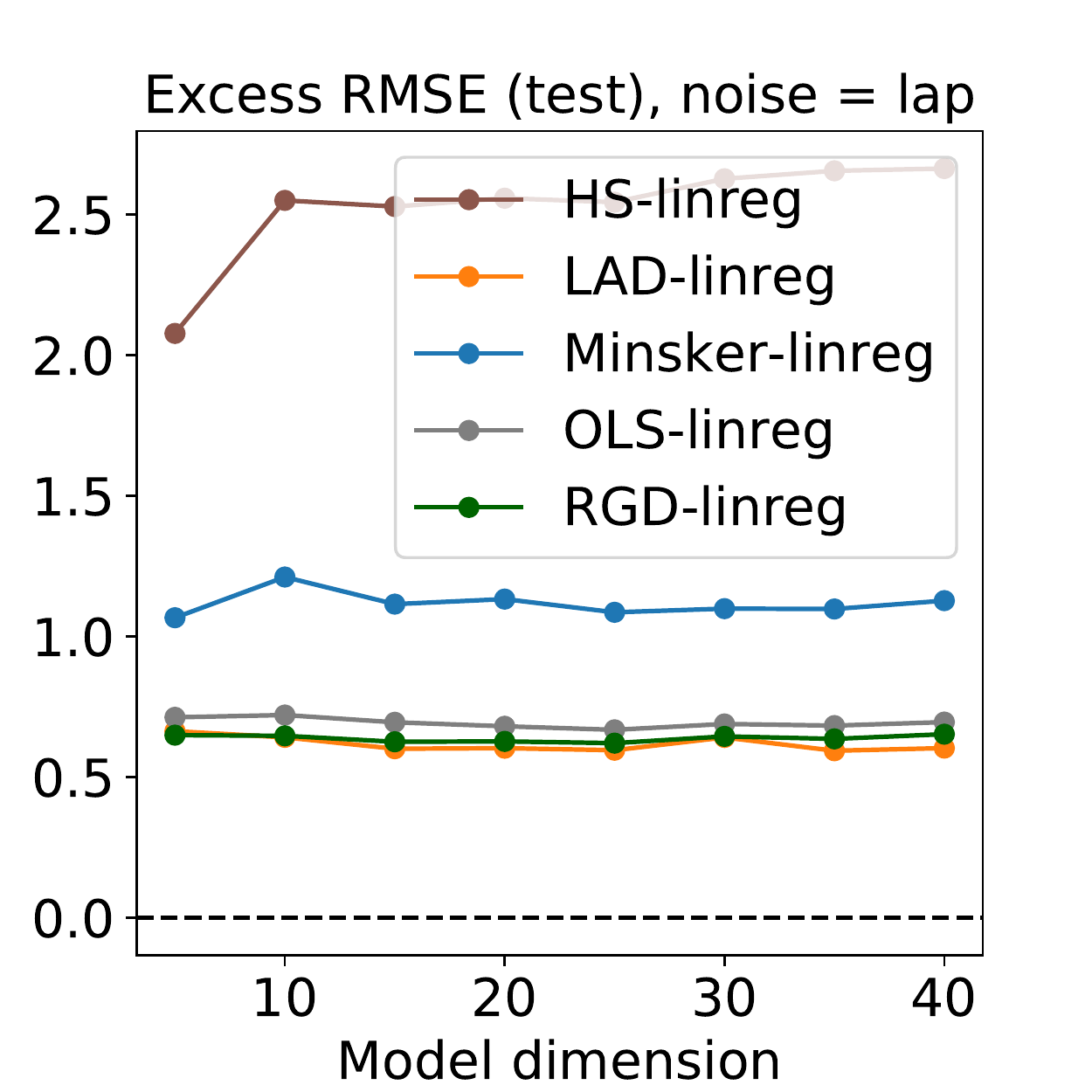}\\
\includegraphics[width=0.25\textwidth]{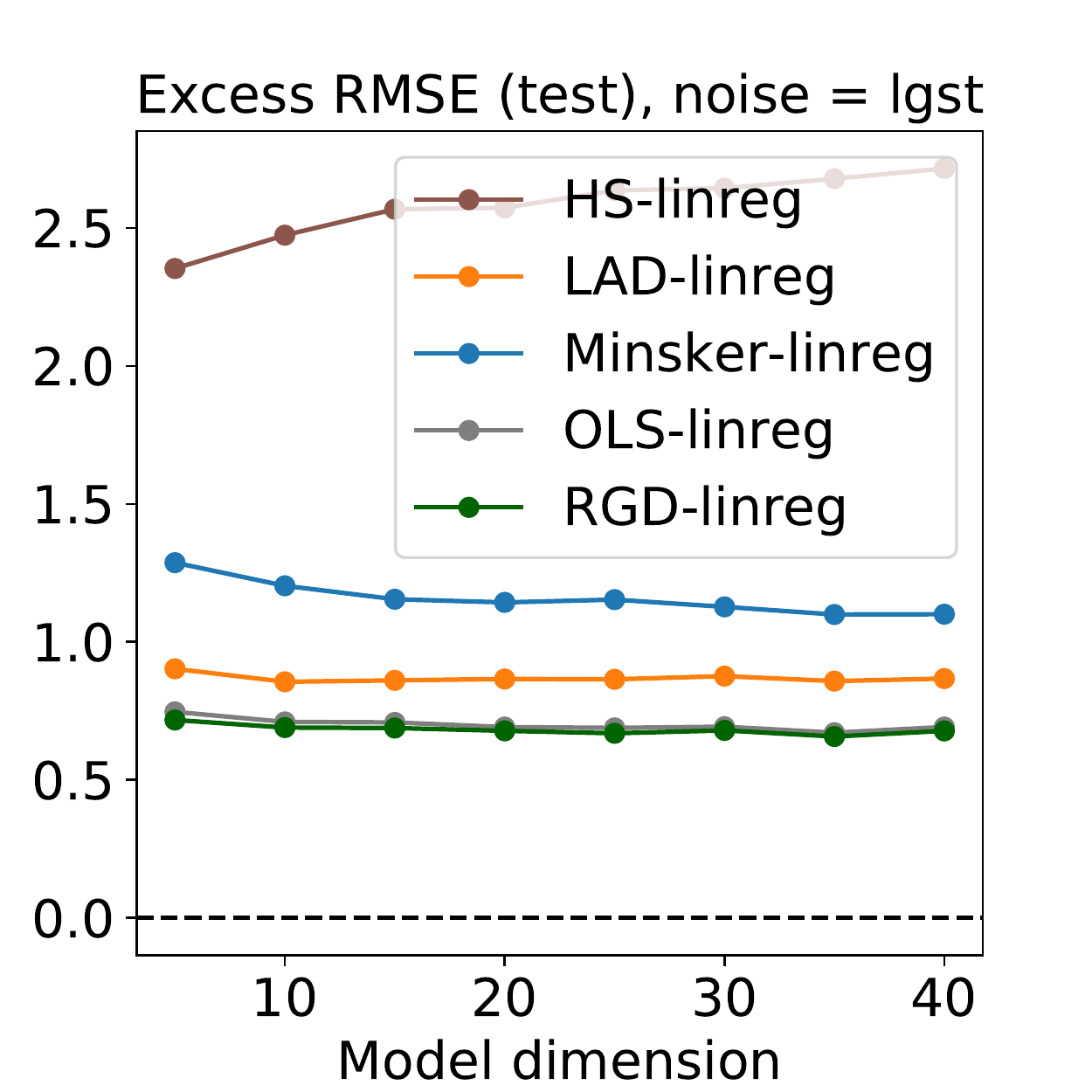}\,\includegraphics[width=0.25\textwidth]{linreg_overDim_risk_llog}\,\includegraphics[width=0.25\textwidth]{linreg_overDim_risk_lnorm}\,\includegraphics[width=0.25\textwidth]{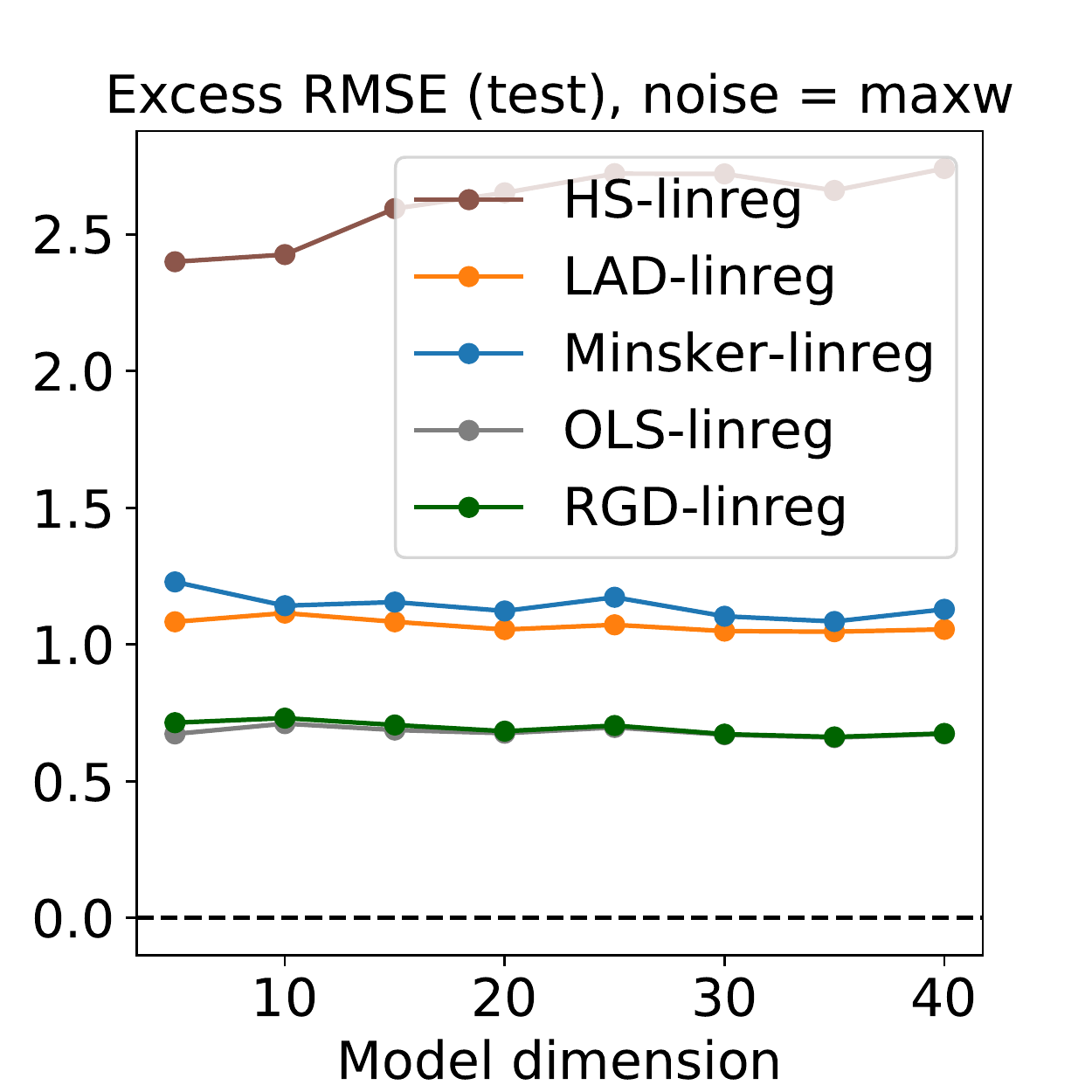}\\
\includegraphics[width=0.25\textwidth]{linreg_overDim_risk_norm}\,\includegraphics[width=0.25\textwidth]{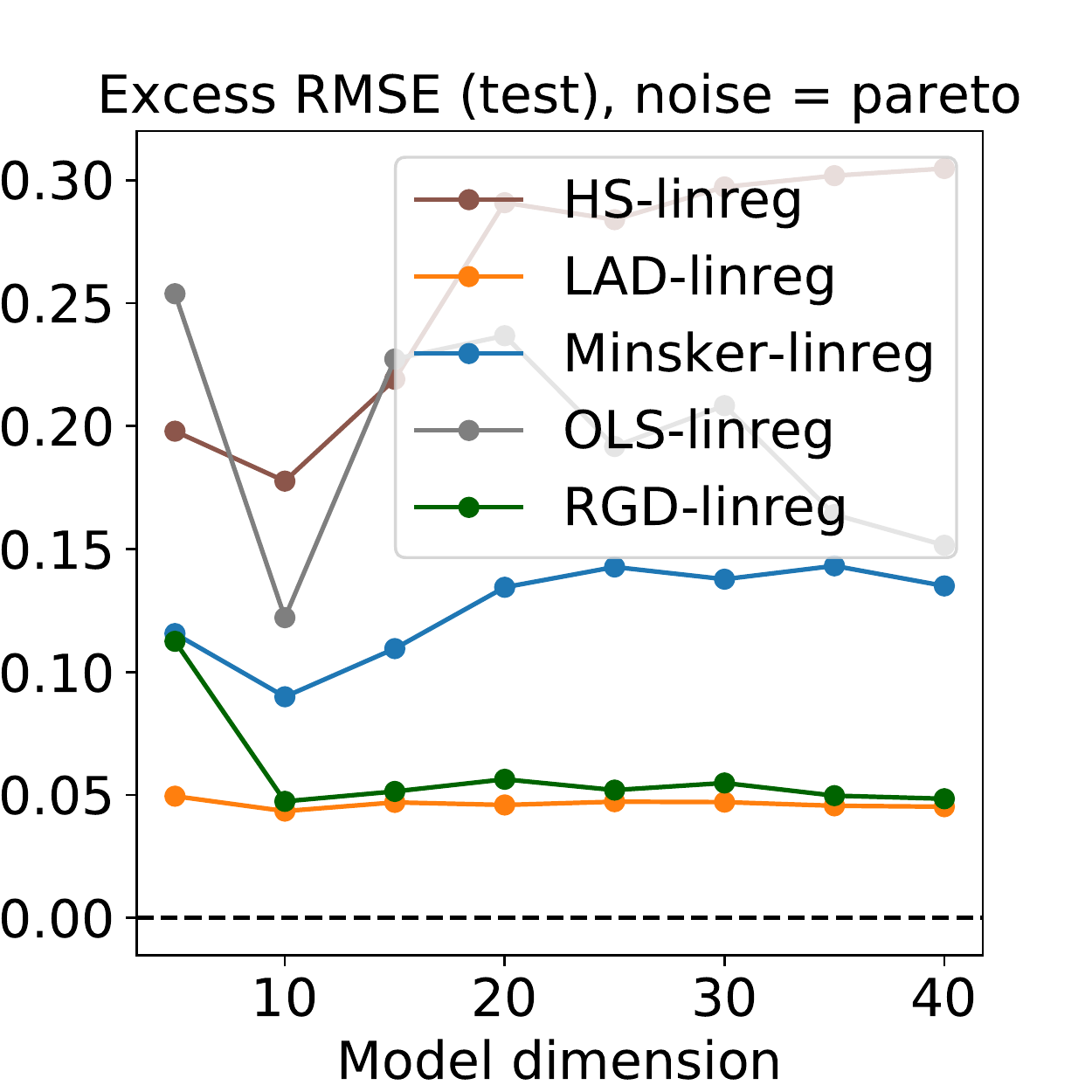}\,\includegraphics[width=0.25\textwidth]{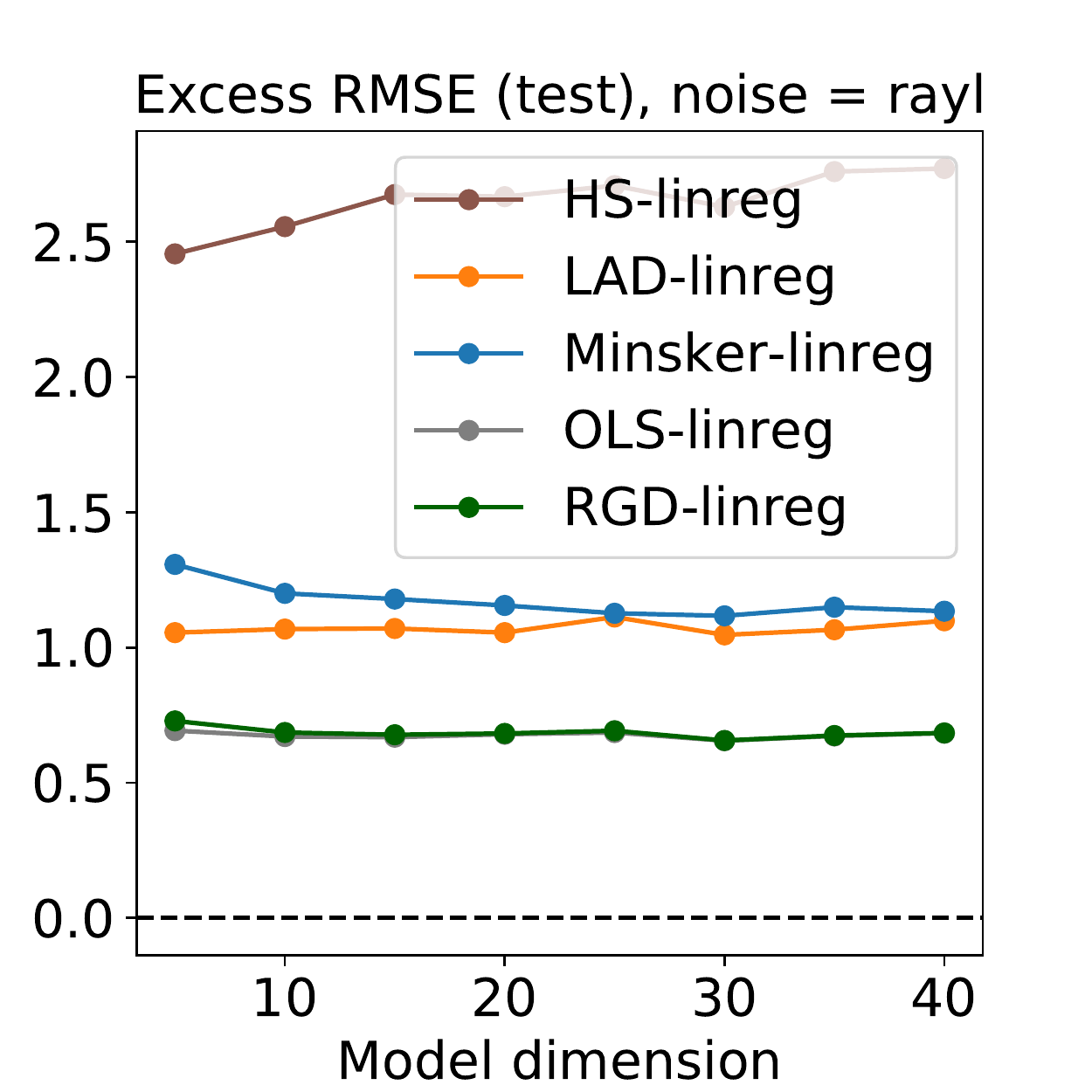}\,\includegraphics[width=0.25\textwidth]{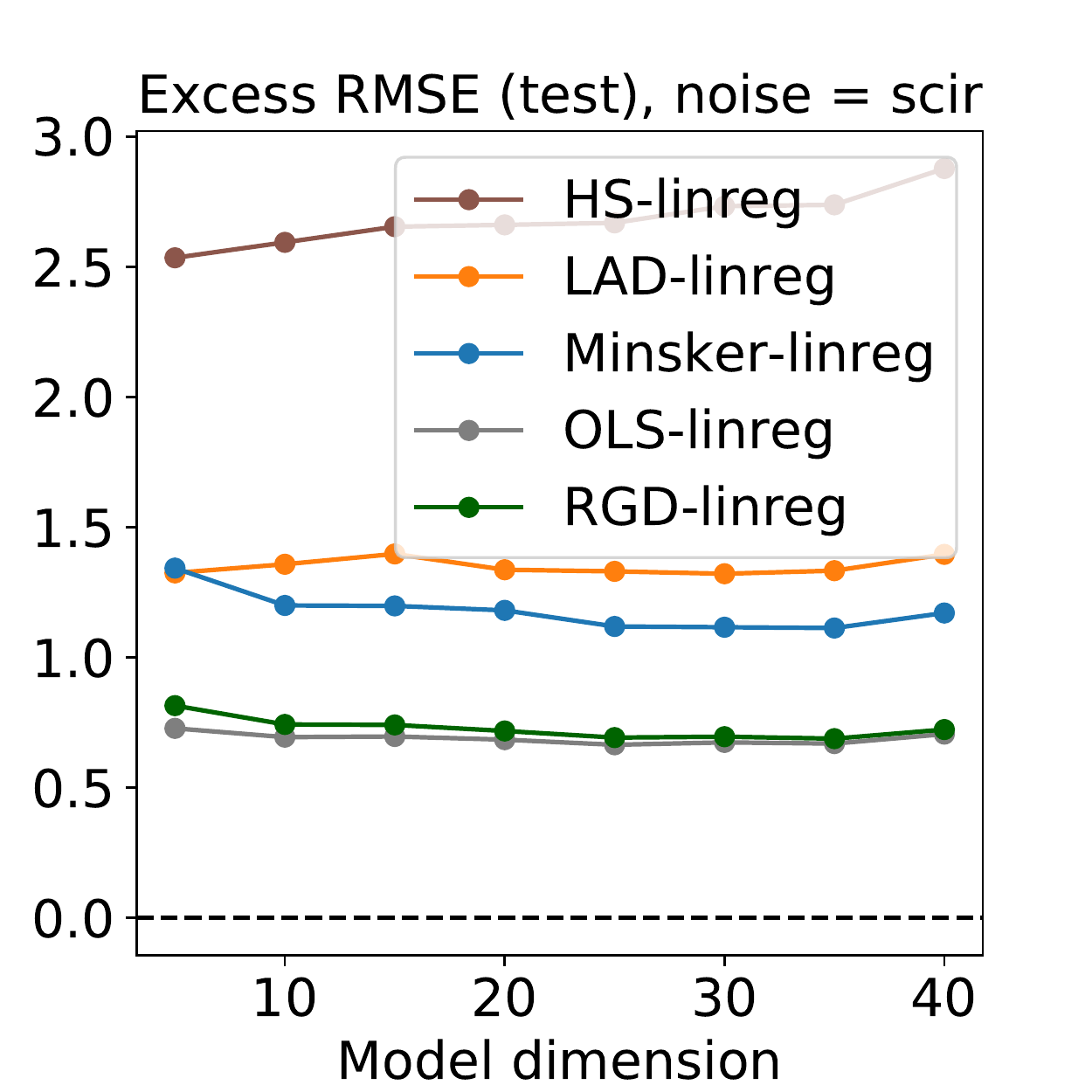}
\caption{Prediction error over dimensions $5 \leq d \leq 40$, with ratio $n/d = 6$ fixed, and noise level = $8$. Each plot corresponds to a distinct noise distribution.}
\label{fig:overDim_all_distros_1}
\end{figure}

\clearpage

\begin{figure}[t]
\centering
\includegraphics[width=0.25\textwidth]{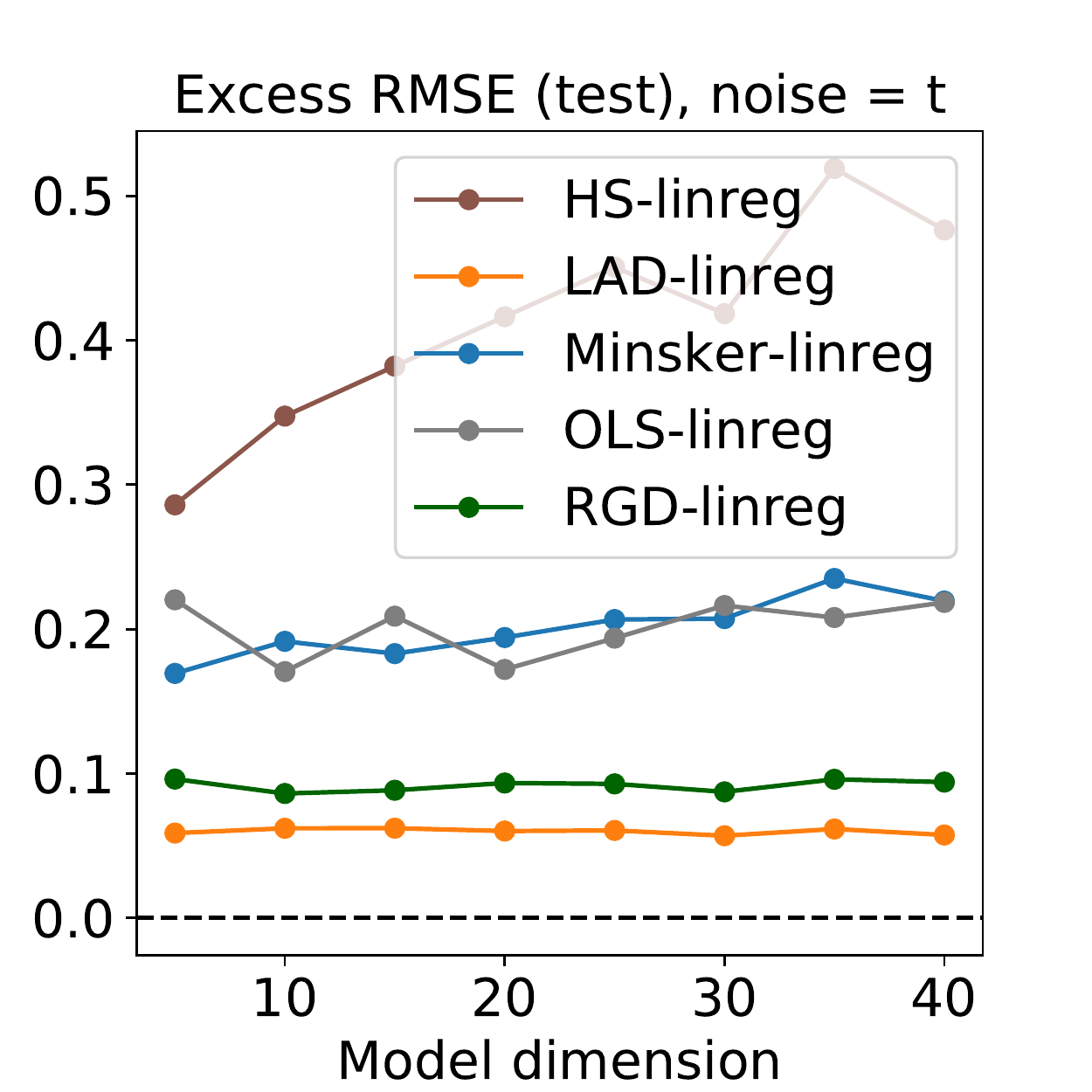}\,\includegraphics[width=0.25\textwidth]{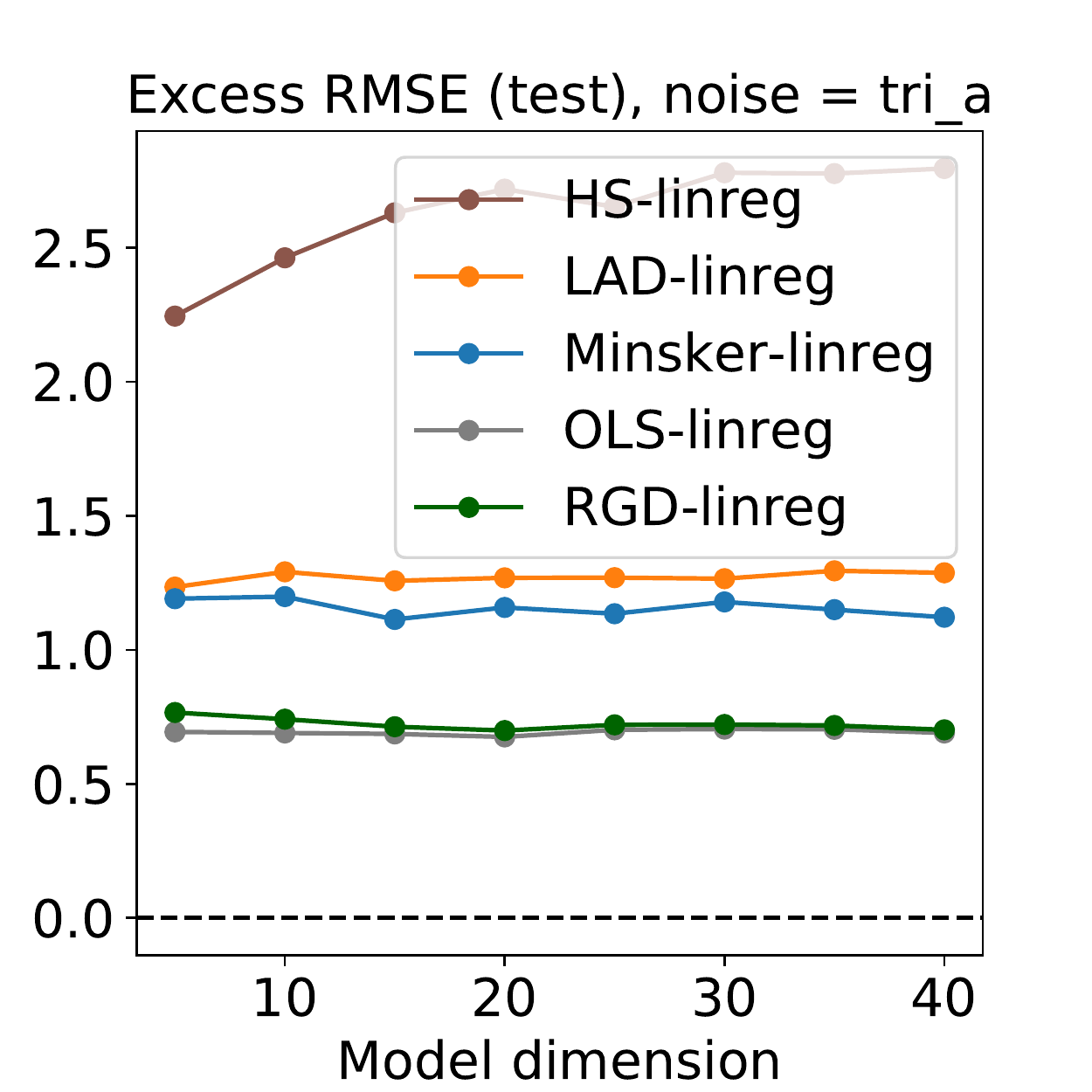}\,\includegraphics[width=0.25\textwidth]{linreg_overDim_risk_tri_s}\,\includegraphics[width=0.25\textwidth]{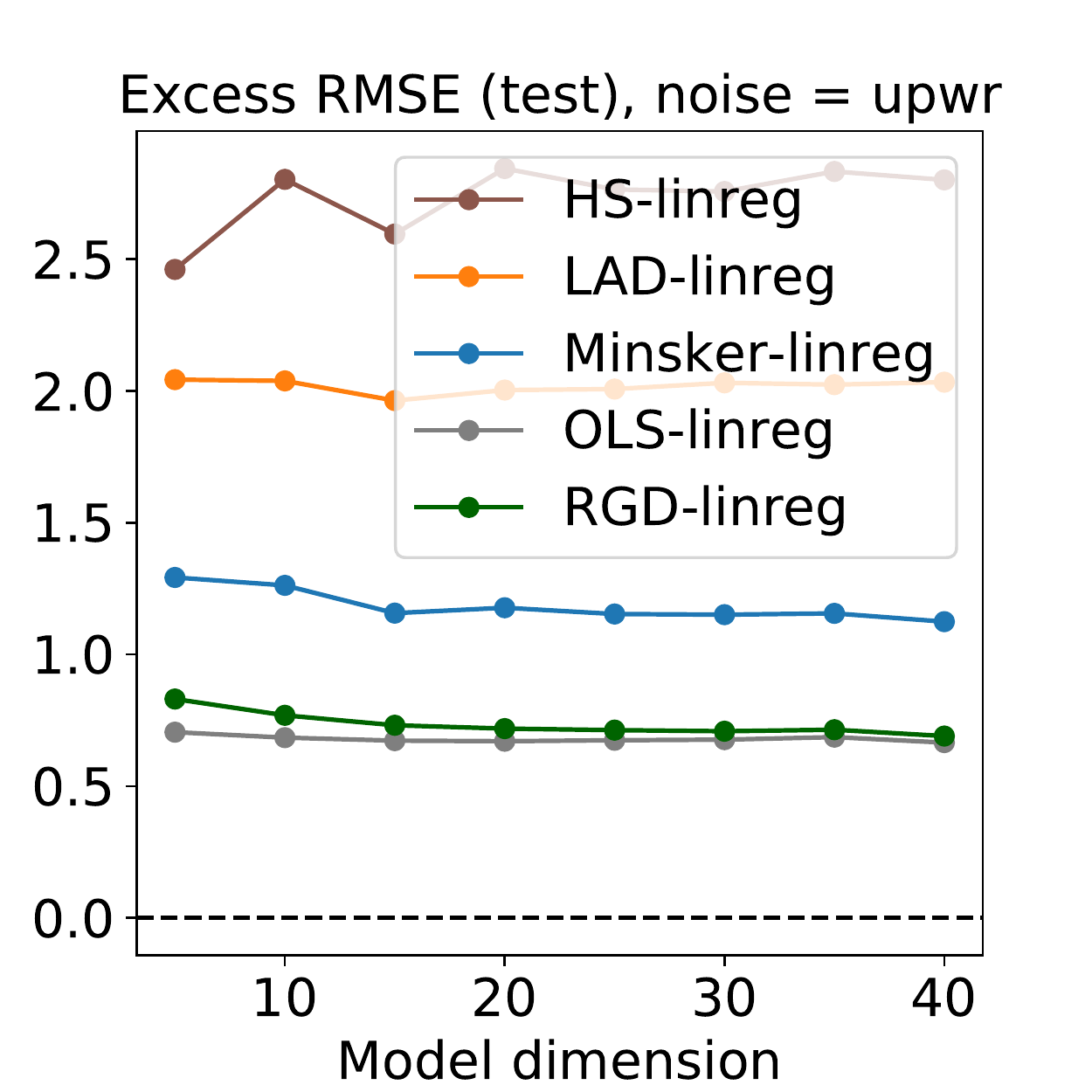}\\
\includegraphics[width=0.25\textwidth]{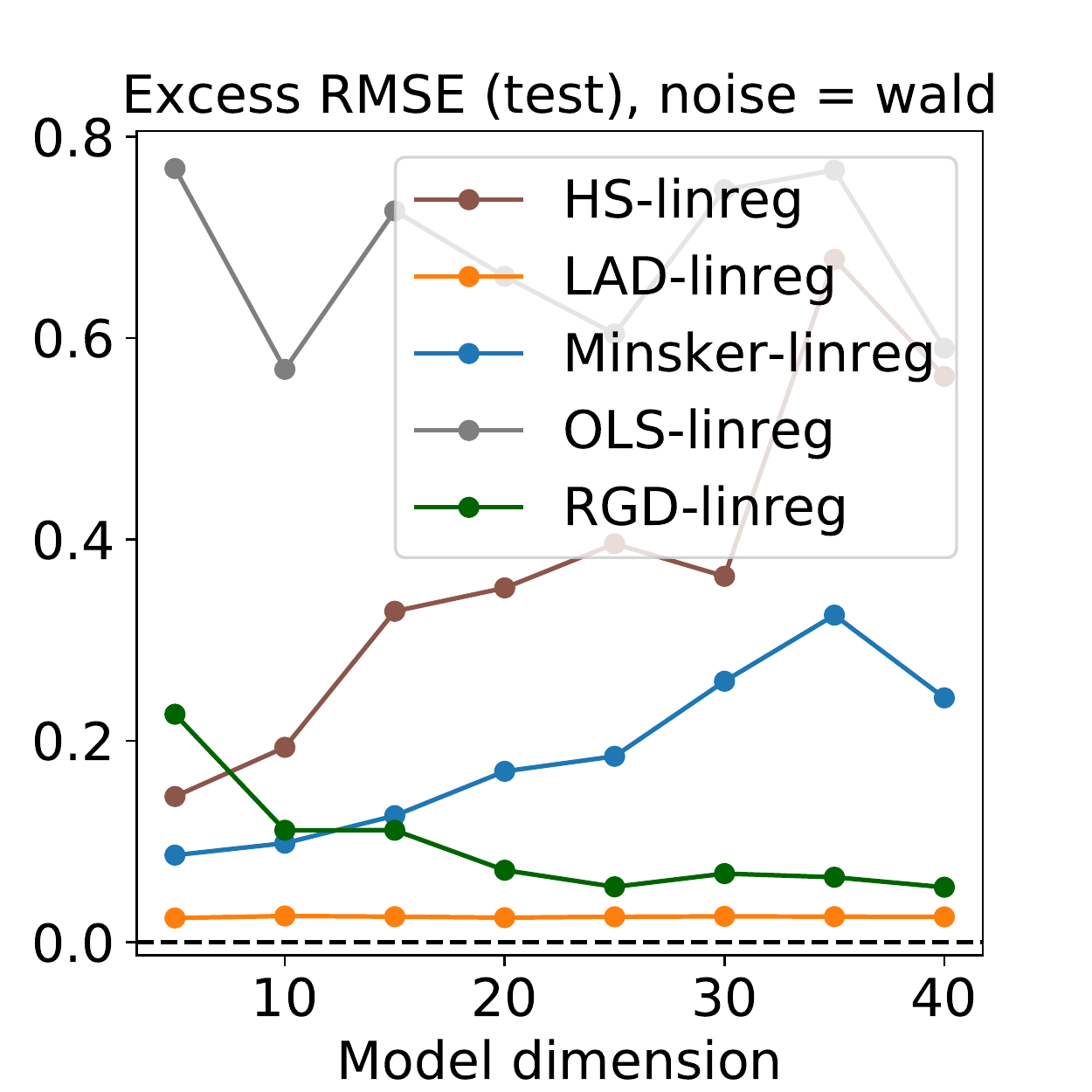}\,\includegraphics[width=0.25\textwidth]{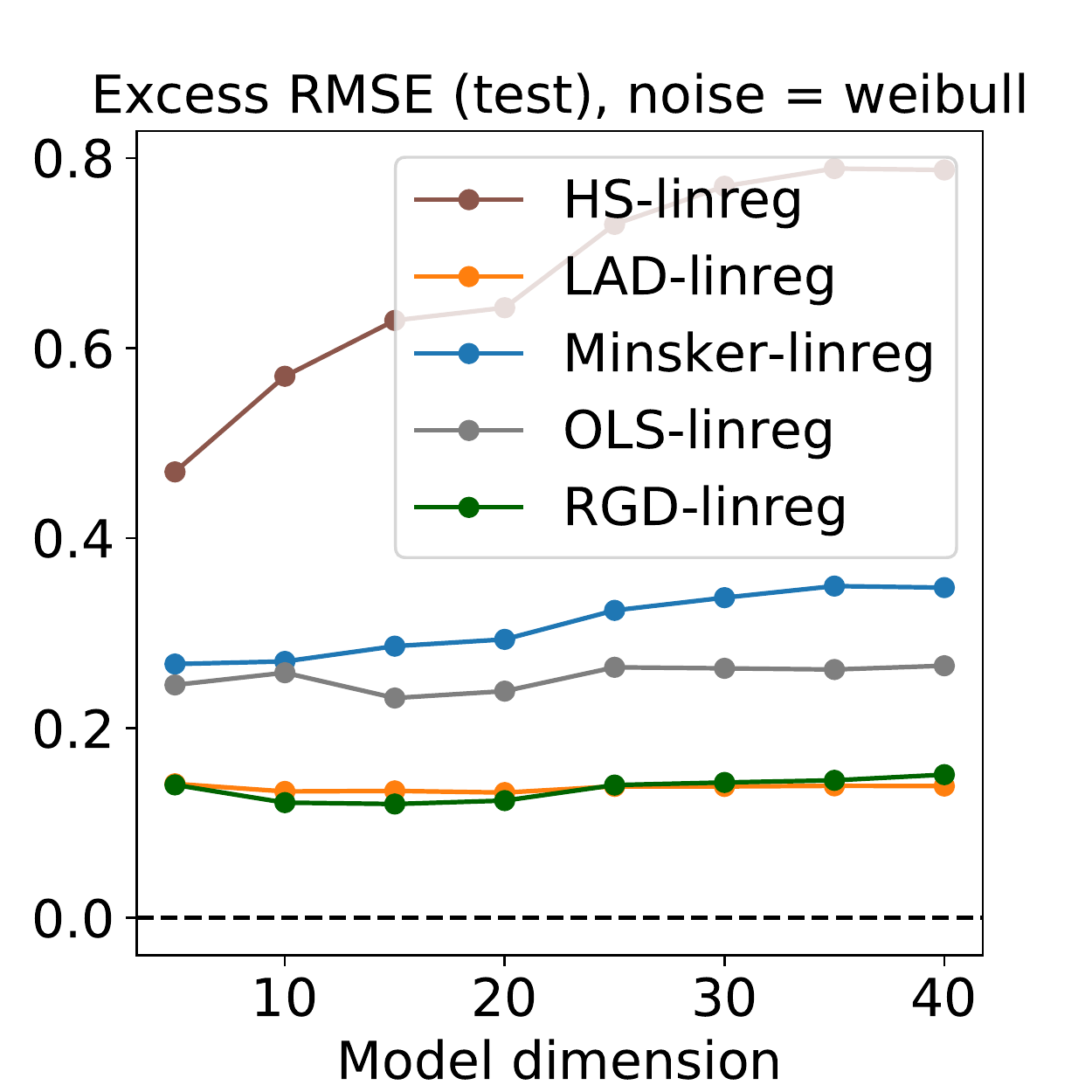}
\caption{Prediction error over dimensions $5 \leq d \leq 40$, with ratio $n/d = 6$ fixed, and noise level = $8$. Each plot corresponds to a distinct noise distribution.}
\label{fig:overDim_all_distros_2}
\end{figure}

\clearpage

\end{document}